\newif\ifarxiv 
\newif\ifcolt
\definecolor{dgreen}{rgb}{0,0.5,0}
\Crefname{construction}{Construction}{Constructions}
\Crefname{assumption}{Assumption}{Assumptions}
\Crefname{subsubsection}{Section}{Sections}
\theoremstyle{plain}
\newtheorem{theorem}{Theorem}[section]
\newtheorem{lemma}[theorem]{Lemma}
\newtheorem{proposition}[theorem]{Proposition}
\newtheorem{assumption}[theorem]{Assumption}
\newtheorem{remark}[theorem]{Remark}
\theoremstyle{definition}
\newtheorem{definition}[theorem]{Definition}
\newtheorem{fact}[theorem]{Fact}
\newtheoremstyle{named}%
    {}{}{\itshape}{}{\bfseries}{.}{.5em}{\thmnote{#3}}
\theoremstyle{named}
\numberwithin{theorem}{section}
\newcommand{\nc}{\newcommand}
\nc{\DMO}{\DeclareMathOperator}
\DeclareMathOperator*{\argmin}{arg\,min} 
\DeclareMathOperator*{\argmax}{arg\,max}
\nc{\Moracle}{\MM^{\mathsf{oracle}}}
\nc{\tilMoracle}{\til{\MM}^{\mathsf{oracle}}}
\nc{\SimulateReduction}{\texttt{SimulateReduction}\xspace}
\nc{\TestReduction}{\texttt{DistinguishReduction}\xspace}
\nc{\SimulateSampling}{\texttt{SimulateSampling}\xspace}
\nc{\SimulateRegression}{\texttt{SimulateRegression}\xspace}
\nc{\Osample}{{\MO_{\mathsf{samp}}}}
\nc{\Oregress}{{\MO_{\mathsf{regress}}}}
\nc{\Oregressp}{{\MO'_{\mathsf{regress}}}}
\nc{\Obandits}{{\MO_{\mathsf{bandits}}}}
\nc{\dom}{\mathsf{dom}}
\nc{\SF}{\mathscr{F}}
\nc{\Fchernoff}{\MF^{\mathsf{chernoff}}}
\DMO{\prox}{prox}
\DMO{\Span}{span}
\DMO{\UCB}{UCB}
\DMO{\LCB}{LCB}
\nc{\expl}[2]{\ME^{#1}_{#2}}
\nc{\tilmdp}[1]{\til \MM({#1})}
\nc{\barpdp}[2]{\ol \MP_{#1}({#2})}
\nc{\barmdp}[1]{\ol \MM({#1})}
\nc{\hatmdp}[1]{\wh \MM({#1})}
\nc{\rem}[2]{\MR_{#1}({#2})}
\nc{\Pigen}{\Pi^{\rm gen}}
\nc{\Pidet}{\Pi^{\rm det}}
\nc{\PiZ}{\Pi_{\SZ}^{\rm markov}}
\nc{\und}[3]{\MU_{{#1}}^{{#2}}({#3})}
\nc{\zlow}[2]{\MZ_{{#1}}^\lowv({#2})}
\nc{\dg}{\dagger}
\nc{\bB}{\mathbf{B}}
\nc{\unif}{\mu_{\rm unif}}
\nc{\indsig}[2]{\mathcal{I}_{#1}({#2})}
\nc{\total}{{\rm fin}}
\nc{\early}{{\rm pre}}
\nc{\zsink}{z_{\rm sink}}
\nc{\lowv}{{\rm low}}
\nc{\oo}[1]{\texttt{o}({#1})}
\nc{\posnrm}[1]{\left[ {#1} \right]_+}
\nc{\negnrm}[1]{\left[ {#1} \right]_-}
\nc{\tvnrm}[1]{\left\| {#1} \right\|_1}
\nc{\absval}[1]{\left| {#1} \right|}
\nc{\normalize}[1]{\mathfrak{n}\left({#1}\right)}
\nc{\SZ}{\textsf{Z}}
\nc{\SO}{\textsf{O}}
\nc{\suff}[2]{{\rm suff}_{#1}({#2})}
\nc{\UPhi}{\mathscr{U}_{X,H}}
\nc{\UPhis}{\til{\mathscr{U}}_{X,H,\MF}}
\nc{\SV}{\mathscr{V}}
\nc{\Phiset}{\Phi_{X,H}}
\nc{\Phisets}{\til{\Phi}_{X,H,\MF}}
\nc{\Lyu}{{\mathtt{Lyu}}}
\nc{\wAlg}{{\widetilde \Alg}}
\nc{\ApproxMDP}{\texttt{ConstructMDP}\xspace}
\nc{\mainalg}{\texttt{BaSeCAMP}\xspace} 
\nc{\bspanner}{\texttt{BarySpannerPolicy}\xspace}
\nc{\gamvec}{\gamma}
\nc{\til}{\widetilde}
\nc{\td}{\tilde}
\nc{\wh}{\widehat}
\nc{\old}[1]{\ifnum\Comments=1 {\color{brown}  [COPIED: #1]}\fi}
\definecolor{darkgreen}{rgb}{0.0, 0.5, 0.0}
\nc{\noah}[1]{\ifnum\Comments=1 {\color{darkgreen} [ng: #1]}\fi}
\nc{\dhruv}[1]{\ifnum\Comments=1 {\color{purple} [dr: #1]}\fi}
\nc{\BP}{\mathbb{P}}
\nc{\BM}{\mathbb{M}}
\nc{\bbapx}{\bb^{\rm apx}}
\nc{\bbapxs}[1]{\bb^{\rm apx, {#1}}}
\nc{\fools}[3]{\MF_{#3}({#1}, {#2})}
\nc{\fool}[2]{\MF({#1},{#2})}
\nc{\clip}[2]{{\rm clip}\left[ \left. {#1} \right| {#2} \right]}
\nc{\imax}{\omega}
\DMO{\conv}{conv}
\nc{\MH}{\mathcal{H}}
\nc{\CH}{\mathscr{H}}
\nc{\CB}{\mathscr{B}}
\nc{\cD}{\mathscr{D}}
\nc{\MC}{\mathcal{C}}
\nc{\MV}{\mathcal{V}}
\nc{\Tclus}{\mathcal{T}_{\mathsf{clus}}}
\nc{\st}{\star}
\nc{\lng}{\langle}
\nc{\rng}{\rangle}
\DMO{\OOPT}{opt}
\nc{\dopt}[2]{\ell_{\OOPT}({#1},{#2})}
\nc{\grad}{\nabla}
\nc{\MG}{\mathcal{G}}
\nc{\MP}{\mathcal{P}}
\nc{\PP}{\mathbb{P}}
\nc{\TT}{\mathbb{T}}
\nc{\TTmax}{\TT_{\max}}
\DMO{\Ham}{Ham}
\DMO{\Gap}{Gap}
\DMO{\GD}{GD}
\DMO{\GDA}{GDA}
\DMO{\EG}{EG}
\DMO{\OGDA}{OGDA}
\DMO{\Unif}{Unif}
\DMO{\Tr}{Tr}
\nc{\ul}{\underline}
\nc{\ol}{\overline}
\nc{\Qu}{\ul{Q}}
\nc{\Qo}{\ol{Q}}
\nc{\Ro}{\ol{R}}
\nc{\Vu}{\ul{V}}
\nc{\Vo}{\ol{V}}
\nc{\RanQ}{\Delta Q}
\nc{\RanV}{\Delta V}
\nc{\clipQ}{\Delta \breve{Q}}
\nc{\frzQ}{\Delta \mathring{Q}}
\nc{\clipV}{\Delta \breve{V}}
\nc{\clipdelta}{\breve{\delta}}
\nc{\cliptheta}{\breve{\theta}}
\nc{\delmin}{\Delta_{{\rm min}}}
\nc{\delmins}[1]{\Delta_{{\rm min},{#1}}}
\nc{\gapfinal}[1]{\max \left\{ \frac{\frzQ_{{#1}}^{k^\st}(x,a)}{2H}, \frac{\delmin}{4H} \right\}}
\nc{\post}[2]{R({#1}; {#2})}
\nc{\posts}[3]{R_{#3}({#1}; {#2})}
\nc{\MAJ}{\mathsf{MAJ}}
\nc{\Dnull}{D^{\circ}}
\nc{\BKW}{\mathtt{BKW}}
\nc{\Dec}{\mathtt{Dec}}
\nc{\delreg}{\delta_{\mathsf{reg}}}
\nc{\delreal}{\delta_{\mathsf{real}}}
\nc{\Sreg}{S_{\mathsf{Reg}}}
\nc{\Treg}{T_{\mathsf{Reg}}}
\nc{\PC}{\texttt{PC}}
\nc{\PCO}{\texttt{PCE}}
\nc{\PCR}{\texttt{PCR}}
\nc{\VOX}{\texttt{VOX}}
\nc{\EPCO}{\texttt{EPCE}}
\nc{\EPCR}{\texttt{EPCR}}
\nc{\alphaPC}{\alpha_{\mathsf{PC}}}
\nc{\TPC}{T_{\mathsf{PC}}}
\nc{\SPC}{S_{\mathsf{PC}}}
\nc{\delsmall}{{\delta_{\mathsf{small}}}}
\nc{\CL}{\mathtt{ContrastLearn}}
\nc{\Select}{\mathtt{Select}}
\nc{\piunif}{{\pi_{\mathsf{unif}}}}
\nc{\picov}{{\pi_{\mathsf{cov}}}}
\nc{\ZZ}{\mathbb{Z}}
\nc{\sk}{{\mathsf{sk}}}
\nc{\Enc}{\mathtt{Enc}}
\nc{\EntLPN}{\mathtt{EntangleLPN}}
\nc{\mureg}{{\mu_{\mathsf{reg}}}}
\nc{\PPE}{\mathtt{PPE}}
\nc{\FQI}{\mathtt{FQI}}
\nc{\False}{\mathtt{False}}
\nc{\True}{\mathtt{True}}
\nc{\epreg}{\epsilon_{\mathsf{reg}}}
\nc{\algnst}[1]{\begin{align*}#1\end{align*}}
\nc{\algn}[1]{\begin{align}#1\end{align}}
\nc{\matx}[1]{\left(\begin{matrix}#1\end{matrix}\right)}
\nc{\pimix}{{\pi_{\mathsf{mix}}}}
\nc{\BPC}{{B_{\mathsf{PC}}}}
\nc{\size}{\mathrm{size}}
\nc{\OLIVE}{\texttt{OLIVE}}
\nc{\NP}{\textsf{NP}}
\nc{\RP}{\textsf{RP}}
\nc{\cprp}{c_{\mathsf{PRP}}}
\nc{\Mtoy}{\MM_{\mathsf{toy}}}
\nc{\Brute}{\mathtt{Brute}}
\nc{\nuu}{\nu}
\nc{\bel}[1]{\mathbf{b}({#1})}
\nc{\nbel}[1]{\bar{\mathbf{b}}({#1})}
\nc{\sbel}[2]{\mathbf{b}'_{#1}({#2})}
\nc{\nsbel}[2]{\bar{\mathbf{b}}'_{#1}({#2})}
\nc{\bv}{\mathbf{v}}
\nc{\bone}{\mathbf{1}}
\nc{\bX}{\mathbf{X}}
\nc{\be}{\mathbf{e}}
\nc{\bY}{\mathbf{Y}}
\nc{\bG}{\mathbf{G}}
\nc{\bz}{\mathbf{z}}
\nc{\bw}{\mathbf{w}}
\nc{\bA}{\mathbf{A}}
\nc{\bJ}{\mathbf{J}}
\nc{\bK}{\mathbf{K}}
\nc{\bb}{\mathbf{b}}
\nc{\ba}{\mathbf{a}}
\nc{\bs}{\mathbf{s}}
\nc{\bzero}{\mathbf{0}}
\nc{\bi}{\mathbf{i}}
\nc{\Edistinct}{\ME^{\mathsf{distinct}}}
\nc{\bc}{\mathbf{c}}
\nc{\bC}{\mathbf{C}}
\nc{\BR}{\mathbb R}
\nc{\BA}{\mathbb{A}}
\nc{\SA}{\mathscr{A}}
\nc{\BC}{\mathbb C}
\nc{\bx}{\mathbf{x}}
\nc{\bS}{\mathbf{S}}
\nc{\bM}{\mathbf{M}}
\nc{\bR}{\mathbf{R}}
\nc{\bN}{\mathbf{N}}
\nc{\by}{\mathbf{y}}
\nc{\sy}{y}
\nc{\sx}{x}
\nc{\MO}{\mathcal O}
\nc{\MQ}{\mathcal{Q}}
\nc{\CO}{\mathscr{O}}
\nc{\MU}{\mathcal{U}}
\nc{\ME}{\mathcal{E}}
\nc{\MN}{\mathcal{N}}
\nc{\MK}{\mathcal{K}}
\nc{\MM}{\mathcal{M}}
\nc{\MS}{\mathcal{S}}
\nc{\MT}{\mathcal{T}}
\nc{\BF}{\mathbb F}
\nc{\BQ}{\mathbb Q}
\nc{\MX}{\mathcal{X}}
\nc{\MA}{\mathcal{A}}
\nc{\MD}{\mathcal{D}}
\nc{\MB}{\mathcal{B}}
\nc{\MZ}{\mathcal{Z}}
\nc{\MJ}{\mathcal{J}}
\nc{\MW}{\mathcal{W}}
\nc{\MR}{\mathcal{R}}
\nc{\MY}{\mathcal{Y}}
\nc{\ML}{\mathcal{L}}
\nc{\BZ}{\mathbb Z}
\nc{\BN}{\mathbb N}
\nc{\ep}{\epsilon}
\nc{\gapfn}[1]{\varepsilon_{#1}}
\nc{\ggapfn}[2]{\varphi_{#1}({#2})}
\nc{\epsahk}{\gapfn{0}}
\nc{\BH}{\mathbb H}
\nc{\BG}{\mathbb{G}}
\nc{\D}{\Delta}
\nc{\MF}{\mathcal{F}}
\nc{\One}{\mathbbm{1}}
\nc{\bOne}{\mathbf{1}}
\nc{\Aopt}{\mathcal{A}^{\rm opt}}
\nc{\Amul}{\mathcal{A}^{\rm mul}}
\nc{\SP}{\mathsf P}
\nc{\SQ}{\mathsf Q}
\nc{\DO}{\accentset{\circ}{\D}}
\nc{\mf}{\mathfrak}
\nc{\mfp}{\mathfrak{p}}
\nc{\mfq}{\mf{q}}
\nc{\Sp}{\mbox{Spec}}
\nc{\Spm}{\mbox{Specm}}
\nc{\hookuparrow}{\mathrel{\rotatebox[origin=c]{90}{$\hookrightarrow$}}}
\nc{\hookdownarrow}{\mathrel{\rotatebox[origin=c]{-90}{$\hookrightarrow$}}}
\nc{\hra}{\hookrightarrow}
\nc{\tra}{\twoheadrightarrow}
\nc{\sgn}{{\rm sgn}}
\nc{\aut}{{\rm Aut}}
\nc{\Hom}{{\rm Hom}}
\nc{\img}{{\rm Im}}
\DMO{\id}{Id}
\DMO{\supp}{supp}
\DMO{\KL}{KL}
\nc{\kld}[2]{\KL({#1}||{#2})}
\nc{\ren}[2]{D_2({#1}||{#2})}
\nc{\chisq}[2]{\chi^2({#1}||{#2})}
\nc{\tvd}[2]{D_{\mathsf{TV}}\left({#1}, {#2}\right)}
\nc{\hell}[2]{H^2({#1}, {#2})}
\DMO{\BSS}{BSS}
\DMO{\BES}{BES}
\DMO{\BGS}{BGS}
\DMO{\poly}{poly}
\nc{\indep}{\perp}
\DMO{\sink}{sink}
\DMO{\nosink}{nosink}
\nc{\sinks}{s^{\sink}}
\nc{\sinkobs}{o^{\sink}}
\nc{\fp}[1]{\MP_1({#1})}
\nc{\BO}{\mathbb{O}}
\nc{\BT}{\mathbb{T}}
\nc{\RR}{\mathbb{R}}
\nc{\NN}{\mathbb{N}}
\nc{\Gradient}{\nabla}
\DMO{\diag}{diag}
\nc{\norm}[1]{\left \lVert #1 \right \rVert}
\DMO*{\EE}{\mathbb{E}}
\nc{\LPN}{\mathsf{LPN}}
\DMO{\Ber}{Ber}
\nc{\Regress}{\mathtt{Regress}}
\nc{\LFC}{\mathtt{LearnFromCorr}}
\nc{\RegressAlg}{\mathtt{RegressAlg}}
\nc{\DrawTraj}{\mathtt{DrawTrajectory}}
\nc{\pizero}{{\pi_{\mathsf{zero}}}}
\nc{\Tred}{{T_{\mathsf{red}}}}
\nc{\epred}{{\epsilon_{\mathsf{red}}}}
\nc{\TriAlg}{\mathtt{GenerateTriangleLPN}}
\nc{\Alg}{\mathtt{Alg}}
\nc{\AffSample}{\mathtt{AffSample}}
\nc{\br}{\mathbf{r}}
\nc{\TV}{{\mathsf{TV}}}
\DMO{\Law}{Law}
\DMO{\Sym}{Sym}
\nc{\bu}{\mathbf{u}}
\nc{\Reg}{\mathtt{Reg}}
\nc{\Breg}{B_{\mathsf{Reg}}}
\DMO{\dc}{dc}
\nc{\PSDP}{\texttt{PSDP}}
\DMO{\PR}{Pr}
\renewcommand{\Pr}{\PR}
\DMO*{\Prr}{Pr}
\nc{\E}{\mathbb{E}}
\nc{\ra}{\rightarrow}
\nc{\sups}[1]{^{\scriptscriptstyle{#1}}}
\nc{\bfr}{\mathbf{r}}
\nc{\Mbar}{\ol{M}}
\nc{\Sbar}{\ol{\MS}}
\nc{\Xbar}{\ol{\MX}}
\nc{\Pbar}{\ol{\BP}}
\nc{\term}{\mathfrak{t}}
\nc{\Srch}{\MS^{\mathsf{rch}}}
\nc{\Obar}{\ol{\BO}}
\nc{\tsmall}{\sigma_{\mathsf{bkup}}}
\nc{\trunc}{\sigma_{\mathsf{trunc}}}
\nc{\Nreg}{N_{\mathsf{reg}}}
\nc{\Srchhi}{\MS^{\mathsf{rch,0}}}
\nc{\Srchki}{\MS^{\mathsf{rch,1}}}
\nc{\Srchgam}{\MS^{\mathsf{rch,2}}}
\nc{\gamki}{\gamma^{\mathsf{KI}}}
\nc{\DKI}{D^{\mathsf{KI}}}
\nc{\lamrch}{\lambda^{\mathsf{rch}}}
\nc{\PSDPB}{\texttt{PSDP}}
\nc{\Oreg}{\MO_{\mathsf{reg}}}
\nc{\epa}{\varepsilon_A}
\nc{\epstat}{\epsilon_{\mathsf{stat}}}
\nc{\OneRed}{\texttt{OneRed}}
\nc{\NoiselessOneRed}{\texttt{NoiselessOneRed}}
\nc{\TwoRed}{\texttt{TwoRed}}
\nc{\OneTwo}{\texttt{OneTwo}}
\nc{\eprl}{\epsilon_{\mathsf{RL}}}
\nc{\delrl}{\delta_{\mathsf{RL}}}
\nc{\Nrl}{N_{\mathsf{RL}}}
\nc{\Krl}{K_{\mathsf{RL}}}
\nc{\epfinal}{{\varepsilon_{\mathsf{final}}}}
\nc{\Saug}{{\MS^{\mathsf{aug}}}}
\nc{\Xaug}{{\MX^{\mathsf{aug}}}}
\nc{\Phiaug}{{\Phi^{\mathsf{aug}}}}
\nc{\phiaug}{{\phi^{\mathsf{aug}}}}
\DMO{\aug}{aug}
\nc{\what}{\wh w}
\nc{\pihat}{\wh \pi}
\nc{\betahat}{\wh\beta}
\nc{\xbart}{\ol x_{h+1}^{(t)}}
\nc{\ccov}{c_{\mathsf{cov}}}
\DMO{\Val}{Val}
\DMO{\NH}{NH}
\DMO{\HH}{H}
\nc{\xbar}{\ol{x}}
\nc{\OneAug}{\texttt{OneAug}}
\nc{\TwoAug}{\texttt{TwoAug}}
\nc{\Igood}{\mathcal{I}_{\mathsf{good}}}
\DMO{\polylog}{polylog}
  \renewenvironment{proof}[1][Proof]%
  {%
   \par\noindent{\bfseries\upshape {#1.}\ }%
  }%
  {\qed\newline}
  \nc{\HOMER}{\texttt{HOMER}}
\newcommand{\citep}[1]{\cite{#1}}
\newcommand{\citet}[1]{\cite{#1}}
\newcommand{\sssref}[1]{\texorpdfstring{\hyperref[#1]{\mbox{Section \ref*{#1}}}}{Section \ref*{#1}}}
\newcommand{\lineref}[1]{\texorpdfstring{\hyperref[#1]{\mbox{Line \ref*{#1}}}}{Line \ref*{#1}}}
\DeclarePairedDelimiter{\abs}{\lvert}{\rvert} %
\DeclarePairedDelimiter{\crl}{\{}{\}}
\let\Pr\undefined
\DeclareMathOperator{\Pr}{Pr}
\def\ddefloop#1{\ifx\ddefloop#1\else\ddef{#1}\expandafter\ddefloop\fi}
\def\ddef#1{\expandafter\def\csname bb#1\endcsname{\ensuremath{\mathbb{#1}}}}
\def\ddefloop#1{\ifx\ddefloop#1\else\ddef{#1}\expandafter\ddefloop\fi}
\def\ddef#1{\expandafter\def\csname b#1\endcsname{\ensuremath{\mathbf{#1}}}}
\def\ddef#1{\expandafter\def\csname sf#1\endcsname{\ensuremath{\mathsf{#1}}}}
\def\ddef#1{\expandafter\def\csname c#1\endcsname{\ensuremath{\mathcal{#1}}}}
\def\ddef#1{\expandafter\def\csname h#1\endcsname{\ensuremath{\widehat{#1}}}}
\def\ddef#1{\expandafter\def\csname hc#1\endcsname{\ensuremath{\widehat{\mathcal{#1}}}}}
\def\ddef#1{\expandafter\def\csname t#1\endcsname{\ensuremath{\widetilde{#1}}}}
\def\ddef#1{\expandafter\def\csname tc#1\endcsname{\ensuremath{\widetilde{\mathcal{#1}}}}}
\def\ddefloop#1{\ifx\ddefloop#1\else\ddef{#1}\expandafter\ddefloop\fi}
\def\ddef#1{\expandafter\def\csname scr#1\endcsname{\ensuremath{\mathscr{#1}}}}
\newcommand{\veps}{\varepsilon}
\newcommand{\ldef}{\vcentcolon=}
\newcommand{\colt}[1]{\iftoggle{colt}{#1}{}}
\newcommand{\arxiv}[1]{\iftoggle{colt}{}{#1}}
\newcommand{\loose}{\looseness=-1}
    \let\Cref\crtCref
    \let\cref\crtcref
  \newcommand{\creftitle}[1]{\crtcref{#1}}
\title[Toward A Computational Taxonomy For Reinforcement
Learning]{Necessary and Sufficient Oracles: Toward a Computational\\ Taxonomy For Reinforcement Learning}
\title{Necessary and Sufficient Oracles: Toward a Computational\\ Taxonomy For Reinforcement Learning}
\date{\today}
\author{
Dhruv Rohatgi\thanks{Email: \texttt{drohatgi@mit.edu}. This research was partially conducted during the author's internship at Microsoft Research.} \\ MIT  \and Dylan J. Foster\thanks{Email: \texttt{dylanfoster@microsoft.com}.} \\ Microsoft Research}
\begin{document}

\maketitle
\allowdisplaybreaks

\begin{abstract}
Algorithms for reinforcement learning (RL) in large state spaces
crucially rely on supervised learning subroutines to estimate objects
such as value functions or transition probabilities. Since only the
simplest supervised learning problems can be solved provably and
efficiently, practical performance of an RL algorithm depends on which
of these supervised learning ``oracles'' it assumes access to (and how they are implemented). But which oracles are better or worse? Is there a \emph{minimal} oracle?\loose

In this work, we clarify the impact of the choice of supervised
learning oracle on the computational complexity of RL, as quantified by the oracle strength. First, for the task of reward-free exploration in Block MDPs in the standard episodic access model---a ubiquitous setting for RL with function approximation---we identify \emph{two-context regression} as a minimal oracle, i.e. an oracle that is both necessary and sufficient (under a mild regularity assumption). Second, we identify \emph{one-context regression} as a near-minimal oracle in the stronger \emph{reset} access model, establishing a provable computational benefit of resets in the process. Third, we broaden our focus to \emph{Low-Rank MDPs}, where we give cryptographic evidence that the analogous oracle from the Block MDP setting is insufficient.\loose

\end{abstract}

\colt{
\begin{keywords}%
Reinforcement learning, computational complexity, oracle-efficiency
\end{keywords}
}

\arxiv{\newpage}
\section{Introduction}
\nc{\RVFS}{\texttt{RVFS}}

An overarching paradigm in modern machine learning is to reduce a complex task of interest to a simpler supervised learning task. Instances of this paradigm range from language modeling \citep{openai2023} and image generation \citep{song2019generative} to imitation learning \citep{bojarski2016end}. More broadly, the basic ansatz from supervised learning \--- that gradient descent finds good minimizers \--- underlies empirical progress in reinforcement learning \citep{mnih2015human}, artificial intelligence for games \citep{silver2018general}, and much more. State-of-the-art learning methods may not admit provable guarantees, but their empirical success is intuitively (to varying extents) justified by the basic ansatz.\loose

From a theoretical perspective, this ansatz can be exploited via \emph{oracle-efficient} algorithm design. Formally, an algorithm is oracle-efficient with respect to an oracle $\MO$ if it is computationally efficient and provably correct when given query access to $\MO$. Oracle-efficiency has been a cornerstone in the theory of online learning since the development of Follow-the-Perturbed-Leader \citep{kalai2005efficient}, which solves online linear optimization by reduction to an offline linear optimization oracle over the decision set. Recently, oracle-efficiency has become ubiquitous in theoretical reinforcement learning \citep{dann2018oracle,du2019provably,misra2020kinematic,foster2021statistical,mhammedi2023efficient,hussing2024oracle}, where end-to-end computational efficiency is often out-of-reach for settings with large observation spaces \citep{kane2022computational,golowich2024exploration}, yet heuristics based on deep learning (e.g., for estimation of value functions or transition dynamics) could plausibly work well on natural data. In reinforcement learning, as with online learning \citep{hazan2016computational,dudik2020oracle} and decision making \citep{agarwal2014taming,foster2020beyond}, the lens of oracle-efficiency has spurred numerous algorithmic improvements.

Yet, in online learning, the correct \emph{choice of oracle} was fairly clear: to solve an \emph{online} optimization problem, assume access to an oracle that solves the corresponding \emph{offline} optimization problem \citep{kalai2005efficient,hazan2016computational}. As observed by \cite{kalai2005efficient}, this assumption is essentially without loss of generality, since an online optimization algorithm must solve offline optimization as a special case. In contrast, for reinforcement learning---a more complex and structured setting, due to the interaction between the agent and environment---there is no such consensus about the ``right'' computational oracles, even for models that are by now well-established. Instead, oracle-efficiency has largely been used as a black-and-white prognostic, just separating ``reasonable'' algorithms from those requiring exhaustive enumeration \citep{dann2018oracle}.

In this work, we take a finer-grained view of oracle-efficiency---for example, are supervised learning oracles that perform regression onto value functions sufficient, or must we estimate more complex objects such as\arxiv{ (forward or inverse)} transition dynamics? Since access to an oracle is fundamentally an assumption, we are interested in the following question:
\colt{
\emph{What are the weakest computational oracles that suffice for oracle-efficient reinforcement learning?}}
\arxiv{ 
\begin{center} 
\emph{What are the weakest computational oracles that suffice for oracle-efficient reinforcement learning?}
\end{center}

}

In other words, while prior works have largely focused on the impact of differing structural assumptions on statistical complexity, we focus on the impact on computational complexity, as measured by the oracle strength. To begin this investigation, we study the task of exploration in \emph{Block Markov Decision Processes} \citep{du2019provably}---one of the most well-studied families of Markov Decision Processes (MDPs) with rich observation spaces. We identify the first \emph{minimal oracle} \citep{golowich2024exploration} for this task, under the standard episodic access model. We then consider the reset access model, and show that a strictly weaker oracle suffices. Moving beyond Block MDPs, we give cryptographic evidence of a qualitative computational separation between Block MDPs and the more general setting of \emph{Low-Rank} MDPs. 

\subsection{Background: Block MDPs and Computational Oracles}

A finite-horizon Markov Decision Process (MDP) is defined by a set of \emph{states}, a set of \emph{actions}, and an unknown transition function, which describes how the environment changes state as a result of the agent's actions. The agent learns by repeated interaction with the environment---the two most common interaction frameworks are episodic \citep{kearns2002near} and resets \citep{weisz2021query}; we study both.\footnote{We specify ``episodic RL'' to distinguish from ``RL with resets''; we say ``RL'' when the distinction is unimportant.} We focus on reward-free RL \citep{du2019provably,jin2020reward}, where the goal is \emph{exploration}: finding a set of \emph{policies} (i.e. mappings from states to actions) that cover the entire state space as well as possible. When the state space is extremely large, structural assumptions are needed to avoid statistical intractability: For most of this paper, we focus on the \emph{Block MDP} \citep{du2019provably,misra2020kinematic}, a canonical setting for RL with function approximation in which the rich observed dynamics are governed by a small (unobserved) \emph{latent state space}; just like in PAC learning, a concept class is required to model the mapping from observed states to latent states. 

\begin{definition}[Informal; see \cref{sec:prelim}]
  Let $\MX$, $\MS$, and $\MA$ be sets and let $\Phi$ be a \emph{concept class} of functions $\phi: \MX \to \MS$.\footnote{$\Phi$ is often referred to as a \emph{decoder class} in prior work; we use \emph{concept class} in analogy with PAC learning.}
  An MDP with state space $\MX$ and action space $\MA$ is a \textbf{$\Phi$-decodable Block MDP} with \emph{latent state space} $\MS$ if there is a function $\phi^\st\in\Phi$ such that for any two states $x,x'\in\MX$ and action $a \in\MA$, the transition probability from $x$ to $x'$ when the agent plays action $a$ depends only on $\phi^\st(x)$, $\phi^\st(x')$, and $a$; we refer to $\phi^{\st}(x)$ as the \emph{latent state}.\loose
\end{definition}

Henceforth, we refer to $\MX$ as the \emph{observed state space} or \emph{observation space} to distinguish from the latent state space. The concept class $\Phi$ is known to the learner, but the true decoder $\phi^{\star}\in\Phi$ is not. The statistical complexity of reward-free exploration scales polynomially in $|\MS|$, $|\MA|$, and $\log|\Phi|$ \citep{jiang2017contextual}---and crucially has no dependence on $|\MX|$, which should be thought of as exponentially large or even infinite. The \emph{computational} complexity is much more subtle. Initial algorithms required enumeration over $\Phi$ \citep{jiang2017contextual}; subsequent investigation identified oracle-efficient algorithms with respect to several different optimization oracles \citep{misra2020kinematic,zhang2022efficient,mhammedi2023representation,mhammedi2023efficient}, but no basis for comparison between these oracles has been proposed. The first works to raise the question of which oracles are \emph{necessary} were \citet{golowich2024exploring,golowich2024exploration}, who studied the \emph{one-context (realizable) regression} problem:\loose
\begin{definition}[informal; see \Cref{def:one-con-regression}]\label{def:ocr-informal}
Fix a concept class $\Phi$. Let $(x^{(i)},y^{(i)})_{i=1}^n$ be i.i.d., with $\EE[y^{(i)}\mid{}x^{(i)}] = f(\phi^\st(x^{(i)}))$ for some unknown $f: \MS \to [0,1]$ and $\phi^\st \in \Phi$. The goal of \textbf{one-context regression} is to compute a predictor $\MR: \MX \to [0,1]$ that approximates $x \mapsto f(\phi^\st(x))$.
\end{definition}

Intuitively, one-context regression can be thought of as a regression with a \emph{well-specified model} \citep{tsybakov2009nonparametric,wainwright2019high}, as the true target function depends only on the latent state $\phi^{\star}(x)$ (it can also be viewed as a generalization of PAC learning with random classification noise---see \cref{remark:ocr-to-pac}). This oracle is well-suited for estimating objects such as value functions---which depend only on the latent state in the Block MDP---and it has been implicitly used as a subroutine in many algorithms \citep{foster2020beyond,zhang2022efficient,mhammedi2023representation,mhammedi2023efficient}.\footnote{Information-theoretically,  $\text{MSE}\lesssim\frac{\abs{\cS} + \log(\abs{\Phi}\delta^{-1})}{n}$ is always possible, but not necessarily computationally efficiently.}

For any concept class $\Phi$, one-context regression is \emph{necessary} for episodic RL, i.e. there is a Cook reduction from regression to episodic RL \citep{golowich2024exploring}, so as an oracle assumption, it is without loss of generality. Unfortunately, one-context regression is also \emph{insufficient}: under a standard cryptographic assumption, there exists a concept class $\Phi$ for which there is no Cook reduction from episodic RL to regression \citep{golowich2024exploration}. Thus, one-context regression is not a minimal oracle for episodic RL.  This motivates us to consider the problem of \emph{two-context regression}. \loose

\begin{definition}[informal; see \Cref{def:two-con-regression}]\label{def:tcr-informal}
Fix a concept class $\Phi$. Let $(x_1^{(i)},x_2^{(i)},y^{(i)})_{i=1}^n$ be i.i.d., with $\EE[y^{(i)}\mid{}x_1^{(i)},x_2^{(i)}] = f(\phi^\st(x_1^{(i)}),\phi^\st(x_2^{(i)}))$ for some unknown $f: \MS\times\MS \to [0,1]$ and $\phi^\st \in \Phi$. The goal of \textbf{two-context regression} is to compute a predictor $\MR: \MX\times\MX \to [0,1]$ that approximates $(x_1,x_2)\mapsto f(\phi^\st(x_1),\phi^\st(x_2))$.
\end{definition}

Several RL algorithms use variants of this oracle \citep{misra2020kinematic,mhammedi2023representation}---roughly, to estimate (inverse) \emph{transition dynamics}---and it has been suggested that these variants may be essentially minimal \citep{golowich2024exploration}, but no evidence for this belief was known prior to this work. See \cref{sec:app_related} for a detailed discussion of prior work. 

\colt{
\begin{remark}\label{remark:oracle-subtleties}
The above oracles are phrased in terms of (1) statistical learning rather than worst-case optimization, and (2) improper learning rather than proper. These distinctions were rarely made in prior work, but are important from a complexity-theoretic lens---see \cref{remark:oracle-subtleties-app} for details.
\end{remark}
}

\arxiv{
\begin{remark}[Optimization vs. learning; proper vs. improper]\label{remark:oracle-subtleties-app}
  Many prior works in oracle-efficient RL assume access to \emph{optimization} oracles rather than statistical learning oracles. Informally, the former oracles require solving regression problems analogous to \cref{def:ocr-informal,def:tcr-informal} for \emph{arbitrary datasets} as opposed to i.i.d. and realizable datasets. This is primarily a conceptual distinction rather than technical, since in many cases a learning oracle can easily be substituted in \citep{misra2020kinematic,mhammedi2023representation}.
  However, it is important from a complexity-theoretic perspective, since statistical learning can often be substantially easier \citep{blum1998polynomial}. 
  
  A more technically salient distinction is that our definitions above allow for \emph{improper learning}, whereas almost all prior works in oracle-efficient RL for Block MDPs require \emph{proper} learning oracles\footnote{For example, the proper learning analogue of \Cref{def:ocr-informal} requires computing some predictor $\MR:\MX\to[0,1]$ with an explicit decomposition $\MR = \MR'\circ \phi$ for some $\phi \in \Phi$ and $\MR': \MS \to [0,1]$.}---an exception is the work of \cite{misra2020kinematic}, which our algorithmic results directly build on. There are many concept classes for which proper learning is $\NP$-hard, but it is considered unlikely for improper learning to be $\NP$-hard \citep{applebaum2008basing}. Since the goal of RL is to output policies, which are fundamentally improper, it seems unlikely that a proper supervised learning task could be reduced to RL (in the manner of results such as \cref{cor:regression-to-online-rl}).
\end{remark}
}

\subsection{Contributions}

We clarify the computational complexity of RL via the lens of oracle-efficiency, identifying (1) the first minimal oracle for episodic RL in Block MDPs, (2) provable benefits of \emph{reset access}, and (3) computational challenges of the more general \emph{Low-Rank} MDPs. See \cref{sec:discussion} for open questions.

\paragraph{A minimal oracle for episodic RL in Block MDPs (\Cref{sec:online}).} We show that for \emph{every} concept class $\Phi$, under a mild regularity condition, two-context regression is a minimal oracle---both \emph{sufficient} and \emph{necessary}---for reward-free episodic RL in $\Phi$-decodable Block MDPs. To show sufficiency, we generalize and simplify the algorithm $\HOMER$ of \cite{misra2020kinematic}, eliminating their reachability assumption and implementing their oracles with two-context regression. To show necessity, we give a novel reduction \emph{from} two-context regression \emph{to} reward-free RL, which simulates interaction with an appropriate MDP and ``stitches together'' the exploratory policies into a predictor. 


\paragraph{A provable computational benefit for reset access in Block MDPs (\Cref{sec:resets}).} Under the same regularity condition on $\Phi$ as before, we show that \emph{one-context regression} is a sufficient (and nearly necessary) oracle for reward-free RL in $\Phi$-decodable Block MDPs, given the additional ability to \emph{reset} to previously observed states \citep{li2021sample,yin2022efficient,
  mhammedi2024power}. Combined with the recent work of \cite{golowich2024exploration}, our result gives strong evidence that reset access has computational benefits over episodic access. 

Our algorithm uses a variant of the \emph{inverse kinematics} objective \citep{misra2020kinematic,mhammedi2023representation}, but exploits reset access to simplify the computational oracle. Previously, one-context regression was only known to be sufficient for Block MDPs with horizon $1$ or with deterministic dynamics \citep{golowich2024exploration}; the closest prior work is $\RVFS$ \citep{mhammedi2024power}, which solves RL with resets in general Block MDPs but requires an \emph{agnostic}/cost-sensitive regression oracle.\loose

\paragraph{A computational separation between Block MDPs and Low-Rank MDPs (\Cref{sec:lowrank}).} There has been recent progress on oracle-efficient algorithms for \emph{Low-Rank} MDPs \citep{modi2021model,zhang2022efficient,mhammedi2023efficient}, of which Block MDPs are a special case. However, these algorithms seemingly require a much more complex oracle. Do analogues of one- or two-context regression suffice for exploration in Low-Rank MDPs? We show that the analogue of one-context regression is cryptographically \emph{insufficient} for exploration in Low-Rank MDPs under reset access, thereby separating Low-Rank MDPs from Block MDPs. Conceptually, this separation arises from the same source as cryptographic hardness of agnostic halfspace learning \citep{tiegel2023hardness}, and points to the lack of \emph{weight function realizability} in Low-Rank MDPs as a potential computational barrier.





\section{Preliminaries}\label{sec:prelim}

To begin, we formally introduce Block MDPs, the episodic and reset access models for RL, and the computational problems: reward-free RL and one- and two-context regression. As basic notation, $[k]$ denotes the set of integers $\{1,\dots,k\}$, and $\Delta(\MZ)$ denotes the family of distributions over set $\MZ$.

\subsection{Block MDPs and Episodic RL}

For a set $\Phi \subseteq (\MX\to\MS)$, i.e. a set of functions $\phi:\MX\to\MS$, a (reward-free) $\Phi$-decodable \emph{Block MDP} \citep{du2019provably} is a tuple
$
M = (H, \MS, \MX, \MA, (\til \BP_h)_{h \in [H]}, (\til\BO_h)_{h\in[H]}, \phi^\st)
$
where $H \in \NN$ is the \emph{horizon}, $\MS$ is the \emph{latent state space}, $\MX$ is the \emph{observation space}, $\MA$ is the \emph{action set}, $\til \BP_1 \in \Delta(\MS)$ is the \emph{latent initial distribution}, $\til \BP_h: \MS \times \MA \to \MS$ is the \emph{latent transition distribution} into step $h$ (for any $h \in \{2,\dots,H\}$), $\til \BO_h: \MS \to \Delta(\MX)$ is the \emph{observation distribution} at step $h$ (for any $h \in [H]$), and $\phi^\st\in\Phi$ is the \emph{decoding function}. 
It is required that $\phi^\st(x_h) = s_h$ with probability $1$ over $x_h \sim \til \BO_h(\cdot\mid{}s_h)$, for all $h \in [H]$ and $s_h \in \MS$ (so in particular, $\til \BO_h(\cdot \mid{} s), \til \BO_h(\cdot \mid{} s')$ have disjoint supports for all $s \neq s'$).
For any $x,x'\in\MX$ and $a \in \MA$, we write $\BP_h(x'\mid{}x,a)$ to denote $\til\BP_h(\phi^\st(x')\mid{}\phi^\st(x),a) \til\BO_h(x'\mid{}\phi^\st(x'))$. We similarly define $\BP_1(x) = \til\BP_1(\phi^\st(x))\til\BO_1(x\mid{}\phi^\st(x))$. Observe that $(H, \MX, \MA,(\BP_h)_h)$ is a (reward-free) MDP, with the potentially large state space $\MX$.

\paragraph{Access model I: Episodic online RL.} Fix a Block MDP $M$ as specified above. We say that an algorithm $\Alg$ has (episodic) online access to $M$ to mean that $\Alg$ is executed in the following model. First, $\Alg$ is given $H$ and $\MA$ as input. At any time, $\Alg$ can request a new \emph{episode}. The model then draws $s_1 \sim \til \BP_1$ and $x_1 \sim \til\BO_1(\cdot\mid{}s_1)$, and sends $x_1$ to $\Alg$. The timestep of the episode is set to $h=1$. So long as $h \leq H$, the algorithm $\Alg$ can at any time play an action $a_h \in \MA$. If $h < H$, then the model draws $s_{h+1} \sim \til \BP_{h+1}(\cdot\mid{}s_h, a_h)$, and $x_{h+1} \sim \til\BO_{h+1}(\cdot\mid{}s_{h+1})$, and sends $x_{h+1}$ to $\Alg$ and increments $h$. Otherwise, the episode concludes. Note that $\Alg$ never observes the latent states $s_{1:H}$.

\paragraph{Access model II: (Episodic) online RL with resets.} We say that an algorithm $\Alg$ has reset access to $M$ to mean that $\Alg$ is given access to the following sampling oracles (in addition to $H$ and $\MA$, as before). The first sampling oracle draws $s_1 \sim \til\BP_1$ and $x_1 \sim \til\BO_1(\cdot\mid{}s_1)$, and outputs $x_1$. The second sampling oracle takes as input a step $h \in [H-1]$, any previously-seen observation $x_h \in \MX$, and an action $a_h \in \MA$; then, the oracle samples $s_{h+1} \sim \til\BP_{h+1}(\cdot\mid{}\phi^\st(x_h),a_h)$ and $x_{h+1} \sim \til\BO_{h+1}(\cdot\mid{}s_{h+1})$ and outputs $x_{h+1}$. Informally, this oracle allows the algorithm to not only sample independent episodes from $M$, but also to reset to any previously-seen observation.


\paragraph{Policies and visitations.} A (randomized) \emph{policy} $\pi = (\pi_h)_{h=1}^H$ is a collection of maps $\pi_h: \MX \to \Delta(\MA)$. We write $\Pi$ to denote the set of all policies. For $k \in [H]$, we write $d^{M,\pi}_k\in\Delta(\MS)$ to denote the distribution of $s_k$ in an episode of interaction with $M$ where $a_h\sim\pi_h(x_h)$ for each step $h$. 

\subsection{Computational Problems}

Fix sets $\MS,\MX$ and a concept class $\Phi \subseteq (\MX\to\MS)$. For the purposes of oracle reductions, an algorithm/oracle for a statistical learning problem is parametrized by its statistical efficiency (i.e. how many samples it needs in order to achieve certain accuracy) and, in the case of reward-free RL, the number of policies in the output. Throughout, we assume that the outputs of an algorithm/oracle (either policies or prediction functions) are succinctly described by circuits (and efficiently evaluatable).
For an algorithm $\Alg$ with access to an oracle $\MO$, the \emph{oracle time complexity} of $\Alg$ is the time complexity in the computational model where each query to $\MO$ takes linear time in the query length.

\begin{definition}[Reward-free RL \citep{du2019provably,jin2020reward}]\label{def:strong-rf-rl}
Fix\arxiv{ functions} $\Nrl, \Krl: (0,1/2)^2 \times \NN^2 \to \NN$. An interactive algorithm $\Alg$ is an $(\Nrl,\Krl)$-efficient \emph{reward-free (episodic/reset) RL algorithm} for $\Phi$ if the following holds. Fix $\epsilon,\delta \in (0,1/2)$,  $H \in \NN$, and a set $\MA$. Given (episodic/reset) access to a $\Phi$-decodable Block MDP $M$ with horizon $H$ and action set $\MA$, $\Alg(\epsilon,\delta, H,\MA)$ uses at most $\Nrl(\epsilon,\delta,H,|\MA|)$ (episodes/queries), and outputs a set of policies $\Psi$ with\arxiv{:
\begin{itemize}
    \item $|\Psi| \leq \Krl(\epsilon,\delta,H,|\MA|)$
    \item With probability at least $1-\delta$, it holds that for all $s \in \MS$ and $h \in [H]$, 
    \begin{equation} \max_{\pi \in \Psi} d^{M,\pi}_h(s) \geq \max_{\pi \in \Pi} d^{M,\pi}_h(s) - \epsilon.\label{eq:rfrl-pc}
    \end{equation}
\end{itemize}
}
\colt{ 
(1) $|\Psi| \leq \Krl(\epsilon,\delta,H,|\MA|)$, and (2) with probability at least $1-\delta$, for all $s \in \MS$ and $h \in [H]$, 
    \begin{equation} \max_{\pi \in \Psi} d^{M,\pi}_h(s) \geq \max_{\pi \in \Pi} d^{M,\pi}_h(s) - \epsilon.\label{eq:rfrl-pc}
    \end{equation}

}
\end{definition}

A set $\Psi$ that satisfies \cref{eq:rfrl-pc} is called a $(1,\epsilon)$-\emph{policy cover}; it formalizes the notion of reaching all latent states with near-maximal probability. While the ultimate goal in many applications is \emph{reward-directed RL}, it is straightforward to convert a reward-free RL algorithm into a reward-directed RL algorithm, and most existing oracle-efficient RL algorithms for Block MDPs (and more general classes) use some version of reward-free RL as a subroutine---see \cref{sec:app_related} for discussion and comparison to variants of \cref{def:strong-rf-rl}. We defer defining reward-free RL in more general settings to \cref{sec:lowrank}.\loose 


We now formally define the two notions of regression oracle we consider; versions of both (c.f. \cref{remark:oracle-subtleties-app}) have been used extensively throughout the reinforcement learning literature.

\begin{definition}[One-context regression]\label{def:one-con-regression}
Fix\arxiv{ a function} $\Nreg: (0,1/2)^2 \to \NN$. An algorithm $\Alg$ is an $\Nreg$-efficient one-context regression algorithm for $\Phi$ if the following holds. Fix $\epsilon,\delta \in (0,1/2)$, $n \in \NN$, $\phi \in \Phi$, $\MD \in \Delta(\MX)$, and $f: \MS \to [0,1]$. Let $(x^{(i)},y^{(i)})_{i=1}^n$ be i.i.d. samples with $x^{(i)} \sim \MD$, $y^{(i)} \in \{0,1\}$, and $\E[y^{(i)}\mid{}x^{(i)}] = f(\phi(x^{(i)}))$. If $n \geq \Nreg(\epsilon,\delta)$, then with probability at least $1-\delta$, the output of $\Alg((x^{(i)},y^{(i)})_{i=1}^n, \epsilon,\delta)$ is a circuit $\MR: \MX \to [0,1]$ satisfying \colt{$\EE_{x \sim \MD} (\MR(x) - f(\phi(x)))^2 \leq \epsilon.$}\arxiv{\[\EE_{x \sim \MD} (\MR(x) - f(\phi(x)))^2 \leq \epsilon.\]}
\end{definition}

See \cref{fig:ocr} for the graphical model structure satisfied by each sample. One-context regression is a natural oracle for estimating value functions and Bellman backups \citep{ernst2005tree,mhammedi2023efficient,golowich2024exploration}. In our definition, like that of \cite{golowich2024exploration}, the oracle is improper and only required to succeed on well-specified i.i.d. data.

\arxiv{
\begin{figure}[t]
\centering     
\subfigure[One-context regression]{\label{fig:ocr}\parbox{6cm}{\centering\includegraphics[height=3cm]{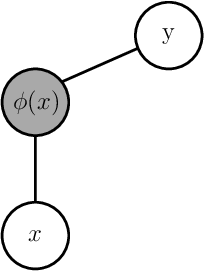}\vspace{1em}}}
\hspace{5em}
\subfigure[Two-context regression]{\label{fig:tcr}\parbox{6cm}{\centering\includegraphics[height=3cm]{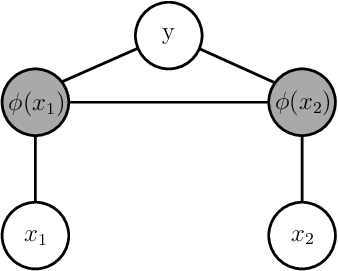}\vspace{1em}}}
\caption{Undirected graphical model representation for a single sample from \emph{(a)} one-context, or \emph{(b)} two-context regression. Note that the gray variables are unobserved.}
\end{figure}
}
\colt{ 
\begin{figure}[t]
\centering
\begin{minipage}{0.25\textwidth}
        \captionsetup{justification=raggedright, singlelinecheck=off} 

    \caption{Undirected graphical model representation for a single sample from \emph{(a)} one-context, or \emph{(b)} two-context regression. Note that the gray variables are unobserved.}
\end{minipage}%
\hspace{1em}
\begin{minipage}{0.7\textwidth}
    \subfigure[One-context regression]{\label{fig:ocr}\parbox{4.5cm}{\centering\includegraphics[height=3cm]{ocr.eps}\vspace{1em}}}
    \hspace{1em}
    \subfigure[Two-context regression]{\label{fig:tcr}\parbox{4.5cm}{\centering\includegraphics[height=3cm]{tcr.eps}\vspace{1em}}}
\end{minipage}
\end{figure}
}


\begin{definition}\label{def:realizable-distribution}
Let $\phi \in \Phi$. A distribution $\MD \in \Delta(\MX \times \MX)$ is $\phi$-realizable if $(X_1,X_2) \sim \MD$ satisfies $X_2 \perp X_1 \mid \phi(X_1)$ and $X_2 \perp X_1 \mid \phi(X_2)$.
\end{definition}

\begin{definition}[Two-context regression]\label{def:two-con-regression}
Fix\arxiv{ a function} $\Nreg: (0,1/2)^2 \to \NN$. An algorithm $\Alg$ is an $\Nreg$-efficient two-context regression algorithm for $\Phi$ if the following holds. Fix $\epsilon,\delta \in (0,1/2)$, $n \in \NN$, $\phi \in \Phi$, a $\phi$-realizable distribution $\MD \in \Delta(\MX\times\MX)$, and $f: \MS \times \MS \to [0,1]$. Let $(x_1^{(i)},x_2^{(i)},y^{(i)})_{i=1}^n$ be i.i.d. samples with $(x_1^{(i)},x_2^{(i)}) \sim \MD$, $y^{(i)} \in \{0,1\}$, and $\E[y^{(i)}\mid{}x_1^{(i)},x_2^{(i)}] = f(\phi(x_1^{(i)}),\phi(x_2^{(i)}))$. If $n \geq \Nreg(\epsilon,\delta)$, then with probability at least $1-\delta$, the output of $\Alg$ on input $(x_1^{(i)},x_2^{(i)},y^{(i)})_{i=1}^n$ is a circuit $\MR: \MX \times \MX \to [0,1]$ satisfying \colt{$\EE_{(x_1,x_2) \sim \MD} (\MR(x_1,x_2) - f(\phi(x_1),\phi(x_2)))^2 \leq \epsilon.$}\arxiv{\[\EE_{(x_1,x_2) \sim \MD} (\MR(x_1,x_2) - f(\phi(x_1),\phi(x_2)))^2 \leq \epsilon.\]}
\end{definition}

See \cref{fig:tcr} for the graphical model structure satisfied by each sample. Two-context regression is a natural oracle for estimating inverse kinematics---e.g., ``given data from two policies, predict which policy the sample came from.'' Variants of this oracle are widely-used in RL \citep{misra2020kinematic,lamb2022guaranteed,mhammedi2023representation}; see \cref{sec:app_related} for discussion.

\subsection{Additional Assumptions}

Our reductions \emph{from} regression \emph{to} RL require the following mild regularity condition on $\Phi$; essentially, it asserts that there are two latent states $\{0,1\}$ that are fully observable, irrespective of the decoding function $\phi^\st$. These extra states enable simulating ``reward'' states in the reductions. 

\begin{definition}[Regularity condition]\label{def:regular}
We say that a concept class $\Phi$ is \emph{regular} if there are two special states $\{0,1\} \in \MS \cap \MX$ that are fully observed, i.e. $\phi^{-1}(b) = \{b\}$ for all $\phi\in\Phi$ and $b \in \{0,1\}$.\loose
\end{definition}



\section{A Minimal Oracle for Episodic RL in Block MDPs}\label{sec:online}
\nc{\Nregc}{\Nreg^\circ}
\nc{\Creg}{C_{\mathsf{reg}}}

\nc{\Nrlc}{\Nrl^\circ}
\nc{\Krlc}{\Krl^\circ}
\nc{\Crl}{C_{\mathsf{RL}}}
\nc{\RegToRL}{\mathtt{RegToRL}}

In this section we show that two-context regression is a minimal oracle for reward-free episodic RL in Block MDPs. First we show that it is \emph{sufficient} (\cref{sec:episodic-suff}), then that it is \emph{necessary} (\cref{sec:episodic-nec}).

\subsection{Sufficiency: Reducing Episodic RL to Two-Context Regression}\label{sec:episodic-suff}

Our first result gives a computational reduction from reward-free RL (\cref{def:strong-rf-rl}) in a $\Phi$-decodable block MDP (with the episodic access model) to two-context regression for $\Phi$ (\cref{def:two-con-regression}). More precisely, we give a reward-free RL algorithm that requires access to a two-context regression oracle $\Reg$ (as well as episodic access to an MDP), and is oracle-efficient so long as $\Reg$ is sample-efficient. 

\begin{restatable}[Special case of \cref{thm:pco-app}]{theorem}{rltoregression}\label{cor:online-rl-to-regression}
There is a constant $C_{\ref{cor:online-rl-to-regression}}>0$ and an algorithm $\PCO$ (\cref{alg:pco}\colt{ in \cref{sec:app_online}}) so that the following holds. Let $\Phi \subseteq (\MX\to\MS)$ be any concept class, let $\Nregc,\Creg \in \NN$\arxiv{ be parameters}, and let $\Reg$ be a $\Nreg$-efficient two-context regression oracle for $\Phi$ with $\Nreg(\epsilon,\delta) := \Nregc/(\epsilon\delta)^{\Creg}$. Then $\PCO(\Reg,\Nreg,|\MS|,\cdot)$ 
is an $(\Nrl,\Krl)$-efficient reward-free RL algorithm for $\Phi$ in the \emph{episodic access model}, with $\Krl(\epsilon,\delta,H,|\MA|) \leq H^2|\MS|^2$ and $\Nrl(\epsilon,\delta,H,|\MA|) \leq \Nregc \cdot \left(\frac{H|\MA||\MS|}{\epsilon\delta}\right)^{C_{\ref{cor:online-rl-to-regression}}\Creg}.$
Moreover, the oracle time complexity of $\PCO$ is at most $\Nregc \cdot \left(\frac{H|\MA||\MS|}{\epsilon\delta}\right)^{C_{\ref{cor:online-rl-to-regression}}\Creg}.$
\end{restatable}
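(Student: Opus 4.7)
\textbf{Proof proposal for \cref{cor:online-rl-to-regression}.} The plan is to construct policy covers inductively layer-by-layer, in the style of \texttt{HOMER} of \cite{misra2020kinematic}, but with the inverse-kinematics oracle replaced by two-context regression and with the reachability assumption removed. Specifically, I maintain the invariant that after processing layer $h$, the algorithm has produced a set $\Psi_h$ of at most $H|\MS|$ policies forming an approximate $(1,\epsilon_h)$-policy cover for step $h$, in the sense that every latent state $s\in\MS$ that is reachable at step $h$ with probability at least some small threshold $\epsilon_h$ is reached with near-maximal probability by some $\pi \in \Psi_h$. The base case $h=1$ is trivial. The output of the algorithm is $\Psi = \bigcup_{h=1}^H \Psi_h$, which has size at most $H^2|\MS|^2$ as required.

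The core subroutine constructs $\Psi_{h+1}$ from $\Psi_h$. First, I collect a dataset by rolling in with the mixture policy $\pimix = \Unif(\Psi_h)$, playing a uniformly random action at step $h$, and recording the transition $(x_h, a_h, x_{h+1})$. I then set up a two-context regression problem designed so that its Bayes-optimal predictor reveals the latent dynamics: for each pair $(a,a') \in \MA^2$, create a discrimination dataset in which half of the samples $(x_h,x_{h+1})$ come from the natural distribution conditioned on $a_h=a$, and half come from a ``decoupled'' distribution in which $x_h$ is conditioned on $a_h=a$ but $x_{h+1}$ is independently drawn from the distribution conditioned on $a_h=a'$. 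Because $x_{h+1}$ is a clean emission from its latent state, and $x_h \perp x_{h+1} \mid \phi^{\st}(x_h)$ under both marginals, the two component distributions are $\phi^\st$-realizable in the sense of \cref{def:realizable-distribution} (after checking the independence structure carefully), and the Bayes-optimal predictor depends only on $(\phi^\st(x_h),\phi^\st(x_{h+1}))$. The oracle therefore returns a regressor $\MR_{a,a'}$ that approximates this latent-state function in mean-square.

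Next, I extract policies from these regressors. The vector of predictor values $v(x_{h+1}) := (\MR_{a,a'}(x_h, x_{h+1}))_{a,a',x_h}$ acts as an approximate ``latent-state fingerprint'' of $x_{h+1}$: up to regression error, $v(x_{h+1})$ depends only on $\phi^\st(x_{h+1})$. I cluster the observed $x_{h+1}$'s into at most $|\MS|$ groups using a simple thresholding/nearest-centroid scheme on these fingerprints, as in \cite{misra2020kinematic}, which with high probability separates latent states that are ``reached with nontrivially different probabilities.'' For each cluster $C$, I then define an internal reward function $r(x) = \One\{x \in C\}$, and run approximate dynamic programming (Policy Search by Dynamic Programming / $\PSDP$-style) over the horizon using $\Psi_1,\dots,\Psi_h$ as base roll-in distributions, reducing each regression step to another call to the two-context oracle (or a one-context specialization of it). This yields one policy per cluster, forming $\Psi_{h+1}$.

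The main obstacle is handling the removal of the reachability assumption. Under reachability, every latent state has visitation probability bounded below, so regression error on $\pimix$-distributed data translates uniformly to accurate fingerprints for every state. Without it, states that are truly unreachable (probability below threshold $\epsilon_h$) can be arbitrarily misclustered, but this is harmless for the policy cover guarantee in \cref{eq:rfrl-pc}; the delicate case is states with intermediate visitation. I would handle this by (i) explicitly discarding clusters whose empirical visitation under $\pimix$ is below a carefully chosen threshold, and (ii) propagating an $\epsilon_h$ that shrinks with $h$ only by a $\poly(H,|\MS|,|\MA|)$ factor, so that after $H$ layers the final error is still $\poly(H,|\MS|,|\MA|) \cdot \sqrt{\epsreg}$ where $\epsreg$ is the regression MSE. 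Choosing $\epsreg = (\epsilon\delta/(H|\MA||\MS|))^{O(1)}$ and invoking the sample complexity $\Nreg(\epsreg,\delta') = \Nregc/(\epsreg\delta')^{\Creg}$ gives the stated bound $\Nrl \leq \Nregc (H|\MA||\MS|/(\epsilon\delta))^{O(\Creg)}$, with a matching oracle time complexity since each of the $\poly(H,|\MS|,|\MA|)$ oracle calls is on a dataset of that size.
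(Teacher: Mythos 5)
Your high-level architecture matches the paper's: build policy covers layer-by-layer in the $\HOMER$ style, estimate kinematics via two-context regression on a contrastive dataset, cluster observations by their kinematic fingerprints, and optimize the resulting internal rewards with a $\PSDP$ subroutine that itself reduces to the regression oracle. The minor variation of setting up one regression per action \emph{pair} $(a,a')$ rather than one per action $a$ against a fixed marginal is cosmetic and would also give a $\phi^\st$-realizable instance.

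However, your handling of the removed reachability assumption has a genuine gap. You propose to fix it purely with truncation and error propagation: discard clusters of low empirical visitation, argue that misclustering unreachable states is harmless, and let the error threshold $\epsilon_h$ degrade polynomially across layers. The paper explicitly identifies why this fails when the reward functions $\MR^{(t)}$ are \emph{learned} rather than exact: because the fingerprints $\wh f_{h+1}$ are only accurate in $L^2$ under the current roll-in distribution, $\PSDP$ (which optimizes over \emph{all} policies, not just those staying near the roll-in support) can find a policy that reaches a low-visitation region where $\MR^{(t)}$ is wildly inaccurate and collects spuriously high reward there, without ever actually covering the intended latent state $s^\st$. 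Truncating away such regions in the analysis does not help, because the actual MDP still contains them and $\PSDP$ is free to exploit them. Discarding low-visitation clusters also does not help, since the failure mode occurs \emph{inside} the $\PSDP$ optimization, not in the cluster selection.

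The missing ingredient is the outer loop: $\PCO$ reruns the entire layer-by-layer procedure $R = H|\MS|$ times, accumulating a backup policy cover $\Gamma^{(r)}$ from all policies produced in prior rounds and mixing it into every data-collection distribution. The point is a win/win argument: either round $r$'s reward functions are accurate enough that $\PSDP$ produces true covers, or some $\PSDP$-output policy reaches the terminal state of the truncated MDP $\Mbar(\Gamma^{(r)})$ with nontrivial probability, which (via \cref{lemma:term-prob}) means it discovered a state that $\Gamma^{(r)}$ did not previously cover well. The latter can happen at most $H|\MS|-1$ times before the set $\MV^{(r)}$ of discovered state-layer pairs saturates, so some round must succeed. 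Without this outer loop, your single-pass analysis has no mechanism to rule out the spurious-reward failure, and the claimed invariant on $\Psi_{h+1}$ would not be provable.
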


\arxiv{\begin{algorithm}[t]
	\caption{$\PCO$: Episodic Reward-free RL via Two-Context Regression}
	\label{alg:pco}
	\begin{algorithmic}[1]\onehalfspacing
          \State \textbf{input:} Two-context regression oracle $\Reg$; efficiency function $\Nreg: (0,1)\times(0,1)\to\NN$; \# of latent states $S$; final error tolerance $\epfinal \in (0,1)$; failure parameter $\delta \in (0,1)$; horizon $H$; action set $\MA$.
		%
        \State $N \gets \Nreg\left(\left(\frac{\epfinal\delta}{H|\MA|S}\right)^{C_{\ref{thm:pco-app}}},\left(\frac{\epfinal\delta}{H|\MA|S}\right)^{C_{\ref{thm:pco-app}}}\right)$.
        \State $\trunc \gets \epfinal/(4+HS)$.
        \State $R \gets SH$, $\tsmall \gets \frac{\trunc^2}{RS^2H^2}$, $\alpha \gets \frac{1-4\trunc}{S}$, $m \gets \frac{2}{\min(\alpha\trunc,\tsmall)}\log(S/\delta)$, $n \gets \frac{m|\MA|}{\trunc}$.
        \State $\epsilon \gets \min(\frac{\alpha^{32}\trunc^{64}\tsmall^{32}}{96^{16}H^{16}S^{16}|\MA|^{32}m^8}, \frac{\delta^4}{81n^4})$, $\gamma \gets \epsilon^{1/16}$, $\gamma' \gets 2\epsilon^{1/8}\sqrt{m|\MA|}$.
        \State $\Gamma^{(1)} \gets \emptyset$.
        \For{$1 \leq r \leq R$}
            \State $\Psi_1^{(r)} := \{\piunif\}$.
            \For{$1 \leq h < H$}
                \State $\Psi_{h+1}^{(r)} \gets \EPCO(\Reg, h, \Psi_{1:h}^{(r)}, \Gamma,n,m,N,\gamma,\gamma')$.\Comment{See \cref{alg:extend-pc-online}}
            \EndFor
            \State $\Gamma^{(r+1)} \gets \Gamma^{(r)} \cup \bigcup_{h \in [H]:|\Psi_h^{(r)}| \leq S} \Psi_h^{(r)}$.
        \EndFor
        \State \textbf{return} $\bigcup_{h \in [H]} \bigcup_{1 \leq r \leq R: |\Psi_h^{(r)}| \leq S} \Psi_h^{(r)}$.
	\end{algorithmic}
\end{algorithm}

\begin{algorithm}[t]
	\caption{$\EPCO$: Extend Policy Cover for Episodic RL}
	\label{alg:extend-pc-online}
	\begin{algorithmic}[1]\onehalfspacing
          \State \textbf{input:} Two-context regression oracle $\Reg$; step $h \in [H]$; policy covers $\Psi_{1:h}$; backup policy cover $\Gamma$; sample counts $n,m,N\in\NN$; tolerances $\gamma,\gamma' \in (0,1)$.
		%
        \State $\Reg'(\cdot) \gets \OneTwo(\Reg,\cdot)$ \Comment{See \cref{alg:onetwo}}
        \For{$a \in \MA$}
            \State $\MD_a \gets \emptyset$.
            \For{$N$ times}
                \State Sample trajectory $(x_1,a_1,\dots,x_h,a,x_{h+1}) \sim \frac{1}{2}\left(\Unif(\Psi_h) + \Unif(\Gamma)\right) \circ_h a$.
                \State Sample trajectory $(x'_1,a'_1,\dots,x'_h,a'_h,x'_{h+1}) \sim \frac{1}{2}\left(\Unif(\Psi_h) + \Unif(\Gamma)\right) \circ_h \Unif(\MA)$.
                \State Draw $y \sim \Ber(1/2)$.
                \State If $y = 1$, update dataset: $\MD_a \gets \MD_a \cup \{(x_h,x_{h+1},y)\}$.
                \State If $y = 0$, update dataset: $\MD_a \gets \MD_a \cup \{(x_h,x'_{h+1},y)\}$.
            \EndFor
            \State $\wh f_{h+1}(\cdot,\cdot;a) \gets \Reg(\MD_a)$. \label{line:hat-f-regression}
        \EndFor
        \For{$1 \leq i \leq m$}
            \State Sample trajectory $(x_1,a_1,\dots,x_h) \sim \frac{1}{2}(\Unif(\Psi_h)+\Unif(\Gamma))$.
            \State Set $x^{(i)}_h := x_h$.
        \EndFor
        \State $\Psi_{h+1} \gets \emptyset$, $\Tclus \gets \emptyset$.
        \For{$1 \leq t \leq n$}
            \State Draw $(x_1,a_1,\dots,x_h,a_h,x_{h+1}) \sim \frac{1}{2}(\Unif(\Psi_h)+\Unif(\Gamma)) \circ_h \Unif(\MA)$ and set $\ol x_{h+1}^{(t)} := x_{h+1}$.
            \State Define $\MR^{(t)}: \MX \to [0,1]$ by 
            \[\MR^{(t)}(x) := \max\left(0, 1-\frac{\max_{(i,a) \in [m]\times\MA} |\wh f_{h+1}(x_h^{(i)},\xbar_{h+1}^{(t)};a) - \wh f_{h+1}(x_h^{(i)}, x; a)|}{\gamma}\right).\] \label{line:rt-def-online}
            \If{$\max_{(i,a) \in [m]\times\MA} |\wh f_{h+1}(x_h^{(i)},\xbar_{h+1}^{(t)};a) - \wh f_{h+1}(x_h^{(i)},\xbar_{h+1}^{(t')};a)| > \gamma'$ for all $t' \in \Tclus$}\label{line:cluster-threshold-online}
                \State $\pihat^{(t)} \gets \PSDPB(h, \Reg', \MR^{(t)}, \Psi_{1:h}, \Gamma, N)$.\Comment{See \cref{alg:psdpb}}\label{line:psdp-call-online}
                \State Update $\Psi_{h+1} \gets \Psi_{h+1} \cup \{\pihat^{(t)}\}$.
                \State Update $\Tclus \gets \Tclus \cup \{t\}$.
            \EndIf
        \EndFor 
        \State \textbf{return} $\Psi_{h+1}$.
	\end{algorithmic}
\end{algorithm}
}

While \cref{cor:online-rl-to-regression} assumes a natural parametric scaling for $\Nreg$, the full result (\cref{thm:pco-app}) applies to any efficiency function. Note that $\Nregc,\Creg$ will naturally be larger for more complex concept classes $\Phi$, but there are no ``hidden'' dependencies on $\Phi$. Informally, \cref{cor:online-rl-to-regression} shows that two-context regression is a \emph{sufficient} oracle for reward-free episodic RL in block MDPs.

\paragraph{Proof overview.} The main subroutine of $\PCO$ is $\EPCO$, which strongly resembles the $\HOMER$ algorithm \citep{misra2020kinematic}.\footnote{Specifically, the ``non-quantized'' version of $\HOMER$ described in Appendix~E of \cite{misra2020kinematic}.} The basic idea of $\HOMER$ (and many other oracle-efficient RL algorithms \colt{\citep{du2019provably,mhammedi2023efficient}}\arxiv{\citep{du2019provably,mhammedi2023efficient,golowich2024exploring}}) is to iteratively learn policy covers $\Psi_{1:H}$ for each layer of the MDP. In $\HOMER$, given policy covers for layers $1,\dots,h$, a policy cover for layer $h+1$ is learned by applying the policy optimization method $\PSDP$ \citep{bagnell2003policy} to a set of carefully-designed internal reward functions at layer $h+1$. Ideally, the reward functions should incentivize reaching individual latent states; of course, latent states are not in general identifiable, so this criterion must be relaxed. Instead, the rewards are constructed in two steps. First, use two-context regression (with an appropriately-generated dataset, inspired by contrastive learning methods) to estimate the following \emph{kinematics function}:
\begin{equation} f_{h+1}(x_h,x_{h+1};a_h) := \frac{\BP_{h+1}(x_{h+1}\mid{}x_h,a_h)}{\BP_{h+1}(x_{h+1}\mid{}x_h,a_h) + F_{h+1}(x_{h+1})},\label{eq:kinematics-intro}\end{equation}
where $F_{h+1}$ is a certain normalization function. Second, sample a large number of ``cluster center'' observations $(\xbar_{h+1}^{(t)})$, 
and, for each, define a reward $\MR^{(t)}$ (derived from \cref{eq:kinematics-intro}) which is large precisely for those observations $x_{h+1}$ that have approximately the same kinematics as $\xbar^{(t)}_{h+1}$.

$\PCO$ follows the same blueprint, with two modifications. First, $\HOMER$ uses an offline cost-sensitive classification oracle for $\PSDP$. We use an alternative implementation of $\PSDP$ \citep{mhammedi2023representation} which can be implemented with one-context regression (\cref{lemma:psdp-trunc-online}) and hence two-context regression (via \cref{prop:onetwo}). Second, the analysis of $\HOMER$ assumes that all states are reachable with non-negligible probability. We remove this assumption via truncation arguments and an iterative discovery method \citep{golowich2024exploring}---this is the reason for the outer loop in $\PCO$. See \cref{sec:app_online} for the formal \colt{algorithm and }analysis.

\subsection{Necessity: Reducing Two-Context Regression to Episodic RL}\label{sec:episodic-nec}

Our second result provides a converse of \cref{cor:online-rl-to-regression}. We give a two-context regression algorithm that requires access to a reward-free episodic RL \emph{oracle} (\cref{def:strong-rf-rl}), and is oracle-efficient so long as the oracle is sample-efficient.\loose

\begin{restatable}{theorem}{regtorl}\label{cor:regression-to-online-rl}
There is a constant $C_{\ref{cor:regression-to-online-rl}}>0$ and an algorithm $\RegToRL$ (\cref{alg:regtorl} in \cref{sec:app_minimality}) so that the following holds. Let $\Phiaug \subseteq (\Xaug\to\Saug)$ be any regular concept class (\cref{def:regular}), let $\Nrlc,\Crl \in \NN$\arxiv{ be parameters}, and let $\MO$ be a $(\Nrl,\Krl)$-efficient reward-free episodic RL oracle for $\Phiaug$, with $\max(\Nrl(\epsilon,\delta,H,A),\Krl(\epsilon,\delta,H,A)) \leq \Nrlc \cdot (AH/\epsilon\delta)^{\Crl}$. Then $\RegToRL(\MO,\cdot)$ is an $\Nreg$-efficient two-context regression algorithm (\cref{def:two-con-regression}) for $\Phiaug$ with 
$\Nreg(\epsilon,\delta) \leq \Nrlc \left(|\MS|/(\epsilon\delta)\right)^{C_{\ref{cor:regression-to-online-rl}} \cdot \Crl}$ and with oracle time complexity at most $\Nrlc \left(|\MS|/(\epsilon\delta)\right)^{C_{\ref{cor:regression-to-online-rl}} \cdot \Crl}$.
\end{restatable}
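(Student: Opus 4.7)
The plan is to build a reduction that embeds the two-context regression problem into a short-horizon $\Phiaug$-decodable Block MDP $M$, simulates episodes of $M$ using the regression samples, invokes the reward-free RL oracle $\MO$ to obtain a policy cover $\Psi$, and finally converts $\Psi$ into a predictor for the target $(x_1,x_2) \mapsto f(\phi^\st(x_1),\phi^\st(x_2))$.

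The MDP $M$ is constructed with decoder $\phi^\st$, latent state space $\Saug$, horizon $H=O(1)$, and an action space of size $\poly(|\Saug|)$. The initial observation $x_1$ is drawn from the marginal of $\MD$, so step-1's latent state is $\phi^\st(x_1)$. Because $\MD$ is $\phi^\st$-realizable, the conditional distribution of $x_2$ given $x_1$ depends only on $\phi^\st(x_1)$, yielding a valid Block MDP transition into step~2 with latent state $\phi^\st(x_2)$. The terminal step uses an action $a \in \Saug$ to index into $f$, transitioning to the fully-observed latent states $\{0,1\}$ (available by regularity) with probabilities derived from $f$. Each episode is simulated by consuming one fresh sample $(x_1^{(i)},x_2^{(i)},y^{(i)})$: the first two observations are presented verbatim, and $y^{(i)}$ is used to generate the terminal observation, with a gating mechanism that preserves the Block MDP distribution exactly under arbitrary oracle actions. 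Given $\Psi$, the predictor $\MR(x_1,x_2)$ is constructed by running each $\pi \in \Psi$ on a synthetic trajectory initialized with $(x_1,x_2)$ and reading off the induced terminal-state visitation at latent state $1$; by the $(1,\epsilon')$-policy cover property together with the design of $M$, this visitation approximates $f(\phi^\st(x_1),\phi^\st(x_2))$ for every latent pair. Composing $\epsilon',\delta'$ with the stated bounds on $\Nrl$ and $\Krl$ yields the claimed sample complexity $\Nregc(|\MS|/(\epsilon\delta))^{O(\Crl)}$ and matching oracle time bound.

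The main obstacle is reconciling the Block MDP's Markov constraint---transitions at each step depend only on the current latent state and action---with the two-context dependence of $f$ on both $\phi^\st(x_1)$ and $\phi^\st(x_2)$. The key design idea is to use the action space at the intermediate step (set to $\Saug$) to ``carry forward'' the step-1 latent state into the terminal transition, together with a gating mechanism that forces any policy with high terminal-state visitation to effectively decode $\phi^\st(x_1)$ from the step-1 observation. Ensuring that this gating can be simulated consistently \emph{without} knowledge of $\phi^\st$---so that the RL oracle cannot distinguish the simulation from the intended Block MDP---is the most delicate part, and it determines which values of $f$ the resulting policy cover indexes. Making this work simultaneously for all $|\Saug|^2$ latent pairs is, I expect, the crux of the full proof, and what forces the $\poly(|\MS|)$ factor in the final sample complexity.
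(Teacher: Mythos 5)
The construction you propose does not survive the simulability constraint, and this is a genuine gap, not a detail deferred to the ``crux.''

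Your plan has the policy at an intermediate step choose an action $a \in \Saug$ that is supposed to ``carry forward'' or decode $\phi^\st(x_1)$, with a ``gating mechanism'' that rewards a correct guess by transitioning to latent state $1$ with probability derived from $f$. But the reduction has access only to the tuples $(x_1^{(i)},x_2^{(i)},y^{(i)})$ and never to $\phi^\st$ or $f$ themselves. There is no way to check whether $a = \phi^\st(x_1^{(i)})$, and no way to sample a transition whose probability depends on an arbitrary pair $(a,\phi^\st(x_2^{(i)}))$: the only quantity the reduction possesses that depends on the latent structure is the single bit $y^{(i)}$, whose conditional mean is $f(\phi^\st(x_1^{(i)}),\phi^\st(x_2^{(i)}))$ -- you cannot ``index into $f$'' with a latent coordinate of your choosing. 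The only transitions the reduction can simulate consistently for arbitrary oracle actions are ones where the $\Ber$ probability is an observable function of the chosen action and $y^{(i)}$. If instead you propose that the gate does not depend on the action (transition to $1$ with probability $y^{(i)}$), then the policy cover degenerates because no policy can influence the terminal visitation. Likewise, your final stitching step -- ``running each $\pi\in\Psi$ on a synthetic trajectory initialized with $(x_1,x_2)$ and reading off the induced terminal-state visitation'' -- requires evaluating a visitation probability at prediction time, which requires the transition kernel of $M$, which the reduction does not have.

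The paper resolves the Markov-versus-two-context tension by never carrying $\phi^\st(x_1)$ forward. It uses horizon $H=2$, makes the action at step $1$ a discretized guess of $y^{(i)}$, and gates the step-$2$ observation to be $x_2^{(i)}$ with probability $1-(a_1-y^{(i)})^2$, else $0$. This is simulable from the data, and the resulting transition kernel is a function of $(\phi^\st(x_1),\phi^\st(x_2),a_1)$, so the simulated process is a $\Phiaug$-decodable Block MDP. The point is that for each fixed $s$, maximizing $d^{M,\pi}_2(s)$ is equivalent to minimizing a restricted regression loss $L_s(\pi)=\EE[\mathbbm{1}[\phi^\st(x_2)=s](\pi(x_1)-f(\phi^\st(x_1),\phi^\st(x_2)))^2]$, a one-context problem, so the policy cover contains a good policy per latent value of $x_2$. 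The stitching is then done via an explicit \emph{one-context regression} subroutine (itself reducible to the same RL oracle), learning for each $\pi$ an estimate of its restricted loss as a function of $x_2$ and selecting the minimizer. That stitching step is the actual technical crux, and your plan has no workable analogue of it.
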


\cref{cor:online-rl-to-regression} and \cref{cor:regression-to-online-rl} together show that two-context regression is a minimal oracle for reward-free episodic RL (for any regular $\Phi$). \cref{cor:regression-to-online-rl} strengthens \cite[Proposition B.2]{golowich2024exploration}, which reduces \emph{one}-context regression to RL---to our knowledge, the only prior result of this flavor. We require regularity\footnote{ \cite{golowich2024exploration} reduce to reward-directed RL and do not use regularity; however, under regularity it is simple to adapt their reduction to reward-free RL; this adaptation is the version we sketch below.} for the following technical reason: a concept class is regular if and only if it can be obtained by augmenting some base concept class $\Phi \subseteq (\MX\to\MS)$ with two fully observed states $\{0,1\}$ (see \cref{def:phiaug} for the formal definition). It is easy to reduce regression with $\Phiaug$ to regression with $\Phi$ (\cref{prop:twoaug}), so it suffices to reduce regression with $\Phi$ to reward-free RL with $\Phiaug$; the extra states provide useful flexibility since $\Phi$ is otherwise arbitrary.

We now sketch the prior reduction before discussing how to strengthen it.

\paragraph{Recap: Reducing one-context regression to RL.} Given a one-context regression dataset $(x^{(i)},y^{(i)})_{i=1}^n$ with samples in $\MX\times\{0,1\}$, consider simulating an MDP with horizon $H=2$, where the initial observation lies in $\MX$ and the second observation lies in $\{0,1\}$. The goal of this reduction is that a \emph{policy} that visits state $1$ at step $2$ with near-optimal probability corresponds to an accurate \emph{prediction function} for the regression. This can be achieved by defining the action space $\MA$ to be a discretization of the interval $[0,1]$, and requiring the policy to ``guess'' $y^{(i)}$ after observing $x^{(i)}$. 

More formally, the reduction uses a fresh datapoint $(x^{(i)},y^{(i)})$ to simulate each new episode. It first passes observation $x_1 := x^{(i)} \in \MX$ to the RL agent. When the agent plays an action $a_1 \in \MA \subset [0,1]$, the reduction passes the observation $x_2 \in \{0,1\}$ sampled from $\Ber(1 - (a - y^{(i)})^2).$ It can be checked that this procedure in fact simulates a $\Phi$-decodable MDP, so long as the regression dataset satisfied the desideratum that $\EE[y^{(i)}\mid{} x^{(i)}]$ only depends on $\phi(x^{(i)})$ for some $\phi \in \Phi$.

\paragraph{Reducing two-context regression to RL.} Can we generalize the above construction by increasing the horizon? Given a two-context regression sample $(x_1^{(i)}, x_2^{(i)},y^{(i)})$, consider passing both contexts to the RL agent one-by-one and then requiring the policy to ``guess'' $y^{(i)}$. Unfortunately, $y^{(i)}$ may depend on both $x_1^{(i)}$ and $x_2^{(i)}$, so the simulated decision process is non-Markovian. More broadly, this points to a representational obstacle: optimal policies for a Block MDP are Markovian and hence mappings $\MX \to \MA$. But for two-context regression, a predictor is a function on $\MX \times \MX$. Thus, any successful reduction will have to ``stitch together'' \emph{multiple} policies produced by the RL oracle.

To motivate our reduction, we recall why two-context regression was useful for RL in the first place: essentially, it was useful to estimate (some transformation of) transition probabilities between two consecutive states---see \cref{eq:kinematics-intro}. This suggests using the regression data to simulate an MDP where \emph{the probability of transitioning from $x_1^{(i)}$ to $x_2^{(i)}$ depends on $y^{(i)}$}.

Formally, our reduction simulates a horizon-$2$ MDP with first observation in $\MX$, second observation in $\MX\sqcup\{0\}$, and action space $\MA \subset [0,1]$. For each sample $(x_1^{(i)},x_2^{(i)},y^{(i)})$, the reduction simulates an episode of interaction with the MDP as follows:\arxiv{
\begin{enumerate}
\item First, the reduction passes observation $x_1 := x_1^{(i)} \in \MX$ to the RL agent.
\item Second, when the agent plays an action $a_1 \in \MA \subset [0,1]$, the reduction passes the observation $x_2 \in \MX$ sampled as follows: 
\[x_2 := \begin{cases} 
x_2^{(i)} & \text{ with probability } 1 - (a_1 - y^{(i)})^2 \\ 
0 & \text{ with probability } (a_1 - y^{(i)})^2 
\end{cases}.\]
\end{enumerate}
}\colt{ 
first, pass $x_1 := x_1^{(i)} \in \MX$ to the RL oracle. Second, when the oracle plays an action $a_1 \in \MA \subset [0,1]$, pass $x_2 \in \MX\sqcup\{0\}$ sampled as: 
\[x_2 := \begin{cases} x_2^{(i)} & \text{ with probability } 1 - (a_1 - y^{(i)})^2 \\ 0 & \text{ with probability } (a_1 - y^{(i)})^2 \end{cases}.\]
}
It can be checked that this procedure simulates a $\Phi$-decodable block MDP under the realizability assumptions on the regression dataset. Additionally, if we fix a latent state $s \in \MS$, then maximizing the probability of reaching state $s$ at step $2$ is equivalent to predicting $y^{(i)}$ conditioned on the observation $x_1^{(i)}$ \emph{and} the knowledge that $\phi^\st(x_2^{(i)}) = s$. Thus, any policy $\pi_s: \MX \to \MA$ that visits $s$ at step $2$ with near-maximal probability must approximately minimize a \emph{restricted} regression loss:
\[L_s(\pi_s) := \EE_{x_1,x_2} \left[\mathbbm{1}[\phi^\st(x_2) = s] (\pi_s(x_1) - f(\phi^\st(x_1),\phi^\st(x_2)))^2\right]\]
where $f: \MS\times\MS\to[0,1]$ is as in \cref{def:two-con-regression}. By assumption, the reward-free RL oracle will return a set of policies $\Psi$ that contains at least one such policy $\pi_s$ for each $s \in \MS$. 

The remaining challenge is how to stitch together these policies into a single predictor: given $(x_1,x_2) \in \MX\times\MX$, how do we use $x_2$ to identify the policy $\pi \in \Psi$ for which $\pi(x_1)$ is a good prediction? We accomplish this using \emph{one-context regression}, which (as discussed above) is reducible to reward-free RL. In particular, for each policy $\pi \in \Psi$, we construct datapoints of the form $(x_2^{(i)}, (\pi(x_1^{(i)}) - y^{(i)})^2)$. Applying one-context regression yields an estimate of (an appropriate transformation of) the map $x_2 \mapsto L_{\phi^\st(x_2)}(\pi)$. After learning this map, the final predictor $\MR:\MX\times\MX\to[0,1]$ is defined as follows: on input $(x_1,x_2)$, it outputs $\pihat^{x_2}(x_1)$, where $\pihat^{x_2}\in\Psi$ minimizes the estimated loss. See \cref{sec:app_minimality} for the formal reduction and analysis.

\section{A Simpler Oracle for RL in Block MDPs with Reset Access}\label{sec:resets}

We now turn to the \emph{RL with resets} access model, which is more permissive than episodic access. In this model, we give a reward-free RL algorithm $\PCR$ that only requires access to a \emph{one-context} regression oracle $\Reg$, and is oracle-efficient so long as $\Reg$ is sample-efficient.


\begin{theorem}[Special case of \cref{thm:pcr-app}]\label{cor:reset-rl-to-regression}
There is a constant $C_{\ref{cor:reset-rl-to-regression}}>0$ and an algorithm $\PCR$ (\cref{alg:pcr}\colt{ in \cref{sec:app_resets}}) so that the following holds. Fix $\Phi \subseteq (\MX\to\MS)$ and $\Nregc,\Creg \in \NN$. Let $\Reg$ be a $\Nreg$-efficient one-context regression oracle for $\Phi$ with $\Nreg(\epsilon,\delta) := \Nregc/(\epsilon\delta)^{\Creg}$.
Then $\PCR(\Reg,\Nreg,|\MS|,\cdot)$ 
is an $(\Nrl,\Krl)$-efficient reward-free RL algorithm for $\Phi$ in the \emph{reset access model}, with $\Krl(\epsilon,\delta,H,|\MA|) \leq H^2|\MS|^2$ and $\Nrl(\epsilon,\delta,H,|\MA|) \leq \Nregc \cdot \left(H|\MA||\MS|/(\epsilon\delta)\right)^{C_{\ref{cor:reset-rl-to-regression}}\Creg}.$
Moreover, the oracle time complexity of $\PCR$ is at most $\Nregc \cdot \left(H|\MA||\MS|/(\epsilon\delta)\right)^{C_{\ref{cor:reset-rl-to-regression}}\Creg}.$
\end{theorem}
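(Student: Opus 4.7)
The plan is to adapt the $\PCO$ algorithm from \cref{sec:episodic-suff} to the reset access model, with two structural changes: replacing the two-context regression invocations with one-context regression calls that exploit resets to \emph{fix} one of the contexts, and otherwise retaining the outer loop that eliminates the reachability assumption. At the top level, $\PCR$ maintains a growing backup cover $\Gamma$ and, on each outer iteration, builds candidate covers $\Psi_1, \ldots, \Psi_H$ layer by layer; the only nontrivial change is the routine that extends $\Psi_h$ to $\Psi_{h+1}$.

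The key subroutine in $\PCO$ learned the kinematics function $f_{h+1}(x_h, x_{h+1}; a)$ of \cref{eq:kinematics-intro} via two-context regression on samples whose \emph{both} coordinates were random. With reset access, I would instead sample a small pool of ``anchor'' observations $x_h^{(i)}$ from $\frac{1}{2}(\mathrm{Unif}(\Psi_h) + \mathrm{Unif}(\Gamma))$, and then, for each anchor and each $a \in \MA$, repeatedly reset to $x_h^{(i)}$, play $a$, and observe $x_{h+1} \sim \BP_{h+1}(\cdot\mid x_h^{(i)}, a)$. Mixing these with negative samples $x_{h+1}$ drawn from the baseline distribution $F_{h+1}$ induced by $\frac{1}{2}(\mathrm{Unif}(\Psi_h) + \mathrm{Unif}(\Gamma))$ composed with a uniform action, and labeling positives by $y=1$ and negatives by $y=0$, yields a regression problem in the single variable $x_{h+1}$ with Bayes predictor
\[
g^{(i,a)}(x_{h+1}) \;=\; \frac{\BP_{h+1}(x_{h+1}\mid x_h^{(i)}, a)}{\BP_{h+1}(x_{h+1}\mid x_h^{(i)}, a) + F_{h+1}(x_{h+1})}.
\]
Because $x_h^{(i)}$ and $a$ are frozen and the Block MDP factorization gives $\BP_{h+1}(x_{h+1}\mid x_h^{(i)},a) = \til\BP_{h+1}(\phi^\st(x_{h+1})\mid \phi^\st(x_h^{(i)}),a)\,\til\BO_{h+1}(x_{h+1}\mid \phi^\st(x_{h+1}))$, and $F_{h+1}(x_{h+1})$ likewise factors through $\phi^\st(x_{h+1})$, the target $g^{(i,a)}$ depends on $x_{h+1}$ only via $\phi^\st(x_{h+1})$. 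Hence each anchor-action pair defines a bona fide $\Phi$-one-context regression instance, and $\Reg$ may be invoked directly.

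With these estimated kinematics $\wh g^{(i,a)}$ in hand, the remainder of the extension step mirrors $\EPCO$: sample cluster centers $\xbart$, define rewards $\MR^{(t)}(x) \propto \max\bigl(0, 1 - \gamma^{-1}\max_{i,a}|\wh g^{(i,a)}(\xbart) - \wh g^{(i,a)}(x)|\bigr)$, use the cluster-threshold test to filter redundant centers, and run $\PSDP$ on each surviving $\MR^{(t)}$ to obtain a policy in $\Psi_{h+1}$. By \cref{lemma:psdp-trunc-online}, $\PSDP$ itself is implementable with one-context regression, so no two-context oracle is ever needed. The outer loop and the win/win argument from \cref{cor:online-rl-to-regression} then discharge the reachability assumption exactly as before.

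The hardest part is controlling the sample and oracle-time complexity: a naive implementation would call $\Reg$ once per $(i,a)$ pair, potentially blowing up the $\Creg$ exponent. I would cap the number of anchors at $O(|\MS|)$ per action and show, via the Block MDP structure, that this many anchors suffice to resolve the $|\MS|$ latent classes needed for the subsequent clustering and $\PSDP$ calls---an anchor that lands on latent state $s$ with nonnegligible probability contributes all the discriminative information needed for state $s$, and rare latent states are absorbed into the truncation slack $\trunc$ inherited from $\PCO$. A secondary issue is aligning the sampling distributions: the baseline $F_{h+1}$ in the regression must match the one implicit in the downstream cluster-reward analysis, which I would enforce by using the same mixture $\frac{1}{2}(\mathrm{Unif}(\Psi_h)+\mathrm{Unif}(\Gamma))\circ_h \mathrm{Unif}(\MA)$ throughout. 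Given these pieces, the final bound $\Nrl \le \Nregc\cdot (H|\MA||\MS|/(\epsilon\delta))^{C_{\ref{cor:reset-rl-to-regression}}\Creg}$ follows by the same polynomial blow-up analysis as in \cref{cor:online-rl-to-regression}.
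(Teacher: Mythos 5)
Your plan follows the same skeleton as the paper's $\PCR$/$\EPCR$: roll in to step $h$ with $\frac{1}{2}(\Unif(\Psi_h)+\Unif(\Gamma))$ to gather anchor observations, use reset access to freeze the first context of the kinematics function so that the regression target depends only on $\phi^\st(x_{h+1})$, feed the estimated kinematics into the same clustering-plus-$\PSDP$ machinery, and retain the outer iterative-discovery loop to remove reachability. The one genuinely distinct choice is the kinematics target: you propose the binary contrastive $g^{(i,a)}(x_{h+1}) = \BP_{h+1}(x_{h+1}\mid x_h^{(i)},a) / (\BP_{h+1}(x_{h+1}\mid x_h^{(i)},a) + F_{h+1}(x_{h+1}))$, i.e.\ the $\PCO$ function of \cref{eq:kinematics-intro} evaluated at frozen $(x_h^{(i)},a)$ and learned as a one-context regression over $x_{h+1}$, whereas the paper's $\EPCR$ learns the multiclass discriminator $w_{h+1}(x_{h+1};i,a) = \BP_{h+1}(x_{h+1}\mid x_h^{(i)},a)/\sum_{j,a_h}\BP_{h+1}(x_{h+1}\mid x_h^{(j)},a_h)$ by labeling which $(j,a_h)$ generated each reset sample. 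Both are realizable one-context regression targets (the $\til\BO_{h+1}$ factor cancels in either ratio) and both carry the kinematic-inseparability property needed for the cluster rewards, so your variant would also go through. The paper's form has the mild advantage that the negatives and positives come from a single uniform-over-$(j,a_h)$ reset sweep without a separate baseline sampler, and the resulting denominator has a clean product form that makes the expansion analogous to \cref{eq:dpi-expansion} immediate.

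Two things to correct. First, your stated worry that ``a naive implementation would call $\Reg$ once per $(i,a)$ pair, potentially blowing up the $\Creg$ exponent'' is a misconception: $m|\MA|$ oracle calls multiply the oracle time by a $\poly(H,|\MS|,|\MA|,1/\epsilon,1/\delta)$ factor, which only increases the outer constant $C_{\ref{cor:reset-rl-to-regression}}$ in $\Nregc\cdot(H|\MA||\MS|/(\epsilon\delta))^{C_{\ref{cor:reset-rl-to-regression}}\Creg}$, not the exponent $\Creg$. Each call to $\Reg$ with accuracy $\epsilon'$ and failure probability $\delta'$ costs $\Nregc/(\epsilon'\delta')^{\Creg}$, and as long as $\epsilon',\delta'$ are inverse-polynomial in the problem parameters the per-call cost is already of the required form. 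Second, and as a consequence of the first, the proposed fix of capping the anchor pool at $O(|\MS|)$ would actually break the argument: an anchor set of size $O(|\MS|)$ drawn from the rollin distribution has no guarantee of hitting a latent state with visitation probability $\approx\trunc$, and the downstream clustering analysis (the analogue of the event that $\{\phi^\st(x_h^{(i)})\}_i \supseteq \Srch_h(\Gamma)$) genuinely requires one anchor per reachable latent state. You need $m = \Theta\bigl(\log(|\MS|/\delta)/\min(\alpha\trunc,\tsmall)\bigr)$ anchors, which is what the paper uses and which is comfortably polynomial in all the stated parameters. So there is no complexity problem to fix, and the proposed fix would have introduced one.
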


\arxiv{\begin{algorithm}[t]
	\caption{$\PCR$: RL with Resets via One-Context Regression}
	\label{alg:pcr}
	\begin{algorithmic}[1]\onehalfspacing
		          \State \textbf{input:} One-context regression oracle $\Reg$; efficiency function $\Nreg: (0,1)\times(0,1)\to\NN$; \# of latent states $S$; final error tolerance $\epfinal \in (0,1)$; failure parameter $\delta \in (0,1)$; horizon $H$; action set $\MA$.
		%
        \State $N \gets \Nreg\left(\left(\frac{\epfinal\delta}{H|\MA|S}\right)^{C_{\ref{thm:pcr-app}}},\left(\frac{\epfinal\delta}{H|\MA|S}\right)^{C_{\ref{thm:pcr-app}}}\right)$.
        \State $\trunc \gets \epfinal/(4+HS)$.
        \State $R \gets SH$, $\tsmall \gets \frac{\trunc^2}{RS^2H^2}$, $\alpha \gets \frac{1-4\trunc}{S}$, $m \gets \frac{2}{\min(\alpha\trunc,\tsmall)}\log(S/\delta)$, $n \gets \frac{m|\MA|}{\trunc}\log(S/\delta)$.
        \State $\epsilon \gets \min(\frac{\alpha^8\trunc^{16}\tsmall^{8}}{6^{8}H^{8}m^{12}|\MA|^{12}}, \frac{\delta^4}{m^2|\MA|^2n^4})$, $\gamma \gets \epsilon^{1/8}$, $\gamma' \gets 2\epsilon^{1/4}$.
        \State $\Gamma^{(1)} \gets \emptyset$.
        \For{$1 \leq r \leq R$}
            \State $\Psi_1^{(r)} := \{\piunif\}$.
            \For{$1 \leq h < H$}
                \State $\Psi_{h+1}^{(r)} \gets \EPCR(\Reg, h, \Psi_{1:h}^{(r)}, \Gamma,n,m,N,\gamma,\gamma')$.
            \EndFor
            \State $\Gamma^{(r+1)} \gets \Gamma^{(r)} \cup \bigcup_{h \in [H]:|\Psi_h^{(r)}| \leq S} \Psi_h^{(r)}$.
        \EndFor
        \State \textbf{return:} $\bigcup_{h \in [H]} \bigcup_{1 \leq r \leq R: |\Psi_h^{(r)}| \leq S} \Psi_h^{(r)}$.
	\end{algorithmic}
\end{algorithm}

\begin{algorithm}[!htb]
	\caption{$\EPCR$: Extend Policy Cover in Reset Model}
	\label{alg:epcr}
	\begin{algorithmic}[1]\onehalfspacing
          \State \textbf{input:} One-context regression oracle $\Reg$; step $h \in [H]$; policy covers $\Psi_{1:h}$; backup policy cover $\Gamma$; sample counts $n,m,N\in\NN$; tolerances $\gamma,\gamma' \in (0,1)$.
		%
        \State $\MC \gets \emptyset$.
        \For{$1 \leq i \leq m$}
            \State Sample trajectory $(x_1,a_1,\dots,x_h) \sim \frac{1}{2}\left(\Unif(\Psi_h) + \Unif(\Gamma)\right)$.
            \State Set $x_h^{(i)} := x_h$.
            \State Update dataset: $\MC \gets \MC \cup \{x_h\}$.
        \EndFor
        \For{$1 \leq i \leq m$}
            \For{$a \in \MA$}
                \State $\MD_{i,a} \gets \emptyset$.
                \For{$N$ times}
                    \State Sample $j \sim \Unif([m])$ and $a_h \sim \Unif(\MA)$.
                    \State Reset to $x_h := x_h^{(j)}$ and sample $x_{h+1} \sim \BP^M_{h+1}(\cdot|x_h, a_h)$.
                    \State Update dataset: $\MD_{i,a} \gets \MD_{i,a} \cap \{(x_{h+1}, \mathbbm{1}[j=i \land a_h = a])\}$.
                \EndFor
                \State $\what_{h+1}(\cdot;i,a) \gets \Reg(\MD_{i,a})$. \Comment{$\what_{h+1}(\cdot; i,a): \MX \to [0,1]$}
            \EndFor
        \EndFor
        \State $\Psi_{h+1} \gets \emptyset$, $\Tclus \gets \emptyset$.
        \For{$1 \leq t \leq n$}
            \State Sample $j \sim \Unif([m])$.
            \State Reset to $x_h := x^{(j)}$ and sample trajectory $(x_h,a_h,x_{h+1}) \sim \Unif(\MA)$. Write $\xbar_{h+1}^{(t)} := x_{h+1}$.
            \State Define $\MR^{(t)}: \MX \to [0,1]$ by 
            \[\MR^{(t)}(x) := \max\left(0, 1-\frac{\max_{(i,a) \in [m]\times\MA} |\what_{h+1}(\xbar_{h+1}^{(t)};i,a) - \what_{h+1}(x;i,a)|}{\gamma}\right).\] \label{line:rt-def}
            \If{$\max_{(i,a) \in [m]\times\MA} |\what_{h+1}(\xbar_{h+1}^{(t)};i,a) - \what_{h+1}(\xbar_{h+1}^{(t')};i,a)| > \gamma'$ for all $t' \in \Tclus$}\label{line:cluster-test}
                \State $\pihat^{(t)} \gets \PSDPB(h, \Reg, \MR^{(t)}, \Psi_{1:h},\Gamma, N)$.\Comment{See \cref{alg:psdpb}}\label{line:psdp-call}
                \State Update $\Psi_{h+1} \gets \Psi_{h+1} \cup \{\pihat^{(t)}\}$.
                \State Update $\Tclus \gets \Tclus \cup \{t\}$.
            \EndIf
        \EndFor 
        \State \textbf{return:} $\Psi_{h+1}$.
	\end{algorithmic}
\end{algorithm}

}

Since RL in the \emph{episodic} access model is provably harder than one-context regression \citep{golowich2024exploration}, \cref{cor:reset-rl-to-regression} gives a provable computational benefit of reset access---to our knowledge, the first of its kind, though statistical benefits are known in different settings (\cref{sec:related-alg-block}).

\paragraph{Overview of algorithm design and proof.}
The $\PCR$ algorithm follows a similar blueprint to $\PCO$ (and $\HOMER$): given policy covers for the first $h$ layers, design internal reward functions for layer $h+1$ based on kinematics, and use $\PSDP$ to optimize these reward functions and obtain a policy cover for layer $h+1$ (recall that $\PSDP$ itself only requires a one-context regression oracle). The point of departure is in how the kinematics are estimated. In $\PCO$, two-context regression is applied on a dataset consisting of some pairs $(x_h^{(i)},x_{h+1}^{(i)})$ generated using the MDP's real transitions (and labeled $y^{(i)} = 1$), and some pairs $(x_h^{(j)}, x_{h+1}^{(j)})$ generated using ``fake'' transitions (and labeled $y^{(j)} = 0$). The Bayes optimal predictor $\EE[y\mid{}x_1,x_2]$ turns out to be exactly the kinematics function from \cref{eq:kinematics-intro}.

In the setting of \cref{cor:reset-rl-to-regression}, our algorithm $\PCR$ does not have access to two-context regression, but it does have the ability to ``reset'' the MDP to previously-seen observations. To use this, the algorithm first generates a large number of ``discriminator'' observations $(x_h^{(i)})_{i=1}^m$. For each fixed $i \in [m]$ and $a \in \MA$, the algorithm then learns to predict the following predicate: ``did the observation $x_{h+1}$ come from $x_h^{(i)}$ and action $a$, or from some other $x_h^{(j)}$ or other action?''. More precisely, using the power of resets, $\PCR$ can generate a large number of samples $(x,y)$ where $x \sim \BP_{h+1}(\cdot\mid{}x_h^{(j)},a_h)$ for $(j,a_h) \sim \Unif([m]\times\MA)$, and $y := \mathbbm{1}[j=i \land a_h=a]$. The Bayes optimal predictor $\EE[y\mid{}x]$ is then:\loose
\begin{equation} w_{h+1}(x_{h+1};i,a) := \frac{\BP_{h+1}(x_{h+1}\mid{}x_h^{(i)},a)}{\sum_{j,a_h} \BP_{h+1}(x_{h+1}\mid{}x_h^{(j)},a_h)} = \frac{\til\BP_{h+1}(\phi^\st(x_{h+1})\mid{}\phi^\st(x_h^{(i)}),a)}{\sum_{j,a_h} \til\BP_{h+1}(\phi^\st(x_{h+1})\mid{}\phi^\st(x_h^{(j)}),a_h)}.\label{eq:pcr-kinematics}\end{equation}
This function can be estimated via one-context regression, and then used to define internal reward functions at layer $h+1$, similar to $\PCO$. See \cref{sec:app_resets} for the formal \colt{algorithm and }analysis. 

\paragraph{Is one-context regression minimal?} 
By a variant of \cite[Proposition B.2]{golowich2024exploration}, the \emph{noiseless} version of one-context regression is necessary in the reset setting (\cref{prop:noiseless-onered}). While there are classical examples where noisy (but realizable) PAC learning is believed to be computationally harder than noiseless learning \citep{blum2003noise}, in many natural settings they are comparable---see e.g. halfspace learning \citep{blum1998polynomial,diakonikolas2023strongly} and the statistical query model \citep{kearns1998efficient}. In this sense, we expect that (noisy) one-context regression is \emph{nearly} minimal.\loose

\section{A Computational Separation for Low-Rank MDPs}\label{sec:lowrank}

\nc{\bphi}{\boldsymbol{\phi}}
\nc{\nunull}{\nu_{\mathsf{null}}}
\nc{\bast}{\ba^\st}
\nc{\Algbar}{\overline{\Alg}}
\nc{\MDnull}{\MD^\circ}
\nc{\epdisc}{\varepsilon_{\mathsf{disc}}}
\nc{\halfspacealg}{\mathtt{HalfspaceLearn}}
\nc{\cdisc}{c_{\mathsf{disc}}}
\nc{\WH}{(\fS^{t-1})^H}
\nc{\MObar}{\ol\MO}
\nc{\phist}{\phi^\st}
\nc{\must}{\mu^\st}
\nc{\Philin}{\Phi^{\mathsf{lin}}}
\nc{\philin}{\phi^{\mathsf{lin}}}
\nc{\fd}{\mathfrak{d}}
\nc{\Xgood}{\MX_{\mathsf{good}}}
\nc{\mulin}{\mu^{\mathsf{lin}}}

\nc{\nsmall}{t}
\nc{\nlarge}{n}
\nc{\fS}{\mathfrak{S}}

We now move beyond Block MDPs. \emph{Low-Rank} MDPs \citep{modi2021model} are perhaps the simplest commonly-studied model class that generalizes the Block MDP.
While Low-Rank MDPs admit oracle-efficient RL algorithms, the \emph{oracles} used seem significantly more complex---both in the episodic and reset access model---than the oracles used for Block MDPs. In particular, $\VOX$ \citep{mhammedi2023efficient}, which operates in the episodic access model, uses a proper min-max optimization oracle, and $\RVFS$ \citep{mhammedi2024power}, which requires reset access, uses an agnostic and cost-sensitive regression oracle. Comparing to our results for Block MDPs, it is natural to ask whether this apparent gulf in computational tractability is real, and if so, what the structural source is. In this section we make progress on this question, with a focus on the reset access model.\loose

\paragraph{Preliminaries.} An MDP $M$ is Low-Rank with rank $d$ if there are maps $\philin_h:\MX\times\MA\to\RR^d$ and $\mulin_{h+1}:\MX\to\RR^d$ such that the transitions have the following factorization:
\arxiv{\[\BP_{h+1}(x_{h+1}\mid{}x_h,a_h) = \langle \philin_h(x_h,a_h), \mulin_{h+1}(x_{h+1})\rangle.\]}
\colt{$\BP_{h+1}(x_{h+1}\mid{}x_h,a_h) = \langle \philin_h(x_h,a_h), \mulin_{h+1}(x_{h+1})\rangle.$}
Prior \dfedit{model-free} work assumes that $\philin_{1:H}$ lie in a known feature class $\Philin$, but the dual features $\mulin_{1:H}$ are arbitrary \citep{modi2021model,mhammedi2023efficient}.\footnote{Model-based works \citep{agarwal2020flambe,uehara2022representation} also assume that $\mulin_{1:H}$ lie in a known dual feature class.\loose} Given implicit access to $\Philin$ via some oracle(s), the typical goal is to design RL algorithms with time and oracle complexity polynomial in $d$, the horizon $H$, and the number of actions $\MA$.\arxiv{
 } Block MDPs with concept class $\Phi$ can be embedded in a class of Low-Rank MDPs with rank $d := |\MS||\MA|$: simply define $\Philin := \{\philin: \phi\in\Phi\}$ where $\philin:\MX\times\MA\to\RR^d$ \arxiv{is defined by mapping}\colt{maps} to an appropriate basis vector, i.e. $\philin(x,a) := e_{\phi(x),a}$. Analogously, there is a natural extension of one-context regression to Low-Rank MDPs.\loose
\begin{definition}[Informal]\label{Def:low-rank-reg}
The goal of \textbf{one-context low-rank regression} over $\Philin$ is the following: given an i.i.d. dataset $(x^{(i)},a^{(i)},y^{(i)})_i$ satisfying the realizability assumption $\EE[y^{(i)}\mid{}x^{(i)},a^{(i)}] = \langle \philin(x^{(i)},a^{(i)}), \theta\rangle$ for unknown $\philin\in\Philin$ and $\theta\in\RR^d$,
estimate the function $(x,a) \mapsto \EE[y\mid{}x,a]$.
\end{definition}
This oracle suffices to implement $\PSDP$ in Low-Rank MDPs---so \emph{given} a policy cover,\footnote{See \cite{mhammedi2023efficient} for the precise notion of a policy cover in Low-Rank MDPs.} we can optimize a reward \citep{mhammedi2023efficient}. In analogy with \cref{cor:reset-rl-to-regression}, we ask: is this oracle sufficient for the full task of reward-free RL with resets? We show that it is not, and shed light on why.\loose

\arxiv{\paragraph{Generalized Block MDPs.}}\colt{\vspace{0.3em}\noindent\textbf{Generalized Block MDPs.}} It is well known that Block MDPs are a special case of Low-Rank MDPs \citep{modi2021model,zhang2022efficient}. To highlight that there are \emph{multiple} important assumptions in the Block MDP definition, and to gain a finer understanding of how these assumptions interact with computational tractability, we introduce---and prove our hardness result in---an intermediate model class between Block MDPs and Low-Rank MDPs. Let $\Phi \subseteq (\MX\to\MS)$ be any concept class. A $\Phi$-decodable \emph{Generalized} Block MDP is an MDP $M = (H, \MX,\MA,(\BP_h)_h)$ with the property that there exist $\phist_1,\dots,\phist_H \in \Phi$ so that $\BP_{h+1}(x_{h+1}\mid{}x_h,a_h)$ is a function of $x_{h+1}$, $\phist_h(x_h)$, and $a_h$.

\begin{figure}[t]
\centering
\includegraphics[width=0.4\textwidth]{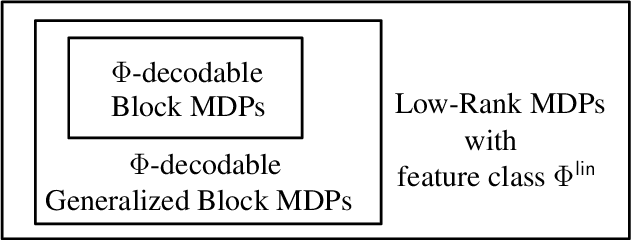}
\caption{Containment diagram for model classes discussed in \cref{sec:lowrank}. Here, $\Phi \subseteq (\MX\to\MS)$ is any concept class, and $\Philin := \{\philin:\phi\in\Phi\}$ where $\philin(x,a) := e_{\phi(x),a}\in\RR^{|\MS||\MA|}$.}\label{fig:classes}
\end{figure}

Generalized Block MDPs are still Low-Rank MDPs via the same embedding discussed above (\cref{prop:genblock-is-lowrank})---the features are appropriate basis vectors. However, compared to standard Block MDPs, the transition probability $\BP_{h+1}(x_{h+1}\mid{}x_h,a_h)$ can now depend arbitrarily on $x_{h+1}$.\footnote{We also defined standard Block MDPs to have a single decoding function $\phi^\st$ rather than one per layer, but this is was a superficial choice made for notational simplicity; our RL algorithms from prior sections still apply if there is one function per layer, and the one-context regression oracle needed by $\PCR$ doesn't change.} Moreover, in Generalized Block MDPs, \cref{Def:low-rank-reg} simplifies back to \cref{def:one-con-regression}, our original definition of one-context regression (\cref{prop:gen-block-ocr}). These connections are summarized in \cref{fig:classes}. 

\arxiv{\paragraph{Hardness for Generalized Block MDPs.}}\colt{\vspace{0.3em}\noindent\textbf{Hardness for Generalized Block MDPs.}}
Unfortunately, the arbitrary dependence of the transition probability on $x_{h+1}$ \arxiv{in Generalized Block MDPs} defeats algorithms such as $\PCR$, since the ideal kinematics (see \cref{eq:pcr-kinematics}) are no longer a function of just $\phi^\st(x_{h+1})$. But could there be a more clever algorithm that avoids needing to estimate such quantities? In \cref{thm:halfspace-main}, we show that there is an inherent computational barrier. 

For $n \in \NN$, let $\Phi_n := \{\phi^\theta:\bbR^{n}\to\crl{0,1}\mid{}\theta\in\RR^n\}$ be the concept class of linear threshold\arxiv{ functions}\colt{s}: $\phi^\theta(x) \ldef \mathbbm{1}[\langle x,\theta\rangle \geq 0]$. We prove that one-context regression for $\Phi_n$ is computationally tractable, but reward-free RL with resets for $\Phi_n$-decodable Generalized Block MDPs is cryptographically hard.

\begin{theorem}[\cref{thm:halfspace-reg}+\cref{thm:halfspace-rl-hard}]\label{thm:halfspace-main}
There is an algorithm for one-context regression with concept class $\Phi_n$, that achieves error $\epsilon$ with probability at least $1-\delta$ and has time complexity $\poly(n,1/\epsilon,1/\delta)$.
In contrast, suppose there exists an $\Alg^M$ that---given interactive reset access to any $\Phi_n$-decodable Generalized Block MDP $M$ with horizon $H$, observation space $\MX=\RR^n$, latent state space $\MS=\{0,1\}$, and action space $\MA$---has time complexity $\poly(n,H,|\MA|)$ and produces a set of policies $\Psi$ satisfying the following guarantee with probability at least $1/2$:
\begin{equation} \forall s \in \MS: \max_{\pi\in\Psi} d^{M,\pi}_H(s) \geq \frac{1}{\poly(|\MA|, |\MS|, H)} \left(\max_{\pi\in\Pi} d^{M,\pi}_H(s) - \frac{1}{8}\right).\label{eq:gen-block-rfrl}\end{equation}
Then the Continuous Learning With Errors (cLWE) hardness assumption (\cref{ass:clwe}) is false. 
\end{theorem}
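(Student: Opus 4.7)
The target is $x\mapsto f(\phi^\theta(x))$ with $f:\{0,1\}\to[0,1]$ and $\phi^\theta(x)=\mathbbm{1}[\langle x,\theta\rangle\geq 0]$, and labels $y^{(i)}\in\{0,1\}$ with $\EE[y^{(i)}\mid x^{(i)}]=f(\phi^\theta(x^{(i)}))$. My plan is to case-split on the effective margin $\Delta=|f(1)-f(0)|$, which can be estimated from the empirical label mean on the two sides of any candidate halfspace. If $\Delta\leq 2\sqrt{\epsilon}$, then the constant empirical-mean predictor already achieves MSE $\leq\epsilon$. Otherwise, thresholding labels at the midpoint $(f(0)+f(1))/2$ reduces to halfspace learning under an arbitrary distribution with asymmetric classification noise whose rates are bounded away from $1/2$ by $\Omega(\Delta)$, which is solvable in time $\poly(n,1/\Delta,\log(1/\delta))=\poly(n,1/\epsilon,\log(1/\delta))$ via the Blum--Frieze--Kannan--Vempala algorithm. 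A final sample-splitting step then estimates $f(0),f(1)$ empirically from labels on each side of the recovered halfspace, and I would output the composite improper predictor.

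\textbf{Part 2 (RL hardness).} The plan is to reduce cLWE-based hardness of halfspace learning under a planted Gaussian-like distribution---as in the Gaussian pancakes construction behind Tiegel's agnostic lower bound---to reward-free RL with resets. Starting from a planted distribution $D$ on $\mathbb{R}^n$ with hidden direction $\theta$, I would construct a $\Phi_n$-decodable Generalized Block MDP $M$ with $\MS=\{0,1\}$, $|\MA|=2$, horizon $H=\poly(n)$, initial observation $x_1\sim D$, and decoder $\phi^\theta$. Transitions are designed so that: at each step $h$, if $a_h=\phi^\theta(x_h)$ the process stays on a ``good'' track (latent state $1$), and if $a_h\neq\phi^\theta(x_h)$ it transitions to an absorbing ``bad'' sink (latent state $0$). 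Then $\max_\pi d^{M,\pi}_H(1)\approx 1$ is achievable by the policy $x\mapsto\phi^\theta(x)$, while any polynomial-time algorithm satisfying \cref{eq:gen-block-rfrl} must output a set of policies one of which agrees with $\phi^\theta$ on a $1-O(\log\poly(n)/H)$ fraction of $D$---a \emph{strong} halfspace learner under $D$, contradicting cLWE.

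\textbf{Main obstacle.} The central difficulty is reset access: a naive construction would let the algorithm reset to each observed $x_h$, try both actions, and read off $\phi^\theta(x_h)$ from which action leads to the good state---reducing the problem to labeled halfspace learning (which is tractable). To block this, I would exploit the one feature that distinguishes Generalized Block MDPs from Block MDPs, namely that $\BP_{h+1}(x_{h+1}\mid x_h,a_h)$ may depend arbitrarily on $x_{h+1}$. Concretely, I would design the conditional emission distributions of $x_{h+1}$ given latent state $0$ and given latent state $1$ to be \emph{computationally indistinguishable} (both close to the marginal Gaussian pancakes distribution, differing only in hidden cryptographic structure). Under cLWE, no polynomial-time algorithm can distinguish these two emissions, so post-reset feedback is uninformative about the correct action; yet reaching coverage of the good latent state still requires predicting $\phi^\theta$. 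Verifying that this construction (i) is a valid Generalized Block MDP, (ii) is efficiently simulable from cLWE samples, and (iii) gives a legitimate halfspace distinguisher whenever \cref{eq:gen-block-rfrl} holds will be the main technical content, and ties the separation to the absence of weight function realizability.
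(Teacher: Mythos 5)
Your overall plan (case-split on the gap $\Delta=|f(1)-f(0)|$, reduce the large-gap case to noisy halfspace learning, then re-fit $f$) is the same shape as the paper's, but your reduction step is wrong. You claim the noise rates are ``bounded away from $1/2$ by $\Omega(\Delta)$,'' but this is false: take $f(0)=0.1$, $f(1)=0.4$, so $\Delta=0.3$, yet the misclassification rate on the class $\{\phi=1\}$ is $1-f(1)=0.6>1/2$. More generally, whenever $f(0),f(1)$ are on the same side of $1/2$, one of the two noise rates exceeds $1/2$, and BFKV (and even Massart-noise algorithms) do not apply. Thresholding binary labels at $(f(0)+f(1))/2$ is also a no-op. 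The paper's fix is a \emph{symmetrization} step: grid over candidate pairs $(g(0),g(1))$, for each one resample the labels so that the effective noise becomes (approximately) symmetric with rate $<1/2$, apply the RCN halfspace learner of Diakonikolas et al.\ to each candidate, and select the best output by held-out validation. You need something like this step; without it the reduction does not go through.

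\textbf{Part 2.} Your construction has a fundamental flaw. You set the latent state to be $s_h=\phi^\theta(x_h)$, declare $s=0$ an absorbing bad sink, and declare the correct action at step $h$ to be $a_h=\phi^\theta(x_h)$. But then the correct action \emph{equals} the latent state: in latent state $1$ the correct action is $1$, and in latent state $0$ the action is irrelevant. The constant policy ``always play $1$'' is therefore optimal (it keeps any trajectory that starts in latent state $1$ on the good track forever), and it achieves $\max_\pi d^{M,\pi}_H(1)$ with no learning, so there is no hardness. Making the good/bad emission distributions computationally indistinguishable does not help, because the optimal policy never needs to know which track it is on. The paper avoids this by making the secret a hidden action sequence $a^\star_1,\dots,a^\star_H$ that is independent of the per-layer decoders $\phi^{\theta(w_h)}$ (a fresh decoder per layer): to stay on the good track one must play the unknown bit $a^\star_h$, and by a hybrid argument over layers using the cLWE-indistinguishable distributions $\nu_{w,0},\nu_{w,1}$, the MDP reveals nothing about $a^\star$, while the coverage guarantee forces the output to contain a policy matching $a^\star$ with non-negligible probability---a $2^{-H}$-event for a random password, yielding a contradiction. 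Your instinct to use indistinguishable emission distributions to defeat reset queries is the right tool, but you need to decouple the secret (what to play) from the decoder (what the latent state is), and you also need independent secrets per layer so that information cannot accumulate; without both ingredients the construction has no hardness.
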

\cref{thm:halfspace-main} rules out any reduction from reward-free RL with resets in $\Phi_n$-decodable Generalized Block MDPs to one-context regression, where the oracle time complexity of the reduction is allowed to scale polynomially in all relevant parameters ($H$, $|\MS|$, $|\MA|$, $1/\epsilon$, $1/\delta$, and the description length $n$ of an observation). By \cref{cor:reset-rl-to-regression}, this separates standard Block MDPs from Low-Rank MDPs. \arxiv{Additionally, the cLWE assumption can in turn be based on classical LWE and therefore on worst-case hardness of approximate shortest vector on lattices---see \cite[Corollary 3]{gupte2022continuous} and references.}\colt{We remark that the cLWE assumption can be based on classical LWE \citep{gupte2022continuous}.\loose
}



\begin{remark}[On the reward-free solution concept]
Comparing \cref{eq:gen-block-rfrl} to \cref{def:strong-rf-rl}, we have \emph{relaxed} the notion of exploration to allow for multiplicative error in the visitation probability. In other words, we are only asking for an $(\alpha,\epsilon)$-policy cover where $\alpha = 1/\poly(H,|\MA|,|\MS|)$. The reason is that for Generalized Block MDPs, a $(1,\epsilon)$-policy cover may not even exist. Fortunately, an $(\alpha,\epsilon)$-policy cover does always exist, and finding it is statistically tractable (\cref{prop:halfspace-rl-statistical}), indicating that the source of the hardness in \cref{thm:halfspace-rl-hard} is indeed purely computational.
\end{remark}

\paragraph{Proof overview.} In \cref{thm:halfspace-main} (proven in \cref{app:lowrank}), the algorithmic result is a (simple, but not immediate) consequence of seminal work on learning halfspaces \citep{blum2003noise,diakonikolas2023strongly}. The proof of the hardness result builds on recent work by \cite{tiegel2023hardness} on the cryptographic hardness of agnostic learning of halfspaces. Fix $n \in \NN$ and let $\fS^{t-1}$ denote the unit sphere in $t \approx \polylog(n)$ dimensions. \cite{tiegel2023hardness} shows that under cLWE, there are two families  $\{\nu_{w,0}: w \in \fS^{t-1}\}$ and $\{\nu_{w,1}: w \in \fS^{t-1}\}$ of distributions on $\RR^n$ such that for any $w$, the distributions $\nu_{w,0}$ and $\nu_{w,1}$ are \emph{approximately} separated by a hyperplane $\phi^{\theta(w)}$, but for unknown $w \sim \Unif(\fS^{t-1})$, distinguishing either $\nu_{w,0}$ or $\nu_{w,1}$ from a certain null distribution $\nunull$ is computationally hard.\loose

We use these distributions to construct a family of \emph{approximate} combination lock MDPs. These MDPs are parametrized by hidden vectors $w_1,\dots,w_H \in \fS^{t-1}$ and hidden actions $a^\st_1,\dots,a^\st_H \in \MA$. The transition distribution at any state $x_h\in\bbR^n$ and action $a_h$ is either $\nu_{w_{h+1},0}$ or $\nu_{w_{h+1},1}$, depending on the current latent state $\phi^{\theta(w_h)}(x_h)$ and whether $a_h$ equals $a^\st_h$. By a hybrid argument, we prove that these MDPs are indistinguishable from the MDP where all transitions follow $\nunull$, and hence learning $a^\st$ is impossible. However, we prove that learning $a^\st$ is necessary to solve the reward-free RL task.\loose

\paragraph{Takeaway: a computational role for weight function realizability?} The immediate reason why $\PCR$ fails for Low-Rank MDPs is that \emph{weight functions}, i.e. ratios $\BP_{h+1}(x_{h+1}\mid{}x,a) / \BP_{h+1}(x_{h+1}\mid{}x',a')$, are not realizable as linear functions in the feature mapping. In contrast, for Block MDPs the analogous realizability does hold, \dfedit{and plays a central role in\arxiv{ the design of} algorithms like \texttt{HOMER} and \texttt{MusIK}.} \cref{thm:halfspace-main} presents evidence that this distinction is computationally important, and perhaps suggests \emph{regression oracles for weight functions} as a route for generalizing our theory of computational tractability beyond Block MDPs. \arxiv{On the algorithmic front}\colt{Algorithmically}, methods from \citet{amortila2024scalable}---which apply to MDPs with low \emph{coverability} (subsuming Low-Rank MDPs)---offer some initial hope in this direction.\loose



\arxiv{

\section{Discussion and Future Work}\label{sec:discussion}
This work takes a step towards understanding the minimal computational oracles needed for reinforcement learning; however, much remains unclear, both for the specific tasks discussed in this paper, and beyond. Below, we discuss some particularly notable questions, ordered roughly from narrowest to broadest.

\paragraph{Pinning down a minimal oracle in the reset access model?} We showed in \cref{cor:reset-rl-to-regression} that one-context regression is sufficient for reward-free RL in Block MDPs with reset access. It's known to be necessary for reward-free RL in Block MDPs with \emph{episodic} access \citep{golowich2024exploration}, but for the reset access model we only show that noiseless one-context regression is necessary (\cref{prop:noiseless-onered}). Likely, the same argument shows that \emph{one-context regression with conditional sampling queries} is necessary (which slightly strengthens noiseless one-context regression), but even this oracle does not seem sufficient for learning inverse dynamics.

\paragraph{Reward-free vs reward-directed?} Our work focuses on reward-free RL, and in particular we mostly focus on the strongest formulation: computing a $(1,\epsilon)$-policy cover. For Block MDPs, relaxing to $(\alpha,\epsilon)$-policy covers or reward-directed RL does not appear to lead to simpler algorithms, but there are also technical obstacles to extending our results on necessity of two-context regression (e.g. \cref{cor:regression-to-online-rl}) to these settings. Most notably, while computing a $(1,\epsilon)$-policy cover can be hard for even $H = 2$, hardness for these weaker tasks only arises when $H = \omega(1)$, and it's unclear how to use two-context regression data to usefully simulate interaction with an MDP with longer horizon. Existing hardness results that exploit long horizon include \cref{thm:halfspace-main} and the main result of \cite{golowich2024exploration}, but both require specific concept classes with specific structure, whereas \cref{cor:regression-to-online-rl} applies to almost any concept class.

This technical difficulty leaves an open question: are there substantive computational differences between these tasks? This question may be of conceptual interest since $(1,\epsilon)$-policy covers are not meaningful beyond Block MDPs; a better understanding of weaker notions of exploration seems essential for more general characterizations.

\paragraph{Stronger hardness for Low-Rank MDPs?} In \cref{thm:halfspace-main}, we show that one-context low-rank regression is insufficient for Low-Rank MDPs in the reset access model, which indicates a qualitative computational difference with Block MDPs. However, this leaves several questions:
\begin{itemize} 
\item Is \emph{agnostic} one-context regression necessary? The source of computational hardness in \cref{thm:halfspace-main} is the same as the source for agnostic halfspace learning \citep{tiegel2023hardness}, but the result does not constitute a general-purpose reduction.

\item Is two-context low-rank regression insufficient for Low-Rank MDPs in the episodic access model? In a sense, \cref{thm:halfspace-main} gives weak positive evidence: the RL hardness result still holds in the episodic access model, and two-context regression for the concept class of halfspaces is likely reducible to PAC learning an intersection of two halfspaces (with random classification noise). So if two-context regression were sufficient, then it would likely imply cryptographic hardness of learning an intersection of two halfspaces under LWE, which currently seems out of reach \citep{tiegel2024improved}.

\item More broadly, is there \emph{any} oracle-efficient algorithm for RL in Low-Rank MDPs in the episodic access model that only uses a ``minimization'' oracle? In stark contrast with Block MDPs, the only existing algorithm for RL in Low-Rank MDPs requires min-max optimization, and it would be interesting to understand if there is an inherent computational barrier.
\end{itemize}

\paragraph{A computational taxonomy for RL?} Perhaps the most interesting question is whether our methods can be extended to develop a theory of computational tractability in more general interactive decision-making problems (i.e. beyond Block MDPs and Low-Rank MDPs), in analogy with the theory of statistical tractability developed by \cite{foster2021statistical}. As we discussed in \cref{sec:lowrank}, we view \emph{regression oracles for weight functions} as one promising direction, but a satisfying answer to this question could require developing new oracles and modeling assumptions.

}

\ifarxiv \bibliographystyle{alpha} \fi
\bibliography{bib}

\newpage

\appendix

\renewcommand{\contentsname}{Contents of Appendix}
		%
\crefalias{section}{appendix}

\startcontents[appendices]
\printcontents[appendices]{l}{1}{\section*{Contents of Appendix}\setcounter{tocdepth}{2}}

\newpage
\section{Additional Related Work}\label{sec:app_related}

In this section we discuss related work in more detail. Most prior work in theoretical reinforcement learning consists of algorithm design with one of three goals, reflecting varying levels of concern with computational complexity:
\begin{enumerate}
\item statistical efficiency with arbitrary computation \citep{jiang2017contextual,foster2021statistical};
\item oracle-efficiency, but the choice of oracle is not the focus \citep{dann2018oracle};
\item end-to-end computational efficiency \citep{kearns1998efficient,jin2020provably,golowich2024exploring}.
\end{enumerate}

Our work fits between the second and third levels, since we are interested in the \emph{minimal} oracles that are necessary and sufficient for oracle-efficient RL---and we are interested in settings where end-to-end efficiency is unlikely. We therefore omit discussion of non-oracle-efficient algorithms. In \cref{sec:related-alg-block}, we survey prior oracle-efficient algorithms for RL in Block MDPs, with a focus on which oracles they require. In \cref{sec:related-alg-low-rank}, we give a partial survey of oracle-efficient algorithms for RL in Low-Rank MDPs and beyond. In \cref{sec:related-ocr}, we discuss previous applications of one-context regression in theoretical RL. Finally, in \cref{sec:related-hardness} we discuss prior works on computational hardness in RL.

\colt{
In the following remark, we highlight two conceptual distinctions that are important when attempting to study RL from a complexity-theoretic perspective.

\begin{remark}[Optimization vs. learning; proper vs. improper]\label{remark:oracle-subtleties-app}
  Many prior works in oracle-efficient RL assume access to \emph{optimization} oracles rather than statistical learning oracles. Informally, the former oracles require solving regression problems analogous to \cref{def:ocr-informal,def:tcr-informal} for \emph{arbitrary datasets} as opposed to i.i.d. and realizable datasets. This is primarily a conceptual distinction rather than technical, since in many cases a learning oracle can easily be substituted in \citep{misra2020kinematic,mhammedi2023representation}.
  However, it is important from a complexity-theoretic perspective, since statistical learning can often be substantially easier \citep{blum1998polynomial}. 
  
  A more technically salient distinction is that our definitions above allow for \emph{improper learning}, whereas almost all prior works in oracle-efficient RL for Block MDPs require \emph{proper} learning oracles\footnote{For example, the proper learning analogue of \Cref{def:ocr-informal} requires computing some predictor $\MR:\MX\to[0,1]$ with an explicit decomposition $\MR = \MR'\circ \phi$ for some $\phi \in \Phi$ and $\MR': \MS \to [0,1]$.}---an exception is the work of \cite{misra2020kinematic}, which our algorithmic results directly build on. There are many concept classes for which proper learning is $\NP$-hard, but it is considered unlikely for improper learning to be $\NP$-hard \citep{applebaum2008basing}. Since the goal of RL is to output policies, which are fundamentally improper, it seems unlikely that a proper supervised learning task could be reduced to RL (in the manner of results such as \cref{cor:regression-to-online-rl}).
\end{remark}
}
\subsection{Algorithms for Block MDPs}\label{sec:related-alg-block}

\begin{table}[t]
  \centering
  \resizebox{.98\linewidth}{!}{
\renewcommand{\arraystretch}{1.8}

\begin{NiceTabular}{ccc}[hvlines]
\thead{Oracle(s)} & \thead{Additional assumptions} & \thead{Reference} \\

\makecell{
\small{$\argmin\limits_{\hat f: \MX\times\MX \to [0,1]} \sum\limits_{(x_1,x_2,y) \in \MD} (\hat f(x_1,x_2) - y)^2$}
}
 & Reachability & 
\cite{misra2020kinematic} \\

\Block{2-1}{ 
\makecell{
\small{$\argmin\limits_{\hat \phi\in\Phi} \max\limits_{\substack{f: \MS \to [0,1] \\ \phi\in\Phi}} \min\limits_{\hat f: \MS\times\MA \to [0,1]}$} \\ \small{$ \sum\limits_{(x,a)\in\MD} (\hat f(\hat\phi(x),a) - \EE_{x'|x,a} f(\phi(x')))^2$}
}
}

& Reachability & \cite{modi2021model} \\
& --- & \cite{zhang2022efficient} \\

\makecell{
\small{$\argmax\limits_{\substack{\mu: \MS^2 \to \Delta(\MA \times\MS)\\ \hat \phi\in\Phi}} \sum\limits_{(j,a,x,x') \in \MD} \log \mu((a,j)|\hat \phi(x),\hat \phi(x'))$}
}& --- & 
\makecell{\cite{mhammedi2023representation}} 
\\\hhline{|=|=|=|}
\makecell{\small{
$\sup_{\hat f:\MX\to[0,1]} \left|\sum_{x'\in\MC} \left(\hat V(x') - f(x')\right)\right|$} \\ 
\small{ 
s.t. $\sum_{x'\in\MD} (\hat V(x') - \hat f(x'))^2 \leq \veps$ 
}}
& \textbf{Reset access} & \cite{mhammedi2024power}
\\
    \end{NiceTabular}    }
    \caption{This table, adapted from \cite[Table 1]{golowich2024exploration}, gives an informal overview of the oracles used in oracle-efficient RL for $\Phi$-decodable block MDPs; the methods above the double line apply in the episodic access model. \cite{misra2020kinematic} additionally require a cost-sensitive classification oracle for $\PSDP$, though this can be replaced by one-context regression. The oracle in the second row can be implemented by the \texttt{RepLearn} algorithm \citep{modi2021model,zhang2022efficient,mhammedi2023efficient}, but this algorithm still uses a max-min oracle. The oracle in the third row can be replaced by an analogous squared-loss minimization oracle \cite[Footnote 5]{mhammedi2023representation}, equivalent to \emph{proper} two-context regression. \cite{mhammedi2024power} additionally requires a one-context regression oracle.
    }
    \label{tab:oracle-overview}
  \end{table}

\paragraph{Episodic RL.} In \cref{tab:oracle-overview} we give an informal overview of the oracles used in oracle-efficient RL algorithms for Block MDPs, adapted from \cite{golowich2024exploration}. For simplicity, we omit discussion of earlier works that required stronger assumptions---beyond reachability, which we view as largely technical--such as deterministic dynamics \citep{dann2018oracle}, separability \citep{du2019provably}, or function approximation for the observation distributions \citep{agarwal2020flambe}.

In \cref{tab:oracle-overview}, all works listed above the double line apply in the episodic access model. For our purposes, the most relevant work is that of \cite{misra2020kinematic}; their algorithm $\HOMER$ requires only a two-context regression oracle and $\PSDP$, which can be implemented with one-context regression \citep{mhammedi2023representation}. Notably, $\HOMER$ is the only prior algorithm which uses an \emph{improper} oracle; the others all seem to heavily rely on properness \citep{modi2021model,zhang2022efficient,mhammedi2023representation}. The only catch is that \cite{misra2020kinematic} analyze $\HOMER$ under a \emph{reachability} assumption---i.e., the sample complexity scales inverse-polynomially with the minimum visitation probability of any state $\eta := \min_{s\in\MS} \min_{h\in[H]} \max_{\pi\in\Pi} d^{M,\pi}_h(s)$. Avoiding this dependence is the primary technical contribution of \cref{cor:online-rl-to-regression}.

The conventional goal in theoretical RL is reward-directed RL, i.e. find a policy that has approximately maximal value with respect to an external reward function. The alternative goal is reward-free exploration \citep{jin2020reward}. For Block MDPs, this is typically formulated as the task of computing an $(\alpha,\epsilon)$-policy cover, for given $\epsilon>0$ and any $\alpha =  1/\poly(H,|\MA|,|\MS|)$, where $\Psi \subset \Pi$ is an $(\alpha,\epsilon)$-policy cover if, for every layer $h \in [H]$ and latent state $s \in \MS$,
\[\max_{\pi\in\Psi} d^{M,\pi}_h(s) \geq \alpha \cdot \max_{\pi\in\Pi} d^{M,\pi}_h(s) - \epsilon.\]
It's straightforward to see that reward-directed RL in Block MDPs is no harder than reward-free RL, since given a policy cover, $\PSDP$ can compute a near-optimal policy for any external reward function, using only a one-context regression oracle. It's unknown whether reward-directed RL is computationally \emph{easier} than reward-free RL. However, we remark that reward-free RL is a common building block for reward-directed RL for Block MDPs \citep{du2019provably,misra2020kinematic,modi2021model,mhammedi2023representation} and beyond \citep{golowich2022learning,golowich2024exploring,mhammedi2023efficient}. Moreover, the only prior algorithms for Block MDPs that do not solve reward-free RL as a byproduct \citep{modi2021model,zhang2022efficient} require a seemingly \emph{more} complicated oracle.

The particular goal that we study in this paper is computing a $(1,\epsilon)$-policy cover (\cref{def:strong-rf-rl}), which was previously studied by \cite{du2019provably}. This only strengthens our algorithmic results (\cref{cor:online-rl-to-regression} and \cref{cor:reset-rl-to-regression}), but it does leave an open question regarding our lower bound (\cref{cor:regression-to-online-rl}): can it be strengthened to apply to the problem of computing an $(\alpha,\epsilon)$-policy cover, or the problem of reward-directed RL, or is there an inherent computational gap? See \cref{sec:discussion} for further discussion.

\paragraph{RL with resets.} The reset access model augments the episodic access model by allowing the learner to revisit any previously-seen state. This access model has been extensively applied to RL settings with linear value function approximation \citep{weisz2021query,li2021sample,yin2022efficient}, for the purpose of circumventing \emph{sample complexity} lower bounds. These settings do not subsume the Block MDP setting, since they assume linearity of the value function(s) in a known feature mapping $\phi$. More relevant to us is the work of \cite{mhammedi2024power}, who give an oracle-efficient algorithm $\RVFS$ for Block MDPs (more generally, MDPs with low \emph{pushforward-coverability}) in the reset access model. As shown in \cref{tab:oracle-overview}, $\RVFS$ uses a disagreement-type optimization oracle. While it is improper, and has the same solution concept as one-context regression (i.e. a real-valued function on $\MX$), there is no obvious reduction to one-context regression, particularly one that ensures realizability. Thus, unlike our result, it's not directly apparent whether it yields a provable computational separation between the reset access model and the episodic access model (though it's certainly possible that it could imply a separation with additional work).\loose

On the other hand, our algorithm $\PCR$ is limited to the Block MDP model class, whereas $\RVFS$ applies in significantly greater generality. Thus, the guarantees are formally incomparable.

\subsection{Algorithms for Low-Rank MDPs and Beyond}\label{sec:related-alg-low-rank}

Recently, \cite{mhammedi2023efficient} developed an oracle-efficient algorithm for episodic RL in Low-Rank MDPs, which generalize Block MDPs. However, this algorithm relies on the analogue of the min-max oracle in \cref{tab:oracle-overview}. The $\RVFS$ algorithm of \cite{mhammedi2024power} solves RL in Low-Rank MDPs with reset access, but as discussed above, seems to require an agnostic optimization oracle. Beyond Low-Rank MDPs, \cite{amortila2024scalable} proposed an algorithm for exploring MDPs with low \emph{pushforward coverability} (which is satisfied by Low-Rank MDPs, and hence Block MDPs as well) assuming access to a certain policy optimization oracle as well as an optimization oracle over weight functions. 

\subsection{Algorithms Based on One-Context Regression}\label{sec:related-ocr}

One-context regression is a natural oracle for estimating value functions and Bellman backups \citep{ernst2005tree,mhammedi2023representation,golowich2024exploration}, though until recently, most applications were phrased in terms of the worst-case optimization oracle analogue. Prior to our work, versions of one-context regression were known to be sufficient for offline RL in Block MDPs, under all-policy concentrability \citep{chen2019information}; for (online) RL in Block MDPs with deterministic dynamics \citep{efroni2021provable}, and for (online) RL in horizon-one Block MDPs; these results were systematized under \cref{def:one-con-regression} by \cite{golowich2024exploration}. A variant of one-context regression (on a ``composed'' function class) was shown to be sufficient for Block MDPs under a separability condition \citep{du2019provably}. Recently, it was shown that (in a setting broader than Block MDPs), one-context regression is sufficient for competing with a weaker notion of optimal policy, called a ``max-following policy'' for a given policy ensemble \citep{hussing2024oracle}.

\subsection{Computational Hardness in RL}\label{sec:related-hardness}

Most relevant to our lower bounds are recent results by \cite{golowich2024exploring,golowich2023planning}, who showed that one-context regression is necessary for RL in Block MDPs (for any concept class $\Phi$) as well as insufficient (for a particular choice of $\Phi$), the latter under a cryptographic assumption. Other hardness results include hardness of learning Partially Observable MDPs \citep{papadimitriou1987complexity,jin2020sample,golowich2023planning} and hardness of learning MDPs with linear $Q^\st$ and $V^\st$ \citep{kane2022computational,liu2023exponential}, all of which hold under versions of the Exponential Time Hypothesis.


\newpage
\section{Additional Preliminaries}

\paragraph{Notation in proofs.} For any MDP $M$, any policy $\pi$ induces a distribution over \emph{trajectories} $(s_1,x_1,a_1,\dots,s_H,x_H,a_H)$, where the trajectory is sampled via an episode of online interaction with $M$, and at step $h$ the action $a_h$ is sampled from $\pi_h(x_h)$. We write $\Pr^{M,\pi}[\cdot]$ (respectively, $\EE^{M,\pi}[\cdot]$) to denote probability (respectively, expectation) over a trajectory sampled from $M$ with policy $\pi$. With this notation, for $h \in [H]$ and $s \in \MS$, we have $d^{M,\pi}_h(s) = \Pr^{M,\pi}[s_h=s]$. In a (minor) overload of notation, for $h \in [H]$ and $x \in \MX$ we also define the visitation probability of observation $x$ at step $h$ as $d^{M,\pi}_h(x) := \Pr^{M,\pi}[x_h = x]$. 

In the remaining appendices, we will be explicit about certain dependences on the MDP $M$ (since we will shortly be introducing truncations of $M$). In particular, unless the MDP $M$ is clear from context, we write $\til\BP^M_h$ to denote the latent transition distribution $\til \BP_h$ of $M$, and similarly write $\BP^M_h$ to denote the observed transition distribution.

\paragraph{Notation in pseudocode.} For a sampleable distribution over policies $\rho \in \Delta(\Pi)$, in the pseudocode for our algorithms we may write 
\[(x_1,a_1,\dots,x_k,a_k) \sim \rho\]
to denote sampling a policy $\pi \sim \rho$ and then a sampling (partial) trajectory via an episode of online interaction with the MDP $M$, using policy $\pi$. More generally, for two such distributions $\rho,\rho' \in \Delta(\Pi)$ and a step $h \in [H]$, we may write
\[(x_1,a_1,\dots,x_k,a_k) \sim \rho \circ_h \rho'\]
to denote sampling policies $\pi \sim \rho$ and $\pi' \sim \rho'$, and then sampling a partial trajectory from $M$ using the policy $\pi \circ_h \pi'$, i.e. playing $a_i \sim\pi_i(x_i)$ for each $i < h$ and $a_i \sim \pi'_i(x_i)$ for each $i \geq h$. In the above notation, for an action $a \in \MA$, we may overload $a$ to also denote the policy that deterministically plays action $a$.

\subsection{Truncated MDPs}\label{sec:truncated-mdps}
Fix a Block MDP $M = (H, \MS, \MX, \MA, (\til \BP_h)_{h \in [H]}, (\til\BO_h)_{h\in[H]}, \phi^\st)$. For purposes of analysis, it will be useful to define \emph{truncations} of $M$, as well as \emph{truncated} policy covers defined in terms of certain truncations. To understand the material in \cref{sec:app_online,sec:app_resets} in full generality, these definitions will be important; however, if one is willing to make a reachability assumption on the MDP $M$ \citep{misra2020kinematic}, then one may ignore these definitions and treat $\Mbar(\Gamma)$ and $\Mbar(\emptyset)$ as equivalent to $M$ itself.

In truncated Block MDPs, the latent state space and observation space are augmented with a terminal state/context $\term$, and transitions that would lead to ``difficult-to-reach'' states in $M$ instead lead to $\term$. The following preliminaries are adapted from \cite{golowich2024exploring}. For notational convenience, define $\phi^\star(\term) := \term$. 

\begin{definition}\label{def:truncated-bmdp}
Let $\Srch = (\Srch_1,\dots,\Srch_H)$ for some given sets $\Srch_1,\dots,\Srch_H \subseteq \MS$. The $\Srch$-truncation of $M$ is the Block MDP $(H, \Sbar,\Xbar,\MA,(\Pbar_h)_{h\in[H]},(\Obar_h)_{h\in[H]},\phi^\star)$ with latent state space $\Sbar := \MS \cup \{\term\}$, observation space $\Xbar := \MX \cup \{\term\}$, observation distribution
\[\Obar_h(x|s) := \begin{cases} \til\BO_h(x|s) & \text{ if } x \in \MX, s \in \MS \\ 1 & \text{ if } x=s=\term \\ 0 & \text{ otherwise} \end{cases},\]
initial state distribution 
\[\Pbar_1(s_1) := \begin{cases}
\til\BP_1(s_1) & \text{ if } s_1 \in \Srch_1 \\ 
0 & \text{ if } s_1 \in \MS \setminus \Srch_1 \\ 
\sum_{z \in \MS \setminus \Srch_1} \til\BP_1(z) & \text{ if } s_1 = \term 
\end{cases},\]
and transition distribution
\[\Pbar_h(s_h|s_{h-1},a) := \begin{cases} \til\BP_h(s_h|s_{h-1},a) & \text{ if } s_{h-1} \in \MS, s_h \in \Srch_h \\ 0 & \text{ if } s_{h-1} \in \MS, s_h \in \MS\setminus\Srch_h \\ \sum_{z \in \MS\setminus\Srch_h} \til\BP_h(z|s_{h-1},a) &\text{ if } s_{h-1} \in \MS, s_h = \term \\ \mathbbm{1}[s_h=\term] & \text{ if } s_{h-1} = \term \end{cases} \]
for each $h \in \{2,\dots,h\}$.
\end{definition}

\paragraph{Definition of $\Mbar(\emptyset)$ and $\Mbar(\Gamma$).} Fix parameters $\trunc \geq \tsmall > 0$ and a finite set of policies $\Gamma \subset \Pi$. We inductively define sets $\Srch_1(\Gamma),\dots,\Srch_H(\Gamma)$ and truncated Block MDPs $\Mbar_1(\Gamma),\dots,\Mbar_H(\Gamma)$ as follows. First, define 
\[\Srch_1(\Gamma) := \Srch_1(\emptyset) := \{s \in \MS: \til\BP_1(s) \geq \trunc\}\]
and let $\Mbar_1(\Gamma)$ be the $(\Srch_1(\Gamma),\MS,\dots,\MS)$-truncation of $M$. Next, for each $h \in \{2,\dots,H\}$, define \[\Srch_h(\emptyset) := \{s \in \MS: \max_{\pi\in\Pi} d^{\Mbar_{h-1}, \pi}_h(s) \geq \trunc\}\]
and, if $\Gamma \neq \emptyset$, \[\Srch_h(\Gamma) := \Srch_h(\emptyset) \cup \left\{s \in \MS: \EE_{\pi\sim\Unif(\Gamma)} d^{M,\pi}_h(s) \geq \tsmall\right\}.\]
Then we let $\Mbar_h(\Gamma)$ be the $(\Srch_2(\Gamma),\dots,\Srch_h(\Gamma),\MS,\dots,\MS)$-truncation of $M$. Finally, define $\Mbar(\Gamma) := \Mbar_H(\Gamma)$. As will be evident from the final parameter settings (in \cref{alg:pco,alg:pcr}), one should think of $\tsmall \ll \trunc$.

\paragraph{Truncated policy covers.} To avoid compounding errors in the analysis of algorithms that build policy covers layer-by-layer (such as $\PCO$ and $\PCR$), it is convenient to work with \emph{truncated} policy covers throughout the analysis (e.g. in the inductive hypothesis at layer $h$), and only convert to a standard policy cover (as in \cref{eq:rfrl-pc}) at the end of the analysis. Truncated policy covers are defined below:

\begin{definition}\label{defn:trunc-pc}
Let $\alpha \in (0,1)$. We say that a collection of policies $\Psi \subset \Pi$ is an \emph{$\alpha$-truncated policy cover (for $M$) at step $h\in [H]$} if for all $x \in \MX$,
\begin{align}
\frac{1}{|\Psi|} \sum_{\pi' \in \Psi} d_h^{M,\pi'}(x) \geq \alpha \cdot \max_{\pi \in \Pi} d_h^{\Mbar(\emptyset),\pi}(x)\label{eq:approx-pc}. 
\end{align}
\end{definition}
\begin{definition}\label{defn:trunc-max-pc}
Let $\alpha \in (0,1)$. We say that a collection of policies $\Psi \subset \Pi$ is an \emph{$\alpha$-truncated max policy cover at step $h\in [H]$} if for all $x \in \MX$,
\begin{align}
\max_{\pi'\in\Psi} d_h^{M,\pi'}(x) \geq \alpha \cdot \max_{\pi \in \Pi} d_h^{\Mbar(\emptyset),\pi}(x) \label{eq:approx-max-pc}.
\end{align}
\end{definition}

\subsubsection{Useful facts for Truncated MDPs}

Recall that we defined $\Mbar(\Gamma)$ as the final truncated MDP in an iterative process that produced intermediate truncated MDPs $\Mbar_1(\Gamma),\dots,\Mbar_H(\Gamma)$. These intermediate MDPs are only used for certain technical lemmas about $\Mbar(\Gamma)$; nonetheless it is useful to state several facts for later use. 

Recall that $\Mbar_h(\Gamma)$ is, essentially, truncated up to and including the transitions into step $h$. \cref{fact:trunc-dists} formalizes the fact that $\Mbar_h(\Gamma)$ agrees with $\Mbar(\Gamma)$ up to step $h$, and \cref{fact:trunc-trans} formalizes the fact that it agrees with $M$ after step $h$.

\begin{fact}\label{fact:trunc-dists}
For any finite set $\Gamma \subset \Pi$, integers $g,h \in [H]$ with $g \leq h$, policy $\pi \in \Pi$, and state $s \in \Sbar$, it holds that $d^{\Mbar(\Gamma),\pi}_g(s) = d^{\Mbar_h(\Gamma),\pi}_g(s)$.
\end{fact}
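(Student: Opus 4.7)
The plan is to exploit the observation that in any MDP, the distribution of the state at step $g$ depends only on the initial distribution $\Pbar_1$ and the transition kernels $\Pbar_2,\ldots,\Pbar_g$ into steps $2,\ldots,g$, together with the policy $\pi$. Hence it suffices to check that $\Mbar(\Gamma)$ and $\Mbar_h(\Gamma)$ share the same initial distribution and the same transition kernels into every step $j \in \{2,\ldots,g\}$ whenever $g \leq h$.

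Concretely, I would first unpack the iterative construction: $\Mbar(\Gamma) = \Mbar_H(\Gamma)$ is the $(\Srch_1(\Gamma),\ldots,\Srch_H(\Gamma))$-truncation of $M$, while $\Mbar_h(\Gamma)$ is the truncation of $M$ using the same sets $\Srch_1(\Gamma),\ldots,\Srch_h(\Gamma)$ at the first $h$ layers and the trivial set $\MS$ at each later layer. From Definition~B.1, the initial distribution $\Pbar_1$ of an $\Srch$-truncation depends only on $\Srch_1$ and the unmodified $\til\BP_1$, and for $j \geq 2$ the transition $\Pbar_j$ into step $j$ depends only on $\Srch_j$ and the unmodified $\til\BP_j$. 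Since $j \leq g \leq h$ guarantees that the $j$-th truncation set for $\Mbar_h(\Gamma)$ coincides with that for $\Mbar(\Gamma)$ (both equal $\Srch_j(\Gamma)$), the initial distributions and the transitions into steps $2,\ldots,g$ of the two MDPs coincide as well.

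Given this matching of dynamics up through layer $g$, I would conclude by a straightforward induction on $g$. The base case $g=1$ is immediate since the initial distributions agree and $d^{\cdot,\pi}_1$ depends only on $\Pbar_1$. For the inductive step, writing
\[
d^{\Mbar(\Gamma),\pi}_g(s) \;=\; \sum_{s'\in\Sbar}\sum_{a\in\MA} d^{\Mbar(\Gamma),\pi}_{g-1}(s')\,\pi_{g-1}(a \mid s')\,\Pbar^{\Mbar(\Gamma)}_g(s \mid s',a),
\]
and noting the analogous identity for $\Mbar_h(\Gamma)$, the inductive hypothesis handles the visitation at step $g-1$ while the matching of $\Pbar_g$ handles the transition factor. (Here $\pi_{g-1}(a\mid s')$ is shorthand for marginalizing over the observation given the latent state, which is the same operation in both MDPs since the observation distributions $\Obar$ are identical.)

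The main ``obstacle'' is nothing technical but just care with indexing---one must verify that the iterative construction of the $\Mbar_h(\Gamma)$'s indeed reuses the same $\Srch_j(\Gamma)$ at each layer $j \leq h$ (rather than, say, producing a different set at layer $j$ on subsequent iterations), and that the truncation at layers $> g$ does not retroactively affect $d^{\cdot,\pi}_g$. Both are immediate from the definitions, so the full proof is a short induction with no substantive difficulty.
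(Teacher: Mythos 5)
Your argument is correct, and since the paper states \cref{fact:trunc-dists} without proof, your write-up is essentially the canonical argument that the authors leave implicit: both $\Mbar(\Gamma)$ and $\Mbar_h(\Gamma)$ are $\Srch$-truncations of $M$ whose initial distributions, transition kernels $\Pbar_j$ into steps $j \leq h$, and observation distributions all coincide, and the step-$g$ visitation distribution with $g \leq h$ is a functional only of those objects, the policy, and the (shared) observation kernels. The one notational wrinkle---that $\pi$ acts on observations rather than latent states---you flag explicitly and resolve correctly by noting that $\Obar$ is independent of the truncation sets, so the marginalized action kernel is identical in both MDPs.
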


\begin{fact}\label{fact:trunc-trans}
For any finite set $\Gamma \subset \Pi$, integer $h \in \{1,\dots,H-1\}$, state $s \in \Sbar$, and action $a \in \MA$, it holds that $\til \BP^{\Mbar_h(\Gamma)}_{h+1}(\cdot|s,a) = \til\BP^M_{h+1}(\cdot|s,a)$.
\end{fact}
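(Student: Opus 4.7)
The plan is a direct definition-chase. Unpack the inductive construction of $\Mbar_h(\Gamma)$: it is, by definition, the $(\Srch_1(\Gamma),\dots,\Srch_h(\Gamma),\MS,\dots,\MS)$-truncation of $M$, where the truncation set at layer $h+1$ (and every later layer) is simply $\MS$. It therefore suffices to show that for any $(\Srch_1,\dots,\Srch_H)$-truncation $\Mbar$ of $M$ in the sense of \cref{def:truncated-bmdp}, if $\Srch_{h+1} = \MS$, then $\til\BP^{\Mbar}_{h+1}(\cdot\mid{} s,a) = \til\BP^M_{h+1}(\cdot\mid{} s,a)$ for every $s\in\Sbar$ and $a\in\MA$.

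For $s\in\MS$, apply the case analysis in \cref{def:truncated-bmdp} to $\Pbar_{h+1}(\cdot\mid{} s,a)$: (i) for $s_{h+1}\in\Srch_{h+1} = \MS$, the truncated transition equals $\til\BP^M_{h+1}(s_{h+1}\mid{} s,a)$; (ii) the case $s_{h+1}\in\MS\setminus\Srch_{h+1}$ is vacuous since this set is empty; and (iii) for $s_{h+1} = \term$, we get $\sum_{z \in \MS\setminus\Srch_{h+1}} \til\BP^M_{h+1}(z\mid{} s,a) = 0$, again because the sum is over the empty set. Hence the two distributions agree on all of $\MS$ and both place zero mass on $\term$, yielding the fact for $s\in\MS$.

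The only part requiring any care is the corner case $s = \term$. Here the truncation definition stipulates $\Pbar_{h+1}(s_{h+1}\mid{}\term,a) = \mathbbm{1}[s_{h+1}=\term]$, so the equality in the fact should be read under the standing convention that $\til\BP^M_{h+1}(\cdot\mid{}\term,a)$ denotes the absorbing distribution $\delta_\term$ (which is how $\term$ is introduced in \cref{def:truncated-bmdp} in the first place). With this convention fixed, the two sides match. This terminal-state consistency is the only subtlety; the rest of the argument is pure bookkeeping against the definitions and involves no probabilistic or dynamical content.
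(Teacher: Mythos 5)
Your proof is correct, and the paper states this as a Fact without proof, so there is no alternative argument to compare against. The argument is exactly what is intended: since $\Mbar_h(\Gamma)$ is by construction the $(\Srch_1(\Gamma),\dots,\Srch_h(\Gamma),\MS,\dots,\MS)$-truncation of $M$, the truncation set at layer $h+1$ is all of $\MS$, so the case analysis in \cref{def:truncated-bmdp} collapses to the original transition kernel on $\MS\times\MA$ and places zero mass on $\term$ from any $s\in\MS$; your observation that the $s=\term$ case requires the absorbing-state convention for $\til\BP^M_{h+1}(\cdot\mid\term,a)$ is the right way to read the statement, and is consistent with how \cref{fact:trunc-trans} is invoked in \cref{lemma:perf-diff-trunc}, where the $Q$- and $V$-functions are explicitly set to zero at $\term$.
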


The most important property of a truncated MDP is that every state is either reachable (by some policy) with probability at least $\trunc$, or cannot be reached by any policy:

\begin{fact}\label{fact:trunc-reachability}
For every $h \in \{1,\dots,H\}$ and $s \in \MS$, we have $\max_{\pi\in\Pi} d^{\Mbar(\emptyset),\pi}_h(s) \geq \trunc$ if $s \in \Srch_h(\emptyset)$ and $\max_{\pi\in\Pi} d^{\Mbar(\emptyset),\pi}_h(s) = 0$ otherwise.
\end{fact}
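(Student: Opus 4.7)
The plan is to split on whether $s \in \Srch_h(\emptyset)$ and handle the base case $h=1$ separately. The two main tools are Fact~\ref{fact:trunc-dists} (which lets me replace $\Mbar(\emptyset)$ with $\Mbar_h(\emptyset)$ when computing a step-$h$ visitation) and a one-step unrolling of visitations that combines Fact~\ref{fact:trunc-trans} with the transition kernel in Definition~\ref{def:truncated-bmdp}. Intuitively, $\Mbar_{h-1}(\emptyset)$ and $\Mbar_h(\emptyset)$ differ only in whether the step-$h$ transitions to latent states outside $\Srch_h$ are rerouted to $\term$; for a target state $s \in \Srch_h$ itself, the step-$h$ kernel is untouched, and everything up through step $h-1$ is identical in the two MDPs.

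\textbf{Base case and unreachable direction.} When $h=1$, the claim follows directly from the definition of $\Pbar_1$ in Definition~\ref{def:truncated-bmdp}: if $s \in \Srch_1(\emptyset)$ then $d^{\Mbar(\emptyset),\pi}_1(s) = \til\BP_1(s) \geq \trunc$ for every $\pi$, while if $s \notin \Srch_1(\emptyset)$ then $d^{\Mbar(\emptyset),\pi}_1(s) = 0$. For $h \geq 2$ and $s \notin \Srch_h(\emptyset)$, Fact~\ref{fact:trunc-dists} gives $d^{\Mbar(\emptyset),\pi}_h(s) = d^{\Mbar_h(\emptyset),\pi}_h(s)$, and unrolling one transition expresses the right-hand side as a sum over step-$(h-1)$ latent state-action marginals weighted by $\Pbar^{\Mbar_h(\emptyset)}_h(s \mid \cdot,\cdot)$ (the contribution from $s' = \term$ vanishes since the sink is absorbing). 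Since $s \notin \Srch_h$, Definition~\ref{def:truncated-bmdp} forces $\Pbar^{\Mbar_h(\emptyset)}_h(s \mid s', a) = 0$ for all $s', a \in \MS \times \MA$, so the entire sum vanishes.

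\textbf{Reachable direction for $h \geq 2$.} If $s \in \Srch_h(\emptyset)$, the definition of $\Srch_h(\emptyset)$ supplies a policy $\pi^\star \in \Pi$ with $d^{\Mbar_{h-1}(\emptyset),\pi^\star}_h(s) \geq \trunc$. The key step is to show that this quantity equals $d^{\Mbar_h(\emptyset),\pi^\star}_h(s)$, after which Fact~\ref{fact:trunc-dists} yields $d^{\Mbar(\emptyset),\pi^\star}_h(s) \geq \trunc$ as required. I would establish this equality by unrolling the step-$h$ transition on both sides: Fact~\ref{fact:trunc-dists} applied at layer $g = h-1$ implies that the step-$(h-1)$ state-action marginals of $\pi^\star$ in $\Mbar_{h-1}(\emptyset)$ and $\Mbar_h(\emptyset)$ both coincide with the corresponding marginal in $\Mbar(\emptyset)$, and hence with each other. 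Meanwhile, for the target $s \in \Srch_h$, the step-$h$ kernel satisfies $\til\BP^{\Mbar_{h-1}(\emptyset)}_h(s \mid s',a) = \til\BP^M_h(s \mid s',a)$ by Fact~\ref{fact:trunc-trans}, and $\Pbar^{\Mbar_h(\emptyset)}_h(s \mid s',a) = \til\BP^M_h(s \mid s',a)$ by the first case of Definition~\ref{def:truncated-bmdp} (since $s \in \Srch_h$). The main obstacle is purely bookkeeping: carefully invoking Fact~\ref{fact:trunc-dists} at layer $h-1$ for both MDPs to equate the latent state-action marginals, since policies depend on observations rather than latent states and one has to marginalize over observations consistently on both sides.
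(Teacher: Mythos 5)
The paper states this as an unproven \textbf{Fact}, so there is no official proof to compare against. Your proof is correct. Checking the details:

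\textbf{Base case $h=1$.} You read off from Definition~\ref{def:truncated-bmdp} that the initial distribution $\Pbar_1$ of $\Mbar(\emptyset)$ equals $\til\BP_1$ on $\Srch_1(\emptyset)$ and vanishes off it; combined with the definition $\Srch_1(\emptyset)=\{s:\til\BP_1(s)\geq\trunc\}$ this gives both directions. Correct.

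\textbf{Unreachable direction, $h\geq 2$.} Fact~\ref{fact:trunc-dists} reduces to $\Mbar_h(\emptyset)$, and in $\Mbar_h(\emptyset)$ the transitions into $s\notin\Srch_h(\emptyset)$ at step $h$ are zero for all source states in $\MS$ (and the sink is absorbing). So $d^{\Mbar_h(\emptyset),\pi}_h(s)=0$. Correct.

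\textbf{Reachable direction, $h\geq 2$.} The definition $\Srch_h(\emptyset)=\{s:\max_\pi d^{\Mbar_{h-1}(\emptyset),\pi}_h(s)\geq\trunc\}$ gives a witness $\pi^\star$ for $\Mbar_{h-1}(\emptyset)$. To transfer this to $\Mbar(\emptyset)$, you unroll one step and use: (i) Fact~\ref{fact:trunc-dists} applied with $g=h-1$ to both $\Mbar_{h-1}(\emptyset)$ and $\Mbar_h(\emptyset)$ to equate the step-$(h-1)$ latent marginals with those of $\Mbar(\emptyset)$, hence with each other; these lift to state-action marginals because both MDPs share the same observation distribution at step $h-1$ and the policy acts on observations; (ii) for the target $s\in\Srch_h(\emptyset)$, the step-$h$ transition kernel into $s$ from any $(s',a)\in\MS\times\MA$ equals $\til\BP^M_h(s\mid s',a)$ both in $\Mbar_{h-1}(\emptyset)$ (by Fact~\ref{fact:trunc-trans}) and in $\Mbar_h(\emptyset)$ (by the first case of Definition~\ref{def:truncated-bmdp}). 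Thus $d^{\Mbar_{h-1}(\emptyset),\pi^\star}_h(s)=d^{\Mbar_h(\emptyset),\pi^\star}_h(s)=d^{\Mbar(\emptyset),\pi^\star}_h(s)\geq\trunc$. Correct, and the bookkeeping concern you flag (latent state vs.\ state-action marginals) is exactly the right thing to check; your resolution via the shared observation model is sound.

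The proof is complete and uses exactly the tools the paper makes available (Facts~\ref{fact:trunc-dists}, \ref{fact:trunc-trans}, and Definition~\ref{def:truncated-bmdp}).
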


The following facts formalize that $\Mbar_h(\Gamma)$ is ``more truncated'' than $\Mbar_{h-1}(\Gamma)$, and that $\Mbar(\Gamma)$ is ``less truncated'' than $\Mbar(\emptyset)$, respectively. The truncation process only takes mass away from non-terminal states.

\begin{fact}\label{fact:trunc-monotonicity}
Fix any finite set $\Gamma \subset \Pi$ and write $\Mbar_0(\Gamma) := M$. For any $h \in [H]$, $s \in \MS$, and $\pi \in \Pi$, it holds that $d^{\Mbar_h(\Gamma),\pi}_h(s) \leq d^{\Mbar_{h-1}(\Gamma),\pi}_h(s)$. Hence, $\E^{\Mbar_h(\Gamma),\pi}[f(x_h)] \leq \E^{\Mbar_{h-1}(\Gamma),\pi}[f(x_h)]$ for any $f: \Xbar \to \RR_{\geq 0}$ with $f(\term)=0$.
\end{fact}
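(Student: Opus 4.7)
The plan is to directly compare the one-step transition into step $h$ between $\Mbar_h(\Gamma)$ and $\Mbar_{h-1}(\Gamma)$ (where $\Mbar_0(\Gamma) := M$), after establishing that the joint distribution of $(s_{h-1}, a_{h-1})$ is the same under both MDPs. For $h \geq 2$, I would first write
\[d^{\Mbar_h(\Gamma),\pi}_h(s) = \sum_{s' \in \Sbar,\,a \in \MA} \Pr^{\Mbar_h(\Gamma),\pi}[s_{h-1}=s',\,a_{h-1}=a] \cdot \til\BP^{\Mbar_h(\Gamma)}_h(s \mid s',a),\]
and analogously for $\Mbar_{h-1}(\Gamma)$. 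The marginal over $s_{h-1}$ coincides between the two MDPs by \cref{fact:trunc-dists}; moreover, the conditional distribution of $a_{h-1}$ given $s_{h-1} = s'$ is determined entirely by $\pi_{h-1}$ and the observation kernel $\Obar_{h-1}(\cdot \mid s')$, which agrees between $\Mbar_h(\Gamma)$ and $\Mbar_{h-1}(\Gamma)$ (in both MDPs, $s' \in \MS$ emits via $\til\BO_{h-1}(\cdot \mid s')$ and $s' = \term$ emits $\term$ deterministically). Hence the joint distribution of $(s_{h-1}, a_{h-1})$ is the same in both MDPs.

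Next, I would compare the transition kernels at step $h$ on $s \in \MS$. By \cref{fact:trunc-trans} applied at index $h-1$, the kernel $\til\BP^{\Mbar_{h-1}(\Gamma)}_h(\cdot \mid s',a)$ equals $\til\BP^M_h(\cdot \mid s',a)$. Unpacking the definition of the $(\Srch_2(\Gamma),\dots,\Srch_h(\Gamma),\MS,\dots,\MS)$-truncation, $\til\BP^{\Mbar_h(\Gamma)}_h(s \mid s',a)$ equals $\til\BP^M_h(s \mid s',a)$ when $s \in \Srch_h(\Gamma)$ and equals $0$ when $s \in \MS \setminus \Srch_h(\Gamma)$; in either case it is at most $\til\BP^{\Mbar_{h-1}(\Gamma)}_h(s \mid s',a)$. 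Summing against the shared $(s_{h-1}, a_{h-1})$ distribution yields the pointwise inequality for all $s \in \MS$. The base case $h=1$ is immediate: $d^{M,\pi}_1(s) = \til\BP^M_1(s)$, whereas $d^{\Mbar_1(\Gamma),\pi}_1(s) = \til\BP^M_1(s)\cdot\One[s \in \Srch_1(\Gamma)]$.

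For the expectation statement, I would expand
\[\EE^{\Mbar_h(\Gamma),\pi}[f(x_h)] = \sum_{s \in \MS} d^{\Mbar_h(\Gamma),\pi}_h(s) \cdot \EE_{x \sim \til\BO_h(\cdot \mid s)}[f(x)],\]
noting that the $s=\term$ contribution vanishes since $f(\term)=0$ and $\Obar_h(\cdot \mid \term)$ is concentrated on $\term$. Because each inner expectation is nonnegative and identical across the two MDPs, the pointwise visitation inequality lifts to the expectation. The only subtle point---and where I would be careful---is that \cref{fact:trunc-dists} only asserts equality of state marginals; promoting this to a joint equality with $a_{h-1}$ requires explicitly invoking the Block MDP observation structure, as sketched above.
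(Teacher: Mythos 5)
Your proof is correct. The paper states Fact~\ref{fact:trunc-monotonicity} without a proof, so there is no official argument to compare against, but your reasoning is the natural one: fix the step-$(h-1)$ latent state--action law across the two MDPs (which you correctly show agrees, using \cref{fact:trunc-dists} transitively through $\Mbar(\Gamma)$ for the state marginal and the shared Markovian policy plus shared observation kernel for the conditional of $a_{h-1}$), then compare the one-step kernels into step $h$ termwise using \cref{fact:trunc-trans} and the definition of the truncation, treating $s'=\term$ and the $h=1$ base case separately. You are also right to note that the promotion from state-marginal equality to joint $(s_{h-1},a_{h-1})$ equality is the only nontrivial point, and the justification via the Block MDP emission structure is exactly what is needed. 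The expectation statement follows immediately from the pointwise visitation inequality since $f\geq 0$, $f(\term)=0$, and the emission kernels $\til\BO_h(\cdot\mid{}s)$ for $s\in\MS$ are shared between the two MDPs.
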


\begin{fact}\label{fact:gamma-monotonicity}
For every $h \in [H]$, $s \in \MS$, $\Gamma \subset \Pi$, and $\pi \in \Pi$, we have $d^{\Mbar(\emptyset),\pi}_h(s) \leq d^{\Mbar(\Gamma),\pi}_h(s) \leq d^{M,\pi}_h(s)$.
\end{fact}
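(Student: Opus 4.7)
Both inequalities express the basic monotonicity of truncation: enlarging $\Srch_h$ diverts less mass to the terminal state $\term$, so non-terminal visitation probabilities can only grow. The key set-theoretic input is that $\Srch_h(\emptyset) \subseteq \Srch_h(\Gamma)$ for every $h$, which is immediate from the defining union in the construction of $\Srch_h(\Gamma)$. My plan for both inequalities is to reduce to statements about the intermediate truncations $\Mbar_h(\cdot)$ using \cref{fact:trunc-dists}, and then push through \cref{fact:trunc-trans} and \cref{fact:trunc-monotonicity}.

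For $d^{\Mbar(\emptyset),\pi}_h(s) \leq d^{\Mbar(\Gamma),\pi}_h(s)$, I will induct on $h$. The base case $h=1$ is immediate since $\Srch_1(\emptyset) = \Srch_1(\Gamma)$ by definition, so $\Mbar(\emptyset)$ and $\Mbar(\Gamma)$ have identical initial-state laws. For the inductive step, \cref{fact:trunc-dists} lets me replace the full visitations by step-$h$ visitations in $\Mbar_h(\emptyset)$ and $\Mbar_h(\Gamma)$, each of which I decompose as a one-step pushforward from step $h-1$. By \cref{fact:trunc-trans}, for $s' \in \MS$ the latent transition into step $h$ in $\Mbar_h(\cdot)$ equals $\til\BP^M_h(s|s',a) \cdot \mathbbm{1}[s \in \Srch_h(\cdot)]$. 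Since the observation distributions $\til\BO_{h-1}(\cdot|s')$ are unchanged by truncation, the conditional action kernel $\bar\pi_{h-1}(a|s') := \EE_{x \sim \til\BO_{h-1}(\cdot|s')}[\pi_{h-1}(a|x)]$ is identical across both MDPs. The two sources of discrepancy are therefore (i) the truncation indicator, which is weakly larger under $\Gamma$ by the set containment above, and (ii) the step-$(h-1)$ marginals on $\MS$, which satisfy the desired inequality by the inductive hypothesis carried back through \cref{fact:trunc-dists}; a short case analysis on whether $s$ lies in $\Srch_h(\emptyset)$, $\Srch_h(\Gamma)\setminus\Srch_h(\emptyset)$, or neither closes the induction.

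For $d^{\Mbar(\Gamma),\pi}_h(s) \leq d^{M,\pi}_h(s)$, I will set $\Mbar_0(\Gamma) := M$ and chain \cref{fact:trunc-monotonicity} across layers. Specifically, for each $g \in [h]$ I claim $d^{\Mbar_g(\Gamma),\pi}_h(s) \leq d^{\Mbar_{g-1}(\Gamma),\pi}_h(s)$ for $s \in \MS$. The case $g=h$ is \cref{fact:trunc-monotonicity} verbatim. For $g < h$, since $\Mbar_g(\Gamma)$ performs no truncation after step $g$ (and likewise for $\Mbar_{g-1}(\Gamma)$), both MDPs share the same step-$g$-to-step-$h$ transition kernel, namely the $M$-kernel. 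The step-$h$ marginal on $s \in \MS$ therefore factors as the step-$g$ marginal on $\MS$ against a common nonnegative kernel, and the $g=h$ instance of \cref{fact:trunc-monotonicity} applied at step $g$ transfers the inequality from step $g$ to step $h$. Telescoping from $g=h$ down to $g=1$ gives $d^{\Mbar_h(\Gamma),\pi}_h(s) \leq d^{M,\pi}_h(s)$, and a final application of \cref{fact:trunc-dists} converts the left side to $d^{\Mbar(\Gamma),\pi}_h(s)$.

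The proof is essentially bookkeeping rather than containing a real obstacle. The only mild subtlety is that $\pi$ acts on observations rather than latent states; this is handled by the observation above that the latent-space action kernel $\bar\pi$ is shared across all MDPs under consideration because observation distributions are untouched by truncation, so every one-step pushforward factors cleanly through $\bar\pi$ and an $M$-kernel.
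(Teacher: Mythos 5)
The paper states this as a \emph{Fact} without proof, so there is no proof to compare your argument against; I can only evaluate it on its own merits, and it is correct. The structure is sound: the set containment $\Srch_h(\emptyset) \subseteq \Srch_h(\Gamma) \subseteq \MS$ (immediate from the defining union, with equality at $h=1$) is indeed the only real input, and everything else is bookkeeping of the sort you describe. For the left inequality, your induction with the three-way case split on $s$ (inside $\Srch_h(\emptyset)$, in $\Srch_h(\Gamma)\setminus\Srch_h(\emptyset)$, or outside both) combined with \cref{fact:trunc-dists} to localize to the intermediate truncations works: in the first case the one-step kernels into layer $h$ agree and the inductive hypothesis applies term-by-term; in the other two cases the left-hand side is zero. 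For the right inequality, the telescope $d^{\Mbar_g(\Gamma),\pi}_h(s) \leq d^{\Mbar_{g-1}(\Gamma),\pi}_h(s)$ for $g=h,\dots,1$ is the right move: the $g=h$ case is \cref{fact:trunc-monotonicity} itself, and for $g<h$ you correctly note that $\Mbar_g(\Gamma)$ and $\Mbar_{g-1}(\Gamma)$ share the same (untruncated, $M$-induced) dynamics from step $g$ to step $h$, so the step-$h$ marginal factors through the step-$g$ marginal against a common nonnegative kernel, and \cref{fact:trunc-monotonicity} at step $g$ transfers forward. Your observation that the latent-level action kernel $\bar\pi$ is shared across all the MDPs because truncation leaves the observation distributions untouched is exactly the right way to dispose of the ``$\pi$ acts on observations'' subtlety.
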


\subsubsection{Facts about truncated policy covers}

In our algorithms, we will often roll in to some step $h$ using a uniformly random policy from $\frac{1}{2}(\Unif(\Psi_h) + \Unif(\Gamma))$. The following lemma gives useful properties of the resulting visitation distribution, under the assumption that $\Psi_h$ is an $\alpha$-truncated policy cover at step $h$.

\begin{lemma}\label{lemma:srch-gamma-covering}
Fix $h \in \{1,\dots,H-1\}$ and $\alpha>0$. Let $\Psi_h, \Gamma \subset \Pi$ be finite sets of policies. Suppose that $\Psi_h$ is an $\alpha$-truncated policy cover at step $h$ (\cref{defn:trunc-pc}) for $M$. 
Then:
\begin{enumerate}
\item\label{item:h-cov-lb} For all $s_h \in \Srch_h(\Gamma)$, it holds that 
\[\EE_{\pi\sim\frac{1}{2}(\Unif(\Psi_h)+\Unif(\Gamma))} d^{M,\pi}_h(s_h) \geq \frac{\min(\alpha\trunc,\tsmall)}{2}.\]
\item\label{item:change-of-measure} For any $f:\Xbar\to\RR_{\geq 0}$ with $f(\term)=0$, it holds that 
\[\EE_{\pi\sim\frac{1}{2}(\Unif(\Psi_h)+\Unif(\Gamma))} \E^{M,\pi}[f(x_h)] \geq \frac{\min(\alpha\trunc,\tsmall)}{2} \max_{\pi\in\Pi} \E^{\Mbar(\Gamma),\pi}[f(x_h)].\]
\item\label{item:h-plus-one-cov-lb} For all $s_{h+1} \in \MS$,
\[\EE_{\pi\sim\frac{1}{2}(\Unif(\Psi_h)+\Unif(\Gamma))\circ_h \Unif(\MA)} d^{M,\pi}_{h+1}(s_{h+1}) \geq \frac{\min(\alpha\trunc,\tsmall)}{2|\MA|} \max_{\pi\in\Pi} d^{\Mbar(\Gamma),\pi}_{h+1}(s_{h+1}).\]
\end{enumerate}
\end{lemma}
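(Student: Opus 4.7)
The plan is to prove the three parts in order, with parts 2 and 3 building directly on part 1.

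For part 1, fix $s_h \in \Srch_h(\Gamma)$ and split based on the definition
$\Srch_h(\Gamma) = \Srch_h(\emptyset) \cup \{s : \EE_{\pi\sim\Unif(\Gamma)} d^{M,\pi}_h(s) \geq \tsmall\}$.
In the first case, \cref{fact:trunc-reachability} gives some $\pi$ with $d^{\Mbar(\emptyset),\pi}_h(s_h) \geq \trunc$, so the truncated policy cover property of $\Psi_h$ yields $\EE_{\pi\sim\Unif(\Psi_h)} d^{M,\pi}_h(s_h) \geq \alpha\trunc$, and the $\Psi_h$-component of the mixture contributes at least $\alpha\trunc/2$. In the second case, the $\Gamma$-component contributes at least $\tsmall/2$ directly. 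Taking the minimum across cases yields the bound.

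For part 2, observe that $d^{\Mbar(\Gamma),\pi}_h(s) = 0$ for every $s \in \MS\setminus\Srch_h(\Gamma)$ (by construction of the truncation) and $f(\term)=0$ by assumption, so grouping observations by their latent state gives
\[\E^{\Mbar(\Gamma),\pi^*}[f(x_h)] \;=\; \sum_{s \in \Srch_h(\Gamma)} d^{\Mbar(\Gamma),\pi^*}_h(s)\, g(s),\]
where $g(s) \ldef \sum_{x:\phi^\st(x)=s} \til\BO_h(x\mid s) f(x) \geq 0$, and similarly $\E^{M,\pi}[f(x_h)] \geq \sum_{s \in \Srch_h(\Gamma)} d^{M,\pi}_h(s)\, g(s)$. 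Let $\pi^*$ attain the maximum on the RHS of the target inequality. For each $s \in \Srch_h(\Gamma)$, part 1 gives $\EE_{\pi\sim\mu} d^{M,\pi}_h(s) \geq c$ with $c \ldef \min(\alpha\trunc,\tsmall)/2$, and since $d^{\Mbar(\Gamma),\pi^*}_h(s) \leq 1$ we have $\EE_{\pi\sim\mu} d^{M,\pi}_h(s) \geq c \cdot d^{\Mbar(\Gamma),\pi^*}_h(s)$. Weighting by $g(s)$ and summing gives the claim.

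For part 3, the idea is to reduce visitations of $s_{h+1}$ to an $f$-expectation at step $h$ and invoke part 2. Using \cref{fact:trunc-dists,fact:trunc-monotonicity,fact:trunc-trans}, for any policy $\pi$,
\[d^{\Mbar(\Gamma),\pi}_{h+1}(s_{h+1}) \;\leq\; d^{\Mbar_h(\Gamma),\pi}_{h+1}(s_{h+1}) \;=\; \E^{\Mbar(\Gamma),\pi}\!\left[\til\BP^M_{h+1}(s_{h+1}\mid \phi^\st(x_h), a_h)\right] \;\leq\; \E^{\Mbar(\Gamma),\pi}[f^*(x_h)],\]
where $f^*(x) \ldef \max_{a\in\MA} \til\BP^M_{h+1}(s_{h+1}\mid \phi^\st(x), a)$ (with $f^*(\term)\ldef 0$). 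Applying part 2 to $f^*$ yields $\EE_{\pi\sim\mu}\E^{M,\pi}[f^*(x_h)] \geq c \cdot \max_\pi d^{\Mbar(\Gamma),\pi}_{h+1}(s_{h+1})$. To convert the LHS back into an $h{+}1$ visitation, use the elementary bound $\max_a x(a) \leq \sum_a x(a) = |\MA|\cdot\EE_{a\sim\Unif(\MA)} x(a)$ applied pointwise in $x_h$, giving $\E^{M,\pi}[f^*(x_h)] \leq |\MA| \cdot d^{M,\pi\circ_h\Unif(\MA)}_{h+1}(s_{h+1})$. Dividing through by $|\MA|$ yields the $c/|\MA|$ factor claimed.

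The main obstacle is the bookkeeping in part 3: we must simultaneously use the truncated MDP $\Mbar_h(\Gamma)$ (so that transitions into step $h{+}1$ are untruncated and match those of $M$, per \cref{fact:trunc-trans}) and the untruncated MDP $M$ (so that part 2 can be applied), while converting a $\max$ over actions into a $\Unif(\MA)$ average at the expected $1/|\MA|$ cost; once these relationships are set up, the result is a straightforward chain of inequalities.
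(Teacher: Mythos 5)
Your proof is correct. Parts 1 and 2 match the paper's proof essentially verbatim (same case split on $\Srch_h(\emptyset)$ vs. $\Srch_h(\Gamma)\setminus\Srch_h(\emptyset)$, same grouping of observations by latent state, same use of $d^{\Mbar(\Gamma),\pi}_h(s_h)\le 1$).

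Part 3 takes a genuinely different route from the paper. The paper proves part 3 from scratch: it expands $\EE_{\pi\sim\mu\circ_h\Unif(\MA)} d^{M,\pi}_{h+1}(s_{h+1})$ over latent states at step $h$, fixes a comparison policy $\pi^\st$, lower bounds the $\Unif(\MA)$ average by $\frac{1}{|\MA|}\til\BP^M_{h+1}(s_{h+1}|s_h,\pi^\st(s_h))$, and then directly recognizes the resulting sum as $\frac{c}{|\MA|}d^{\Mbar(\Gamma),\pi^\st}_{h+1}(s_{h+1})$ — the $1/|\MA|$ loss comes from lower bounding an average by $\frac{1}{|\MA|}$ times a single entry. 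You instead route part 3 \emph{through} part 2 by introducing the envelope function $f^*(x)=\max_a\til\BP^M_{h+1}(s_{h+1}\mid\phi^\st(x),a)$, using \cref{fact:trunc-dists,fact:trunc-monotonicity,fact:trunc-trans} to show $d^{\Mbar(\Gamma),\pi}_{h+1}(s_{h+1})\le\E^{\Mbar(\Gamma),\pi}[f^*(x_h)]$, applying part 2 to $f^*$, and then converting back via the pointwise $\max_a\le|\MA|\cdot\text{avg}_a$ — the same $1/|\MA|$ loss, incurred in the conversion step rather than the policy-selection step. Your version is slightly longer (it needs the chain through $\Mbar_h(\Gamma)$ and $\Mbar_{h+1}(\Gamma)$ to justify the initial bound) but cleaner conceptually: it exposes part 2 as a reusable change-of-measure primitive and derives part 3 as a corollary, whereas the paper duplicates the change-of-measure logic inline. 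Both give identical constants.
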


\begin{proof}
We start with the first claim. Pick any $s_h \in \Srch_h(\Gamma)$. If $s_h \in \Srch_h(\emptyset)$, then $\max_{\pi\in\Pi} d^{\Mbar(\emptyset),\pi}_h(s) \geq \trunc$ (\cref{fact:trunc-reachability}). Thus, \[\EE_{\pi\sim\Unif(\Psi_h)} d^{M,\pi}_h(s_h) \geq \alpha\trunc\] by \cref{defn:trunc-pc}. On the other hand, if $s_h \in \Srch_h(\Gamma)\setminus\Srch_h(\emptyset)$, then $\E_{\pi\sim\Unif(\Gamma)} d^{M,\pi}_h(s) \geq \tsmall$ by definition of $\Srch_h(\Gamma)$. In either case, we have
\[\frac{1}{2}\EE_{\pi\sim\Unif(\Psi_h)} d^{M,\pi}_h(s_h) + \frac{1}{2}\EE_{\pi\sim\Unif(\Gamma)} d^{M,\pi}_h(s_h) \geq \frac{\min(\alpha\trunc,\tsmall)}{2}\]
which proves the first claim. Next, pick any $f:\Xbar\to\RR_{\geq 0}$ with $f(\term)=0$. We have
\begin{align*}
&\EE_{\pi\sim\frac{1}{2}(\Unif(\Psi_h)+\Unif(\Gamma))} \E^{M,\pi}[f(x_h)] \\ 
&= \sum_{s_h \in \MS} \EE_{\pi\sim\frac{1}{2}(\Unif(\Psi_h)+\Unif(\Gamma)} d^{M,\pi}_h(s_h) \EE_{x_h \sim \BO_h(\cdot|s_h)}[f(x_h)] \\ 
&\geq \sum_{s_h \in \Srch_h(\Gamma)} \EE_{\pi\sim\frac{1}{2}(\Unif(\Psi_h)+\Unif(\Gamma)} d^{M,\pi}_h(s_h) \EE_{x_h \sim \BO_h(\cdot|s_h)}[f(x_h)] \\ 
&\geq \frac{\min(\alpha\trunc,\tsmall)}{2} \sum_{s_h \in \Srch_h(\Gamma)} \max_{\pi\in\Pi} d^{\Mbar(\Gamma),\pi}_h(s_h) \EE_{x_h \sim \BO_h(\cdot|s_h)}[f(x_h)] \\ 
&\geq \frac{\min(\alpha\trunc,\tsmall)}{2} \max_{\pi\in\Pi} \E^{\Mbar(\Gamma),\pi}[f(x_h)]
\end{align*}
where the first inequality uses non-negativity of $f$; the second inequality uses \cref{item:h-cov-lb} together with the fact that $d^{\Mbar(\Gamma),\pi}_h(s_h) \leq 1$ for all $\pi,s_h$; and the third inequality uses the fact that $f(\term) = 0$. This proves the second claim. Finally, pick any $s_{h+1} \in \MS$. If $s_{h+1} \in \Srch_{h+1}(\Gamma)$, then for any $\pi \in \Pi$,
\begin{align*}
&\EE_{\pi\sim\frac{1}{2}(\Unif(\Psi_h)+\Unif(\Gamma))\circ_h \Unif(\MA)} d^{M,\pi}_{h+1}(s_{h+1}) \\ 
&= \sum_{s_h \in \MS} \left(\EE_{\pi\sim\frac{1}{2}(\Unif(\Psi_h)+\Unif(\Gamma)} d^{M,\pi}_h(s_h)\right) \frac{1}{|\MA|}\sum_{a \in \MA} \til\BP^M_{h+1}(s_{h+1}|s_h,a) \\ 
&\geq \sum_{s_h \in \MS} \left(\EE_{\pi\sim\frac{1}{2}(\Unif(\Psi_h)+\Unif(\Gamma)} d^{M,\pi}_h(s_h)\right) \frac{1}{|\MA|} \til\BP^M_{h+1}(s_{h+1}|s_h,\pi(s_h)) \\
&\geq \frac{\min(\alpha\trunc,\tsmall)}{2}\sum_{s_h\in\Srch_h(\Gamma)} d^{\Mbar(\Gamma),\pi}_h(s_h) \frac{1}{|\MA|} \til\BP^M_{h+1}(s_{h+1}|s_h,\pi(s_h)) \\
&= \frac{\min(\alpha\trunc,\tsmall)}{2|\MA|} d^{\Mbar(\Gamma),\pi}_{h+1}(s_{h+1})
\end{align*}
where the second inequality uses \cref{item:h-cov-lb} together with the fact that $d^{\Mbar(\Gamma),\pi}_h(s_h) \leq 1$ for all $s_h$. This proves the third claim whenever $s_{h+1} \in \Srch_{h+1}(\Gamma)$. Moreover, if $s_{h+1} \not \in \Srch_{h+1}(\Gamma)$ then $\max_{\pi\in\Pi} d^{\Mbar(\Gamma),\pi}_{h+1}(s_{h+1}) = 0$, so the inequality is vacuously true.
\end{proof}

\subsubsection{Additional facts about truncated MDPs}

The following lemma will be important in the win/win analyses for $\PCO$ and $\PCR$---specifically, it shows that if we find a policy that visits the terminal state $\term$ with reasonable probability in $\Mbar(\Gamma)$, then it must explore some hard-to-reach state at some earlier layer of $M$ (which is a form of progress).

\begin{lemma}\label{lemma:term-prob}
Let $\pi \in \Pi$ and $\Gamma \subset \Pi$. For any $h \in [H]$, it holds that \[d^{\Mbar(\Gamma),\pi}_h(\term) \leq \sum_{k=1}^h \sum_{s \in \MS\setminus \Srch_k(\Gamma)} d^{M,\pi}_k(s).\]
\end{lemma}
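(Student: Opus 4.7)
The plan is a direct coupling argument. By construction (cf.\ \cref{def:truncated-bmdp}), $\Mbar(\Gamma)$ differs from $M$ only in that at each step $k$, any transition that would land in $\MS \setminus \Srch_k(\Gamma)$ is redirected to the absorbing terminal state $\term$, while the dynamics at non-terminal states are otherwise identical and the observation distributions $\til\BO_k(\cdot\mid s)$ agree on $s \in \MS$. This suggests coupling the latent-state trajectories of $M$ and $\Mbar(\Gamma)$ under the same policy $\pi$: sample a trajectory $(s^M_1, \ldots, s^M_H)$ from $M$, and let $\tau$ be the first step at which $s^M_\tau \in \MS \setminus \Srch_\tau(\Gamma)$ (with $\tau := \infty$ if no such step exists); then define $s^{\bar M}_k := s^M_k$ for $k < \tau$ and $s^{\bar M}_k := \term$ for $k \geq \tau$.

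One must check that the marginal law of $(s^{\bar M}_k)_k$ under this coupling is indeed the latent-state chain of $\Mbar(\Gamma)$ under $\pi$. This reduces to a one-step verification: since $\til\BO_k^{\bar M}(\cdot\mid s) = \til\BO_k^M(\cdot\mid s)$ for $s \in \MS$, the action distribution $\bar\pi_k(\cdot\mid s) := \E_{x_k \sim \til\BO_k(\cdot\mid s)}[\pi_k(x_k)]$ induced by $\pi$ at latent state $s$ is the same in both MDPs, and the latent transitions agree up to the redirection to $\term$ encoded in \cref{def:truncated-bmdp}. Under the coupling, the event $\{s^{\bar M}_h = \term\}$ equals $\{\tau \leq h\}$, which is contained in $\bigcup_{k=1}^h \{s^M_k \in \MS \setminus \Srch_k(\Gamma)\}$.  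A union bound then gives
\[d^{\Mbar(\Gamma),\pi}_h(\term) \;=\; \Pr[\tau \leq h] \;\leq\; \sum_{k=1}^h \Pr^{M,\pi}\!\left[s_k \in \MS \setminus \Srch_k(\Gamma)\right] \;=\; \sum_{k=1}^h \sum_{s \in \MS \setminus \Srch_k(\Gamma)} d^{M,\pi}_k(s),\]
which is exactly the claimed inequality.

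The only mild obstacle is the marginal check in the Block MDP setting, where policies act on observations rather than latent states; this is why the equality $\til\BO_k^{\bar M}(\cdot\mid s) = \til\BO_k^M(\cdot\mid s)$ for $s \in \MS$ matters. Should a more algebraic argument be preferred, one can instead induct on $h$: the base case follows from $\Pbar_1(\term) = \sum_{s \in \MS \setminus \Srch_1(\Gamma)} \til\BP_1^M(s)$, while the inductive step decomposes $d^{\Mbar_{h+1}(\Gamma),\pi}_{h+1}(\term)$ into the already-terminated mass $d^{\Mbar_h(\Gamma),\pi}_h(\term)$ (identified with $d^{\Mbar(\Gamma),\pi}_h(\term)$ via \cref{fact:trunc-dists}) plus the newly-truncated mass, which by \cref{fact:trunc-trans} is $\sum_{s \in \MS \setminus \Srch_{h+1}(\Gamma)} \sum_{s' \in \MS} d^{\Mbar_h(\Gamma),\pi}_h(s') \sum_a \bar\pi_h(a\mid s') \til\BP^M_{h+1}(s\mid s',a)$, and is in turn bounded by $\sum_{s \in \MS \setminus \Srch_{h+1}(\Gamma)} d^{M,\pi}_{h+1}(s)$ using iteratively \cref{fact:trunc-monotonicity} to pass from $\Mbar_h(\Gamma)$-occupancies of non-terminal states to $M$-occupancies.
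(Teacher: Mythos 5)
Your primary coupling argument is correct and takes a genuinely different route from the paper. The paper proves the lemma by a purely algebraic induction on $h$: it writes $d^{\Mbar(\Gamma),\pi}_h(\term)$ as $d^{\Mbar(\Gamma),\pi}_{h-1}(\term)$ plus the newly-truncated mass, upper-bounds the latter by replacing the $\Mbar(\Gamma)$-occupancy at step $h-1$ with the $M$-occupancy via \cref{fact:gamma-monotonicity}, and telescopes. Your coupling, by contrast, runs a single $M$-trajectory, defines the first escape time $\tau$ from the reachable sets, and projects onto $\term$ thereafter; the bound is then a union bound over $\{\tau = k\}_{k\le h}$. The coupling is more conceptual and directly captures the intended picture (mass at $\term$ is mass that has escaped the reachable sets at some earlier step), but it does carry the extra obligation — which you correctly flag — of verifying that the coupled latent process is distributed as $\Mbar(\Gamma)$'s, because $\pi$ acts on observations rather than latent states; the fact that $\til\BO_k$ agrees on $\MS$ between the two MDPs is exactly what discharges this. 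The algebraic induction (which is what the paper uses, and what you sketch as the alternative) avoids the coupling bookkeeping at the cost of being less transparent about where the bound comes from. One small correction to your alternative route: the inequality $d^{\Mbar_h(\Gamma),\pi}_h(s') \le d^{M,\pi}_h(s')$ is most cleanly obtained from \cref{fact:gamma-monotonicity} (via \cref{fact:trunc-dists}) rather than by ``iterating'' \cref{fact:trunc-monotonicity}, which as stated only compares $\Mbar_h$ to $\Mbar_{h-1}$ at the matching step index and does not directly telescope down to $\Mbar_0 = M$.
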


\begin{proof}
Observe that $d^{\Mbar(\Gamma),\pi}_1(\term) = \sum_{s \in \MS\setminus \Srch_1(\Gamma)} \til\BP^M_1(s) = \sum_{s \in \MS\setminus \Srch_1(\Gamma)} d^{M,\pi}_1(s)$ by construction. Moreover, for any $h \in \{2,\dots,H\}$, we have
\begin{align*}
d^{\Mbar(\Gamma),\pi}_h(\term)
&= d^{\Mbar(\Gamma),\pi}_{h-1}(\term) + \sum_{s \in \Srch_{h-1}(\Gamma)} d^{\Mbar(\Gamma),\pi}_{h-1}(s) \sum_{s' \in \MS \setminus \Srch_h(\Gamma)} \til\BP^M_h(s'|s,\pi(s)) \\ 
&\leq d^{\Mbar(\Gamma),\pi}_{h-1}(\term) + \sum_{s \in \MS} d^{M,\pi}_{h-1}(s) \sum_{s' \in \MS \setminus \Srch_h(\Gamma)} \til\BP^M_h(s'|s,\pi(s)) \\ 
&= d^{\Mbar(\Gamma),\pi}_{h-1}(\term) + \sum_{s' \in \MS \setminus \Srch_h(\Gamma)} d^{M,\pi}_h(s')
\end{align*}
where the inequality uses \cref{fact:gamma-monotonicity}. Inducting on $h$ completes the proof.
\end{proof}

The following lemma is in the analyses of $\PCO$ and $\PCR$, to show that a truncated max-policy cover is also a $(1,\epsilon)$-policy cover in the sense of \cref{eq:rfrl-pc}.

\begin{lemma}\label{lemma:term-ub}
Let $\pi \in \Pi$. For any $h \in [H]$ and $s \in \MS$, it holds that
\[d^{\Mbar(\emptyset),\pi}_h(s) \geq d^{M,\pi}_h(s) -  h|\MS|\trunc.\]
\end{lemma}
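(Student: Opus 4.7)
The inequality will be reduced to an upper bound on the terminal-state mass $d^{\Mbar(\emptyset),\pi}_h(\term)$, and that bound will be established by induction on $h$. The first step is to invoke \cref{fact:gamma-monotonicity} (with $\Gamma = \emptyset$) to observe that $d^{\Mbar(\emptyset),\pi}_h(s) \le d^{M,\pi}_h(s)$ for every $s \in \MS$, so the per-state difference $d^{M,\pi}_h(s) - d^{\Mbar(\emptyset),\pi}_h(s)$ is nonnegative. Summing over $s \in \MS$ and using that $\sum_{s \in \MS} d^{M,\pi}_h(s) = 1$ while $\sum_{s \in \MS} d^{\Mbar(\emptyset),\pi}_h(s) = 1 - d^{\Mbar(\emptyset),\pi}_h(\term)$ (because the truncated MDP places its full probability mass on $\MS \cup \{\term\}$), one gets
\[\sum_{s \in \MS} \bigl(d^{M,\pi}_h(s) - d^{\Mbar(\emptyset),\pi}_h(s)\bigr) \;=\; d^{\Mbar(\emptyset),\pi}_h(\term).\]
Since each summand is nonnegative, the per-state inequality will follow once we show $d^{\Mbar(\emptyset),\pi}_h(\term) \le h|\MS|\trunc$ for every $\pi$.

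The bound $d^{\Mbar(\emptyset),\pi}_h(\term) \le h|\MS|\trunc$ will be proven by induction on $h$. The base case $h=1$ is immediate from the definition $\Srch_1(\emptyset) = \{s : \til\BP^M_1(s) \ge \trunc\}$: the initial-distribution mass redirected into $\term$ is $\sum_{s \notin \Srch_1(\emptyset)} \til\BP^M_1(s) \le |\MS|\trunc$. For the inductive step, the mass in $\term$ at step $h$ decomposes as
\[d^{\Mbar(\emptyset),\pi}_h(\term) \;=\; d^{\Mbar(\emptyset),\pi}_{h-1}(\term) \;+\; \Pr^{\Mbar(\emptyset),\pi}\!\bigl[s_{h-1}\in\MS,\; s_h = \term\bigr].\]
Applying \cref{fact:trunc-dists}, the first term equals $d^{\Mbar_{h-1}(\emptyset),\pi}_{h-1}(\term)$, and the second-term event can be computed in $\Mbar_{h-1}(\emptyset)$ as well (this is the ``previous'' truncation, whose transitions into layer $h$ match those of $M$ by \cref{fact:trunc-trans}); hence it equals $\sum_{s' \in \MS\setminus \Srch_h(\emptyset)} d^{\Mbar_{h-1}(\emptyset),\pi}_h(s')$.

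The final step will be to note that by the definition of $\Srch_h(\emptyset)$, each state $s' \notin \Srch_h(\emptyset)$ satisfies $\max_{\pi'} d^{\Mbar_{h-1}(\emptyset),\pi'}_h(s') < \trunc$, and therefore the sum over these $s'$ is bounded by $|\MS|\trunc$. Combining with the inductive hypothesis $d^{\Mbar(\emptyset),\pi}_{h-1}(\term) \le (h-1)|\MS|\trunc$ yields $d^{\Mbar(\emptyset),\pi}_h(\term) \le h|\MS|\trunc$, closing the induction. The main care needed is to keep track of which truncation (full $\Mbar(\emptyset)$ vs.\ the intermediate $\Mbar_{h-1}(\emptyset)$) the various visitation probabilities live on, but the structural facts already proved in this appendix make this bookkeeping routine.
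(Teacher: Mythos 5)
Your proof is correct and takes essentially the same approach as the paper: both reduce the claim to the bound $d^{\Mbar(\emptyset),\pi}_h(\term)\leq h|\MS|\trunc$ and both derive that bound by accumulating at most $|\MS|\trunc$ of new mass into $\term$ per step (you phrase this as an induction on $h$ in the full truncation, the paper telescopes over the intermediate truncations $\Mbar_1(\emptyset),\dots,\Mbar_h(\emptyset)$, but the per-step estimate and the use of \cref{fact:trunc-dists} and \cref{fact:trunc-trans} are identical). Your final deduction from the $\term$-mass bound to the per-state inequality via $\sum_{s\in\MS}(d^{M,\pi}_h(s)-d^{\Mbar(\emptyset),\pi}_h(s))=d^{\Mbar(\emptyset),\pi}_h(\term)$ with nonnegative summands (\cref{fact:gamma-monotonicity}) is the same rearrangement the paper uses.
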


\begin{proof}
We have $d^{\Mbar(\emptyset),\pi}_h(\term) = d^{\Mbar_h(\emptyset),\pi}_h(\term)$ by \cref{fact:trunc-dists}. Moreover \[d^{\Mbar_1(\emptyset),\pi}_h(\term) = d^{\Mbar_1(\emptyset),\pi}_1(\term) = \sum_{z \in \MS \setminus \Srch_1(\emptyset)} \til\BP_1(z) \leq |\MS| \trunc.\] 
For any $2 \leq k \leq h$, we have
\begin{align*}
d^{\Mbar_k(\emptyset),\pi}_h(\term) - d^{\Mbar_{k-1}(\emptyset),\pi}_h(\term) 
&= d^{\Mbar_k(\emptyset),\pi}_k(\term) - d^{\Mbar_{k-1}(\emptyset),\pi}_k(\term) \\ 
&= \E^{\Mbar_{k-1}(\emptyset),\pi}  \left[\til\BP_k^{\Mbar_k(\emptyset)}(\term\mid{}s_{k-1},a_{k-1}) - \til\BP_k^{\Mbar_{k-1}(\emptyset)}(\term\mid{}s_{k-1},a_{k-1})\right] \\ 
&= \E^{\Mbar_{k-1}(\emptyset),\pi}\left[\sum_{z \in \MS \setminus \Srch_k} \til\BP^M_k(z\mid{}s_{k-1},a_{k-1})\right] \\ 
&= \sum_{z \in \MS \setminus \Srch_k} d^{\Mbar_{k-1}(\emptyset),\pi}_{k-1}(z) \\ 
&\leq |\MS|\trunc.
\end{align*}
Therefore $d^{\Mbar(\emptyset),\pi}_h(\term) \leq h|\MS|\trunc$ by telescoping. Now for any $s \in \MS$, this means that
\begin{align*}
d^{\Mbar(\emptyset),\pi}_h(s)
&= 1 - d^{\Mbar(\emptyset),\pi}_h(\term) - \sum_{s' \in \MS \setminus \{s\}} d^{\Mbar(\emptyset),\pi}_h(s') \\ 
&\geq 1 - h|\MS|\trunc - \sum_{s' \in \MS \setminus \{s\}} d^{M,\pi}_h(s') \\ 
&= d^{M,\pi}_h(s) - h|\MS|\trunc
\end{align*}
where the inequality also uses \cref{fact:gamma-monotonicity}.
\end{proof}

The following result, which will be used in the analysis of $\PSDP$ (\cref{lemma:psdp-trunc-online}), is a variant of the classical Performance Difference Lemma \citep{kakade2002approximately}; see also \cite{golowich2024exploring}.

\begin{lemma}[Performance Difference Lemma for truncated MDPs]\label{lemma:perf-diff-trunc}
Fix a finite set $\Gamma \subset \Pi$ and $k \in \{1,\dots,H-1\}$. Let $R:\Xbar\to[0,1]$ be a function with $R(\term)=0$. Then for any policies $\pi,\pi^\st \in \Pi$ it holds that
\[\E^{\Mbar(\Gamma),\pi}[R(x_{k+1})] - \E^{M,\pi}[R(x_{k+1})] \leq \sum_{h=1}^k \E^{\Mbar(\Gamma),\pi^\st}[Q^{M,\pi,\bfr}_h(x_h,a_h) - V^{M,\pi,\bfr}_h(x_h)]\]
where $\bfr=(\bfr_1,\dots,\bfr_H)$ is defined by $\bfr_{k+1}(x,a) = R(x)$ and $\bfr_h(x,a) = 0$ for all $h \neq k+1$, and we have defined $Q^{M,\pi,\bfr}_h(\term,a) := 0$ and $V^{M,\pi,\bfr}_h(\term) := 0$ for all $h \in [H]$ and $a \in \MA$.
\end{lemma}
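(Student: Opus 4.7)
The plan is a PDL-style telescoping argument that uses the value function of $\pi$ in the original MDP $M$ as a potential function along trajectories generated by $\pi^\st$ in the truncated MDP $\Mbar(\Gamma)$. Abbreviate $V_h := V^{M,\pi,\bfr}_h$ and $Q_h := Q^{M,\pi,\bfr}_h$; since $\bfr$ is supported only at step $k+1$, $V_h(x_h)$ equals the expected value of $R(x_{k+1})$ under a rollout of $\pi$ from $(h,x_h)$ in $M$, with $V_{k+1}(x) = R(x)$ and $V_h(\term) = 0$ by convention. Define
\[W_h := \E^{\Mbar(\Gamma),\pi^\st}[V_h(x_h)] \qquad \text{for } h \in \{1,\dots,k+1\}.\]
I will telescope $W_{k+1} - W_1$ and identify each endpoint with one of the two terms on the LHS and each increment with a term on the RHS.

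First, the endpoints. By definition $W_{k+1} = \E^{\Mbar(\Gamma),\pi^\st}[R(x_{k+1})]$. For $W_1$, recall that $\Pbar_1$ is obtained from $\BP^M_1$ by zeroing out the mass on $\phi^{-1}(\MS \setminus \Srch_1(\Gamma))$ and reassigning it to $\term$, where $V_1(\term) = 0$. Combined with $V_1 \geq 0$ pointwise on $\MX$, this yields $W_1 \leq \E_{x_1 \sim \BP^M_1}[V_1(x_1)] = \E^{M,\pi}[R(x_{k+1})]$.

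Next, the telescoping step, which is the heart of the argument. For each $h \in [k]$, expand $W_{h+1} - W_h$ and substitute the $M$-Bellman identity $V_h(x_h) = \E_{a \sim \pi_h(x_h)}[Q_h(x_h,a)]$ and $Q_h(x_h,a_h) = \E_{x' \sim \BP^M_{h+1}(\cdot\mid x_h,a_h)}[V_{h+1}(x')]$. Since the rollout uses $a_h \sim \pi^\st_h(x_h)$, the identity produces the desired advantage $\E^{\Mbar(\Gamma),\pi^\st}[Q_h(x_h,a_h) - V_h(x_h)]$ plus a correction that compares $V_{h+1}(x_{h+1})$ under $x_{h+1} \sim \BP^{\Mbar(\Gamma)}_{h+1}$ (the actual next-state law in $\Mbar(\Gamma)$) versus under $x' \sim \BP^M_{h+1}(\cdot\mid x_h,a_h)$ (hidden inside $Q_h$). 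By the construction of $\Mbar(\Gamma)$, these two distributions differ only in that $\BP^{\Mbar(\Gamma)}_{h+1}$ reassigns the mass on $\phi^{-1}(\MS \setminus \Srch_{h+1}(\Gamma))$ to $\term$. Because $V_{h+1}(\term) = 0$ and $V_{h+1} \geq 0$ on $\MX$, this reassignment can only lower the expectation; hence the correction is non-negative, and $W_{h+1} - W_h \leq \E^{\Mbar(\Gamma),\pi^\st}[Q_h(x_h,a_h) - V_h(x_h)]$.

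Summing over $h \in [k]$ and combining with the endpoint estimates gives $\E^{\Mbar(\Gamma),\pi^\st}[R(x_{k+1})] - \E^{M,\pi}[R(x_{k+1})] \leq \sum_{h=1}^k \E^{\Mbar(\Gamma),\pi^\st}[Q_h(x_h,a_h) - V_h(x_h)]$. To conclude the lemma as stated, one then bounds $\E^{\Mbar(\Gamma),\pi}[R(x_{k+1})]$ by $\E^{\Mbar(\Gamma),\pi^\st}[R(x_{k+1})]$ (or else appeals to the coupling $\E^{\Mbar(\Gamma),\pi}[R(x_{k+1})] \leq \E^{M,\pi}[R(x_{k+1})]$, which follows by simulating both MDPs with shared randomness and using $R(\term) = 0$, to reduce the statement to a trivial one in the remaining regime). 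The main subtlety throughout is careful bookkeeping of the two separate truncations---at the initial distribution and at each transition---and tracking where the sign comes from; in both places the key ingredient is exactly non-negativity of $V$ together with $V(\term) = 0$.
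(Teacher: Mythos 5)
Your core derivation is correct, and is essentially the same argument as the paper's, though you present it in a somewhat more streamlined form. The paper proves this lemma by telescoping across the \emph{family} of intermediate truncations $\Mbar_0(\Gamma) = M, \Mbar_1(\Gamma), \dots, \Mbar_{k+1}(\Gamma)$, invoking the auxiliary facts that $\til\BP^{\Mbar_h(\Gamma)}_{h+1} = \til\BP^M_{h+1}$ (so $\E^{\Mbar_h(\Gamma),\pi^\st}[Q^{M,\pi,\bfr}_h(x_h,a_h) - V^{M,\pi,\bfr}_{h+1}(x_{h+1})] = 0$ holds with equality) and that $\E^{\Mbar_h(\Gamma),\pi^\st}[V_h(x_h)] \le \E^{\Mbar_{h-1}(\Gamma),\pi^\st}[V_h(x_h)]$; you instead telescope the single potential $W_h := \E^{\Mbar(\Gamma),\pi^\st}[V_h(x_h)]$ directly in $\Mbar(\Gamma)$ and absorb the (sign-definite) transition discrepancy at each step, together with the truncation of the initial distribution, into the two inequalities. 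This collapses the paper's two invocations of its ``intermediate MDP'' machinery into the single observation that truncation can only lower the expectation of a nonnegative function vanishing at $\term$. Both are valid; yours is more self-contained.

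The one genuine issue is the final paragraph, where you try to bridge from what you proved,
\[
\E^{\Mbar(\Gamma),\pi^\st}[R(x_{k+1})] - \E^{M,\pi}[R(x_{k+1})] \le \sum_{h=1}^{k}\E^{\Mbar(\Gamma),\pi^\st}[Q^{M,\pi,\bfr}_h(x_h,a_h) - V^{M,\pi,\bfr}_h(x_h)],
\]
to the literal display in the lemma, which has $\pi$ in the first expectation rather than $\pi^\st$. Neither of your proposed bridges works, and this is not a shortcoming of your proof: the literal display is a typo. As written it is false in general --- take $\Gamma$ and $\trunc$ so that $\Mbar(\Gamma) = M$; then the LHS is $0$ while the RHS reduces to the standard performance-difference quantity $\E^{M,\pi^\st}[R(x_{k+1})] - \E^{M,\pi}[R(x_{k+1})]$, which can certainly be negative for a poor choice of $\pi^\st$. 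The paper's own proof begins with $\E^{\Mbar(\Gamma),\pi^\st}[R(x_{k+1})] - \E^{M,\pi}[R(x_{k+1})]$, matching what you proved, and the downstream application in the PSDP analysis (\cref{lemma:psdp-trunc-online}) invokes the lemma in precisely that form with $\pihat$ and $\pi^\st$. So your instinct to be suspicious here was right; you should simply state that the intended inequality has $\pi^\st$ on the left and stop there, rather than attempting a reduction that cannot go through.
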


\begin{proof}
Observe that for any $h \in \{1,\dots,k\}$, $x_h \in \Xbar$, $a_h \in \MA$, we have
\begin{equation}
Q^{M,\pi,\bfr}_h(x_h,a_h) 
= \EE_{x_{h+1} \sim \BP^M_{h+1}(\cdot|x_h,a_h)}[V^{M,\pi,\bfr}_{h+1}(x_{h+1})]
= \EE_{x_{h+1} \sim \BP^{\Mbar_h(\Gamma)}_{h+1}(\cdot|x_h,a_h)}[V^{M,\pi,\bfr}_{h+1}(x_{h+1})]
\label{eq:qv-equiv}
\end{equation}
by \cref{fact:trunc-trans}. For notation convenience, write $\Mbar_0(\Gamma) := M$. Now we have
\begin{align*}
&\E^{\Mbar(\Gamma),\pi^\st}[R(x_{k+1})] - \E^{M,\pi}[R(x_{h+1})] \\ 
&= \E^{\Mbar_{k+1}(\Gamma),\pi^\st}[Q^{M,\pi,\bfr}_{k+1}(x_{k+1},a_{k+1})] - \E^{M,\pi^\st}[V^{M,\pi,\bfr}_1(x_1)] \\ 
&= \E^{\Mbar_{k+1}(\Gamma),\pi^\st}[Q^{M,\pi,\bfr}_{k+1}(x_{k+1},a_{k+1})] - \E^{M,\pi^\st}[V^{M,\pi,\bfr}_1(x_1)] + \sum_{h=1}^k \E^{\Mbar_h(\Gamma),\pi^\st}[Q^{M,\pi,\bfr}_h(x_h,a_h) - V^{M,\pi,\bfr}_{h+1}(x_{h+1})] \\ 
&= \sum_{h=1}^{k+1}\left ( \E^{\Mbar_h(\Gamma),\pi^\st}[Q^{M,\pi,\bfr}_h(x_h,a_h)] - \E^{\Mbar_{h-1}(\Gamma),\pi^\st}[V^{M,\pi,\bfr}_h(x_h)]\right) \\ 
&\leq \sum_{h=1}^{k+1} \left(\E^{\Mbar_h(\Gamma),\pi^\st}[Q^{M,\pi,\bfr}_h(x_h,a_h)] - \E^{\Mbar_h(\Gamma),\pi^\st}[V^{M,\pi,\bfr}_h(x_h)]\right)
\end{align*}
where the first equality is by \cref{fact:trunc-dists} and the fact that $R(\term)=0$, the second equality is by \cref{eq:qv-equiv}, and the inequality is by \cref{fact:trunc-monotonicity} (along with the fact that $V^{M,\pi,\bfr}_h \geq 0$ and $V^{M,\pi,\bfr}_h(\term) = 0$). Finally, observe that $Q^{M,\pi,\bfr}_{k+1}(x_{k+1},a_{k+1}) = R(x_{k+1}) = V^{M,\pi,\bfr}_{k+1}(x_{k+1})$ for any $x_{k+1} \in \Xbar$, so the final term in the summation vanishes. We conclude that
\begin{align*}
&\E^{\Mbar(\Gamma),\pi^\st}[R(x_{k+1})] - \E^{M,\pi}[R(x_{h+1})] \\
&\leq \sum_{h=1}^{k} \left(\E^{\Mbar_h(\Gamma),\pi^\st}[Q^{M,\pi,\bfr}_h(x_h,a_h)] - \E^{\Mbar_h(\Gamma),\pi^\st}[V^{M,\pi,\bfr}_h(x_h)]\right) \\ 
&= \sum_{h=1}^k \left(\E^{\Mbar(\Gamma),\pi^\st}[Q^{M,\pi,\bfr}_h(x_h,a_h)] - \E^{\Mbar(\Gamma),\pi^\st}[V^{M,\pi,\bfr}_h(x_h)]\right)
\end{align*}
where the final equality uses \cref{fact:trunc-dists}.
\end{proof}

\newpage
\section{Proof of \creftitle{cor:online-rl-to-regression}}\label{sec:app_online}


In this section we prove that for any concept class $\Phi$, there is a reduction from reward-free RL (\cref{def:strong-rf-rl}) in the episodic access model to two-context regression (\cref{def:two-con-regression}). The formal statement is provided below.

\begin{theorem}[General version of \cref{cor:online-rl-to-regression}]\label{thm:pco-app}
There is a constant $C_{\ref{thm:pco-app}}>0$ and an algorithm $\PCO$ so that the following holds. Let $\Phi \subseteq (\MX\to\MS)$ be any concept class, and let $\Reg$ be a $\Nreg$-efficient two-context regression oracle for $\Phi$. Then $\PCO(\Reg,\Nreg,|\MS|,\cdot)$ 
is an $(\Nrl,\Krl)$-efficient reward-free RL algorithm for $\Phi$ in the episodic access model, with:
\begin{itemize}
    \item $\Krl(\epsilon,\delta,H,|\MA|) \leq H^2|\MS|^2$
    \item $\Nrl(\epsilon,\delta,H,|\MA|) \leq \left(\frac{H|\MA||\MS|}{\epsilon\delta}\right)^{C_{\ref{thm:pco-app}}} \Nreg\left(\left(\frac{\epsilon\delta}{H|\MA||\MS|}\right)^{C_{\ref{thm:pco-app}}},\left(\frac{\epsilon\delta}{H|\MA||\MS|}\right)^{C_{\ref{thm:pco-app}}}\right)$.
\end{itemize}
Moreover, the oracle time complexity of $\PCO$ is at most $\left(\frac{H|\MA||\MS|}{\epsilon\delta}\right)^{C_{\ref{thm:pco-app}}} \Nreg\left(\left(\frac{\epsilon\delta}{H|\MA||\MS|}\right)^{C_{\ref{thm:pco-app}}},\left(\frac{\epsilon\delta}{H|\MA||\MS|}\right)^{C_{\ref{thm:pco-app}}}\right).$
\end{theorem}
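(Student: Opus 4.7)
The analysis decomposes into three layers: first, analyzing the subroutine \EPCO (\cref{alg:extend-pc-online}) that extends a partial policy cover by one step; second, analyzing the outer discovery loop of \PCO; and finally converting the resulting truncated guarantee into the $(1,\epsilon)$-policy-cover guarantee of \cref{def:strong-rf-rl}. The inductive invariant I will maintain in round $r$ of the outer loop is that for each step $h \in [H]$ with $|\Psi^{(r)}_h| \leq S$, the set $\Psi^{(r)}_h$ is an $\alpha$-truncated policy cover of $\Mbar(\Gamma^{(r)})$ at step $h$ in the sense of \cref{defn:trunc-pc}, with $\alpha$ matching the parameter set in \cref{alg:pco}.

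\textbf{Analysis of \EPCO.} Given truncated policy covers $\Psi_{1:h}$, the dataset $\MD_a$ constructed for each action $a$ on \lineref{line:hat-f-regression} is i.i.d.\ and $\phi^\st$-realizable in the sense of \cref{def:realizable-distribution}, with Bayes optimal predictor equal (up to a known normalization) to the kinematics function
\[
f_{h+1}(x_h, x_{h+1}; a) = \frac{\BP^M_{h+1}(x_{h+1}\mid{}x_h, a)}{\BP^M_{h+1}(x_{h+1}\mid{}x_h, a) + F_{h+1}(x_{h+1})},
\]
which depends on $x_h$ only through $\phi^\st(x_h)$. Invoking the guarantee on $\Reg$ with $N$ samples controls the mean-squared error of $\wh f_{h+1}$, and change of measure via \cref{lemma:srch-gamma-covering} transfers this to accuracy against the rolled-in distribution $\tfrac{1}{2}(\Unif(\Psi_h) + \Unif(\Gamma)) \circ_h a$. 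A triangle-inequality argument then shows that two observations sharing the same latent state at step $h+1$ have nearly identical estimated kinematics over the discriminator set $\{(x_h^{(i)}, a)\}_{i,a}$, while observations from different latent states are well-separated; hence the reward $\MR^{(t)}$ defined on \lineref{line:rt-def-online} approximately indicates the cluster of $\xbar_{h+1}^{(t)}$. The clustering test on \lineref{line:cluster-threshold-online} therefore admits at most $|\MS|$ representative centers, while concentration (using the chosen $m, n$) ensures that every reachable latent state in $\Mbar(\Gamma^{(r)})$ is represented with high probability. \PSDP, invoked on \lineref{line:psdp-call-online} via the one-context oracle synthesized from $\Reg$ by \OneTwo, returns for each surviving $t$ a policy $\pihat^{(t)}$ that approximately maximizes $\EE[\MR^{(t)}(x_{h+1})]$, which by the cluster-indicator structure of $\MR^{(t)}$ yields near-optimal visitation of the corresponding latent state in $\Mbar(\Gamma^{(r)})$.

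\textbf{Outer loop and win/win argument.} A single pass of \EPCO only guarantees a policy cover of $\Mbar(\Gamma^{(r)})$, not of $M$, so to remove the reachability assumption of \cite{misra2020kinematic} I iterate. The outer loop of \PCO is analyzed by a win/win argument: either $|\Psi^{(r)}_h| \leq S$ at every layer $h$ and the returned set is a $(1,\epsilon)$-policy cover of $M$ (combining the preceding \EPCO analysis with \cref{lemma:term-ub}), or else for some $h$ the clustering filter produced more than $S$ candidates---which, upon unpacking the filter, forces a policy in $\Psi^{(r)}_h$ to visit the terminal state of $\Mbar(\Gamma^{(r)})$ with probability exceeding $\tsmall$ at some earlier step. \cref{lemma:term-prob} then certifies that some policy of $\Psi^{(r)}_h$ achieves $M$-visitation at least $\tsmall$ to a state outside $\bigcup_{k} \Srch_k(\Gamma^{(r)})$, so $\Gamma^{(r+1)}$ strictly enlarges some $\Srch_k(\Gamma^{(r+1)})$ by definition of $\Srch_k(\Gamma)$ in \cref{sec:truncated-mdps}. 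Since at most $|\MS|H$ such enlargements are possible, after $R = SH$ rounds the invariant must stabilize in the favorable branch, yielding the final guarantee. The stated bounds on $|\Psi|$, sample complexity, and oracle time then follow by plugging in the parameter settings of \cref{alg:pco}.

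\textbf{Main obstacle.} The chief technical challenge is making the kinematics-based clustering argument quantitative without assuming reachability: the regression guarantee on $\wh f_{h+1}$ holds only in mean squared error against the data distribution, and transferring it to a policy-optimization statement over $\Mbar(\Gamma^{(r)})$ via \cref{lemma:srch-gamma-covering} costs factors scaling like $1/(\alpha\trunc\cdot\min(\alpha\trunc,\tsmall)\cdot|\MA|)$. Choosing the tolerances $\gamma, \gamma', \epsilon, \tsmall, m, n, N$ so that simultaneously (i)~observations from distinct reachable latent states fall into distinct clusters despite this inflation, (ii)~the clustering filter admits at most $|\MS|$ survivors with high probability, and (iii)~\PSDP's optimization error---itself accrued through a second reduction from one-context to two-context regression via \OneTwo---does not destroy the multiplicative policy-cover guarantee at step $h+1$, is the delicate accounting step. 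Everything else closely follows the $\HOMER$ template combined with the truncation and iterative-discovery technique of \cite{golowich2024exploring}.
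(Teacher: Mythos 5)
The analysis of $\EPCO$ and the use of $\OneTwo$ and $\PSDP$ with one-context regression are essentially right, and the final conversion via \cref{lemma:term-ub} is correctly identified. The gap is in the outer-loop win/win argument, which is the crux of removing reachability, and your version of it is wrong.

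You claim the dichotomy for round $r$ is: either $|\Psi^{(r)}_h|\leq S$ at every layer $h$ (favorable), or some $|\Psi^{(r)}_h|>S$, in which case ``unpacking the filter forces a policy in $\Psi^{(r)}_h$ to visit the terminal state of $\Mbar(\Gamma^{(r)})$.'' Neither direction is correct. The size bound $|\Psi_{h+1}|\leq |\MS|$ is guaranteed with high probability by \cref{thm:extend-pc-trunc-online} in \emph{both} the favorable and unfavorable cases, since it follows from the clustering events holding and has nothing to do with whether the learned reward functions are accurate on $M$. Thus $|\Psi^{(r)}_h|>S$ occurs only on low-probability failure events and carries no information about exploration progress; conversely, having $|\Psi^{(r)}_h|\leq S$ at every layer does \emph{not} imply the output is a good policy cover of $M$, because PSDP optimizes learned rewards $\MR^{(t)}$ that are only provably accurate on $\Mbar(\Gamma^{(r)})$, and a policy can obtain high learned reward by reaching states that were truncated away but never actually be a good cover. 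The correct dichotomy, as in the paper's proof of \cref{thm:extend-pc-trunc-online}, is analytic rather than detectable from the output size: for each target state $s$, either $\E^{M,\pihat}[\MR^{(t^\star(s))}(x_{h+1})]<\E^{\Mbar(\Gamma),\pihat}[\MR^{(t^\star(s))}(x_{h+1})]+\trunc^2$ (so the performance-difference bound transfers and $\Psi_{h+1}$ is a $(1-4\trunc)$-truncated max policy cover), or $\E^{M,\pihat}[\MR^{(t^\star(s))}(x_{h+1})]\geq\E^{\Mbar(\Gamma),\pihat}[\MR^{(t^\star(s))}(x_{h+1})]+\trunc^2$, which forces $d^{\Mbar(\Gamma),\pihat}_{h+1}(\term)\geq\trunc^2$. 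The algorithm never needs to distinguish these cases; the size filter $|\Psi_h^{(r)}|\leq S$ in \cref{alg:pco} only serves to keep $|\Gamma^{(r)}|$ and the final output size bounded in the rare event that the high-probability bound $|\Psi_{h+1}|\leq |\MS|$ fails. Also, your threshold $\tsmall$ for the terminal-state visitation is too small; the correct threshold is $\trunc^2$, and $\tsmall$ is deliberately set much smaller (so that $RH|\MS|\tsmall\leq \trunc^2/(|\MS|H)$) precisely so that \cref{lemma:term-prob} certifies a state $(s,k)$ that is \emph{new} to the potential set $\MV^{(r)}$, giving strict progress over at most $R=|\MS|H$ rounds.

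Finally, a smaller point: the inductive invariant should be that $\Psi_h^{(r)}$ is a $(1-4\trunc)$-truncated \emph{max} policy cover for $M$ (per \cref{defn:trunc-max-pc}, which measures against $\Mbar(\emptyset)$, not $\Mbar(\Gamma^{(r)})$), from which one derives the $\alpha$-truncated policy cover needed to feed back into \cref{thm:extend-pc-trunc-online} with $\alpha=(1-4\trunc)/|\MS|$.
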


In particular, \cref{cor:online-rl-to-regression} follows from \cref{thm:pco-app} by substituting $\Nreg(\epsilon,\delta) := \Nregc/(\epsilon\delta)^{\Creg}$ into the above bounds. Henceforth, fix a concept class $\Phi$, a $\Nreg$-efficient two-context regression oracle $\Reg$, and a $\Phi$-decodable block MDP $M$ with horizon $H$, action set $\MA$, and unknown decoding function $\phi^\st \in \Phi$. We also define truncations of $M$ (see \cref{sec:truncated-mdps}), with the parameters $\trunc,\tsmall>0$ as defined in \cref{alg:pco}.

In \cref{sec:pco-overview}, we give pseudocode and an overview of $\PCO$ and its main subroutine $\EPCO$. In \cref{subsec:epco}, we formally analyze $\EPCO$. In \cref{sec:pco-analysis}, we formally analyze $\PCO$, completing the proof of \cref{thm:pco-app} (and hence \cref{cor:online-rl-to-regression}).

\subsection{$\PCO$ \colt{Pseudocode and }Overview}\label{sec:pco-overview}

We start by giving an overview of the algorithm $\PCO$ (\cref{alg:pco}). The main subroutine of this algorithm is $\EPCO$ (\cref{alg:extend-pc-online}), which is used to extend a set of policy covers from layers $1,\dots,h$ to layer $h+1$. We describe $\EPCO$ first, and then explain how it fits into $\PCO$. As discussed in \cref{sec:online}, $\PCO$ is a direct extension (and, from the perspective of oracles, a simplification) of $\HOMER$ \citep{misra2020kinematic}; we highlight relevant differences in the overview below.

\colt{}

\subsubsection{$\EPCO$: Extending a Policy Cover.} 

\paragraph{Algorithm overview.} As shown in \cref{alg:extend-pc-online}, $\EPCO$ takes as input a two-context regression oracle $\Reg$, a step $h \in [H]$, and a set of policy covers $\Psi_{1:h}$ (as well as several other inputs that will be discussed later as necessary---for now, consider the case $\Gamma = \emptyset$). The desired behavior of $\EPCO$ is that, if $\Psi_{1:h}$ are $(1,\epsilon)$-policy covers for layers $1,\dots,h$ of the MDP $M$ (as defined in \cref{eq:rfrl-pc}), then the output $\Psi_{h+1}$ should be a $(1,\epsilon)$-policy cover for layer $h+1$.

To this end, for each action $a \in \MA$, the algorithm first uses $\Psi_h$ to construct a two-context regression dataset $\MD_a$, via a contrastive learning approach where datapoints $(x_h,x_{h+1})$ with label $y=0$ are sampled independently, whereas datapoints with label $y=1$ are sampled dependently according to the transition dynamics, i.e. $x_{h+1} \sim \BP_{h+1}(\cdot\mid{}x_h,a)$ (in both cases, and throughout the rest of the algorithm, the algorithm rolls in to step $h$ with a random policy from $\Psi_h$). The algorithm then invokes $\Reg$ to compute a predictor $\wh f_{h+1}(\cdot,\cdot;a)$. It can be checked that the Bayes predictor $\EE[y\mid{}x_h,x_{h+1}]$ for this dataset is precisely the kinematics function from $f_{h+1}(\cdot,\cdot;a)$ from \cref{eq:kinematics-intro}. By the Block MDP assumption, this function only depends on $(x_h,x_{h+1})$ through $\phi^\st(x_h)$ and $\phi^\st(x_{h+1})$ (and the distribution of $(x_h,x_{h+1})$ is $\phi^\st$-realizable), so it follows from the guarantee of $\Reg$ that $\wh f_{h+1}(\cdot,\cdot;a)$ approximates the true kinematics function $f_{h+1}(\cdot,\cdot;a)$ with high probability.

Second, the algorithm samples $m$ observations $x_h^{(1)},\dots,x_h^{(m)}$ at step $h$, which will be used as ``test observations'' for evaluating $\wh f_{h+1}$: the idea is that if 
\[\wh f_{h+1}(x_h^{(i)}, x_{h+1};a) \approx \wh f_{h+1}(x_h^{(i)}, x'_{h+1};a)\]
for some $x_{h+1},x'_{h+1}\in\MX$ and all $i \in [m]$ and $a \in \MA$, then so long as these observations have appropriate coverage, in fact $\wh f_{h+1}(x_h,x_{h+1};a)$ should approximate $\wh f_{h+1}(x_h,x'_{h+1};a)$ for all $(x_h,a) \in \MX\times\MA$, i.e. $x_{h+1}$ and $x'_{h+1}$ should have approximately the same kinematics.

Third, the algorithm samples $n$ observations $\xbar_{h+1}^{(1)},\dots,\xbar_{h+1}^{(n)}$ at step $h+1$ (rolling in with a random policy from $\Psi_h$ followed by a random action at step $h$). These observations will serve as candidate ``cluster centers'' for defining internal reward functions. In particular, for each $t \in [n]$, the reward function $\MR^{(t)}:\MX\to[0,1]$ is defined in \lineref{line:rt-def-online} to be large precisely for those $x_{h+1}$ satisfying
\[\wh f_{h+1}(x_h^{(i)}, x_{h+1};a) \approx \wh f_{h+1}(x_h^{(i)}, \xbar_{h+1}^{(t)};a)\]
for all test observations $x_h^{(i)} \in \MX$ and actions $a \in \MA$.

Fourth, for each cluster center that has noticeably different kinematics from previous centers (measured in the same way as above), the algorithm invokes $\PSDP$ (\cref{alg:psdpb}) to compute a policy $\pihat^{(t)}$ that approximately maximizes the reward function $\MR^{(t)}$. This policy is added to $\Psi_{h+1}$.

\paragraph{Comparison with $\HOMER$.} The main differences between $\EPCO$ and the corresponding subroutine of $\HOMER$ arise in ensuring that $\EPCO$ is oracle-efficient with respect to two-context regression: in the first step, we perform an individual regression for each action $a$ (whereas $\HOMER$ performs a single regression joint across all actions), since in our definition of two-context regression, the action is not a covariate. Also, $\HOMER$ uses an implementation of $\PSDP$ with a cost-sensitive classification oracle; it is unclear how to reduce this to two-context regression, so we instead use an implementation of $\PSDP$ due to \cite{mhammedi2023representation}---see \cref{alg:psdpb}. For this implementation, a \emph{one}-context regression oracle suffices (\cref{lemma:psdp-trunc-online}), and this oracle in turn can easily be implemented---via the reduction $\OneTwo$ (\cref{alg:onetwo})---with two-context regression.

\paragraph{Proof outline.} Since $\EPCO$ is very similar (at a technical level) to the corresponding subroutine of $\HOMER$, we do not belabor the details of the proof in this overview. The basic reason why optimizing the reward functions $\MR^{(t)}$ is a good idea is the following. First, if two observations $x_{h+1},x'_{x+1}$ have the same latent state, then they have the same kinematics, i.e. $f_{h+1}(x_h,x_{h+1};a) = f_{h+1}(x_h,x_{h+1}';a)$ for all $x,a$. The converse is not necessarily true. However, something just as good is true: if two states $x_{h+1},x'_{h+1}$ have the same kinematics, then $d^{M,\pi}_{h+1}(x_{h+1}) = C \cdot d^{M,\pi}_{h+1}(x'_{h+1})$ for a constant $C$ that may depend on $x_{h+1}$ and $x'_{h+1}$ but \emph{does not depend on $\pi$}. In this sense, $x_{h+1}$ and $x'_{h+1}$ are ``kinematically inseparable'' \citep{misra2020kinematic}. It follows that the policy that maximizes $d^{M,\pi}_{h+1}(x_{h+1}) + d^{M,\pi}_{h+1}(x'_{h+1})$, i.e. the probability of visiting one of these two observations, also maximizes the probability of visiting either individual observation. More generally, for any set of kinematically inseparable observations, it suffices to maximize a reward function that rewards visiting any of these observations. 

Obviously, in the actual algorithm and actual reward functions there are statistical errors, but these can be handled under appropriate coverage conditions. Indeed, under a \emph{reachability assumption} on the MDP, \cite{misra2020kinematic} show that if $\Psi_{1:h}$ are $(1,\epsilon)$-policy covers for layers $1,\dots,h$, then with high probability $\Psi_{h+1}$ is a $(1,\epsilon)$-policy cover for layer $h+1$. Thus, they can simply run $\EPCO$ iteratively from $h=1,\dots,H$ and produce a set of policy covers $\Psi_{1:H}$. However, we want to avoid making a reachability assumption, so we require a more sophisticated algorithm (which still uses $\EPCO$ as a subroutine)---this is precisely $\PCO$.

We formally analyze $\EPCO$ in \cref{subsec:epco}; the main guarantee is \cref{thm:extend-pc-trunc-online}.

\subsubsection{$\PCO$: Handling Reachability Issues via Iterative Discovery}

\paragraph{Algorithm overview.} The basic idea of $\PCO$ (\cref{alg:pco}) is to essentially put an outer loop around the entire $\HOMER$/$\EPCO$ algorithm; this technique was previously used by \cite{golowich2024exploring} for the same reason of handling reachability issues, in the context of sparse linear MDPs. In particular, $\PCO$ proceeds in $R= |\MS| \cdot H$ rounds. In the first round, $\PCO$ runs $\EPCO$ iteratively from $h=1,\dots,H$ to construct a set of candidate policy covers $\Psi_{1:H}^{(1)}$. It then adds all of these computed policies to a ``backup policy cover'' $\Gamma^{(2)}$, and passes $\Gamma^{(2)}$ to $\EPCO$ in the next round. The outputs of $\EPCO$ are added to $\Gamma^{(3)}$, which is the backup policy cover for round $r=3$, and so forth. The final output of $\PCO$ is the union of all candidate policy covers (of bounded size) that were computed in all rounds. 

\paragraph{Proof outline.} We sketch why the outer loop in $\PCO$ is needed to avoid a reachability assumption (and why it works). Reachability assumptions are common in theoretical reinforcement learning \citep{du2019provably,misra2020kinematic}, but are often an artifact of the analysis. That is, they can often be avoided---without changing the algorithm---by analyzing \emph{truncated MDPs/policies} \citep{golowich2022learning,mhammedi2023efficient}, which essentially avoid issues of compounding errors on hard-to-reach states by truncating away such states. This approach would work in our setting if the reward functions $\MR^{(t)}$ were exact, and hence accurate on all states. However, in $\EPCO$/$\HOMER$, the reward functions are \emph{learned}, so they could be inaccurate on hard-to-reach states (in \cref{lemma:target-error-online}, notice that the error bound holds in expectation over the truncated MDP $\Mbar(\Gamma)$, which doesn't include the hard to reach states from the actual MDP $M$). This means that the policies computed by $\PSDP$ could obtain erroneously high reward without actually being optimal for the ``ideal'' reward functions.

The key idea is that the above pathology only occurs if one of the policies computed by $\PSDP$ ``discovers'' a state that the algorithm previously was unable to cover. This is the motivation for rerunning the entire algorithm with these policies mixed into all data collection procedures via the backup policy cover $\Gamma$. By a win/win argument \citep{golowich2024exploring}, after at most $R = H|\MS|$ rounds, there will be some round $r$ in which the algorithm does not discover any new states; it can be shown that the sets $\Psi^{(r)}_{1:H}$ constructed in this round are indeed policy covers. We discuss this win/win argument in more detail later.

We formally analyze $\PCO$ (and thereby prove \cref{thm:pco-app}) in \cref{sec:pco-analysis}.

\subsection{Analysis of $\EPCO$ (\creftitle{alg:extend-pc-online})}\label{subsec:epco}

We now prove the following guarantee for $\EPCO$ (\cref{alg:extend-pc-online}). Recall that we have fixed a concept class $\Phi$, a $\Nreg$-efficient two-context regression oracle $\Reg$, and a $\Phi$-decodable block MDP $M$ with horizon $H$, action set $\MA$, and unknown decoding function $\phi^\st \in \Phi$. We have also defined truncations of $M$ (see \cref{sec:truncated-mdps}), with the parameters $\trunc,\tsmall>0$ as defined in \cref{alg:pco}.

\begin{theorem}\label{thm:extend-pc-trunc-online}
Let $h \in \{1,\dots,H-1\}$. Let $\epsilon,\delta,\alpha > 0$ and $m,n,N \in \NN$. Let $\Gamma \subset \Pi$ be a finite set of policies. Suppose that $\Psi_{1:h}$ are $\alpha$-truncated policy covers (\cref{defn:trunc-pc}) for $M$ at steps $1,\dots,h$. Suppose that $m \geq \frac{2}{\min(\alpha\trunc,\tsmall)}\log(|\MS|/\delta)$, $n \geq \frac{2|\MA|}{\min(\alpha\trunc,\tsmall)\trunc} \log(|\MS|/\delta)$, $N \geq \Nreg(\epsilon,\delta)$, and 
\begin{equation} \epsilon^{1/16} \leq \frac{\alpha^2\trunc^4\tsmall^2}{96H|\MS||\MA|^2\sqrt{m}}.\label{eq:param-assm-online}
\end{equation}
Set $\gamma := \epsilon^{1/16}$ and $\gamma' := 2\epsilon^{1/8}\sqrt{m|\MA|}$ and let $\Psi_{h+1}$ denote the output of $\EPCO$ with inputs $\Reg,h,\Psi_{1:h},\Gamma,n,m,N,\gamma,\gamma'$. Then with probability at least $1 - (2+|\MA|+H|\MA|n)\delta - m|\MA|\epsilon^{1/2} - n\epsilon^{1/4}$, the following two properties hold:
\begin{itemize}
\item $|\Psi_{h+1}| \leq |\MS|$.
\item Either $\Psi_{h+1}$ is a $(1-4\trunc)$-truncated max policy cover (\cref{defn:trunc-max-pc}) for $M$ at step $h+1$, or $\max_{\pi\in\Psi_{h+1}} d^{\Mbar(\Gamma),\pi}_{h+1}(\term) \geq \trunc^2$.
\end{itemize}
\end{theorem}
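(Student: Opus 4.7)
The plan is to follow the $\HOMER$-style analysis of \citet{misra2020kinematic}, adapted in two ways: two-context regression replaces the joint regression over actions, and we track the algorithm on the truncated MDP $\Mbar(\Gamma)$ (avoiding reachability). I would organize the proof into four stages: coverage of discriminators/centers, regression accuracy, clustering/cardinality, and a win/win on the output of $\PSDP$.

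First, I would verify that the sample sizes $m$ and $n$ give adequate coverage. By \cref{item:h-cov-lb} of \cref{lemma:srch-gamma-covering}, each test observation $x_h^{(i)}$ sampled from $\tfrac{1}{2}(\Unif(\Psi_h)+\Unif(\Gamma))$ visits any given $s_h \in \Srch_h(\Gamma)$ with probability $\geq \min(\alpha\trunc,\tsmall)/2$. The lower bound on $m$ plus a Chernoff and union bound over $|\MS|$ states gives, with probability $\geq 1-\delta$, at least one discriminator per reachable latent state. Analogously, \cref{item:h-plus-one-cov-lb} combined with the lower bound on $n$ ensures that for every $s_{h+1}$ with $\max_{\pi\in\Pi} d^{\Mbar(\Gamma),\pi}_{h+1}(s_{h+1}) \geq \trunc$, at least one $\xbar_{h+1}^{(t)}$ has latent state $s_{h+1}$.

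Second, I would apply the two-context regression guarantee. For each $a \in \MA$, the sampled pairs $(x_h,x_{h+1})$ form a $\phi^\st$-realizable distribution whose Bayes predictor is the kinematics function $f_{h+1}(\phi^\st(x_h),\phi^\st(x_{h+1});a)$ from \cref{eq:kinematics-intro}. Since $N \geq \Nreg(\epsilon,\delta)$, a union bound over actions shows that with probability $\geq 1-|\MA|\delta$, each $\wh f_{h+1}(\cdot,\cdot;a)$ has squared error at most $\epsilon$ in expectation over the rollout distribution. Combining this with the discriminator coverage via \cref{item:change-of-measure} and Markov's inequality, I can upgrade the in-expectation bound to: with probability $\geq 1 - m|\MA|\epsilon^{1/2} - n\epsilon^{1/4}$ (roughly), each $|\wh f_{h+1}(x_h^{(i)}, x; a) - f_{h+1}(\phi^\st(x_h^{(i)}), \phi^\st(x); a)|$ is at most $\epsilon^{1/4}$ on every discriminator $x_h^{(i)}$ and every $x$ from the $\xbar_{h+1}^{(t)}$ distribution.

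Third, I would use these pointwise regression bounds to control the clustering step. Since the Block MDP assumption makes $f_{h+1}(\cdot,\cdot;a)$ depend on $x_{h+1}$ only through $\phi^\st(x_{h+1})$, any two $\xbar_{h+1}^{(t)}, \xbar_{h+1}^{(t')}$ with the same latent state satisfy
\[ \max_{i,a} |\wh f_{h+1}(x_h^{(i)},\xbar_{h+1}^{(t)};a) - \wh f_{h+1}(x_h^{(i)},\xbar_{h+1}^{(t')};a)| \leq 2\epsilon^{1/4}, \]
which by the choice $\gamma' = 2\epsilon^{1/8}\sqrt{m|\MA|}$ is well below the threshold on \lineref{line:cluster-threshold-online}. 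Hence $\Tclus$ contains at most one representative per latent state, yielding $|\Psi_{h+1}| \leq |\MS|$. Conversely, the indicator-like function $\MR^{(t)}(x)$ defined on \lineref{line:rt-def-online} is approximately $1$ on precisely those $x$ whose kinematics agree with those of $\xbar_{h+1}^{(t)}$ on all discriminators, and hence (by coverage of discriminators) on those $x$ with matching latent state.

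Fourth, I would execute the win/win via $\PSDP$. Fix any $s^\st \in \MS$ with $\max_{\pi} d^{\Mbar(\emptyset),\pi}_{h+1}(s^\st) > 0$; by coverage, some $t^\st \in \Tclus$ has a representative cluster center with latent state $s^\st$, and the corresponding reward $\MR^{(t^\st)}$ satisfies $\E^{\Mbar(\Gamma),\pi}[\MR^{(t^\st)}(x_{h+1})] \approx d^{\Mbar(\Gamma),\pi}_{h+1}(s^\st) + d^{\Mbar(\Gamma),\pi}_{h+1}(\term)\cdot\mathbf{0}$ up to the kinematically-inseparable mass, which by the standard argument \citep{misra2020kinematic} differs from $d^{\Mbar(\Gamma),\pi}_{h+1}(s^\st)$ by only a policy-independent constant. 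Invoking the $\PSDP$ guarantee (\cref{lemma:psdp-trunc-online}), implemented via $\OneTwo$ and two-context regression, $\pihat^{(t^\st)}$ approximately maximizes $\E^{\Mbar(\Gamma),\pi}[\MR^{(t^\st)}(x_{h+1})]$. Splitting this mass into the $s^\st$-component and the $\term$-component (via \cref{fact:gamma-monotonicity} to relate $\Mbar(\Gamma)$ and $\Mbar(\emptyset)$ and \cref{fact:trunc-reachability} for reachable states): either $d^{M,\pihat^{(t^\st)}}_{h+1}(s^\st) \geq (1-4\trunc)\max_{\pi} d^{\Mbar(\emptyset),\pi}_{h+1}(s^\st)$ (establishing the max-cover conclusion after union-bounding over $s^\st$), or some $\pihat^{(t)}$ places mass $\geq \trunc^2$ on $\term$ in $\Mbar(\Gamma)$ (the discovery alternative).

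The main obstacle is the bookkeeping in the third and fourth stages: several error sources (squared-loss error $\epsilon$, discriminator/center sampling noise, clustering slack $\gamma,\gamma'$, $\PSDP$ error, and truncation gaps $\trunc,\tsmall$) must be simultaneously controlled, which forces the rather delicate exponents in \cref{eq:param-assm-online}. Verifying that every inequality closes under these parameter choices is the bulk of the technical work, even though no single step is conceptually hard.
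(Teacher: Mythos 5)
Your outline follows the same HOMER-style structure as the paper's proof (discriminator/center coverage, regression accuracy, clustering cardinality, PSDP win/win), and your identification of the key ingredients (\cref{lemma:srch-gamma-covering}, the $\PSDP$ guarantee, the $K$-factor from kinematic inseparability) is correct. However, there is a genuine gap in your fourth stage that you need to patch.

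You write that ``by coverage, some $t^\st \in \Tclus$ has a representative cluster center with latent state $s^\st$.'' This is not guaranteed. The coverage event only ensures that some \emph{sampled} index $t(s) \in [n]$ has $\phi^\st(\xbar_{h+1}^{(t(s))}) = s^\st$; the greedy rule on \lineref{line:cluster-threshold-online} may well decline to add $t(s)$ to $\Tclus$ if its \emph{estimated} kinematics are within $\gamma'$ of an already-selected $t' \in \Tclus$, and that $t'$ can have a different latent state. The reward function $\MR^{(t^\st(s))}$ is defined with respect to $\wh f_{h+1}(\cdot,\xbar_{h+1}^{(t^\st(s))};\cdot)$, whose underlying latent state need not be $s^\st$. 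The paper closes this gap with an explicit two-step triangle inequality: it first shows $\xbar_{h+1}^{(t(s))}$ (which does have latent state $s^\st$) has estimated kinematics close to true kinematics of $s^\st$ (via $\ME_3^{t(s)}$), and then uses the $\gamma'$ threshold to transport this to $t^\st(s)$. Without this intermediary your invocation of \cref{lemma:target-error-online} cannot control the $\wh f_{h+1}(x_h^{(i)},\xbar^{(t^\st(s))}_{h+1};a) - f_{h+1}(\phi^\st(x_h^{(i)}), s^\st; a)$ error term, and the coverage claim fails.

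A secondary imprecision: your stage 2 claims an $\epsilon^{1/4}$ pointwise bound for ``every $x$ from the $\xbar_{h+1}^{(t)}$ distribution,'' but Markov applied to an expected squared error of order $\sqrt\epsilon\,m|\MA|$ only yields a pointwise squared error of order $\epsilon^{1/4} m|\MA|$ (hence absolute error $\epsilon^{1/8}\sqrt{m|\MA|}$), and only at the $n$ actually-sampled cluster centers; for generic $x_{h+1}$ the bound must stay in expectation, and this in-expectation form (plus a change of measure through the visitation ratio in \cref{lemma:target-error-online}) is what controls the reward-function error. The distinction matters because conflating pointwise and in-expectation bounds is exactly what would let the unbounded part of the error go uncontrolled.
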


Informally, $\Mbar(\Gamma)$ refers to a \emph{truncation} of the MDP $M$ in which all latent states that (a) cannot be reached by any policy with probability at least $\trunc$, and (b) cannot be reached by a uniformly random policy in $\Gamma$ with probability at least $\tsmall \ll \trunc$, are ``truncated away'' to an artificial terminal state $\term$. A truncated policy cover (\cref{defn:trunc-pc}) is essentially a set of policies $\Psi$ so that for each latent state $s$ that \emph{can} be reached with probability at least $\trunc$, a \emph{uniformly random} policy from $\Psi$ covers $s$. A truncated max-policy cover (\cref{defn:trunc-max-pc}) is essentially a set of policies so that for each latent state $s$ that \emph{can} be reached with probability at least $\trunc$, the \emph{best} policy from $\Psi$ covers $s$. See \cref{sec:truncated-mdps} for the formal definition of $\Mbar(\Gamma)$ and the truncated policy covers.

With this notation, \cref{thm:extend-pc-trunc-online} essentially asserts that if $\Psi_{1:h}$ are policy covers at steps $1,\dots,h$, then the output of $\EPCO$ is either a policy cover at step $h+1$, or else contains some policy $\pi$ that visits the terminal state in $\Mbar(\Gamma)$ with non-trivial probability. In this second case, it can be shown (via \cref{lemma:term-prob} and the definition of $\Srch_k(\Gamma)$) that $\pi$ visits some latent state in $M$ that was previously nearly-unexplored by all policies in $\Gamma$. Hence, when $\PCO$ adds $\pi$ to the backup policy cover $\Gamma^{(r+1)}$ in the next round, progress will have been made, and so this second case can only happen a bounded number of times---see the proof of \cref{thm:pco-app} in \cref{sec:pco-analysis}.


Let us fix the inputs to $\EPCO$: in addition to the two-context regression oracle $\Reg$ (\cref{def:two-con-regression}), we fix a layer $h \in [H-1]$, sets of policies $\Psi_1,\dots,\Psi_h$ and $\Gamma$, sample counts $n,m,N \in \NN$, and tolerances $\gamma,\gamma' \in (0,1)$. As discussed above, the main idea of $\EPCO$ is to use the oracle to estimate the kinematics $f_{h+1}: \MS\times\MS\times\MA \to [0,1]$ (defined informally in \cref{eq:kinematics-intro} and formally below), and then to apply the $\PSDP$ policy optimization method (\cref{alg:psdpb}) on internal reward functions constructed by clustering the kinematics.

\begin{definition}\label{def:kinematics}
For any $s,s' \in \MS$ and $a \in \MA$, define 
\[f_{h+1}(s,s';a) := \frac{\til\BP^M_{h+1}(s'\mid{}s,a)}{\til\BP^M_{h+1}(s'\mid{}s,a) + F_{h+1}(s')}\]
where 
\[F_{h+1}(s') := \EE_{\pi \sim \frac{1}{2}(\Unif(\Psi_h)+\Unif(\Gamma))\circ_h\Unif(\MA)} \E^{M,\pi}[\til\BP^M_{h+1}(s'\mid{}s_h,a_h)].\] 
\end{definition}

The main technical lemmas in the analysis are (1) \cref{lemma:regression-bound-online}, which shows that the regression problem solved in \lineref{line:hat-f-regression} is a realizable instance of two-context regression, and that the resulting estimator $\wh f_{h+1}$ is therefore a good estimate of $f_{h+1}$; and (2) \cref{lemma:target-error-online}, which shows that if $\wh f_{h+1}$ is close to $f_{h+1}$ then for any reachable latent state $s^\st$, there is some internal reward function $\MR^{(t)}$ computed by $\EPCO$ that approximately optimizes for visiting state $s^\st$. The following notation will be useful: 

\begin{definition}\label{def:marginals}
Let $\mu_{h+1}(a) \in \Delta(\MX\times\MX)$ be the marginal distribution of $(x_h,x_{h+1})$ where $(x_h,x_{h+1},y)$ is the first element of $\MD_a$. Let $\beta_h,\beta_{h+1} \in \Delta(\MS)$ be the marginal distributions of $s_h$ and $s_{h+1}$ respectively, for a trajectory $(s_1,x_1,a_1,\dots,s_{h+1},x_{h+1}) \sim \frac{1}{2}(\Unif(\Psi_h)+\Unif(\Gamma))\circ_h\Unif(\MA)$.
\end{definition}

In words, $\beta_h$ is the visitation distribution at step $h$ of a uniformly random policy $\pi \sim \frac{1}{2}(\Unif(\Psi_h)+\Unif(\Gamma))$, and $\beta_{h+1}$ is the visitation distribution at step $h+1$ obtained by sampling from $x_h \sim \beta_h$, $a_h \sim \Unif(\MA)$, and $x_{h+1} \sim \BP^M_{h+1}(\cdot\mid{}x_h,a_h)$.

We now prove that $\wh f_{h+1}$ approximates the true kinematics $f_{h+1}$ with high probability. This requires checking that each dataset $\MD_a$ constructed by $\EPCO$ satisfies the necessary realizability assumptions, specified in \cref{def:two-con-regression}, with respect to $\Phi$. \cref{item:reg-1-online} is then a direct consequence of the guarantee of $\Reg$ (together with a union bound over actions). \cref{item:reg-2-online} is a useful consequence, which asserts that if we plug in the $m$ ``test observations'' $x_h^{(1)},\dots,x_h^{(m)}$ sampled by $\EPCO$, and all $|\MA|$ actions, the resulting $m|\MA|$-dimensional vector $\wh f_{h+1}(x_h^{(i)},x_{h+1};a)_{i,a}$ is close to the corresponding ``true'' vector $f_{h+1}(\phi^\st(x_h^{(i)}),\phi^\st(x_{h+1});a)_{i,a}$ on average over $x_{h+1}$. This will be needed in \cref{lemma:target-error-online} since ultimately the reward functions $\MR^{(t)}$ are constructed by clustering these vectors.

\begin{lemma}\label{lemma:regression-bound-online}
Let $\epsilon,\delta,\delta' \in (0,1)$. Suppose that $\Reg$ is an $\Nreg$-efficient two-context regression oracle for $\Phi$, and $N \geq \Nreg(\epsilon,\delta)$. Then:
\begin{enumerate}
\item \label{item:reg-1-online} With probability at least $1-\delta|\MA|$, it holds that for all $a \in \MA$,
\[\EE_{(x_h,x_{h+1}) \sim \til\BO_h\beta_h \times \til\BO_{h+1}\beta_{h+1}}\left(\wh f_{h+1}(x_h,x_{h+1};a) - f_{h+1}(\phi^\st(x_h),\phi^\st(x_{h+1});a)\right)^2 \leq 2\epsilon.\]
\item \label{item:reg-2-online} With probability at least $1-\delta|\MA|-\delta'\cdot m|\MA|$, it holds that
\[\EE_{x_{h+1}\sim\til\BO_{h+1}\beta_{h+1}} \max_{(i,a)\in [m]\times\MA} \left(\wh f_{h+1}(x_h^{(i)},x_{h+1};a) - f_{h+1}(\phi^\st(x_h^{(i)}),\phi^\st(x_{h+1});a)\right)^2 \leq \frac{2\epsilon m|\MA|}{\delta'}.\]
\end{enumerate}
\end{lemma}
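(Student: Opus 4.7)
The plan is to prove \cref{item:reg-1-online} first by directly verifying that each dataset $\MD_a$ is a realizable two-context regression instance for $\Phi$ as per \cref{def:two-con-regression}, then derive \cref{item:reg-2-online} via Markov's inequality and a union bound.

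For \cref{item:reg-1-online}, fix an action $a \in \MA$. I would analyze the joint distribution of $(x_h, x_{h+1}, y)$ in the dataset $\MD_a$: by construction (marginalizing over the coin flip $y \sim \Ber(1/2)$), $(x_h, x_{h+1})$ is drawn from the mixture $\frac{1}{2}\mu_{h+1}^{+}(a) + \frac{1}{2}\mu_{h+1}^{-}(a)$, where $\mu_{h+1}^{+}(a)$ corresponds to a trajectory sampled via the true transitions under action $a$ at step $h$, and $\mu_{h+1}^{-}(a)$ corresponds to an independent pairing $x_h \sim \til\BO_h\beta_h$, $x_{h+1} \sim \til\BO_{h+1}\beta_{h+1}$. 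Since $\mu_{h+1}^{+}(a)$ and $\mu_{h+1}^{-}(a)$ have equal $x_h$-marginal $\til\BO_h\beta_h$, a short Bayes' rule calculation shows that $\EE[y\mid x_h, x_{h+1}] = f_{h+1}(\phi^\st(x_h), \phi^\st(x_{h+1}); a)$, which depends on $(x_h,x_{h+1})$ only through $(\phi^\st(x_h),\phi^\st(x_{h+1}))$. I would also observe that the mixture distribution over $(x_h,x_{h+1})$ is $\phi^\st$-realizable in the sense of \cref{def:realizable-distribution}, since in each component of the mixture, $x_{h+1}$ is conditionally independent of $x_h$ given the latent state at step $h+1$ (via the Block MDP emission structure), and vice versa.

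Having verified realizability, I would apply the guarantee of $\Reg$: since $N \geq \Nreg(\epsilon,\delta)$, with probability at least $1-\delta$, the output $\wh f_{h+1}(\cdot,\cdot;a)$ satisfies squared-loss error at most $\epsilon$ under the mixture distribution. Since the independent-pairing component $\mu_{h+1}^{-}(a)$ has mass $1/2$ in the mixture and is exactly $\til\BO_h\beta_h \times \til\BO_{h+1}\beta_{h+1}$, the squared-loss error under this product distribution is at most $2\epsilon$. A union bound over $a \in \MA$ yields the claim.

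For \cref{item:reg-2-online}, assume \cref{item:reg-1-online} holds. Fix $(i,a) \in [m]\times\MA$. Since $x_h^{(i)} \sim \til\BO_h\beta_h$, by Markov's inequality applied to the random variable $\EE_{x_{h+1}\sim\til\BO_{h+1}\beta_{h+1}}(\wh f_{h+1}(x_h^{(i)},x_{h+1};a) - f_{h+1}(\phi^\st(x_h^{(i)}),\phi^\st(x_{h+1});a))^2$, with probability at least $1-\delta'$ over $x_h^{(i)}$, this conditional expectation is at most $2\epsilon/\delta'$. Union bounding over all $(i,a)\in[m]\times\MA$ gives a total failure probability of $\delta'\cdot m|\MA|$. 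Then I would bound the maximum by the sum:
\[\EE_{x_{h+1}} \max_{(i,a)} \left(\wh f_{h+1}(x_h^{(i)},x_{h+1};a) - f_{h+1}(\cdots)\right)^2 \leq \sum_{(i,a)} \EE_{x_{h+1}} \left(\wh f_{h+1}(x_h^{(i)},x_{h+1};a) - f_{h+1}(\cdots)\right)^2 \leq m|\MA|\cdot \frac{2\epsilon}{\delta'},\]
which is the desired bound.

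The main subtlety (rather than obstacle) is the Bayes calculation, which requires carefully tracking the symmetry between the two components of the mixture and leveraging the Block MDP emission structure so that the conditional expectation factors through the latent states on both sides; in particular, verifying $\phi^\st$-realizability requires the observation that under the true dynamics, $x_{h+1}$ depends on $x_h$ only through $\phi^\st(x_h)$ and the action, and the emission $\til\BO_{h+1}$ depends only on $\phi^\st(x_{h+1})$. Everything else is standard oracle invocation, union bounding, and Markov's inequality.
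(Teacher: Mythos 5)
Your proof takes essentially the same approach as the paper: identify the mixture distribution of $(x_h, x_{h+1})$ underlying $\MD_a$, verify that it is $\phi^\st$-realizable with Bayes-optimal predictor $f_{h+1}(\cdot,\cdot;a)$, apply the two-context regression guarantee, and transfer from the mixture to the product distribution using the fact that the independent component carries mass $1/2$; item~2 then follows by Markov's inequality, a union bound, and bounding the max by the sum, exactly as in the paper.

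There is, however, one flawed step in the realizability argument. You justify $\phi^\st$-realizability of the mixture by asserting that ``in each component of the mixture'' the required conditional independence holds. That inference is not valid in general: a mixture of $\phi^\st$-realizable distributions need not be $\phi^\st$-realizable. For a quick counterexample, take $\MS=\{0,1\}$, $\MX=\{a_0,b_0,a_1,b_1\}$ with $\phi(a_i)=\phi(b_i)=i$, and mix the point mass at $(a_0,a_1)$ with the point mass at $(b_0,b_1)$; each component is trivially realizable, but in the mixture $\phi(X_1)\equiv 0$, so conditioning on $\phi(X_1)$ is vacuous, while $X_2$ is determined by $X_1$, violating the first condition of \cref{def:realizable-distribution}. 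The reason the mixture \emph{is} realizable in the setting at hand is more specific: both components share the same conditional observation densities given latent states, namely $\til\BO_h(x_h\mid{}\phi^\st(x_h))$ and $\til\BO_{h+1}(x_{h+1}\mid{}\phi^\st(x_{h+1}))$, so the mixture density factors as
\[
\mu(x,x') = A(\phi^\st(x))\cdot \til\BO_h(x\mid{}\phi^\st(x))\cdot C(\phi^\st(x),\phi^\st(x'))\cdot \til\BO_{h+1}(x'\mid{}\phi^\st(x')),
\]
from which both conditional independence requirements follow directly. The paper's proof makes this factorization explicit by writing out $\mu(x,x')$; you should replace the component-wise appeal with this direct computation (or at least note that the components share emission kernels, which is the structure that makes the mixture inherit realizability).
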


\begin{proof}
To prove the first claim, fix any $a \in \MA$. The dataset $\MD_a$ constructed by \cref{alg:extend-pc-online} consists of $N$ independent and identically distributed tuples $(x_h^{(i)},x_{h+1}^{(i)},y^{(i)})$. Fix any $i \in [N]$ and let $\mu_0 \in \Delta(\MX\times\MX)$ be the probability density function of $(x_h^{(i)},x_{h+1}^{(i)})$ conditioned on $y^{(i)} = 0$. Similarly, let $\mu_1 \in \Delta(\MX\times\MX)$ be the probability density function of $(x_h^{(i)},x_{h+1}^{(i)})$ conditioned on $y^{(i)} = 1$. For any $x,x' \in \MX$,
\begin{equation} \mu_0(x,x') = \beta_h(\phi^\st(x)) \til\BO_h(x\mid{}\phi^\st(x)) \EE_{(s,\tilde a)\sim \beta_h\times\Unif(\MA)} \til\BP^M_{h+1}(\phi^\st(x')\mid{}s,\tilde a)\til\BO_{h+1}(x'\mid{}\phi^\st(x'))\label{eq:mu0-density}\end{equation}
and
\[\mu_1(x,x') = \beta_h(\phi^\st(x)) \til\BO_h(x\mid{}\phi^\st(x)) \til\BP^M_{h+1}(\phi^\st(x')\mid{}\phi^\st(x),a) \til\BO_{h+1}(x'\mid{}\phi^\st(x')).\]
The unconditional probability density function of $(x_h^{(i)},x_{h+1}^{(i)})$ is therefore $\mu \in \Delta(\MX\times\MX)$ defined as
\begin{align*}
&\mu(x,x')  \\
&= \frac{\mu_0(x,x')+\mu_1(x,x')}{2} \\ 
&= \frac{\beta_h(\phi^\st(x))\til\BO_h(x\mid{}\phi^\st(x))}{2}\Bigg(\til\BP^M_{h+1}(\phi^\st(x')\mid{}\phi^\st(x),a) \til\BO_{h+1}(x'\mid{}\phi^\st(x')) \\
&\hspace{10em}+ \EE_{(s,\tilde a)\sim \beta_h\times\Unif(\MA)} \til\BP^M_{h+1}(\phi^\st(x')\mid{}s,\tilde a)\til\BO_{h+1}(x'\mid{}\phi^\st(x'))\Bigg) \\ 
&= \frac{\beta_h(\phi^\st(x))\til\BO_h(x\mid{}\phi^\st(x))}{2}\left(\til\BP^M_{h+1}(\phi^\st(x')\mid{}\phi^\st(x),a)  + \EE_{(s,\tilde a)\sim \beta_h\times\Unif(\MA)} \til\BP^M_{h+1}(\phi^\st(x')\mid{}s,\tilde a)\right) \\
&\hspace{10em} \cdot \til\BO_{h+1}(x'\mid{}\phi^\st(x')).
\end{align*}
From this expression it is clear that, for any $s \in \MS$, conditioned on the event $\phi^\st(x) = s$, $x_h^{(i)}$ and $x_{h+1}^{(i)}$ are independent. Moreover, for any $s \in \MS$, conditioned on the event $\phi^\st(x') = s$, $x_h^{(i)}$ and $x_{h+1}^{(i)}$ are independent. We conclude that $\mu$ is $\phi^\st$-realizable (\cref{def:realizable-distribution}). Next, for any $x,x' \in \MX$, note that
\begin{align*}
\E[y^{(i)}\mid{}x_h^{(i)}=x,x_{h+1}^{(i)}=x']
&= \frac{\mu_1(x,x')}{\mu_0(x,x') + \mu_1(x,x')} \\ 
&= \frac{\til\BP^M_{h+1}(\phi^\st(x')\mid{}\phi^\st(x),a)}{\til\BP^M_{h+1}(\phi^\st(x')\mid{}\phi^\st(x),a) + \EE_{(s,\tilde a)\sim\beta_h\times\Unif(\MA)} \til\BP^M_{h+1}(\phi^\st(x')\mid{}s,\tilde a)} \\ 
&= f_{h+1}(\phi^\st(x),\phi^\st(x');a)
\end{align*}
by \cref{def:kinematics}. Hence, we can apply the guarantee of $\Reg$ (\cref{def:two-con-regression}) with distribution $\mu$ and ground truth predictor $f_{h+1}$. We get that with probability at least $1-\delta$,
\[\EE_{(x_h,x_{h+1}) \sim \mu} \left(\wh f_{h+1}(x_h,x_{h+1};a) - f_{h+1}(\phi^\st(x_h),\phi^\st(x_{h+1});a)\right)^2 \leq \epsilon.\]
But $\mu(x,x') \geq \frac{1}{2}\mu_0(x,x') = \frac{1}{2}\BO_h \beta_h \times \BO_{h+1} \beta_{h+1}$ by \cref{eq:mu0-density} and definition of $\beta_{h+1}$. The first claim of the lemma statement follows.

In the event that the first claim holds, for each $i \in [m]$ and $a \in \MA$, since $x_h^{(i)}$ has distribution $\BO_h\beta_h$, 
Markov's inequality gives that with probability at least $1-\delta'$,
\[\EE_{x_{h+1} \sim \BO_{h+1}\beta_{h+1}}\left(\wh f_{h+1}(x_h^{(i)},x_{h+1};a) - f_{h+1}(\phi^\st(x_h^{(i)}),\phi^\st(x_{h+1});a)\right)^2 \leq \frac{2\epsilon}{\delta'}.\]
By a union bound, we have with probability at least $1-\delta'm|\MA|$ that 
\[\sum_{(i,a)\in[m]\times\MA} \EE_{x_{h+1} \sim \BO_{h+1}\beta_{h+1}}\left(\wh f_{h+1}(x_h^{(i)},x_{h+1};a) - f_{h+1}(\phi^\st(x_h^{(i)}),\phi^\st(x_{h+1});a)\right)^2 \leq \frac{2\epsilon m|\MA|}{\delta'}.\]
Exchanging the summation and expectation completes the proof of the second claim.
\end{proof}

\noindent To prove \cref{lemma:target-error-online}, we need the following preparatory results.

\begin{lemma}\label{lemma:f-ub}
Let $s,s' \in \MS$ and $a \in \MA$. Then 
\[1-f_{h+1}(s,s';a) \geq \frac{\beta_h(s)}{2|\MA|}.\]
\end{lemma}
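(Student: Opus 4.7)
The plan is to unpack the definition of $f_{h+1}$, rewrite $1 - f_{h+1}(s,s';a)$ in a form where the denominator is dominated by a lower bound on $F_{h+1}(s')$, and then exploit the fact that $F_{h+1}(s')$ is an expectation whose summand includes the specific pair $(\tilde s,\tilde a) = (s,a)$ with weight $\beta_h(s)/|\MA|$.

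Concretely, by \cref{def:kinematics},
\[
1 - f_{h+1}(s,s';a) = \frac{F_{h+1}(s')}{\til\BP^M_{h+1}(s'\mid s,a) + F_{h+1}(s')} = \frac{1}{1 + \til\BP^M_{h+1}(s'\mid s,a)/F_{h+1}(s')}.
\]
Next, I would expand $F_{h+1}(s')$ as a sum over $(\tilde s, \tilde a) \in \MS \times \MA$ against $\beta_h(\tilde s) \cdot \Unif(\MA)(\tilde a)$ and retain only the term with $\tilde s = s, \tilde a = a$ to get
\[
F_{h+1}(s') \;\geq\; \frac{\beta_h(s)}{|\MA|}\,\til\BP^M_{h+1}(s'\mid s,a),
\]
which yields $\til\BP^M_{h+1}(s'\mid s,a)/F_{h+1}(s') \leq |\MA|/\beta_h(s)$.

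Substituting back gives
\[
1 - f_{h+1}(s,s';a) \geq \frac{\beta_h(s)}{\beta_h(s) + |\MA|} \geq \frac{\beta_h(s)}{2|\MA|},
\]
where the last step uses $\beta_h(s) \leq 1 \leq |\MA|$. There is no real obstacle here; the only subtlety is ensuring the single-term lower bound on $F_{h+1}$ is valid, which just requires recognizing that $\beta_h$ is the marginal of $s_h$ under the roll-in policy and that the action is chosen uniformly at step $h$ (by \cref{def:marginals}).
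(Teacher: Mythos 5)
Your proof is correct and follows the same key step as the paper: lower-bounding $F_{h+1}(s')$ by the single $(s_h,a_h)=(s,a)$ term of the expectation to get $F_{h+1}(s')\ge\frac{\beta_h(s)}{|\MA|}\til\BP^M_{h+1}(s'\mid s,a)$. The only cosmetic difference is the finishing step: you close with $1-f\ge\frac{\beta_h(s)}{\beta_h(s)+|\MA|}\ge\frac{\beta_h(s)}{2|\MA|}$ directly, whereas the paper derives $1-f\ge\frac{\beta_h(s)}{|\MA|}f_{h+1}(s,s';a)$ and then case-splits on whether $f_{h+1}\ge 1/2$; these are equivalent.
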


\begin{proof}
Observe that
\begin{align*}
F_{h+1}(s')
&= \EE_{\pi \sim \frac{1}{2}(\Unif(\Psi_h)+\Unif(\Gamma))\circ_h\Unif(\MA)} \E^{M,\pi}[\til\BP^M_{h+1}(s'\mid{}s_h,a_h)] \\
&\geq \frac{1}{|\MA|} \EE_{\pi \sim \frac{1}{2}(\Unif(\Psi_h)+\Unif(\Gamma))} \E^{M,\pi}[\til\BP^M_{h+1}(s'\mid{}s_h,a)] \\ 
&= \frac{1}{|\MA|} \sum_{s_h \in \MS} \beta_h(s_h) \til\BP^M_{h+1}(s'\mid{}s_h,a) \\ 
&\geq \frac{\beta_h(s)}{|\MA|} \til\BP^M_{h+1}(s'\mid{}s,a).
\end{align*}
It follows that
\begin{align*}
1 - f_{h+1}(s,s';a)
&= \frac{F_{h+1}(s')}{\til\BP^M_{h+1}(s'\mid{}s,a) + F_{h+1}(s')} \\ 
&\geq \frac{\beta_h(s)}{|\MA|} f_{h+1}(s',s;a).
\end{align*}
If $f_{h+1}(s',s;a) \geq 1/2$ then the claim follows; otherwise $1-f_{h+1}(s',s;a) \geq 1/2 \geq \beta_h(s)/(2|\MA|)$ as well.
\end{proof}

\begin{lemma}\label{lemma:lipschitz-bound}
Let $x,y \in [0,1)$. Then
\[\left|\frac{x}{1-x} - \frac{y}{1-y}\right| \leq \frac{2|x-y|}{\min\{1-x,1-y\}^2}.\]
\end{lemma}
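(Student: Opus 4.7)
The plan is to prove the inequality by a direct algebraic manipulation rather than appealing to a derivative-based argument such as the mean value theorem. Specifically, I would put the two fractions over a common denominator, obtaining
\[
\frac{x}{1-x} - \frac{y}{1-y} \;=\; \frac{x(1-y) - y(1-x)}{(1-x)(1-y)} \;=\; \frac{x-y}{(1-x)(1-y)}.
\]
Taking absolute values gives $\left|\tfrac{x}{1-x} - \tfrac{y}{1-y}\right| = \tfrac{|x-y|}{(1-x)(1-y)}$.

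Next, I would lower-bound the denominator by noting that $1-x \ge \min\{1-x,1-y\}$ and $1-y \ge \min\{1-x,1-y\}$, both of which are non-negative since $x,y \in [0,1)$. Multiplying these two inequalities yields $(1-x)(1-y) \ge \min\{1-x,1-y\}^2$, hence
\[
\left|\frac{x}{1-x} - \frac{y}{1-y}\right| \;=\; \frac{|x-y|}{(1-x)(1-y)} \;\le\; \frac{|x-y|}{\min\{1-x,1-y\}^2} \;\le\; \frac{2|x-y|}{\min\{1-x,1-y\}^2},
\]
which is the desired bound (in fact with slack of a factor of $2$).

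There is no real obstacle here: the only subtlety is confirming that both $1-x$ and $1-y$ are strictly positive so that the common denominator manipulation and the $\min$ lower bound are legitimate, and this is immediate from the hypothesis $x,y \in [0,1)$. The factor of $2$ in the stated inequality appears to be intentional slack, presumably chosen to match the form in which the lemma is applied later in the analysis of $\wh f_{h+1}$ versus $f_{h+1}$.
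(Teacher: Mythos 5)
Your proof is correct and takes a genuinely different route from the paper's. The paper defines $g(x) = x/(1-x)$, computes $g'(x) = \tfrac{1}{1-x} + \tfrac{x}{(1-x)^2} \leq \tfrac{2}{(1-x)^2}$, and applies the mean value theorem: $g'(w) \leq 2/\min\{1-x,1-y\}^2$ for all $w$ between $x$ and $y$. Your common-denominator computation is more elementary, avoids calculus entirely, and in fact shows that the factor of $2$ is pure slack: the true bound is $\tfrac{|x-y|}{(1-x)(1-y)}$, which is at most $\tfrac{|x-y|}{\min\{1-x,1-y\}^2}$. The paper's MVT argument genuinely needs the factor of $2$ because the derivative bound $g'(w) \leq 2/(1-w)^2$ itself carries that constant. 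Your approach is cleaner here and gives a tighter result; the paper's approach is more portable to situations where the function being bounded does not telescope so nicely under a common denominator, but for this specific rational function the algebraic route is preferable.
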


\begin{proof}
Define $g: [0,1) \to \RR$ by $g(x) = x/(1-x)$. Then 
\[g'(x) = \frac{1}{1-x} + \frac{x}{(1-x)^2} \leq \frac{2}{(1-x)^2}.\]
Hence, $g'(w) \leq 2/\min\{1-x,1-y\}^2$ for all $w$ in the interval between $x$ and $y$. The lemma follows.
\end{proof}

The following key lemma states for any reachable latent state $s^\st$, there is some reward function $\MR^{(t)}$ so that, under any policy $\pi$, the expected value of $\pi$ under this reward function is roughly proportional to the visitation probability of $s^\st$. Intuitively, this is important because optimizing with respect to $\MR^{(t)}$ then approximately optimizes the probability of reaching $s^\st$. The key to the proof is \cref{eq:dpi-expansion-online}, which shows that for any other state $s$ which has roughly the same \emph{kinematics} as $s^\st$, the probability that any policy $\pi$ visits $s$ is proportional to the probability that $\pi$ visits $s^\st$ (where the constant of proportionality may depend on $s$ and $s^\st$ but not $\pi$).

\begin{lemma}\label{lemma:target-error-online}
Let $\epreg>0$. Condition on $\wh f_{h+1}$ and $x_h^{(1)},\dots,x_h^{(m)}$, and suppose that 
\begin{equation}\EE_{x_{h+1}\sim\BO_{h+1}\beta_{h+1}} \max_{(i,a)\in [m]\times\MA} \left(\wh f_{h+1}(x_h^{(i)},x_{h+1};a) - f_{h+1}(\phi^\st(x_h^{(i)}),\phi^\st(x_{h+1});a)\right)^2 \leq \epreg.\label{eq:reg-bound-applied}\end{equation}
Suppose that for each $s \in \Srch_h(\Gamma)$ there is some $i \in [m]$ with $\phi^\st(x^{(i)}) = s$. Suppose that $\beta_{h+1}(s) \geq \tilde \alpha \cdot \max_{\pi\in\Pi} d^{\Mbar(\Gamma),\pi}_{h+1}(s)$ for all $s \in \MS$. Then for any $t \in [n]$ and $s^\st \in \Srch_{h+1}(\Gamma)$, there is some $K = K(\Gamma,s^\st) \geq 1$ such that
\begin{align*}
&\max_{\pi\in\Pi} \left|\E^{\Mbar(\Gamma),\pi}[\MR^{(t)}(x_{h+1})] - K \cdot d^{\Mbar(\Gamma),\pi}_{h+1}(s^\st)\right| \\ 
&\leq \frac{8\gamma|\MS||\MA|^2}{\min_{s\in\Srch_h(\Gamma)}\beta_h(s)^2} + \frac{\sqrt{\epreg}}{\tilde \alpha \gamma} + \frac{\max_{(i,a)\in[m]\times\MA}\left|\wh f_{h+1}(x_h^{(i)},\bar x_{h+1}^{(t)};a) - f_{h+1}(\phi^\st(x_h^{(i)}),s^\st;a)\right|}{\gamma}
\end{align*}
where for notational convenience we take $\MR^{(t)}(\term) := 0$. 
\end{lemma}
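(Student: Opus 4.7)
The plan is to introduce an ``idealized'' reward function $\MR^{\st,(t)}:\Xbar\to[0,1]$ in which both the estimated kinematics and the cluster-center observation are replaced by their ``true'' counterparts relative to $s^\st$:
\[ \MR^{\st,(t)}(x) := \max\!\left(0,\; 1 - \frac{\max_{(i,a)\in[m]\times\MA}|f_{h+1}(\phi^\st(x_h^{(i)}),s^\st;a) - f_{h+1}(\phi^\st(x_h^{(i)}),\phi^\st(x);a)|}{\gamma}\right),\]
which, crucially, is a function only of $\phi^\st(x)$. I would then decompose
\[\E^{\Mbar(\Gamma),\pi}[\MR^{(t)}(x_{h+1})] - K\cdot d^{\Mbar(\Gamma),\pi}_{h+1}(s^\st) = \underbrace{\E^{\Mbar(\Gamma),\pi}[\MR^{(t)} - \MR^{\st,(t)}]}_{(\mathrm{I})} + \underbrace{\E^{\Mbar(\Gamma),\pi}[\MR^{\st,(t)}] - K\cdot d^{\Mbar(\Gamma),\pi}_{h+1}(s^\st)}_{(\mathrm{II})}\]
and bound each piece separately.

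For (I), using the $1/\gamma$-Lipschitzness of $z\mapsto\max(0,1-z/\gamma)$ and two applications of the triangle inequality, $|\MR^{(t)}(x)-\MR^{\st,(t)}(x)|$ is at most $1/\gamma$ times $\max_{i,a}|\wh f_{h+1}(x_h^{(i)},\xbar_{h+1}^{(t)};a) - f_{h+1}(\phi^\st(x_h^{(i)}),s^\st;a)|$ (which is the third term in the lemma, independent of $x$) plus $\max_{i,a}|\wh f_{h+1}(x_h^{(i)},x;a) - f_{h+1}(\phi^\st(x_h^{(i)}),\phi^\st(x);a)|$. Taking expectations under $\Mbar(\Gamma),\pi$, a change-of-measure from $d^{\Mbar(\Gamma),\pi}_{h+1}$ to $\beta_{h+1}$ costs a factor $1/\tilde\alpha$ (using the lemma's coverage assumption, noting $\MR^{\st,(t)}(\term)=\MR^{(t)}(\term)=0$), and Cauchy--Schwarz against the squared-loss bound \cref{eq:reg-bound-applied} converts it to $\sqrt{\epreg}/(\tilde\alpha\gamma)$, matching the second error term.

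For (II), since $\MR^{\st,(t)}$ depends on $x_{h+1}$ only through $\phi^\st(x_{h+1})$, I would write
\[\E^{\Mbar(\Gamma),\pi}[\MR^{\st,(t)}(x_{h+1})] = \sum_{s\in\MS} d^{\Mbar(\Gamma),\pi}_{h+1}(s)\, r^\st(s),\qquad r^\st(s):=\MR^{\st,(t)}\text{ evaluated at latent }s,\]
and reduce to proving that for every $s$ with $r^\st(s)>0$ the visitation $d^{\Mbar(\Gamma),\pi}_{h+1}(s)$ is approximately proportional to $d^{\Mbar(\Gamma),\pi}_{h+1}(s^\st)$ uniformly in $\pi$. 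This is where the main work is. For such an $s$, by the hypothesis that $\{\phi^\st(x_h^{(i)})\}$ covers $\Srch_h(\Gamma)$, we get $|f_{h+1}(s',s;a) - f_{h+1}(s',s^\st;a)| < \gamma$ for all $s'\in\Srch_h(\Gamma)$ and $a\in\MA$. Inverting $f_{h+1}$ gives $\til\BP^M_{h+1}(s\mid s',a)/F_{h+1}(s) = f/(1-f)$; applying \cref{lemma:lipschitz-bound} and the lower bound $1-f\geq\beta_h(s')/(2|\MA|)$ from \cref{lemma:f-ub} then yields
\[\left|\frac{\til\BP^M_{h+1}(s\mid s',a)}{F_{h+1}(s)} - \frac{\til\BP^M_{h+1}(s^\st\mid s',a)}{F_{h+1}(s^\st)}\right| \leq \frac{8\gamma|\MA|^2}{\beta_h(s')^2}.\]
Averaging over $s'\sim d^{\Mbar(\Gamma),\pi}_h$ and $a=\pi_h(\cdot)$ (here using that all transitions in $\Mbar(\Gamma)$ at step $h$ originate from $\Srch_h(\Gamma)$), multiplying through by $F_{h+1}(s)\leq 1$, and summing over the at most $|\MS|$ states with $r^\st(s)>0$, gives (II) bounded by the first error term with $K := \sum_s c(s)\, r^\st(s)$ where $c(s):=F_{h+1}(s)/F_{h+1}(s^\st)$. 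Finally $K\geq 1$ because $s=s^\st$ contributes $c(s^\st)\cdot r^\st(s^\st) = 1\cdot 1$, since $L^\st$ vanishes at $s^\st$. Combining (I) and (II) via the triangle inequality produces the stated bound; the principal obstacle is getting the quadratic $\beta_h^{-2}$ dependence through the $f\leftrightarrow\til\BP/F$ inversion cleanly.
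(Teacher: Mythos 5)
Your proposal is correct and follows essentially the same approach as the paper's proof. Your $\MR^{\st,(t)}$ is precisely the paper's $g(\Delta(\cdot,s^\st))$ (so $\E^{\Mbar(\Gamma),\pi}[\MR^{\st,(t)}]$ equals the paper's intermediate quantity $W^\pi$), your part (I) matches the paper's bound on $|\E[\MR^{(t)}]-W^\pi|$, and your part (II) matches the paper's bound on $|W^\pi-\til W^\pi|$ together with the identity $\til W^\pi = K\cdot d^{\Mbar(\Gamma),\pi}_{h+1}(s^\st)$. The one small presentational difference in (II) is that you restrict to states with $r^\st(s)>0$ and use $g(z)>0\Rightarrow z<\gamma$, whereas the paper keeps all $s_{h+1}\in\Srch_{h+1}(\Gamma)$ and invokes $z\cdot g(z)\le\gamma$; these give the same $8\gamma|\MS||\MA|^2/\min\beta_h^2$ bound once one also notes (as you implicitly do) that $K$ should be defined by summing over $\Srch_{h+1}(\Gamma)$ rather than all of $\MS$, so that the $d_{h+1}(s)\approx c(s)\,d_{h+1}(s^\st)$ step applies.
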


\begin{proof}
First, observe that by the lemma assumption that $\beta_{h+1}(s) \geq \tilde \alpha \cdot \max_{\pi\in\Pi} d^{\Mbar(\Gamma),\pi}_{h+1}(s)$ for all $s \in \MS$, it follows that 
\begin{equation}
(\BO_{h+1}\beta_{h+1})(x) \geq \tilde \alpha \cdot \max_{\pi\in\Pi} d^{\Mbar(\Gamma),\pi}_{h+1}(x)
\label{eq:beta-x-lb}
\end{equation} 
for all $x \in \MX$. Now fix $\pi \in \Pi$. By definition of $\MR^{(t)}$ (\lineref{line:rt-def-online}), we have
\[\E^{\Mbar(\Gamma),\pi}[\MR^{(t)}(x_{h+1})] = \E^{\Mbar(\Gamma),\pi}\left[g\left(\max_{i,a}\left|\wh f_{h+1}(x_h^{(i)},\bar x_{h+1}^{(t)};a) - \wh f_{h+1}(x_h^{(i)},x_{h+1};a)\right|\right)\mathbbm{1}[s_{h+1} \in \MS]\right]\]
where $g(z) := \max(0,1-z/\gamma)$. For any $s,s' \in \MS$, define $\Delta(s,s') := \max_{i,a}|f(\phi^\st(x_h^{(i)}),s;a) - f(\phi^\st(x_h^{(i)}),s';a)|$. Then define 
\[W^\pi := \E^{\Mbar(\Gamma),\pi}[g(\Delta(s_{h+1},s^\st))\mathbbm{1}[s_{h+1}\in\MS]].\]
Then
\begin{align}
&\left|\E^{\Mbar(\Gamma),\pi}[\MR^{(t)}(x_{h+1})] - W^\pi\right| \nonumber\\ 
&\leq \frac{1}{\gamma} \E^{\Mbar(\Gamma),\pi}\left[\left|\max_{i,a}\left|\wh f_{h+1}(x_h^{(i)},\bar x_{h+1}^{(t)};a) - \wh f_{h+1}(x_h^{(i)},x_{h+1};a)\right|-\Delta(s_{h+1},s^\st)\right|\mathbbm{1}[s_{h+1}\in\MS]\right] \nonumber\\ 
&\leq \frac{1}{\gamma}\E^{\Mbar(\Gamma),\pi}\left[\max_{i,a}\left|\wh f_{h+1}(x_h^{(i)},x_{h+1};a) - f_{h+1}(\phi^\st(x_h^{(i)}),s_{h+1};a)\right| \mathbbm{1}[s_{h+1}\in\MS]\right] \nonumber\\ 
&\qquad+ \frac{\max_{i,a}\left|\wh f_{h+1}(x_h^{(i)},\bar x^{(t)}_{h+1};a) - f_{h+1}(\phi^\st(x_h^{(i)}),s^\st;a)\right|}{\gamma} \nonumber\\ 
&\leq \frac{1}{\tilde\alpha \gamma}\EE_{x_{h+1} \sim \BO_{h+1}\beta_{h+1}}\left[\max_{i,a}\left|\wh f_{h+1}(x_h^{(i)},x_{h+1};a) - f_{h+1}(\phi^\st(x_h^{(i)}),\phi^\st(x_{h+1});a)\right|\right] \nonumber\\ 
&\qquad+ \frac{\max_{i,a}\left|\wh f_{h+1}(x_h^{(i)},\bar x^{(t)}_{h+1};a) - f_{h+1}(\phi^\st(x_h^{(i)}),s^\st;a)\right|}{\gamma} \nonumber\\ 
&\leq \frac{\sqrt{\epreg}}{\tilde \alpha \gamma} + \frac{\max_{i,a}\left|\wh f_{h+1}(x_h^{(i)},\bar x^{(t)}_{h+1};a) - f_{h+1}(\phi^\st(x_h^{(i)}),s^\st;a)\right|}{\gamma},
\label{eq:rt-w-diff}
\end{align}
where the first inequality uses the fact that $g$ is $1/\gamma$-Lipschitz; the second inequality is by definition of $\Delta(s_{h+1},s^\st)$ and the triangle inequality; the third inequality is by \cref{eq:beta-x-lb}; and the fourth inequality is by \cref{eq:reg-bound-applied}. 

Next, for any fixed $s_{h+1} \in \Srch_{h+1}(\Gamma)$, recall that $\til\BP^{\Mbar(\Gamma)}_{h+1}(s_{h+1}\mid{}s_h,a_h) = \til\BP^M_{h+1}(s_{h+1}\mid{}s_h,a_h)$ and $\til\BP^{\Mbar(\Gamma)}_{h+1}(s_{h+1}\mid{}\term,a_h) = 0$ for all $s_h\in\MS$, $a_h \in \MA$ (by \cref{def:truncated-bmdp}); hence,
\begin{align}
d^{\Mbar(\Gamma),\pi}_{h+1}(s_{h+1}) 
&= \sum_{(s_h,a_h) \in \MS\times\MA} d^{\Mbar(\Gamma),\pi}_h(s_h,a_h)  \til\BP^M_{h+1}(s_{h+1}\mid{}s_h,a_h) \nonumber\\ 
&= \left(\sum_{(s_h,a_h)\in\MS\times\MA} d^{\Mbar(\Gamma),\pi}_h(s_h,a_h) \frac{f_{h+1}(s_h,s_{h+1};a_h)}{1-f_{h+1}(s_h,s_{h+1};a_h)}\right) F_{h+1}(s_{h+1})\label{eq:dpi-expansion-online}
\end{align} 
where the second equality is by definition of $f_{h+1},F_{h+1}$ (\cref{def:kinematics}). Substituting \cref{eq:dpi-expansion-online} into the definition of $W^\pi$ and using the fact that $d^{\Mbar(\Gamma),\pi}_{h+1}(s_{h+1}) = 0$ for all $s_{h+1} \in \MS\setminus \Srch_{h+1}(\Gamma)$, we get that 
\[W^\pi = \sum_{s_{h+1}\in\Srch_{h+1}(\Gamma)} \left(\sum_{(s_h,a_h)\in\MS\times\MA} d^{\Mbar(\Gamma),\pi}_h(s_h,a_h) \frac{f_{h+1}(s_h,s_{h+1};a_h)}{1-f_{h+1}(s_h,s_{h+1};a_h)}\right) F_{h+1}(s_{h+1}) g(\Delta(s_{h+1},s^\st)).\]
Define 
\[\widetilde W^\pi :=  \left(\sum_{(s_h,a_h)\in\MS\times\MA} d^{\Mbar(\Gamma),\pi}_h(s_h,a_h) \frac{f_{h+1}(s_h,s^\st;a_h)}{1-f_{h+1}(s_h,s^\st;a_h)}\right) \sum_{s_{h+1}\in\Srch_{h+1}(\Gamma)} F_{h+1}(s_{h+1}) g(\Delta(s_{h+1},s^\st)).\]
Then
\begin{align}
&|W^\pi - \til W^\pi| \nonumber\\ 
&\leq \sum_{s_{h+1}\in\Srch_{h+1}(\Gamma)} \left(\sum_{(s_h,a_h)\in\MS\times\MA} d^{\Mbar(\Gamma),\pi}_h(s_h,a_h) \left|\frac{f_{h+1}(s_h,s_{h+1};a_h)}{1-f_{h+1}(s_h,s_{h+1};a_h)} - \frac{f_{h+1}(s_h,s^\st;a_h)}{1-f_{h+1}(s_h,s^\st;a_h)}\right|\right)  \nonumber\\ &\qquad\qquad \cdot F_{h+1}(s_{h+1}) g(\Delta(s_{h+1},s^\st)) \nonumber\\ 
&\leq \sum_{s_{h+1}\in\Srch_{h+1}(\Gamma)} \left(\max_{(s_h,a_h) \in \Srch_h(\Gamma)\times\MA} \left|\frac{f_{h+1}(s_h,s_{h+1};a_h)}{1-f_{h+1}(s_h,s_{h+1};a_h)} - \frac{f_{h+1}(s_h,s^\st;a_h)}{1-f_{h+1}(s_h,s^\st;a_h)}\right|\right) F_{h+1}(s_{h+1}) g(\Delta(s_{h+1},s^\st)) \nonumber\\ 
&\leq 2\sum_{s_{h+1}\in\Srch_{h+1}(\Gamma)} \left(\frac{\max_{(s_h,a_h) \in \Srch_h(\Gamma)\times\MA} \left|f_{h+1}(s_h,s_{h+1};a_h) - f_{h+1}(s_h,s^\st;a_h)\right|}{\min_{(s_h,a_h,s) \in \Srch_h(\Gamma)\times\MA\times\MS} (1-f_{h+1}(s_h,s;a_h))^2}\right) F_{h+1}(s_{h+1}) g(\Delta(s_{h+1},s^\st)) \nonumber\\
&\leq 2\sum_{s_{h+1}\in\Srch_{h+1}(\Gamma)} \left(\frac{\Delta(s_{h+1},s^\st)}{\min_{(s_h,a_h,s) \in \Srch_h(\Gamma)\times\MA\times\MS} (1-f_{h+1}(s_h,s;a_h))^2}\right) F_{h+1}(s_{h+1}) g(\Delta(s_{h+1},s^\st)) \nonumber\\
&\leq 8|\MA|^2\sum_{s_{h+1}\in\Srch_{h+1}(\Gamma)} \left(\frac{\Delta(s_{h+1},s^\st)}{\min_{s \in \Srch_h(\Gamma)} \beta_h(s)^2}\right) F_{h+1}(s_{h+1}) g(\Delta(s_{h+1},s^\st)) \nonumber\\
&\leq \frac{8\gamma |\MS| |\MA|^2}{\min_{s \in \Srch_h(\Gamma)} \beta_h(s)^2}
\label{eq:w-wtilde-diff}
\end{align}
where the second inequality uses the fact that $d^{\Mbar(\Gamma),\pi}_h(s_h) = 0$ for all $s_h \in \MS \setminus \Srch_h(\Gamma)$; the third inequality uses \cref{lemma:lipschitz-bound}; the fourth inequality uses the definition of $\Delta(s_{h+1},s^\st)$ together with the lemma assumption that $\Srch_h(\Gamma) \subseteq \{\phi^\st(x^{(i)}_h):i\in[m]\}$; the fifth inequality uses \cref{lemma:f-ub}; and the sixth inequality uses the fact that $F_{h+1}(s_{h+1}) \leq 1$ for all $s_{h+1} \in \MS$ (\cref{def:kinematics}) together with the bound $z \cdot g(z) \leq \gamma$ for all $z \geq 0$. 

Finally, note that by definition of $\widetilde W^\pi$ and \cref{eq:dpi-expansion-online} applied to $s^\st$, we have
\begin{align}
\widetilde W^\pi 
&=  \left(\sum_{(s_h,a_h)\in\MS\times\MA} d^{\Mbar(\Gamma),\pi}_h(s_h,a_h) \frac{f_{h+1}(s_h,s^\st;a_h)}{1-f_{h+1}(s_h,s^\st;a_h)}\right) \sum_{s_{h+1}\in\Srch_{h+1}(\Gamma)} F_{h+1}(s_{h+1}) g(\Delta(s_{h+1},s^\st)) \nonumber \\ 
&= \frac{d^{\Mbar(\Gamma),\pi}_{h+1}(s^\st)}{ F_{h+1}(s^\st)} \sum_{s_{h+1}\in\Srch_{h+1}(\Gamma)} F_{h+1}(s_{h+1})g(\Delta(s_{h+1},s^\st)) \nonumber \\ 
&= K(\Gamma,s^\st) \cdot d^{\Mbar(\Gamma),\pi}_{h+1}(s^\st) \label{eq:wtilde-expansion-online}
\end{align}
where \[K(\Gamma,s^\st) := \frac{\sum_{s_{h+1}\in\Srch_{h+1}(\Gamma)}  F_{h+1}(s_{h+1})g(\Delta(s_{h+1},s^\st))}{F_{h+1}(s^\st)}.\]
Since $s^\st \in \Srch_{h+1}(\Gamma)$ and $g(\Delta(s^\st,s^\st)) = g(0) = 1$, we have $K(\Gamma,s^\st) \geq 1$. Combining \cref{eq:rt-w-diff,eq:w-wtilde-diff,eq:wtilde-expansion-online} yields the lemma claim.
\end{proof}

We now prove \cref{thm:extend-pc-trunc-online} by combining \cref{lemma:regression-bound-online,lemma:target-error-online} with a standard guarantee for $\PSDP$ (\cref{lemma:psdp-trunc-online}). We use the assumption that $\Psi_1,\dots,\Psi_h$ are truncateed policy covers for steps $1,\dots,h$ to show that the hypotheses for \cref{lemma:target-error-online} are satisfied with high probability. We remark that $\PSDP$ naturally uses a one-context regression oracle and not two; this is why we invoke it with a one-context regression oracle $\Reg'$ obtained by reduction to two-context regression (\cref{alg:onetwo}; see \cref{prop:onetwo}).

\vspace{1em}

\begin{proof}[Proof of \cref{thm:extend-pc-trunc-online}]
For any fixed $s \in \Srch_h(\Gamma)$ and $i \in [m]$, since $\phi^\st(x_h^{(i)}) \sim \beta_h$ and $\Psi_h$ is an $\alpha$-truncated policy cover for $M$ at step $h$, we have by \cref{item:h-cov-lb} of \cref{lemma:srch-gamma-covering} that
\[\Pr[\phi^\st(x_h^{(i)}) = s] \geq \frac{\min(\alpha\trunc,\tsmall)}{2}.\]
Let $\ME_1$ be the event that for each $s \in \Srch_h(\Gamma)$ there is some $i \in [m]$ with $\phi^\st(x^{(i)}) = s$. Then
\[\Pr[\ME_1] \geq 1 - |\MS|\left(1 - \frac{\min(\alpha\trunc,\tsmall)}{2}\right)^m \geq 1-\delta\]
by the theorem assumption that $m \geq \frac{2}{\min(\alpha\trunc,\tsmall)} \log(|\MS|/\delta)$. Henceforth condition on $x_h^{(1)},\dots,x_h^{(m)}$ and suppose that $\ME_1$ holds.

Let $\ME_2$ be the event that
\[\EE_{x_{h+1}\sim\til\BO_{h+1}\beta_{h+1}} \max_{(i,a)\in [m]\times\MA} \left(\wh f_{h+1}(x_h^{(i)},x_{h+1};a) - f_{h+1}(\phi^\st(x_h^{(i)}),\phi^\st(x_{h+1});a)\right)^2 \leq 2\sqrt{\epsilon} m|\MA|.\]
By the theorem assumptions that $\Reg$ is an $\Nreg$-efficient two-context regression oracle for $\Phi$, and $N \geq \Nreg(\epsilon,\delta)$, we may apply \cref{item:reg-2-online} of \cref{lemma:regression-bound-online} (with parameter $\delta' := \sqrt{\epsilon}$) to get that $\ME_2$ occurs with probability at least $1-\delta|\MA| - \sqrt{\epsilon} m|\MA|$ over the randomness of $(\MD_a)_{a \in \MA}$ and $\Reg$. Condition on this randomness (which determines $\wh f_{h+1}$) and suppose that $\ME_2$ holds.

For each $t \in [n]$, let $\ME_3^t$ be the event that \[\max_{(i,a)\in[m]\times\MA} \left(\wh f_{h+1}(x_h^{(i)},\bar x_{h+1}^{(t)};a) - f_{h+1}(\phi^\st(x_h^{(i)}),\phi^\st(\bar x_{h+1}^{(t)}); a)\right)^2 \leq 2\epsilon^{1/4}m|\MA|.\] Since $\bar x_{h+1}^{(t)} \sim \BO_{h+1}\beta_{h+1}$, we have by Markov's inequality and $\ME_2$ that $\Pr[\lnot \ME_3^t] \leq \epsilon^{1/4}$. Define $\ME_3 := \bigcap_{t=1}^n \ME_3^t$. By the union bound, $\ME_3$ occurs with probability at least $1-n\epsilon^{1/4}$ over the randomness of $\bar x_{h+1}^{(1)},\dots,\bar x_{h+1}^{(n)}$. 

Also, for each $s \in \Srch_{h+1}(\Gamma)$, let $t(s) \in [1,n] \cup \{\infty\}$ be the infimum over $t$ such that $\phi^\st(\bar x^{(t)}_{h+1}) = s$, and let $\ME_4^s$ be the event that $t(s) < \infty$. For each $t \in [1,n]$, since $\phi^\st(\bar x^{(t)}_{h+1})$ has distribution $\beta_{h+1}$, we have 
\begin{align*}
\Pr[\phi^\st(\bar x^{(t)}_{h+1})=s] 
&\geq \frac{\min(\alpha\trunc,\tsmall)}{2|\MA|} \max_{\pi\in\Pi} d^{\Mbar(\Gamma),\pi}_{h+1}(s) \\ 
&\geq \frac{\min(\alpha\trunc,\tsmall)}{2|\MA|}  \max_{\pi\in\Pi} d^{\Mbar(\emptyset),\pi}_{h+1}(s) \\ &\geq \frac{\min(\alpha\trunc,\tsmall)\trunc}{2|\MA|} \end{align*}
where the first inequality is by \cref{item:h-plus-one-cov-lb} of \cref{lemma:srch-gamma-covering}; the second inequality is by \cref{fact:gamma-monotonicity}; and the third inequality is by \cref{fact:trunc-reachability}.
Thus, $\Pr[\lnot \ME_4^s] \leq (1-\frac{\min(\alpha\trunc,\tsmall)\trunc}{2|\MA|})^n \leq \delta/|\MS|$ for any fixed $s \in \Srch_{h+1}$, by the theorem assumption that $n \geq \frac{2|\MA|}{\min(\alpha\trunc,\tsmall)\trunc} \log(|\MS|/\delta)$. Define $\ME_4 := \bigcap_{s \in \Srch_{h+1}} \ME_4^s$. By the union bound, $\ME_4$ occurs with probability at least $1-\delta$ over the randomness of $\bar x_{h+1}^{(1)},\dots,\bar x_{h+1}^{(n)}$. Condition on $\bar x_{h+1}^{(1)},\dots,\bar x_{h+1}^{(n)}$ and suppose that $\ME_3 \cap \ME_4$ holds. 

Finally, for each $t \in \Tclus$ let $\ME_5^t$ be the event that
\[\E^{M,\wh \pi^{(t)}}[\MR^{(t)}(x_{h+1})] \geq \max_{\pi\in\Pi} \E^{\Mbar(\Gamma),\pi}[\MR^{(t)}(x_{h+1})] - \frac{4H\sqrt{|\MA|\epsilon}}{\min(\alpha\trunc,\tsmall)}\]
where $\pihat^{(t)}$ is defined on \lineref{line:psdp-call-online} of \cref{alg:extend-pc-online}. Since $\Reg$ is an $\Nreg$-efficient two-context regression oracle, \cref{prop:onetwo} implies that $\Reg'$ is an $\Nreg$-efficient one-context regression oracle. Hence, by the theorem assumption on $\Psi_{1:h}$, and the fact that $N \geq \Nreg(\epsilon,\delta)$, \cref{lemma:psdp-trunc-online} gives that $\Pr[\ME_5^t] \geq 1-H|\MA|\delta$, so $\Pr[\ME_5] \geq 1-HAn\delta$ where $\ME_5 := \cap_{t \in [n]} \ME_5^t$. Condition on $\ME_5$. We have now restricted to an event of total probability at least $1 - (2+|\MA|+H|\MA|n)\delta - m|\MA|\epsilon^{1/2} - n\epsilon^{1/4}$; we argue that in this event, the properties claimed in the theorem statement hold.

\paragraph{Size of $\Psi_{h+1}$.} First, we argue that $|\Psi_{h+1}| \leq |\MS|$. Indeed, suppose that there are $t,t' \in \Tclus$ with $t<t'$ and $\phi^\st(\bar x_{h+1}^{(t)}) = \phi^\st(\bar x_{h+1}^{(t')})$. By \lineref{line:cluster-threshold-online} of \cref{alg:extend-pc-online}, and by choice of $\gamma'$, we know that 
\[\max_{(i,a) \in [m]\times\MA} \left|\wh f_{h+1}(x_h^{(i)},\bar x_{h+1}^{(t)};a) - \wh f_{h+1}(x_h^{(i)},\bar x_{h+1}^{(t')};a)\right| > \gamma' = 4\epsilon^{1/8}\sqrt{m|\MA|}.\] But this contradicts $\ME_3$ (in particular, the bounds implied by $\ME_3^t$ and $\ME_3^{t'}$ together with the triangle inequality and the fact that $\phi^\st(\bar x_{h+1}^{(t)}) = \phi^\st(\bar x_{h+1}^{(t')})$). We conclude that indeed $|\Psi_{h+1}| \leq |\MS|$.

\paragraph{Coverage of $\Psi_{h+1}$.} It remains to prove the second property of the theorem statement. Fix $s \in \Srch_{h+1}(\emptyset)$. Define $t^\st(s) := t(s)$ if $t(s) \in \Tclus$. Otherwise, let $t^\st(s)$ be the minimal $t \in \Tclus$ such that $\max_{i \in [m]} |\MR_i(\bar x_{h+1}^{(t(s))})-\MR_i(\bar x_{h+1}^{(t)})| \leq \gamma'$ (which exists by \lineref{line:cluster-threshold-online}). In either case, we know that $t^\st(s) \in \Tclus$, and \begin{equation}
\max_{(i,a) \in [m]\times\MA} |\wh f_{h+1}(x_h^{(i)},\bar x_{h+1}^{(t(s))};a)-\wh f_{h+1}(x_h^{(i)},\bar x_{h+1}^{(t^\st(s))};a)| \leq \gamma'.
\label{eq:t-tst-close}
\end{equation}

By $\ME_1$, $\ME_2$, and \cref{item:h-plus-one-cov-lb} of \cref{lemma:srch-gamma-covering}, we may apply \cref{lemma:target-error-online} with target state $s^\st := s$, index $t := t^\st(s)$, and parameters $\epreg := 2\sqrt{\epsilon}m|\MA|$ and $\tilde \alpha := \frac{\min(\alpha\trunc,\tsmall)}{2|\MA|}$. We get that there is some $K(\Gamma,s) \geq 1$ such that
\begin{align}
&\max_{\pi\in\Pi} \left|\E^{\Mbar(\Gamma),\pi}[\MR^{(t^\st(s))}(x_{h+1})] - K(\Gamma,s) \cdot d^{\Mbar(\Gamma),\pi}_{h+1}(s)\right|  \nonumber\\ 
&\leq \frac{8\gamma|\MS||\MA|^2}{\min_{s'\in\Srch_h(\Gamma)}\beta_h(s')^2} + \frac{\sqrt{\epreg}}{\tilde \alpha \gamma} + \frac{\max_{(i,a)\in[m]\times\MA}\left|\wh f_{h+1}(x_h^{(i)},\bar x_{h+1}^{(t^\st(s))};a) - f_{h+1}(\phi^\st(x_h^{(i)}),s;a)\right|}{\gamma} \nonumber\\
&\leq \frac{32\gamma|\MS||\MA|^2}{\min(\alpha^2\trunc^2,\tsmall^2)} + \frac{4\epsilon^{1/4}|\MA|\sqrt{m|\MA|}}{\gamma \cdot \min(\alpha\trunc,\tsmall)} \nonumber\\
&\qquad+ \frac{\max_{(i,a)\in[m]\times\MA}\left|\wh f_{h+1}(x_h^{(i)},\bar x^{(t^\st(s))}_{h+1};a) - f_{h+1}(\phi^\st(x_h^{(i)}),s;a)\right|}{\gamma} \nonumber \\
&\leq \frac{32\gamma|\MS||\MA|^2}{\min(\alpha^2\trunc^2,\tsmall^2)} + \frac{4\epsilon^{1/4}|\MA|\sqrt{m|\MA|}}{\gamma \cdot \min(\alpha\trunc,\tsmall)} \nonumber \\ 
&\qquad+ \frac{\epsilon^{1/8}\sqrt{2m|\MA|}+\max_{(i,a)\in[m]\times\MA}\left|\wh f_{h+1}(x_h^{(i)},\bar x^{(t^\st(s))}_{h+1};a) - \wh f_{h+1}(x_h^{(i)},\bar x_{h+1}^{(t(s))};a)\right|}{\gamma} \nonumber \\
&\leq \frac{32\gamma|\MS||\MA|^2}{\min(\alpha^2\trunc^2,\tsmall^2)} + \frac{4\epsilon^{1/4}|\MA|\sqrt{m|\MA|}}{\gamma \cdot \min(\alpha\trunc,\tsmall)} + \frac{\epsilon^{1/8}\sqrt{2m|\MA|}+\gamma'}{\gamma} \nonumber \\
&\leq \frac{32\epsilon^{1/16}|\MS||\MA|^2}{\min(\alpha^2\trunc^2,\tsmall^2)} + \frac{4\epsilon^{3/16}|\MA|\sqrt{m|\MA|}}{\min(\alpha\trunc,\tsmall)} + 6\epsilon^{1/16}\sqrt{m|\MA|} \nonumber \\
&\leq \trunc^2,\label{eq:rkd-error-online}
\end{align}
where the second inequality is by \cref{item:h-cov-lb} of \cref{lemma:srch-gamma-covering}; the third inequality is by $\ME_3$ and the fact that $s = \phi^\st(\bar x_{h+1}^{(t(s))})$; the fourth inequality is by \cref{eq:t-tst-close}; the fifth inequality is by choice of $\gamma,\gamma'$; and the final inequality is by \cref{eq:param-assm-online}. We now distinguish two cases:

\paragraph{Case I.} Suppose that 
\[\E^{M,\pihat^{(t^\st(s))}}[\MR^{(t^\st(s))}(x_{h+1})] \geq \E^{\Mbar(\Gamma),\pihat^{(t^\st(s))}}[\MR^{(t^\st(s))}(x_{h+1})] + \trunc^2.\]
Then
\begin{align*}
&d^{\Mbar(\Gamma),\pihat^{(t^\st(s))}}_{h+1}(\term) \\ 
&= \sum_{s \in \MS} \left(d^{M,\pihat^{(t^\st(s))}}_{h+1}(s) - d^{\Mbar(\Gamma),\pihat^{(t^\st(s))}}_{h+1}(s))\right) \\ 
&\geq \sum_{s \in \MS} \left(d^{M,\pihat^{(t^\st(s))}}_{h+1}(s) - d^{\Mbar(\Gamma),\pihat^{(t^\st(s))}}_{h+1}(s))\right) \EE_{x\sim\BO_{h+1}(\cdot|s)}[\MR^{(t^\st(s))}(x)] \\ 
&\geq \trunc^2
\end{align*}
where the equality is by the fact that $d^{M,\pihat^{(t^\st(s))}}_{h+1}(\cdot)$ is a distribution supported on $\MS$; the first inequality uses \cref{fact:gamma-monotonicity} and the fact that $\MR^{(t^\st(s))}(x) \leq 1$ for all $x \in \MX$. Thus $\max_{\pi\in\Psi_{h+1}} d^{\Mbar(\Gamma),\pi}(\term) \geq \trunc^2$, so the second property of the theorem statement is satisfied.

\paragraph{Case II.} Suppose that
\begin{equation} \E^{M,\pihat^{(t^\st(s))}}[\MR^{(t^\st(s))}(x_{h+1})] < \E^{\Mbar(\Gamma),\pihat^{(t^\st(s))}}[\MR^{(t^\st(s))}(x_{h+1})] + \trunc^2.\label{eq:m-mbar-closeness-online}\end{equation}
Now, 
\begin{align*}
K(\Gamma, s) \cdot d^{\Mbar(\Gamma), \pihat^{t^\st(s)}}_{h+1}(s) 
&\geq \E^{\Mbar(\Gamma),\pihat^{t^\st(s)}}[\MR^{(t^\st(s))}(x_{h+1})] - \trunc^2 \\ 
&\geq \E^{M,\pihat^{t^\st(s)}}[\MR^{(t^\st(s))}(x_{h+1})] - 2\trunc^2 \\ 
&\geq \max_{\pi\in\Pi} \E^{\Mbar(\Gamma),\pi}[\MR^{(t^\st(s))}(x_{h+1})] -  3\trunc^2 \\ 
&\geq \max_{\pi\in\Pi} K(\Gamma,s) \cdot d^{\Mbar(\Gamma),\pi}_{h+1}(s) -  4\trunc^2 \\
&\geq K(\Gamma,s) (1-4\trunc) \max_{\pi\in\Pi} d^{\Mbar(\Gamma),\pi}_{h+1}(s)
\end{align*}
where the first inequality is by \cref{eq:rkd-error-online}; the second inequality is by \cref{eq:m-mbar-closeness-online}; the third inequality is by $\ME_5$ and \cref{eq:param-assm-online}; the fourth inequality is by \cref{eq:rkd-error-online}; and the fifth inequality uses the fact that $\max_{\pi\in\Pi} d^{\Mbar(\Gamma),\pi}_{h+1}(s) \geq \trunc$ (\cref{fact:trunc-reachability} together with \cref{fact:gamma-monotonicity} and the fact that $s \in \Srch_{h+1}(\emptyset)$) and the bound $K(\Gamma,s) \geq 1$ (\cref{lemma:target-error-online}). Thus, since $\pihat^{(t^\st(s))} \in \Psi_{h+1}$, we have
\begin{align}
\max_{\pi\in\Psi_{h+1}} d^{M,\pi}_{h+1}(s) 
&\geq \max_{\pi\in\Psi_{h+1}} d^{\Mbar(\Gamma),\pi}_{h+1}(s) \nonumber\\ 
&\geq (1-4\trunc)\max_{\pi\in\Pi} d^{\Mbar(\Gamma),\pi}_{h+1}(s) \nonumber\\ 
&\geq (1-4\trunc)\max_{\pi\in\Pi} d^{\Mbar(\emptyset),\pi}_{h+1}(s)
\label{eq:coverage-final-online}
\end{align}
by two applications of \cref{fact:gamma-monotonicity}. Now recall that $s \in \Srch_{h+1}(\emptyset)$ was arbitrary. Moreover, if $s \in \MS\setminus \Srch_{h+1}(\emptyset)$ then $\max_{\pi\in\Pi} d^{\Mbar(\emptyset),\pi}_{h+1}(s) = 0$, so the inequality \cref{eq:coverage-final-online} still holds. We conclude that $\Psi_{h+1}$ is a $(1-4\trunc)$-truncated max policy cover for $M$ at step $h+1$ (\cref{defn:trunc-max-pc}), as needed.
\end{proof}

\subsection{Analysis of $\PCO$ (\creftitle{alg:pco})}\label{sec:pco-analysis}

We now complete the proof of \cref{thm:pco-app}. As previously discussed, \cref{thm:extend-pc-trunc-online} shows that in each round of $\PCO$, either a set of policy covers was constructed, or a new state was discovered. The latter can happen at most $H|\MS|-1$ times, so at least one round must construct a good set of policy covers. In the latter event, the union of all sets produced across all rounds is itself a good policy cover (so long as the individual sets have bounded size). 
Of course, since the guarantee of \cref{thm:extend-pc-trunc-online} is probabilistic, some additional care is needed. We make the argument formal below.

\vspace{1em}

\begin{proof}[Proof of \cref{thm:pco-app}]
Fix the remaining inputs $\epfinal,\delta>0$ to $\PCO(\Reg,\Nreg,|\MS|,\cdot)$. The oracle time complexity bound and bound on $\Nrl$ are clear from the parameter choices and pseudocode, so long as $C_{\ref{thm:pco-app}}$ is a sufficiently large constant. Moreover, it is immediate from the algorithm description that $|\Psi| \leq HR|\MS| \leq H^2|\MS|^2$. In order to show that the algorithm is $(\Nrl,\Krl)$-efficient, it remains to argue that with probability at least $1-\delta$, \cref{eq:rfrl-pc} holds for all $h \in [H]$ and $s \in \MS$, with parameter $\epfinal$.

Recall that $\trunc = \epfinal/(4+H|\MS|)$. Fix some $1 \leq r \leq R$. For convenience, write $\alpha := \frac{1-4\trunc}{|\MS|}$. For each $h \in [H]$, let $\ME_{h,r}$ be the event that $|\Psi_h^{(r)}| \leq |\MS|$ and $\Psi_h^{(r)}$ is a $(1-4\trunc)$-truncated max policy cover for $M$ at step $h$; let $\MF_{h,r}$ be the event that $|\Psi_{h}^{(r)}| \leq |\MS|$ and $\max_{\pi\in\Psi_{h}^{(r)}} d^{\Mbar(\Gamma^{(r)}),\pi}_{h}(\term) \geq \trunc^2$. It's clear that $\Pr[\ME_{1,r}] = 1$ (since $|\Psi_1^{(r)}| = 1$ and $d^{M,\piunif}_1(s) = d^{M,\pi}_1(s)$ for all $s\in\MS$ and $\pi \in \Pi$). Also, note that in the event $\ME_{k,r}$, we have that $\Psi_k^{(r)}$ is an $\alpha$-truncated policy cover for $M$ at step $k$. Thus, by \cref{thm:extend-pc-trunc-online} and choice of parameters (so long as $C_{\ref{thm:pco-app}}$ is a sufficiently large constant), we have for each $h \in \{2,\dots,H\}$ that
\begin{equation} \Pr\left[\lnot (\ME_{h,r} \cup \MF_{h,r}) \cap \bigcap_{1 \leq k < h} \ME_{k,r}\right] \leq \Pr\left[\lnot (\ME_{h,r} \cup \MF_{h,r}) \middle | \bigcap_{1 \leq k < h} \ME_{k,r}\right] \leq \frac{\delta}{HR}.\label{eq:epco-guarantee-symbolic}\end{equation}
In the event that the events $\MF_{1,r},\dots,\MF_{H,r}, \bigcap_{h\in[H]} \ME_{h,r}$ all fail, there is always some maximal $h \in [H]$ such that $\bigcap_{1 \leq k \leq h} \ME_{k,r}$ holds (since $\ME_{1,r}$ always holds); it must be that $1 \leq h<H$, and $\ME_{h+1,r}$ and $\MF_{h+1,r}$ both fail. Thus,
\[\Pr\left[\left(\lnot \bigcap_{h \in [H]} \ME_{h,r}\right) \cap \bigcap_{h \in [H]} \left(\lnot \MF_{h,r}\right)\right] \leq \sum_{h=2}^H \Pr\left[\lnot (\ME_{h,r} \cup \MF_{h,r}) \cap \bigcap_{1 \leq k < h} \ME_{k,r}\right] \leq \frac{\delta}{R}\]
where the final inequality is by \cref{eq:epco-guarantee-symbolic}. Let $\ME_r$ denote the complementary event (i.e. either $\bigcap_{h \in [H]} \ME_{h,r}$ holds, or there is some $h \in [H]$ such that $\MF_{h,r}$ holds), and let $\ME := \bigcap_{1 \leq r \leq R} \ME_r$; we have $\Pr[\ME] \geq 1-\delta$. We claim that $\PCO$ succeeds under event $\ME$. Indeed, there are two cases to consider.

\begin{enumerate}
\item In the first case, there is some $r \in [R]$ such that $\bigcap_{h \in [H]} \ME_{h,r}$ holds. Then for each $h \in [H]$, $|\Psi_h^{(r)}| \leq |\MS|$ and $\Psi_h^{(r)}$ is a $(1-4\trunc)$-truncated max policy cover for $M$ at step $h$. Thus, $\bigcup_{1 \leq r' \leq R: |\Psi_h^{(r')}| \leq |\MS|} \Psi_h^{(r')} \subseteq \Psi$ is also a $(1-4\trunc)$-truncated max policy cover for $M$ at step $h$. Therefore for each $h \in [H]$ and $s \in \MS$, we have by \cref{defn:trunc-max-pc} that
\begin{align*}
\max_{\pi'\in\Psi} d^{M,\pi'}_h(s)
&\geq (1-4\trunc) \max_{\pi\in\Pi} d^{\Mbar(\emptyset),\pi}_h(s) \\ 
&\geq \max_{\pi \in\Pi} d^{\Mbar(\emptyset),\pi}_h(s) - 4\trunc \\ 
&\geq \max_{\pi \in \Pi} d^{M,\pi}_h(s) - (4 + H|\MS|)\trunc
\end{align*}
where the final inequality is by \cref{lemma:term-ub}. Since $\trunc = \epfinal/(4+H|\MS|)$, this bound suffices.
\item In the second case, for each $r \in [R]$, there is some $h \in [H]$ such that $\MF_{h,r}$ holds. For each $r$, define
\[\MV^{(r)} := \left\{(s,h) \in \MS\times[H]: \max_{\pi\in\Gamma^{(r)}} d^{M,\pi}_h(s) \geq \frac{\trunc^2}{|\MS|H}\right\}.\]
Fix any $r \in [R]$. By assumption, there is some $h \in [H]$ so that $\MF_{h,r}$ holds. For this choice of $h$, we have by definition of $\MF_{h,r}$ that 
\[\max_{\pi \in \Psi_{h}^{(r)}} d^{\Mbar(\Gamma),\pi}_h(\term) \geq \trunc^2.\] 
By \cref{lemma:term-prob}, there is some $(s,k) \in (\MS\setminus \Srch_k(\Gamma))\times[h]$ such that \[\max_{\pi\in\Psi_h^{(r)}} d^{M,\pi}_k(s) \geq \frac{\trunc^2}{|\MS|H}.\] Thus $(s,k) \in \MV^{(r+1)}$. Moreover, since $s \not \in \Srch_k(\Gamma^{(r)})$, we have $\E_{\pi\sim\Unif(\Gamma^{(r)})} d^{M,\pi}_k(s) < \tsmall$. Using the fact that $|\Gamma^{(r)}| \leq RH|\MS|$ and choice of $\tsmall$, it follows that \[\max_{\pi\in\Gamma^{(r)}} d^{M,\pi}_k(s) < RH|\MS|\tsmall \leq \frac{\trunc^2}{|\MS|H}.\] So $(s,k) \not \in \MV^{(r)}$. We conclude that $|\MV^{(r+1)}| > |\MV^{(r)}|$. Since this inequality holds for all $r \in [R]$ and $|\MV^{(1)}| \geq H$, we get $|\MV^{(R)}| \geq H + R - 1 > |\MS|H$. Contradiction, so in fact this second case cannot occur.
\end{enumerate}
This completes the proof.
\end{proof}


\newpage

\section{Proof of \creftitle{cor:regression-to-online-rl}}\label{sec:app_minimality}


In this section we prove \cref{cor:regression-to-online-rl}, restated below. This theorem asserts that for any \emph{regular} (\cref{def:regular}) concept class $\Phiaug$, there is a reduction $\RegToRL$ (\cref{alg:regtorl}) from two-context regression to reward-free episodic RL.

\regtorl*

Any regular concept class can be defined by augmenting some base class $\Phi$ as specified in \cref{def:phiaug}. Accordingly, the main component of $\RegToRL$ is a reduction $\TwoRed$ (\cref{alg:twored}) from two-context regression over $\Phi$ to reward-free episodic RL over $\Phiaug$. The full reduction $\RegToRL$ simply applies $\TwoRed$ in conjunction with a reduction $\TwoAug$ (\cref{alg:twoaug}) from two-context regression over $\Phiaug$ to two-context regression over $\Phi$.

Henceforth, fix sets $\MS,\MX$ and a concept class $\Phi \subset (\MX\to\MS)$. We may define an augmented concept class $\Phiaug \subseteq (\Xaug\to\Saug)$ as follows (in \cref{sec:regtorl} we will formally argue that any regular concept class can be expressed in this way):

\begin{definition}[Augmented concept class]\label{def:phiaug}
Define augmented state space $\Saug := \MS \sqcup \{0,1\}$ and augmented observation space $\Xaug := \MX \sqcup \{0,1\}$. For each $\phi \in \Phi$ define $\aug(\phi): \Xaug \to \Saug$ by
\[\aug(\phi)(x) := \begin{cases} \phi(x) & \text { if } x \in \MX \\ x & \text { otherwise } \end{cases}.\]
Finally, define an extended function class $\Phiaug := \{\aug(\phi): \phi\in\Phi\}$.
\end{definition}

In \cref{sec:regtorl-overview} we give pseudocode and an overview of $\RegToRL$ and its main subroutine $\TwoRed$. In \cref{sec:twored} we formally analyze $\TwoRed$. In \cref{sec:regtorl} we use this to analyze $\RegToRL$, completing the proof of \cref{cor:regression-to-online-rl}. We defer the analysis of $\TwoAug$ to \sssref{sec:twoaug}.

\subsection{$\RegToRL$ Pseudocode and Overview}\label{sec:regtorl-overview}

We start by giving a brief overview of the reduction $\RegToRL$ (\cref{alg:regtorl}); for additional intuition, see also the discussion in \cref{sec:episodic-nec}. The main subroutine is $\TwoRed$ (\cref{alg:twored}), which is used to reduce two-context regression over $\Phi$ to reward-free RL over $\Phiaug$. We start with an overview of this subroutine, and then briefly discuss $\TwoAug$ (\cref{alg:twoaug}), the reduction from two-context regression over $\Phiaug$ to two-context regression over $\Phi$. The full reduction $\RegToRL$ is a direct combination of these two subroutines.

\subsubsection{$\TwoRed$: Simulating an RL Oracle for Two-Context Regression}

\paragraph{Algorithm overview.} As shown in \cref{alg:twored}, $\TwoRed$ takes as input a reward-free episodic RL oracle for $\Phiaug$, a dataset $(x_1^{(i)},x_2^{(i)},y^{(i)})_{i=1}^n$ for two-context regression over $\Phi$, and accuracy parameters $\epsilon,\delta>0$. The goal is to produce and estimate $\MR:\MX\times\MX\to[0,1]$ of the Bayes optimal predictor $\EE[y^{(i)}\mid{}x_1^{(i)},x_2^{(i)}]$ for the dataset. To this end, the first step is to simulate the RL oracle on an MDP with horizon $H=2$, initial observation space $\MX$, final observation space $\MX\sqcup\{0\}$, and action space $\MA \subset [0,1]$. In particular, $\TwoRed$ uses a new sample $(x_1^{(i)},x_2^{(i)},y^{(i)})$ from the dataset for each episode of interaction---this is where it is crucial that the RL oracle does not have reset access. The first observation is $x_1^{(i)}$. When the oracle returns an action $a \in \MA \subset [0,1]$, the second observation is $x_2^{(i)}$ with probability $1-(a-y^{(i)})^2$ and $0$ otherwise.

\begin{algorithm}[t]
    \caption{$\RegToRL(\MO,(x_1^{(i)},x_2^{(i)},y^{(i)})_{i=1}^n,\epsilon,\delta)$: Two-context regression-to-RL reduction}
    \label{alg:regtorl}
	\begin{algorithmic}[1]\onehalfspacing
          \State \textbf{input:} Reward-free episodic RL oracle $\MO$ for $\Phi$; samples $(x_1^{(i)},x_2^{(i)},y^{(i)})_{i=1}^n$; tolerances $\epsilon,\delta$.
		\State \textbf{return:} \[\MR \gets \TwoAug(\TwoRed(\MO,\cdot),(x_1^{(i)},x_2^{(i)},y^{(i)})_{i=1}^n,\epsilon,\delta).\]
	\end{algorithmic}
\end{algorithm}

\begin{algorithm}[t]
	\caption{$\TwoRed(\MO,(x_1^{(i)},x_2^{(i)},y^{(i)})_{i=1}^n,\epsilon,\delta)$: Main subroutine in $\RegToRL$}
	\label{alg:twored}
	\begin{algorithmic}[1]\onehalfspacing
          \State \textbf{input:} Reward-free episodic RL oracle $\MO$ for $\Phiaug$; samples $(x_1^{(i)},x_2^{(i)},y^{(i)})_{i=1}^n$; tolerances $\epsilon,\delta$.
		\State Set $\epa := \frac{\epsilon}{2|\MS|}$. Initialize $\MO$ with $H = 2$ and $\MA = \{0,\epa,\dots,1-\epa\}$ and tolerances $\frac{\epsilon^2}{4|\MS|^2},\delta/2$.\label{line:initialize-oracle}
        \Repeat
            \State Simulate episode $i$ of interaction with $\MO$ as follows: pass observation $x_1^{(i)}$ and receive action $a \in \MA$. With probability $(a - y^{(i)})^2$, pass observation $0$. Otherwise, pass observation $x_2^{(i)}$.
        \Until{$\MO$ returns policy cover $\Psi$}\label{line:oracle-policy-cover}
        \For{$\pi \in \Psi$}
            \State $\MC^\pi \gets \emptyset$
            \For{$n/2 + 1 \leq i \leq n$}
                \State Sample $z^{(i)} \sim \Ber((\pi(x_1^{(i)}) - y^{(i)})^2)$.
                \State $\MC^\pi \gets \MC^\pi \cup \{(x_2^{(i)}, z^{(i)}\}$.
            \EndFor 
            \State Compute $\MR^\pi \gets \OneRed(\MO, \MC^\pi)$.\Comment{See \cref{alg:onered}}
        \EndFor
        \State \textbf{return:} predictor $\MR: \MX \times \MX \to [0,1]$ defined as $\MR(x_1,x_2) := (\argmin_{\pi \in \Psi} \MR^\pi(x_2))(x_1)$, where ties are broken in some canonical fashion.
	\end{algorithmic}
\end{algorithm}

Eventually, the oracle produces a set of policies $\Psi$. Each policy is a map $\pi: \MX\to[0,1]$. To ``stitch'' these into a single predictor on $\MX\times\MX$, $\TwoRed$ estimates an error function for each. In particular, using fresh samples, the algorithm constructs a dataset consisting of samples $(x_2^{(i)}, z^{(i)})$ where $\EE[z^{(i)}\mid{} x_2^{(i)}] = \EE[(\pi(x_1^{(i)})-y^{(i)})^2\mid{} x_2^{(i)}]$ measures the error of $\pi$ on $x_1^{(i)}$, conditional on $x_2^{(i)}$. Applying a one-context regression oracle to this dataset (via \cref{alg:onered}, the reduction from one-context regression to reward-free RL) gives an estimated error function $\MR^\pi:\MX\to[0,1]$. 

Finally, the predictor $\MR$ output by $\TwoRed$ is defined as follows. Given $(x_1,x_2) \in \MX\times\MX$, identify the policy $\pi\in\Psi$ that minimizes $\MR^\pi(x_2)$, and return $\pi(x_1)$.

\paragraph{Proof outline.} As discussed in \cref{sec:episodic-nec}, the basic idea for the analysis of $\TwoRed$ is as follows. Let $M$ denote the simulated MDP, let $\phist$ denote the decoding function for the dataset, and let $f:\MS\times\MS\to[0,1]$ denote the true latent predictor for the dataset. It can be checked that $M$ satisfies $\Phiaug$-decodability (\cref{lemma:phiaug-decodable}). For any latent state $s \in \MS$, a policy $\pi$ that approximately maximizes $d^{M,\pi}_2(s)$ must optimally ``guess'' $y^{(i)}$ conditioned on both $x_1^{(i)}$ and the event $\phist(x_2^{(i)}) = s$; thus, as shown in \cref{lemma:visitation-diffs}, such a $\pi$ must approximately \emph{minimize} the following loss:
\[L_s(\pi) := \EE_{(x_1,x_2) \sim \MD} \left[\mathbbm{1}[\phi^\st(x_2) = s] (\pi(x_1) - f(\phi^\st(x_1),\phi^\st(x_2)))^2\right].\]
By the definition of reward-free RL, it follows that for each $s \in \MS$, there exists some $\pi \in \Psi$ such that $L_s(\pi)$ is small (\cref{lemma:psi-guarantee}), i.e. $\pi$ is a good approximation of $x_1 \mapsto f(\phist(x_1),s)$. 

It remains to argue that the policy selection procedure in the definition of the final predictor $\MR$ (i.e. picking the policy $\pihat^{x_2}$ that minimizes the estimated error $\MR^\pi(x_2)$) appropriately identifies \emph{which} policy is good for a given covariate $(x_1,x_2)$. Indeed, as shown in \cref{lemma:rpi-error}, for fixed $x_2$, $\MR^\pi(x_2)$ is monotonic in $L_{\phist(x_2)}(\pi)$ as $\pi$ varies. Thus, intuitively, the selection procedure makes sense. The remaining technical subtlety is that there is an apparent distributional mismatch. We know that the following loss is small, for each $x_2$ with $\phist(x_2)=s$:
\[L_s(\pihat^{x_2}) = \EE_{(x_1',x_2') \sim \MD} \left[\mathbbm{1}[\phi^\st(x_2') = s] (\pihat^{x_2}(x_1') - f(\phi^\st(x_1'),\phi^\st(x_2')))^2\right].\]
However, to bound the squared error of $\MR$, we would like to bound the following quantity:
\[\EE_{(x_1',x_2') \sim \MD}\left[ \mathbbm{1}[\phi^\st(x_2') = s] (\pihat^{x_2'}(x_1') - f(\phi^\st(x_1'),\phi^\st(x_2')))^2\right],\]
i.e. where context $x_2'$ in the squared error term is the same as the context used to select the policy $\pihat^{x_2'}$. However, it turns out that these two quantities can be related in expectation over $x_2$, using $\phist$-realizability of the context distribution (\cref{lemma:switching}).

We formally analyze $\TwoRed$ in \cref{sec:twored}; see \cref{thm:two-context-reduction} for the formal guarantee.

\subsubsection{$\TwoAug$: Augmenting the regression}

\paragraph{Algorithm overview.} As shown in \cref{alg:twoaug}, $\TwoAug$ takes as input a two-context regression oracle for $\Phi$, samples $(x_1^{(i)},x_2^{(i)},y^{(i)})_{i=1}^n$, and tolerance parameters $\epsilon,\delta\in(0,1/2).$ The goal is to solve two-context regression for $\Phiaug$. The basic idea is that since the extra states $\{0,1\}$ are fully observed, regression onto these states is actually simpler than regression onto the hidden states. In more detail, consider all indices $i$ for which $x_1^{(i)} = 0$ but $x_2^{(i)} \in \MX$. We can define a reduced dataset consisting of the samples $(\xbar, x_2^{(i)})$ for each such $i$, where $\xbar\in\MX$ is an arbitrary fixed observation. It's straightforward to check that this dataset is a valid two-context regression dataset for $\Phi$, so the oracle yields a good predictor for the corresponding conditional distribution (as long as there is enough data). We can do a similar argument for all other cases (e.g. $x_1^{(i)} \in \MX$ and $x_2^{(i)} = 1$, etc.), and it is straightforward to stitch together the predictors on subsets of $\Xaug\times\Xaug$ into a single predictor for the whole covariate space.

We defer the formal analysis of $\TwoAug$ to \sssref{sec:twoaug}; see \cref{prop:twoaug} for the formal guarantee.

\subsection{Analysis of $\TwoRed$ (\creftitle{alg:twored})}\label{sec:twored}

The following theorem states our main guarantee for $\TwoRed$. As discussed above, $\TwoRed$ uses the given dataset to simulate the RL oracle on a horizon-$2$ block MDP for which exploring a latent state $s$ with near-maximal probability corresponds to learning the regression function on a subset of the covariate distribution determined by $s$. Once the oracle produces a policy cover $\Psi$, $\TwoRed$ performs one-context regression to learn a loss function $\MR^\pi$ associated with each policy $\pi \in \Psi$, and then outputs a regressor obtained by stitching together the policies (according to whichever has the best loss on the given query).

\begin{theorem}\label{thm:two-context-reduction}
Suppose that $\MO$ is an $(\Nrl,\Krl)$-efficient reward-free episodic RL oracle (\cref{def:strong-rf-rl}) for $\Phiaug$. Then $\TwoRed(\MO,\cdot)$ is a $\Nreg$-efficient two-context regression algorithm (\cref{def:two-con-regression}) for $\Phi$ with
\[\Nreg(\epsilon,\delta) := 2\Nrl\left(\frac{\epsilon^4}{16|\MS|^4},\frac{\delta}{4K'},2,\frac{4|\MS|^2}{\epsilon^2}\right) + 64|\MS|^8\epsilon^{-8}\log\left(4K'\Krl\left(\frac{\epsilon^4}{16|\MS|^4},\frac{\delta}{4K'},2,\frac{4|\MS|^2}{\epsilon^2}\right)/\delta\right) \]
where $K' = \Krl(\frac{\epsilon^2}{4|\MS|^2},\frac{\delta}{2},2,\frac{2|\MS|}{\epsilon})$.
\end{theorem}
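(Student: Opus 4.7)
}

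The plan is to execute the reduction analysis in four stages, using a sample split between the RL simulation (first $n/2$ samples) and the loss estimation (last $n/2$ samples). \emph{Stage 1 (simulation validity):} I would verify that the horizon-$2$ process simulated in the loop ending at \lineref{line:oracle-policy-cover} is a $\Phiaug$-decodable Block MDP with decoder $\aug(\phi^\st)$. The latent initial distribution on $\Saug$ is the pushforward of the marginal of $x_1^{(i)}$ under $\phi^\st$; the latent transition into step $2$ at state $s\in\MS$ and action $a\in\MA$ is determined by $\phi^\st$-realizability of $\MD$ together with the property $\EE[y^{(i)}\mid{}x_1^{(i)},x_2^{(i)}] = f(\phi^\st(x_1^{(i)}),\phi^\st(x_2^{(i)}))$. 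The augmentation states $\{0,1\} \subseteq \Saug$ are fully observed by construction.

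\emph{Stage 2 (visitation as restricted loss):} For each policy $\pi$ and each latent state $s\in\MS$, I would expand
\[d_2^{M,\pi}(s) = \EE\bigl[\mathbbm{1}[\phi^\st(x_2)=s]\bigl(1-(\pi(x_1)-y)^2\bigr)\bigr] = \Pr[\phi^\st(x_2)=s]\,(1-\ell_s(\pi)-V_s),\]
where $\ell_s(\pi) := \EE_{x_1\sim\nu_s}[(\pi(x_1) - f(\phi^\st(x_1),s))^2]$ for $\nu_s$ the law of $x_1$ conditional on $\phi^\st(x_2)=s$, and $V_s$ is a $\pi$-independent variance. The unrestricted minimum of $\ell_s$ is attained at the Bayes predictor $x_1 \mapsto f(\phi^\st(x_1),s)$; discretizing actions to $\MA$ yields minimum at most $\epa^2$. \emph{Stage 3 (existence of good policies in $\Psi$):} The reward-free RL guarantee of $\MO$, invoked with tolerance $\epsilon^2/(4|\MS|^2)$, then implies that with probability $\geq 1-\delta/2$ there exists, for each $s\in\MS$, some $\pi^\st_s\in\Psi$ with $L_s(\pi^\st_s) := \Pr[\phi^\st(x_2)=s]\cdot\ell_s(\pi^\st_s) \leq \Pr[\phi^\st(x_2)=s]\cdot\epa^2 + \epsilon^2/(4|\MS|^2)$, so that $\sum_s L_s(\pi^\st_s) \lesssim \epsilon^2/|\MS|$.

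\emph{Stage 4 (loss estimation and stitching, the main obstacle):} For each $\pi \in \Psi$, the constructed dataset $\MC^\pi$ is i.i.d. with $\EE[z\mid{}x_2]=g^\pi(\phi^\st(x_2))$ where $g^\pi(s) = \ell_s(\pi) + V_s$, and is a valid realizable one-context regression instance over $\Phiaug$. Applying $\OneRed$ and union-bounding over $|\Psi| \leq K'$ policies, with failure probability $\delta/2$ each of $|\Psi|$ Chernoff-type estimates is accurate; by Markov's inequality (localizing the error to each $s$), the $n/2$ sample budget of $\frac{|\MS|^8}{\epsilon^8}\log(K'\Krl/\delta)$-many datapoints suffices so that
\[\sum_{\pi\in\Psi}\EE_{x_2\mid{}\phi^\st(x_2)=s}\bigl|\MR^\pi(x_2)-g^\pi(s)\bigr| \leq \eta_s\]
for some $\eta_s$ with $\sum_s \Pr[\phi^\st(x_2)=s]\eta_s$ small. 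The key stitching argument is then: since $\pihat^{x_2}:=\argmin_{\pi\in\Psi}\MR^\pi(x_2)$ lies in $\Psi$,
\[\MR^{\pihat^{x_2}}(x_2) \leq \MR^{\pi^\st_s}(x_2)\quad\text{for every }x_2\text{ with }\phi^\st(x_2)=s;\]
averaging conditionally on $\phi^\st(x_2)=s$ and invoking the estimation bound on both sides (with the $\pihat^{x_2}$-side absorbed through the union bound above, since $\pihat^{x_2}\in\Psi$) yields $\EE_{x_2\mid{}\phi^\st(x_2)=s}[g^{\pihat^{x_2}}(s)] \leq g^{\pi^\st_s}(s) + \text{error}$. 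Cancelling the $\pi$-independent $V_s$ and using $\phi^\st$-realizability $x_1 \perp x_2\mid{}\phi^\st(x_2)$ to rewrite the target squared error gives
\[\EE_{(x_1,x_2)}\bigl[(\MR(x_1,x_2)-f(\phi^\st(x_1),\phi^\st(x_2)))^2\bigr] = \sum_s \Pr[\phi^\st(x_2)=s]\,\EE_{x_2\mid{}\phi^\st(x_2)=s}\ell_s(\pihat^{x_2}) \leq \epsilon.\]
The hard part will be tracking the three sources of error (action discretization $\epa^2$, RL tolerance $\epsilon^2/(4|\MS|^2)$, and the per-policy one-context regression error with union bound over $|\Psi| \leq K'$), arranging them via the $\epa = \epsilon/(2|\MS|)$ choice and the stated sample complexity so that the error aggregated over $s\in\MS$ lands at $\epsilon$ rather than $\epsilon^{1/2}$ or worse.
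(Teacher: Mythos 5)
Your proposal is correct and mirrors the paper's structure closely: Stage 1 corresponds to \cref{lemma:phiaug-decodable,lemma:simulate-trajectory}, Stage 2 to \cref{lemma:visitation-diffs}, Stage 3 to \cref{lemma:psi-guarantee}, and Stage 4's loss-estimation-plus-argmin stitching to \cref{lemma:rpi-error} and the closing argument. The one genuine (and mild) divergence is in how you decouple the policy-selection variable $x_2$ from the evaluation variable $x_1$. The paper introduces an independent draw $(x_1',x_2')$ and proves a dedicated exchangeability statement (\cref{lemma:switching}) to swap $x_1\leftrightarrow x_1'$ under the joint conditioning $\phi^\st(x_2)=\phi^\st(x_2')=s$; you instead invoke $x_1\perp x_2\mid{}\phi^\st(x_2)$ directly from $\phi^\st$-realizability to factor $\EE_{(x_1,x_2)\mid\phi^\st(x_2)=s}[(\pihat^{x_2}(x_1)-f(\phi^\st(x_1),s))^2]$ as $\EE_{x_2\mid\phi^\st(x_2)=s}[\ell_s(\pihat^{x_2})]$. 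Both routes are valid and yield the same identity; yours is a touch more direct since it avoids the auxiliary copy entirely. What you leave schematic in Stage 4 is exactly the bookkeeping the paper handles: it combines the trivial bound $A_s \leq p_s$ from \cref{eq:err-bound-1} with the identity $p_s A_s = B_s$ from \cref{eq:err-bound-2} to obtain $A_s \leq \sqrt{B_s}$, then bounds $B_s$ by the RL tolerance plus the regression error via the argmin inequality and \cref{lemma:rpi-error}, so that the square root does not degrade $\epsilon$ to $\sqrt{\epsilon}$. Your Markov/$\sqrt{\epreg/p_s}$ localization accomplishes the same thing, so the proposal is substantively complete.
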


To prove \cref{thm:two-context-reduction}, we fix $\phi^\st \in \Phi$, a $\phi^\st$-realizable distribution $\MD \in \Delta(\MX\times\MX)$, and a function $f:\MS \times \MS \to [0,1]$. For some $n \geq \Nreg(\epsilon,\delta)$, we let $(x_1^{(i)}, x_2^{(i)},y^{(i)})_{i=1}^n$ be i.i.d. samples with $(x_1^{(i)},x_2^{(i)}) \sim \MD$ and $y^{(i)} \sim \Ber(f(\phi^\st(x_1^{(i)}),\phi^\st(x_2^{(i)})))$. In the remainder of the section, we analyze the execution of $\TwoRed(\MO,(x_1^{(i)}, x_2^{(i)},y^{(i)})_{i=1}^n,\epsilon,\delta)$. Ultimately, we must show that with probability at least $1-\delta$, the circuit $\MR$ produced by $\TwoRed$ satisfies 
\begin{equation} \EE_{(x_1,x_2) \sim \MD} \left(\MR(x_1,x_2) - f(\phi^\st(x_1),\phi^\st(x_2))\right)^2 \leq \epsilon.
\label{eq:twored-error-guarantee}
\end{equation}
To begin, in \cref{lemma:phiaug-decodable,lemma:simulate-trajectory} we show that $\TwoRed$ is invoking the RL oracle $\MO$ by simulating episodic access with the $\Phiaug$-decodable block MDP defined below:

\begin{definition}[Block MDP gadget for reduction]
Fix $\epa \in (0,1)$. We define a block MDP $M$ by
\[M = M_{\phi^\st,f,\MD,\epa} := (H, \Saug,\Xaug, \MA, (\til \BP_h)_{h\in [2]},(\til \BO)_{h \in [2]}, \aug(\phi^\st))\]
where $H := 2$, the action space is $\MA := \{0,\epa,\dots,\epa\lfloor1/\epa\rfloor\}$, and $\til \BP_1 \in \Delta(\Saug)$ is the marginal distribution of $\phi^\st(x_1)$ for $(x_1,x_2) \sim \MD$. The transition distribution $\til \BP_2$ is defined as follows. For initial state $s_1 \in \MS$, next state $s_2 \in \Saug$, and action $a \in \MA$,
\[\til\BP_2(s_2\mid{}s_1,a) := 
\begin{cases}
\Prr\limits_{(x_1,x_2) \sim \MD}[\phi^\st(x_2)=s_2\mid{}\phi^\st(x_1)=s_1] \EE\limits_{y \sim \Ber(f(s_1,s_2))}\left[1 - (a-y)^2\right] & \text { if } s_2 \in \MS \\ 
\sum\limits_{s_2' \in \MS} \Prr\limits_{(x_1,x_2) \sim \MD}[\phi^\st(x_2)=s_2'\mid{}\phi^\st(x_1)=s_1] \EE\limits_{y \sim \Ber(f(s_1,s_2'))}\left[(a-y)^2\right] & \text { if } s_2 = 0 \\ 
0 & \text { if } s_2 = 1
\end{cases}.\]
Note that $\til\BP_1$ is supported on $\MS$, so it is not necessary to define $\til \BP_2(s_2\mid{}s_1,a)$ for $s_1 \in \{0,1\}$. Finally, for $s \in \MS$ and $h \in [2]$, the observation distribution $\til \BO_h(\cdot|s)$ is defined as the conditional distribution of $x_h\mid{}\phi^\st(x_h)=s$ for $(x_1,x_2) \sim \MD$. The distributions $\til\BO_h(\cdot\mid{}0)$ are fully supported on $0$, and the distributions $\til\BO_h(\cdot\mid{}1)$ are fully supported on $1$. 
\end{definition}

In order to invoke the guarantees of the RL oracle, we need to verify that $M$ satisfies $\Phiaug$-decodability (\cref{lemma:phiaug-decodable}) and is in fact the MDP that is being simulated in $\TwoRed$ (\cref{lemma:simulate-trajectory}). 

\begin{lemma}\label{lemma:phiaug-decodable}
$M$ is a $\Phiaug$-decodable block MDP.
\end{lemma}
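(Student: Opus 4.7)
The plan is to verify that the tuple $M$ defined above satisfies every clause of the Block MDP definition from \cref{sec:prelim}, with decoding function $\aug(\phi^\st)$. There are really three items to check: membership of the decoder in $\Phiaug$, decodability of the observation distributions (including disjointness across latent states), and validity of $\til\BP_2(\cdot\mid{}s_1,a)$ as a probability distribution on $\Saug$.

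The first item is immediate from \cref{def:phiaug}, since $\phi^\st\in\Phi$ implies $\aug(\phi^\st)\in\Phiaug$. For the second item, I would split on the latent state. For $s\in\MS$, the definition puts $\til\BO_h(\cdot\mid{}s)$ equal to the conditional law of $x_h$ given $\phi^\st(x_h)=s$ under $\MD$, so this distribution is supported in $\MX$ and on its support $\aug(\phi^\st)(x)=\phi^\st(x)=s$ by \cref{def:phiaug}; in particular these supports are pairwise disjoint across $s\in\MS$ because the preimages $\phi^\st{}^{-1}(s)\subseteq\MX$ are. For $s\in\{0,1\}$, the distribution is a point mass at $s\in\{0,1\}\subseteq\Xaug$, and $\aug(\phi^\st)(s)=s$ by \cref{def:phiaug}, while $\{0\}$ and $\{1\}$ are disjoint both from each other and from $\MX$ by the construction $\Xaug=\MX\sqcup\{0,1\}$.

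For the third item, I would check that $\til\BP_2(\cdot\mid{}s_1,a)$ is nonnegative and sums to $1$ for every $s_1\in\Saug$ and $a\in\MA$. Nonnegativity follows because for $a,y\in[0,1]$ we have $(a-y)^2\in[0,1]$, so both $\EE[1-(a-y)^2]$ and $\EE[(a-y)^2]$ lie in $[0,1]$. For summation, the contribution at $s_2=1$ vanishes, and the remaining terms telescope:
\[
\sum_{s_2\in\MS}\Pr[\phi^\st(x_2)=s_2\mid{}\phi^\st(x_1)=s_1]\,\EE_{y\sim\Ber(f(s_1,s_2))}\bigl[(1-(a-y)^2)+(a-y)^2\bigr]=1,
\]
where I am using that $\til\BP_1$ is supported on $\MS$, so the relevant conditional distribution over $\phi^\st(x_2)$ is well-defined for every $s_1$ on which $\til\BP_2(\cdot\mid{}s_1,\cdot)$ must be specified. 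Finally, the transition $\til\BP_2(s_2\mid{}s_1,a)$ depends on $(s_1,s_2,a)$ only, so the Block MDP structural requirement that transitions factor through latent states is automatic from the construction. No step here is substantive: the entire lemma is a bookkeeping check, and I do not anticipate any obstacle beyond being careful about the degenerate latent states $\{0,1\}$.
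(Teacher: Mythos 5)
Your proof is correct and matches the paper's in substance: both verify that $\aug(\phi^\st)\in\Phiaug$, that each $\til\BO_h(\cdot\mid s)$ is supported on $\aug(\phi^\st)^{-1}(s)$ with disjoint supports across latent states, and that $\til\BP_2(\cdot\mid s_1,a)$ is a valid distribution by combining the two branches so the $(a-y)^2$ terms cancel. You spell out the disjointness and the $\{0,1\}$ cases a bit more explicitly than the paper (which compresses the summation check into a single display), but there is no difference in the underlying argument.
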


\begin{proof}
First we observe that $\til \BP_1, \til \BP_2$ are well-defined: by construction $\til \BP_1$ is a distribution. Moreover, for any $s_1 \in \MS$ and $a \in \MA$, it is clear that $\til \BP_2(\cdot|s_1,a)$ is non-negative and
\[\sum_{s_2 \in \MS} \til\BP_2(s_2\mid{}s_1,a) = \EE\limits_{\substack{(x_1,x_2) \sim \MD \\ y \sim \Ber(f(\phi^\st(x_1),\phi^\st(x_2)))}}\left[\left(1 - (a-y)^2\right)\middle| \phi^\st(x_1)=s_1\right] = 1 - \til\BP_2(0\mid{}s_1,a).\]
Thus, $\til \BP_2(\cdot\mid{}s_1,a)$ is a distribution. Finally, we observe that for any $h \in [2]$ and $s \in \MS$, the observation distribution $\til\BO_h(\cdot\mid{}s)$ is fully supported on $x_h \in \MX$ such that $\aug(\phi^\st)(x_h) = \phi^\st(x_h) = s$, by construction. Since $\aug(\phi^\st)(0) = 0$ and $\aug(\phi^\st)(1) = 1$, the same holds for $s \in \{0,1\}$.
\end{proof}

\begin{lemma}\label{lemma:simulate-trajectory}
Let $(x_1,x_2) \sim \MD$ and $y \sim \Ber(f(\phi^\st(x_1),\phi^\st(x_2)))$. The following process simulates an episode of interaction with $M$:
\begin{enumerate}
\item Pass observation $x_1$ and receive action $a \in \MA$.
\item With probability $(a-y)^2$, pass observation $0$. Otherwise, pass observation $x_2$.
\end{enumerate}
\end{lemma}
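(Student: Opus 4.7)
The plan is to verify that for every choice of action $a \in \MA$ (which, in the simulation, is determined by the RL oracle after observing $x_1$), the joint distribution of the simulated pair $(x_1, x_{\mathsf{sim}})$ matches the joint distribution $(x_1, x_2)$ generated by an actual episode of $M$ conditioned on $a_1 = a$. Since the first observation is just $x_1 \sim \MD_1$, where $\MD_1$ denotes the $\MX$-marginal of $\MD$, and since by construction $\til\BP_1$ is the law of $\phi^\st(x_1)$ and $\til\BO_1(\cdot\mid s)$ is the conditional of $x_1$ given $\phi^\st(x_1) = s$, it is immediate that $x_1$ in the process has the same law as the first observation under $M$. So the only nontrivial step is checking the conditional distribution of the second observation given $(x_1, a)$.

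Concretely, I would fix $s_1 \in \MS$, $a \in \MA$, and compute $\Pr[x_{\mathsf{sim}} = \term \mid x_1 = \cdot, a]$ and $\Pr[x_{\mathsf{sim}} \in A \mid x_1 = \cdot, a]$ for measurable $A \subseteq \MX$, under the process. By the law of total probability and $\phi^\st$-realizability of $\MD$ (which lets me write the conditional law of $x_2$ given $x_1$ as a mixture over $s_2 = \phi^\st(x_2)$ using only $\phi^\st(x_1)$), I would get that the probability of outputting $0$ in the process equals
\[\sum_{s_2' \in \MS} \Pr_{(x_1,x_2) \sim \MD}[\phi^\st(x_2) = s_2' \mid \phi^\st(x_1) = s_1] \cdot \EE_{y \sim \Ber(f(s_1, s_2'))}[(a-y)^2],\]
which is exactly $\til\BP_2(0 \mid s_1, a) \cdot \til\BO_2(0 \mid 0)$, matching the Block MDP's transition-then-emission factorization into the terminal observation $0$. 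Likewise, for any $s_2 \in \MS$ and any measurable $A \subseteq \til\BO_2(\cdot \mid s_2)$-support, the probability of outputting $x_{\mathsf{sim}} \in A$ factors as $\til\BP_2(s_2 \mid s_1, a) \cdot \til\BO_2(A \mid s_2)$, again by $\phi^\st$-realizability and the definition of $\til\BP_2$.

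The only place where care is needed is the use of $\phi^\st$-realizability: $y$ in the process is drawn as $\Ber(f(\phi^\st(x_1),\phi^\st(x_2)))$ jointly with $(x_1, x_2)$, whereas the Block MDP first samples $s_2$ from $\til\BP_2(\cdot \mid s_1, a)$ and then $x_2$ from $\til\BO_2(\cdot \mid s_2)$. Matching these requires that, conditional on $(\phi^\st(x_1), \phi^\st(x_2))$, the observation $x_2$ is independent of $x_1$ and has the conditional law $\til\BO_2(\cdot \mid \phi^\st(x_2))$ --- which is exactly the content of \cref{def:realizable-distribution}. Finally, I would conclude by chaining these identities: the process generates $(x_1, x_{\mathsf{sim}})$ with exactly the same conditional-on-$a$ law as an episode of $M$, so the two are indistinguishable from the RL oracle's perspective. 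I expect no serious obstacles; the main subtlety is making the $\phi^\st$-realizability step fully rigorous to justify pulling $\phi^\st(x_2)$ out as a latent variable.
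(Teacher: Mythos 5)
Your proposal is correct and takes essentially the same approach as the paper: both verify that the simulated process reproduces $M$'s dynamics by exploiting $\phi^\st$-realizability to factor the joint law of $(x_1, x_2)$ through the latent pair $(\phi^\st(x_1), \phi^\st(x_2))$. The paper computes the likelihood of an arbitrary fixed trajectory $(s_1, x_1, a, s_2, x_2)$ via the explicit density factorization $p(x_1,x_2) = \til p(\phi^\st(x_1),\phi^\st(x_2))\,\til q_1(x_1\mid\phi^\st(x_1))\,\til q_2(x_2\mid\phi^\st(x_2))$, while you compare conditional distributions of the second observation given $(x_1, a)$; these are two presentations of the same calculation, and the ``care needed'' step you flag is exactly what the paper's factorization makes explicit.
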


\begin{proof}
Let $p(x_1,x_2)$ denote the density of $\MD$, and observe that by $\phi^\st$-realizability \cref{def:realizable-distribution}) we can write
\[p(x_1,x_2) = \til p(\phi^\st(x_1),\phi^\st(x_2)) \til q_1(x_1\mid{}\phi^\st(x_1)) \til q_2(x_2\mid{}\phi^\st(x_2))\]
for some density $\til p$ and conditional densities $\til q_1,\til q_2$. Also let $\til p(s_2\mid{}s_1)$ denote the conditional density of $\phi^\st(x_2)\mid{}\phi^\st(x_1)$ under $(x_1,x_2) \sim \MD$. Now fix any policy $\pi:\MX \to \Delta(\MA)$ and trajectory $(s_1,x_1,a,s_2,x_2)$ where $x_2 \neq 0$. The likelihood of this trajectory under the described process is
\begin{align*}
&p(x_1,x_2) \mathbbm{1}[s_1=\phi^\st(x_1)]\mathbbm{1}[s_2=\phi^\st(x_2)] \pi(a\mid{}x_1) \EE_{y \sim \Ber(f(s_1,s_2))}[1 - (a - y)^2] \\ 
&= \til p(s_1,s_2) \til q_1(x_1\mid{}s_1) \til q_2(x_2\mid{}s_2) \pi(a\mid{}x_1)  \EE_{y \sim \Ber(f(s_1,s_2))}[1 - (a - y)^2] \\ 
&= \til\BP_1(s_1) \til p(s_2\mid{}s_1) \til\BO_1(x_1\mid{}s_1) \til\BO_2(x_2\mid{}s_2)\pi(a\mid{}x_1) \EE_{y \sim \Ber(f(s_1,s_2))}[1 - (a - y)^2] \\ 
&= \til\BP_1(s_1) \til\BO_1(x_1\mid{}s_1)\til\BO_2(x_2\mid{}s_2)\pi(a\mid{}x_1) \til\BP_2(s_2\mid{}s_1,a)
\end{align*}
which is precisely the likelihood of the trajectory under $M$. Similarly, the likelihood of any trajectory $(s_1,x_1,a,0,0)$ under the described process is
\begin{align*}
&\sum_{s_2 \in \MS} \til p(s_1,s_2) \til q_1(x_1\mid{}s_1) \pi(a\mid{}x_1) \EE_{y \sim \Ber(f(s_1,s_2))}[(a-y)^2] \\ 
&= \til\BP_1(s_1)\til\BO_1(x_1\mid{}s_1) \pi(a\mid{}x_1) \sum_{s_2 \in \MS} \til p(s_2\mid{}s_1) \EE_{y \sim \Ber(f(s_1,s_2))}[(a-y)^2] \\ 
&= \til\BP_1(s_1)\til\BO_1(x_1\mid{}s_1) \pi(a\mid{}x_1)\til \BP_2(s_2\mid{}s_1,a)
\end{align*}
as needed.
\end{proof}

Next, in \cref{lemma:psi-guarantee}, we show that for every latent state $s \in \MS$, the set of policies $\Psi$ computed by the RL oracle contains some approximate minimizer of the loss $L_s(\pi)$ defined below. To this end, the key lemma is \cref{lemma:visitation-diffs}, which characterizes the loss of a policy in terms of its visitation probability for state $s$.

\begin{definition}
For any policy $\pi \in \Pi$ and state $s \in \MS$, define
\[L_s(\pi) := \EE_{(x_1,x_2) \sim \MD} \left[\mathbbm{1}[\phi^\st(x_2) = s] (\pi(x_1) - f(\phi^\st(x_1),\phi^\st(x_2)))^2\right],\]
and for each $s \in \MS$ define
\[Z_s := \EE_{\substack{(x_1,x_2) \sim \MD \\ y \sim \Ber(f(\phi^\st(x_1),\phi^\st(x_2)))}}\left[ \mathbbm{1}[\phi^\st(x_2) = s] \left(y - f(\phi^\st(x_1),\phi^\st(x_2))^2\right)\right].\]
\end{definition}

\begin{lemma}\label{lemma:visitation-diffs}
For any policies $\pi,\pi' \in \Pi$ and state $s \in \MS$, we have $d_2^{M,\pi}(s) - d_2^{M,\pi'}(s) = L_s(\pi') - L_s(\pi)$.
\end{lemma}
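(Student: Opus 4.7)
The plan is a direct bias-variance decomposition. The first step is to re-express $d_2^{M,\pi}(s)$ for $s \in \MS$ using the simulation description in \cref{lemma:simulate-trajectory}: since $s_2$ equals $\phi^\st(x_2)$ unless the reduction transitions to the state $0$ (which happens with probability $(\pi(x_1)-y)^2$), I would write
\[
d_2^{M,\pi}(s) \;=\; \EE_{\substack{(x_1,x_2)\sim\MD \\ y\sim\Ber(f(\phi^\st(x_1),\phi^\st(x_2)))}}\!\Big[\mathbbm{1}[\phi^\st(x_2)=s]\bigl(1-(\pi(x_1)-y)^2\bigr)\Big].
\]

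The second step is to subtract: the constant term $\EE[\mathbbm{1}[\phi^\st(x_2)=s]]$ is independent of the policy and cancels in $d_2^{M,\pi}(s) - d_2^{M,\pi'}(s)$, yielding
\[
d_2^{M,\pi}(s) - d_2^{M,\pi'}(s) \;=\; \EE\!\Big[\mathbbm{1}[\phi^\st(x_2)=s]\bigl((\pi'(x_1)-y)^2 - (\pi(x_1)-y)^2\bigr)\Big].
\]

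The third step is the decomposition. For any policy $\pi$, I would split $\pi(x_1)-y = (\pi(x_1)-f(\phi^\st(x_1),\phi^\st(x_2))) + (f(\phi^\st(x_1),\phi^\st(x_2))-y)$ inside the square. Conditioning on $(x_1,x_2)$, the factor $\pi(x_1) - f(\phi^\st(x_1),\phi^\st(x_2))$ is deterministic and $\EE[y \mid x_1, x_2] = f(\phi^\st(x_1),\phi^\st(x_2))$, so the cross term vanishes. This gives
\[
\EE\!\big[\mathbbm{1}[\phi^\st(x_2)=s](\pi(x_1)-y)^2\big] \;=\; L_s(\pi) + Z_s,
\]
where $Z_s$ is the policy-independent noise-variance term (matching the definition up to the obvious parenthesization). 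Subtracting this identity for $\pi$ and $\pi'$ makes $Z_s$ cancel, giving $d_2^{M,\pi}(s) - d_2^{M,\pi'}(s) = L_s(\pi') - L_s(\pi)$ as desired.

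There is no real obstacle here — the argument is a standard bias-variance split, and the only things to verify carefully are that the simulation view really does give the expression in step one (which is immediate from \cref{lemma:simulate-trajectory} since $s_2 = 0 \notin \MS$), and that the conditional expectation $\EE[y \mid x_1,x_2] = f(\phi^\st(x_1),\phi^\st(x_2))$ holds by definition of the sampling. Both are essentially bookkeeping.
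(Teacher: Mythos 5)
Your proof is correct and uses essentially the same argument as the paper: derive $d_2^{M,\pi}(s) = \EE[\mathbbm{1}[\phi^\st(x_2)=s](1-(\pi(x_1)-y)^2)]$ from \cref{lemma:simulate-trajectory}, expand the square around $f(\phi^\st(x_1),\phi^\st(x_2))$, observe the cross term vanishes since $\EE[y\mid x_1,x_2]=f(\phi^\st(x_1),\phi^\st(x_2))$, and cancel the policy-independent terms. Your observation about the parenthesization in the displayed definition of $Z_s$ (it should be $(y-f(\phi^\st(x_1),\phi^\st(x_2)))^2$) is also correct; the paper's own proof implicitly uses the intended form.
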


\begin{proof}
By the characterization provided by \cref{lemma:simulate-trajectory}, for any policy $\pi$, the visitation distribution for a state $s_2 \in \MS$ at step $2$ is
\begin{align*}
d^{M,\pi}_2(s_2)
&= \EE_{\substack{(x_1,x_2) \sim \MD \\ y \sim \Ber(\phi^\st(x_1),\phi^\st(x_2))}} \left[\mathbbm{1}[\phi^\st(x_2)=s_2] \cdot (1 - (\pi(x_1) - y)^2)\right] \\ 
&= \EE_{\substack{(x_1,x_2) \sim \MD \\ y \sim \Ber(\phi^\st(x_1),\phi^\st(x_2))}}\left[\mathbbm{1}[\phi^\st(x_2)=s_2] \cdot \left(1 - (\pi(x_1) - f(\phi^\st(x_1),\phi^\st(x_2)) + f(\phi^\st(x_1),\phi^\st(x_2)) - y)^2\right)\right] \\ 
&= \Prr_{(x_1,x_2)\sim\MD}[\phi^\st(x_2)=s_2] - L_s(\pi) - \EE_{\substack{(x_1,x_2) \sim \MD \\ y \sim \Ber(\phi^\st(x_1),\phi^\st(x_2))}}\left[\mathbbm{1}[\phi^\st(x_2)=s_2](f(\phi^\st(x_1),\phi^\st(x_2)) - y)^2\right] \\ 
&\qquad - 2 \EE_{\substack{(x_1,x_2) \sim \MD \\ y \sim \Ber(\phi^\st(x_1),\phi^\st(x_2))}}\left[\mathbbm{1}[\phi^\st(x_2)=s_2] (\pi(x_1) - f(\phi^\st(x_1),\phi^\st(x_2)))( f(\phi^\st(x_1),\phi^\st(x_2)) - y)\right].
\end{align*}
The final term is $0$ since $\EE[y\mid{}x_1,x_2] = f(\phi^\st(x_1),\phi^\st(x_2))$. The remaining terms are all independent of $\pi$ except for $-L_s(\pi)$; hence, 
\[d^{M,\pi}_2(s_2) - d^{M,\pi'}_2(s_2) = L_s(\pi') - L_s(\pi)\]
as claimed.
\end{proof}

\begin{lemma}\label{lemma:psi-guarantee}
Suppose that $\MO$ is an $(\Nrl,\Krl)$-efficient reward-free RL oracle (\cref{def:strong-rf-rl}) for $\Phiaug$, and that $n/2 \geq \Nrl(\epsilon^2/(4|\MS|^2),\delta/2,2,2|\MS|/\epsilon)$. Then the set $\Psi$ computed in \lineref{line:oracle-policy-cover} of \cref{alg:twored} satisfies $|\Psi| \leq \Krl(\epsilon^2/(4|\MS|^2),\delta/2,2,2|\MS|/\epsilon)$. Moreover, it holds with probability at least $1-\delta/2$ that for every $s \in \MS$, there is some $\pi \in \Psi$ such that 
$L_s(\pi) \leq \epsilon^2/(2|\MS|^2).$
\end{lemma}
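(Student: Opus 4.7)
The plan is to invoke the reward-free RL guarantee of $\MO$ on the simulation carried out in $\TwoRed$, translate the resulting exploration bound into a bound on $L_s$ via Lemma \ref{lemma:visitation-diffs}, and then absorb a discretization error from the restricted action set $\MA$.

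First, by Lemma \ref{lemma:phiaug-decodable}, the simulated MDP $M = M_{\phi^\st,f,\MD,\epa}$ is a $\Phiaug$-decodable Block MDP with horizon $H=2$, action set $\MA = \{0,\epa,\dots,1-\epa\}$ of cardinality $|\MA| = 1/\epa = 2|\MS|/\epsilon$, and latent state space $\Saug$. By Lemma \ref{lemma:simulate-trajectory}, the process used by $\TwoRed$ to feed observations and receive actions (consuming one fresh sample $(x_1^{(i)},x_2^{(i)},y^{(i)})$ per episode) produces the exact distribution over trajectories one would see under true episodic access to $M$. Since the first $n/2$ samples are independent and $n/2 \geq \Nrl(\epsilon^2/(4|\MS|^2),\delta/2,2,2|\MS|/\epsilon)$, the $(\Nrl,\Krl)$-efficiency of $\MO$ (Definition \ref{def:strong-rf-rl}) applies: with probability at least $1-\delta/2$, the returned set $\Psi$ satisfies $|\Psi| \leq \Krl(\epsilon^2/(4|\MS|^2),\delta/2,2,2|\MS|/\epsilon)$ and
\[
\max_{\pi \in \Psi} d^{M,\pi}_2(s) \geq \max_{\pi^\st \in \Pi} d^{M,\pi^\st}_2(s) - \frac{\epsilon^2}{4|\MS|^2} \qquad \forall s \in \MS.
\]

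Next, fix $s \in \MS$ and let $\pi \in \Psi$ attain the maximum on the left. By Lemma \ref{lemma:visitation-diffs}, $L_s(\pi) - L_s(\pi^\st) = d^{M,\pi^\st}_2(s) - d^{M,\pi}_2(s)$ for any $\pi^\st \in \Pi$, so taking $\pi^\st$ to be the minimizer of $L_s$ over $\Pi$ (policies with actions in $\MA$) gives
\[
L_s(\pi) \leq \min_{\pi^\st \in \Pi} L_s(\pi^\st) + \frac{\epsilon^2}{4|\MS|^2}.
\]
It remains to bound $\min_{\pi^\st \in \Pi} L_s(\pi^\st)$, which is the only nontrivial step and amounts to controlling the error introduced by discretizing the action space. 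Define a ``rounding'' policy $\pi^\sharp \in \Pi$ that, on input $x_1$, outputs the closest element of $\MA$ to $f(\phi^\st(x_1),s) \in [0,1]$; the grid spacing of $\MA$ is $\epa = \epsilon/(2|\MS|)$, so $|\pi^\sharp(x_1) - f(\phi^\st(x_1),s)| \leq \epa$ pointwise. Hence
\[
L_s(\pi^\sharp) \;=\; \EE_{(x_1,x_2) \sim \MD}\!\left[\mathbbm{1}[\phi^\st(x_2)=s]\,(\pi^\sharp(x_1) - f(\phi^\st(x_1),s))^2\right] \;\leq\; \epa^2 \;=\; \frac{\epsilon^2}{4|\MS|^2},
\]
where we used that $f$ depends on $x_1,x_2$ only through $\phi^\st(x_1),\phi^\st(x_2)$ and that $\mathbbm{1}[\phi^\st(x_2)=s]$ fixes the second argument to $s$.

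Combining the two displays yields $L_s(\pi) \leq \epsilon^2/(4|\MS|^2) + \epsilon^2/(4|\MS|^2) = \epsilon^2/(2|\MS|^2)$, as required. The cardinality bound on $\Psi$ is immediate from the RL oracle guarantee. The only subtle point in the argument is the discretization step above, but since $f$ is $[0,1]$-valued and $\epa$ was chosen as $\epsilon/(2|\MS|)$ precisely so that $\epa^2$ matches the RL oracle's accuracy budget, the two errors combine cleanly.
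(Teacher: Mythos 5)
Your proof is correct and takes essentially the same route as the paper: verify the simulation is faithful via Lemmas \ref{lemma:phiaug-decodable} and \ref{lemma:simulate-trajectory}, apply the reward-free RL guarantee, translate the visitation bound into a loss bound using Lemma \ref{lemma:visitation-diffs}, and absorb the action-space discretization error of $\epa^2$ via a rounding policy. The only cosmetic difference is that you first extract the inequality $L_s(\pi) \leq \min_{\pi^\st} L_s(\pi^\st) + \epsilon^2/(4|\MS|^2)$ and then bound the minimum, whereas the paper plugs the floor-rounding policy directly into the RL guarantee; these are algebraically identical.
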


\begin{proof}
By \cref{lemma:simulate-trajectory}, $\MO$ is given interactive access to the MDP $M$. The claimed bound on $|\Psi|$ is immediate from \cref{def:strong-rf-rl} together with the initialization of $\MO$ (\lineref{line:initialize-oracle}) and the fact that $H=2$ and $|\MA| = 2|\MS|/\epsilon$. 

Next, by \cref{def:strong-rf-rl}, it holds with probability at least $1-\delta/2$ that for any $s \in \Saug$ and $h \in [2]$,
\begin{equation} \max_{\pi\in\Psi} d^{M,\pi}_h(s) \geq \max_{\pi \in \Pi} d^{M,\pi}_h(s) - \frac{\epsilon^2}{4|\MS|^2}.\label{eq:pc-applied}
\end{equation}
Condition on this event. Fix any $s \in \MS$ and define $\pi^\st:\Xaug \to \MA$ by $\pi^\st(x_1) := \epa\lfloor f(\phi^\st(x_1),s)/\epa\rfloor \in \MA$ for $x_1 \in \MX$ (define $\pi^\st(0),\pi^\st(1)$ arbitrarily). Note that $L_s(\pi^\st) \leq \epa^2$. By \eqref{eq:pc-applied}, there is some $\pi \in \Psi$ such that
\begin{align*}
\frac{\epsilon^2}{4|\MS|^2}
&\geq d^{M,\pi^\st}_2(s) - d^{M,\pi}_2(s) \\ 
&= L_s(\pi) - L_s(\pi^\st) \\ 
&\geq L_s(\pi) - \epa^2
\end{align*}
where the equality is by \cref{lemma:visitation-diffs}. The lemma follows from the definition of $\epa$.
\end{proof}

The following lemma shows that with high probability, each error function $\MR^\pi$ approximates an affine transformation of the loss $L_{\phist(x_2)}(\pi)$.

\begin{lemma}\label{lemma:rpi-error}
Suppose that $\MO$ is an $(\Nrl,\Krl)$-efficient reward-free RL oracle for $\Phiaug$ (\cref{def:strong-rf-rl}). Set $N = \Nrl(\epsilon^4/(16|\MS|^4), \delta/(4|\Psi|),2,4|\MS|^2/\epsilon^2)$ and $K = \Krl(\epsilon^4/(16|\MS|^4), \delta/(4|\Psi|),2,4|\MS|^2/\epsilon^2)$. If $n/2 \geq N + 64|\MS|^8\epsilon^{-8}\log(4|\Psi|K/\delta)$ then, with probability at least $1-\delta/2$, it holds for all $\pi \in \Psi$ that
\[\EE_{(x_1,x_2)\sim\MD} \left(\MR^\pi(x_2) - \frac{L_{\phi^\st(x_2)}(\pi) + Z_{\phi^\st(x_2)}}{\Prr_{(x_1',x_2') \sim \MD}[\phi^\st(x_2')=\phi^\st(x_2)]}\right)^2 \leq \frac{\epsilon^4}{4|\MS|^4}.\]
\end{lemma}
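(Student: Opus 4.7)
I would prove \cref{lemma:rpi-error} by viewing each call $\OneRed(\MO,\MC^\pi)$ as a one-context regression procedure (over concept class $\Phi$) whose realizable target is exactly the claimed function of $x_2$. Fix $\pi \in \Psi$ for now. The dataset $\MC^\pi = \{(x_2^{(i)},z^{(i)})\}_{i=n/2+1}^{n}$ consists of $n/2$ i.i.d.\ samples, where $x_2^{(i)}$ has the marginal distribution of the second coordinate of $\MD$, and $z^{(i)} \in \{0,1\}$ satisfies $\EE[z^{(i)}\mid x_1^{(i)},x_2^{(i)},y^{(i)}] = (\pi(x_1^{(i)})-y^{(i)})^2$. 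The first step is to identify the Bayes predictor $x_2 \mapsto \EE[z^{(i)}\mid x_2^{(i)}=x_2]$ and verify that it lies in the required realizable form.

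To compute the Bayes predictor, condition first on $(x_1^{(i)},x_2^{(i)})$ and take expectation over $y^{(i)} \sim \Ber(f(\phi^\st(x_1^{(i)}),\phi^\st(x_2^{(i)})))$, using the bias/variance decomposition
\[
\EE_y\bigl[(\pi(x_1)-y)^2\bigr] = (\pi(x_1)-f(\phi^\st(x_1),\phi^\st(x_2)))^2 + f(\cdot,\cdot)(1-f(\cdot,\cdot)).
\]
Next, take the expectation over $x_1\mid x_2$. By $\phi^\st$-realizability of $\MD$ (\cref{def:realizable-distribution}), this conditional distribution depends only on $\phi^\st(x_2)$, so the resulting function depends on $x_2$ only through $\phi^\st(x_2)$, say $h:\MS \to [0,1]$. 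A direct calculation using Bayes' rule then yields
\[
h(s) = \frac{L_s(\pi) + Z_s}{\Pr_{(x_1',x_2')\sim\MD}[\phi^\st(x_2')=s]},
\]
matching the target. Hence $(x_2^{(i)},z^{(i)})$ is a valid realizable one-context regression sample for $\Phi$ with decoder $\phi^\st$, distribution $\MD_{\mathsf{marg}}$, and target $h$.

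The second step is to invoke the guarantee of $\OneRed$ (the reduction from one-context regression to reward-free episodic RL, \cref{alg:onered}). Given that $\MO$ is $(\Nrl,\Krl)$-efficient for $\Phiaug \supseteq \Phi$, $\OneRed(\MO,\cdot)$ is an $\Nreg^{\OneRed}$-efficient one-context regression algorithm, where $\Nreg^{\OneRed}(\eta,\xi)$ is bounded by a term of the form $\Nrl(\poly(\eta),\xi/2,2,\poly(1/\eta))$ plus an additive $\poly(1/\eta)\log(|\Psi^{(\OneRed)}|/\xi)$ factor for the policy-selection stitching step. Setting $\eta = \epsilon^4/(16|\MS|^4)$ and $\xi = \delta/(2|\Psi|)$ and matching the parameters to the statement gives the required $n/2 \geq N + 64|\MS|^8\epsilon^{-8}\log(4|\Psi|K/\delta)$, and the output $\MR^\pi$ satisfies
\[
\EE_{(x_1,x_2)\sim\MD}\bigl(\MR^\pi(x_2) - h(\phi^\st(x_2))\bigr)^2 \le \frac{\epsilon^4}{4|\MS|^4}
\]
with probability at least $1 - \delta/(2|\Psi|)$. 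A union bound over the (at most $|\Psi| \le \Krl(\cdot)$) policies $\pi \in \Psi$ finishes the proof.

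The main obstacles I anticipate are (i) carefully verifying the Bayes predictor computation, particularly the use of $\phi^\st$-realizability to ensure that $\EE[z\mid x_2]$ really is a function of $\phi^\st(x_2)$ only (so that the realizability hypothesis of \cref{def:one-con-regression} is met), and (ii) tracking parameters precisely when invoking $\OneRed$ so that the stated lower bound on $n/2$ suffices. The realizability check is essential: if $(x_1,x_2)\sim\MD$ were not $\phi^\st$-realizable, then $\EE[z\mid x_2]$ could depend on $x_2$ through more than just $\phi^\st(x_2)$, and the regression target would not lie in any simple class. The parameter bookkeeping is mostly mechanical but requires writing out $\OneRed$'s efficiency function in terms of the given $\Nrl,\Krl$ and the tolerances $\epsilon,\delta$ used here.
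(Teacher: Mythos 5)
Your proposal is correct and takes essentially the same approach as the paper: verify via $\phi^\st$-realizability that the Bayes predictor $\EE[z\mid x_2]$ depends only on $\phi^\st(x_2)$, compute it explicitly to be $(L_s(\pi)+Z_s)/\Pr[\phi^\st(x_2')=s]$ via the bias-variance decomposition over $y$, then invoke the $\OneRed$ guarantee (\cref{prop:onered}) and union bound over $\pi\in\Psi$. The only cosmetic difference is that you perform the iterated expectation over $y$ first and cite the Bernoulli variance formula $f(1-f)$ explicitly, while the paper applies the same bias-variance split implicitly at the level of the conditional expectation.
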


\begin{proof}
By \cref{prop:onered} and the assumption on $\MO$, we get that $\OneRed(\MO,\cdot)$ is an $\Nreg$-efficient one-context regression oracle for $\Phi$ (\cref{def:one-con-regression}) with 
\[\Nreg\left(\frac{\epsilon^4}{4|\MS|^4}, \frac{\delta}{2|\Psi|}\right) \leq N + \frac{64|\MS|^8 \log(4|\Psi| K/\delta)}{\epsilon^8} \leq n/2.\]
Fix any $\pi \in \Psi$. Observe that $\MC^\pi$ consists of $n/2$ i.i.d. samples $(x_2^{(i)}, z^{(i)})$ with $x_2^{(i)} \in \MX$ and $z^{(i)} \in \{0,1\}$. For any index $i$, by $\phi^\st$-realizability of $\MD$ and the law of $y^{(i)}$, we have that $x_2^{(i)} \perp (\pi(x_1^{(i)})-y^{(i)})^2 \mid{} \phi^\st(x_2^{(i)})$ and hence $x_2^{(i)} \perp z^{(i)} \mid{} \phi^\st(x_2^{(i)})$. Moreover, for any $x_2 \in \MX$ with $s := \phi^\st(x_2)$, we have
\begin{align*}
\EE[z^{(i)}\mid{}x_2^{(i)}=x_2] 
&= \EE[z^{(i)}\mid{}\phi^\st(x_2^{(i)}) = s] \\
&= \EE_{\substack{(x_1',x_2') \sim \MD \\ y \sim \Ber(f(\phi^\st(x_1'),\phi^\st(x_2')))}}[(\pi(x_1') - y')^2\mid{}\phi^\st(x_2')=s] \\ 
&= \EE_{(x_1',x_2') \sim \MD}[(\pi(x_1') - f(\phi^\st(x_1'),\phi^\st(x_2')))^2\mid{}\phi^\st(x_2')=s] \\ 
&\qquad+ \EE_{\substack{(x_1',x_2') \sim \MD \\ y \sim \Ber(f(\phi^\st(x_1'),\phi^\st(x_2')))}}[(f(\phi^\st(x_1'),\phi^\st(x_2')) - y)^2\mid{}\phi^\st(x_2')=s] \\ 
&= \frac{L_s(\pi) + Z_s}{\Pr_{(x_1',x_2') \sim \MD}[\phi^\st(x_2') = s]}
\end{align*}
where the penultimate equality uses that $\EE[y\mid{}x_1',x_2'] = f(\phi^\st(x_1'),\phi^\st(x_2'))$. The result now follows from \cref{def:one-con-regression} and a union bound over $\pi \in \Psi$.
\end{proof}

The following technical lemma is needed to handle the distribution mismatch discussed in \cref{sec:regtorl-overview}. Essentially, it asserts that if we sample $(x_1,x_2)$ and $(x_1',x_2')$ independently and then condition on $x_2, x_2'$ having the same latent state $s$, then $x_1$ and $x_1'$ are exchangeable, i.e. $(x_1,x_2)$ and $(x_1',x_2)$ have the same distribution.

\begin{lemma}\label{lemma:switching}
For any $s \in \MS$ and $g: \MX\times \MX \to \RR$, it holds that
\[\EE_{\substack{(x_1,x_2) \sim \MD \\ (x_1',x_2') \sim \MD}}[g(x_1,x_2)\mid{}\phi^\st(x_2)=\phi^\st(x_2')=s] = \EE_{\substack{(x_1,x_2) \sim \MD \\ (x_1',x_2') \sim \MD}}[g(x_1',x_2)\mid{}\phi^\st(x_2)=\phi^\st(x_2')=s]\]
where the random variables $(x_1,x_2)$ and $(x_1',x_2')$ in the expectation are independent draws from $\MD$.
\end{lemma}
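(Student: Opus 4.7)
The plan is to use the second half of $\phi^\st$-realizability (\cref{def:realizable-distribution})---namely, that under $\MD$, $X_2 \perp X_1 \mid \phi^\st(X_2)$---to show that after conditioning on $\phi^\st(x_2) = \phi^\st(x_2') = s$, the variable $x_1$ becomes exchangeable with $x_1'$ (while keeping $x_2$ fixed in the first slot of $g$).

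Concretely, I would first define $\mu_s \in \Delta(\MX)$ to be the conditional distribution of $X_1$ given $\phi^\st(X_2) = s$ under $(X_1,X_2) \sim \MD$. By $\phi^\st$-realizability, $X_1 \indep X_2 \mid \phi^\st(X_2)$, so conditional on $\phi^\st(X_2)=s$ the distribution of $X_1$ does not depend on the specific value of $X_2$ (only on the conditioned event). In particular, under $\MD$ conditional on $\phi^\st(X_2)=s$, the pair $(X_1,X_2)$ has law $\mu_s \otimes \nu_s$, where $\nu_s$ is the conditional distribution of $X_2 \mid \phi^\st(X_2)=s$.

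Next, I would use the fact that $(x_1,x_2)$ and $(x_1',x_2')$ are independent draws from $\MD$ to conclude that, conditional on the event $\{\phi^\st(x_2)=s,\ \phi^\st(x_2')=s\}$, the triple $(x_1, x_1', x_2)$ has the product distribution $\mu_s \otimes \mu_s \otimes \nu_s$. In particular $(x_1, x_2)$ and $(x_1', x_2)$ have identical joint laws $\mu_s \otimes \nu_s$ under this conditioning, so
\[\EE[g(x_1,x_2)\mid \phi^\st(x_2)=\phi^\st(x_2')=s] = \EE_{u \sim \mu_s,\, v \sim \nu_s}[g(u,v)] = \EE[g(x_1',x_2)\mid \phi^\st(x_2)=\phi^\st(x_2')=s],\]
which is the claim.

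There is no real obstacle here---the only thing to be careful about is invoking the correct half of \cref{def:realizable-distribution} (the condition $X_1 \indep X_2 \mid \phi^\st(X_2)$, rather than the symmetric one), since it is this half that lets us factor the joint law of $(X_1,X_2)$ conditionally on $\phi^\st(X_2)=s$ and thereby decouple $x_1$ from its own paired context $x_2$.
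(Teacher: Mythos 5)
Your proof is correct and takes essentially the same approach as the paper: both use the half of $\phi^\st$-realizability $X_1 \perp X_2 \mid \phi^\st(X_2)$ to show that, after conditioning on $\phi^\st(x_2)=\phi^\st(x_2')=s$, the triple $(x_1,x_1',x_2)$ is a product, so $(x_1,x_2)$ and $(x_1',x_2)$ have the same conditional law. The paper just spells out the same decoupling via an explicit probability calculation for events $\ME,\ME'$ instead of stating the product structure directly; the substance is identical.
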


\begin{proof}
By $\phi^\st$-realizability of $\MD$, note that $x_1,x_2$ are conditionally independent under the event $\phi^\st(x_2)=\phi^\st(x_2')=s$. Additionally, for any events $\ME,\ME' \subseteq \MX$, 
\begin{align*}
&\Pr[x_2 \in \ME \land x_1' \in \ME' \land \phi^\st(x_2) = s \land \phi^\st(x_2') = s] \\
&= \Pr[x_1 \in \ME \land \phi^\st(x_2) = s] \Pr[x_2' \in \ME' \land \phi^\st(x_2') = s] \\ 
&= \frac{\Pr[x_2 \in \ME \land \phi^\st(x_2) = s \land \phi^\st(x_2') = s] \Pr[x_1' \in \ME' \land \phi^\st(x_2) = s \land \phi^\st(x_2') = s]}{\Pr[\phi^\st(x_2) = s \land \phi^\st(x_2') = s]},
\end{align*}
so 
\begin{align*}
&\Pr[x_2 \in \ME \land x_1' \in \ME' \mid{} \phi^\st(x_2) = s \land \phi^\st(x_2') = s] \\ 
&= \Pr[x_2 \in \ME \mid{} \phi^\st(x_2) = s \land \phi^\st(x_2') = s]\Pr[x_1' \in \ME' \mid{} \phi^\st(x_2) = s \land \phi^\st(x_2') = s].
\end{align*}
Thus, $x_1',x_2$ are conditionally independent as well. But by symmetry, $x_1\mid{}\phi^\st(x_2)=\phi^\st(x_2') = s$ and $x_1'\mid{}\phi^\st(x_2)=\phi^\st(x_2') = s$ have identical distributions. Thus, $(x_1,x_2)\mid{}\phi^\st(x_2)=\phi^\st(x_2') = s$ and $(x_1',x_2)\mid{}\phi^\st(x_2)=\phi^\st(x_2') = s$ have identical distributions, which implies the claimed equality.
\end{proof}

With the above ingredients, we can now formally conclude the proof of \cref{thm:two-context-reduction}.

\vspace{1em}

\begin{proof}[Proof of \cref{thm:two-context-reduction}]
Consider the event that the claimed bounds of \cref{lemma:psi-guarantee} and \cref{lemma:rpi-error} both hold, which occurs with probability at least $1-\delta$. We argue that in this event, the desired error bound \eqref{eq:twored-error-guarantee} holds.

For each $s \in \MS$ fix any $\pi^s \in \argmax_{\pi \in \Psi} d^{M,\pi}_2(s)$ (i.e. $\pi^s$ is optimal for reaching state $s$, among policies in $\Psi$). For each $x_2 \in \MX$, let $\pihat^{x_2} \in \Psi$ be the policy minimizing $\MR^\pi(x_2)$ (breaking ties as in \cref{alg:twored}). We have by definition of $\MR$ that
\begin{align*}
&\EE_{(x_1,x_2) \sim \MD} \left[(\MR(x_1,x_2) - f(\phi^\st(x_1),\phi^\st(x_2))^2 \right]\\
&= \EE_{(x_1,x_2) \sim \MD} \left[(\pihat^{x_2}(x_1) - f(\phi^\st(x_1),\phi^\st(x_2))^2 \right]\\
&= \sum_{s \in \MS} \EE_{(x_1,x_2) \sim \MD} \left[\mathbbm{1}[\phi^\st(x_2) = s] \cdot (\pihat^{x_2}(x_1) - f(\phi^\st(x_1),\phi^\st(x_2))^2\right].
\end{align*}
Fix any $s \in \MS$. On the one hand, observe that
\begin{equation}
\EE_{(x_1,x_2) \sim \MD} \left[\mathbbm{1}[\phi^\st(x_2) = s]  (\pihat^{x_2}(x_1) - f(\phi^\st(x_1),\phi^\st(x_2)))^2\right]
\leq \Prr_{(x_1,x_2) \sim \MD}[\phi^\st(x_2) = s]
\label{eq:err-bound-1}
\end{equation}
since $\pihat^{x_2}(x_1) \in \MA \subset [0,1]$ and $f(\phi^\st(x_1),\phi^\st(x_2)) \in [0,1]$. On the other hand,
\begin{align}
&\Prr_{(x_1,x_2) \sim \MD}[\phi^\st(x_2) = s] \cdot \EE_{(x_1,x_2) \sim \MD}\left[ \mathbbm{1}[\phi^\st(x_2) = s] (\pihat^{x_2}(x_1) - f(\phi^\st(x_1),\phi^\st(x_2)))^2\right] \nonumber\\ 
&= \EE_{\substack{(x_1,x_2) \sim \MD \\ (x_1',x_2') \sim \MD}}\left[ \mathbbm{1}[\phi^\st(x_2)=s]\mathbbm{1}[\phi^\st(x_2') = s]  (\pihat^{x_2}(x_1) - f(\phi^\st(x_1),\phi^\st(x_2)))^2 \right]\nonumber \\ 
&= \EE_{\substack{(x_1,x_2) \sim \MD \\ (x_1',x_2') \sim \MD}} \left[\mathbbm{1}[\phi^\st(x_2)=s]\mathbbm{1}[\phi^\st(x_2') = s]  (\pihat^{x_2}(x_1') - f(\phi^\st(x_1'),\phi^\st(x_2)))^2\right] \nonumber \\ 
&= \EE_{(x_1,x_2) \sim \MD}\left[\mathbbm{1}[\phi^\st(x_2)=s] \cdot \EE_{(x_1',x_2') \sim \MD}\left[\mathbbm{1}[\phi^\st(x_2') = s]  (\pihat^{x_2}(x_1') - f(\phi^\st(x_1'),\phi^\st(x_2)))^2\right]\right] \nonumber \\
&= \EE_{(x_1,x_2) \sim \MD}\left[\mathbbm{1}[\phi^\st(x_2)=s] \cdot \EE_{(x_1',x_2') \sim \MD}\left[\mathbbm{1}[\phi^\st(x_2') = s]  (\pihat^{x_2}(x_1') - f(\phi^\st(x_1'),\phi^\st(x_2')))^2\right]\right] \nonumber \\
&= \EE_{(x_1,x_2) \sim \MD}\left[ \mathbbm{1}[\phi^\st(x_2)=s]\cdot L_s(\pihat^{x_2}) \right]\nonumber \\ 
&= \EE_{(x_1,x_2) \sim \MD} \left[\mathbbm{1}[\phi^\st(x_2)=s] \cdot \left(L_s(\pihat^{x_2}) - L_s(\pi^s)\right)\right] + \Prr_{(x_1,x_2) \sim \MD}[\phi^\st(x_2) = s]\cdot L_s(\pi^s)
\label{eq:err-bound-2}
\end{align}
where the second equality is by \cref{lemma:switching} with function $g(x_1,x_2) = (\pihat^{x_2}(x_1) - f(\phi^\st(x_1),\phi^\st(x_2)))^2$; the fourth equality is because $\phist(x_2) = \phist(x_2') = s$ (unless one of the indicator functions is zero); and the fifth inequality is by definition of $L_s$. Combining \cref{eq:err-bound-1,eq:err-bound-2}, we get that
\begin{align}
&\EE_{(x_1,x_2) \sim \MD}\left[ \mathbbm{1}[\phi^\st(x_2) = s] (\pihat^{x_2}(x_1) - f(\phi^\st(x_1),\phi^\st(x_2)))^2 \right]\nonumber \\ 
&\leq \sqrt{\EE_{(x_1,x_2) \sim \MD} \left[\mathbbm{1}[\phi^\st(x_2)=s] \cdot \left(L_s(\pihat^{x_2}) - L_s(\pi^s)\right) \right] + \Prr_{(x_1,x_2) \sim \MD}[\phi^\st(x_2) = s]\cdot L_s(\pi^s)}.
\label{eq:err-bound}
\end{align}
By the guarantee of \cref{lemma:psi-guarantee}, we have $L_s(\pi^s) \leq \epsilon^2/(2|\MS|^2)$. Also,
\begin{align*}
&\EE_{(x_1,x_2) \sim \MD} \left[\mathbbm{1}[\phi^\st(x_2)=s] (L_s(\pihat^{x_2}) - L_s(\pi^s))\right] \\ 
&\leq \EE_{(x_1,x_2) \sim \MD} \Big[\mathbbm{1}[\phi^\st(x_2)=s] \Big(L_s(\pihat^{x_2}) - \MR^{\pihat^{x_2}}(x_2)\Prr_{(x_1',x_2')\sim\MD}[\phi^\st(x_2')=s] \\
&\hspace{12em}+ \MR^{\pi^s}(x_2)\Prr_{(x_1',x_2')\sim\MD}[\phi^\st(x_2')=s] - L_s(\pi^s)\Big)\Big] \\ 
&\leq 2\max_{\pi\in\Psi} \EE_{x_1,x_2\sim\MD} \mathbbm{1}[\phi^\st(x_2)=s] \left| L_s(\pi)+Z_s - \MR^\pi(x_2)\Prr_{(x_1',x_2')\sim\MD}[\phi^\st(x_2')=s]\right| \\ 
&\leq 2\max_{\pi\in\Psi} \EE_{x_1,x_2 \sim \MD} \mathbbm{1}[\phi^\st(x_2)=s] \left| \frac{L_s(\pi) + Z_s}{\Prr_{(x_1',x_2')\sim\MD}[\phi^\st(x_2')=s]} - \MR^\pi(x_2)\right| \\ 
&\leq 2\max_{\pi\in\Psi} \sqrt{\EE_{(x_1,x_2) \sim \MD} \mathbbm{1}[\phi^\st(x_2)=s] \left( \frac{L_s(\pi) + Z_s}{\Prr_{(x_1',x_2')\sim\MD}[\phi^\st(x_2')=s]} - \MR^\pi(x_2)\right)^2}
\leq \frac{\epsilon^2}{2|\MS|^2}
\end{align*}
where the first inequality is by minimality of $\MR^{\pihat^{x_2}}(x_2)$ over all $\pi \in \Psi$, and the final inequality is by the guarantee of \cref{lemma:rpi-error}. 
Substituting into \cref{eq:err-bound} and summing over $s \in \MS$, we get 
\[\EE_{(x_1,x_2)\sim\MD} (\MR(x_1,x_2) - f(\phi^\st(x_1),\phi^\st(x_2)))^2 \leq \epsilon\]
as needed.
\end{proof}

\subsection{Analysis of $\RegToRL$ (\creftitle{alg:regtorl})}\label{sec:regtorl}

The proof of \cref{cor:regression-to-online-rl} is now straightforward from the analysis of $\TwoRed$ (\cref{thm:two-context-reduction}) and the analysis of $\TwoAug$ (\cref{prop:twoaug}).

\vspace{1em}

\begin{proof}[Proof of \cref{cor:regression-to-online-rl}]
Fix a regular concept class $\Phiaug \subseteq (\Xaug\to\Saug)$. By regularity (\cref{def:regular}), $\{0,1\} \subseteq \Xaug,\Saug$ so we can define $\MX := \Xaug \setminus \{0,1\}$ and $\MS := \Saug \setminus \{0,1\}$. Define a concept class $\Phi \subseteq (\MX \to \MS)$ by restricting each $\phi$ to domain $\MX$; regularity ensures that the range of each restricted map is contained in $\MS$, so this definition is well-defined. We now observe that the augmented concept class (\cref{def:phiaug}) with base class $\Phi$ is precisely $\Phiaug$.

Now suppose that $\MO$ is an $(\Nrl,\Krl)$-efficient reward-free episodic RL oracle for $\Phiaug$ with $\Nrl,\Krl$ bounded in terms of the parameters $\Nrlc,\Crl$ as specified in the theorem statement. By \cref{thm:two-context-reduction} and the assumed parametric bounds on $\Nrl,\Krl$, there is a constant $C>0$ so that $\TwoRed(\MO,\cdot)$ is an $\Nreg$-efficient two-context regression algorithm for $\Phi$ with
\[\Nreg(\epsilon,\delta) \leq \Nrlc\left(\frac{|\MS|}{\epsilon\delta}\right)^{C \cdot \Crl}.\]
It follows from \cref{prop:twoaug} that $\TwoAug(\TwoRed(\MO,\cdot),\cdot)$ is an $\Nreg'$-efficient two-context regression algorithm for $\Phiaug$ with 
\[\Nreg'(\epsilon,\delta) \leq \Nrlc \left(\frac{|\MS|}{\epsilon\delta}\right)^{C_{\ref{cor:regression-to-online-rl}} \cdot \Crl}\]
so long as $C_{\ref{cor:regression-to-online-rl}}>0$ is a sufficiently large constant. Note that we are using the fact that $\epsilon,\delta \in (0,1/2)$ to absorb constant factors.

To conclude the proof, we observe that the claimed oracle time complexity bound is immediate from the pseudocode of $\RegToRL$ and its subroutines.
\end{proof}

\newpage

\section{Proof of \creftitle{cor:reset-rl-to-regression}}\label{sec:app_resets}


In this section we prove that for any concept class $\Phi$, there is a reduction from reward-free RL (\cref{def:strong-rf-rl}) in the reset access model to one-context regression (\cref{def:one-con-regression}). The formal statement is provided below.

\nc{\phisi}{\phi^\st(x^{(i)}_h)}
\nc{\phisj}{\phi^\st(x^{(j)}_h)}

\begin{theorem}[General version of \cref{cor:reset-rl-to-regression}]\label{thm:pcr-app}
There is a constant $C_{\ref{thm:pcr-app}}>0$ and an algorithm $\PCR$ (\cref{alg:pcr}) so that the following holds. Let $\Phi \subseteq (\MX\to\MS)$ be any concept class, and let $\Reg$ be a $\Nreg$-efficient one-context regression oracle for $\Phi$. Then $\PCR(\Reg,\Nreg,|\MS|,\cdot)$ 
is an $(\Nrl,\Krl)$-efficient reward-free RL algorithm for $\Phi$ in the reset access model, with:
\begin{itemize}
    \item $\Krl(\epsilon,\delta,H,|\MA|) \leq H^2|\MS|^2$
    \item $\Nrl(\epsilon,\delta,H,|\MA|) \leq \left(\frac{H|\MA||\MS|}{\epsilon\delta}\right)^{C_{\ref{thm:pcr-app}}} \Nreg\left(\left(\frac{\epsilon\delta}{H|\MA||\MS|}\right)^{C_{\ref{thm:pcr-app}}},\left(\frac{\epsilon\delta}{H|\MA||\MS|}\right)^{C_{\ref{thm:pcr-app}}}\right)$.
\end{itemize}
Moreover, the oracle time complexity of $\PCR$ is at most $\left(\frac{H|\MA||\MS|}{\epsilon\delta}\right)^{C_{\ref{thm:pcr-app}}} \Nreg\left(\left(\frac{\epsilon\delta}{H|\MA||\MS|}\right)^{C_{\ref{thm:pcr-app}}},\left(\frac{\epsilon\delta}{H|\MA||\MS|}\right)^{C_{\ref{thm:pcr-app}}}\right).$
\end{theorem}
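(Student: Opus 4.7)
The proof of \cref{thm:pcr-app} will follow the same overall structure as that of \cref{thm:pco-app}, with the key technical changes being in the kinematics estimation step, which now leverages reset access and one-context regression instead of two-context regression. My plan is to first prove an analogue of \cref{thm:extend-pc-trunc-online} for $\EPCR$: given $\alpha$-truncated policy covers $\Psi_{1:h}$ for layers $1,\ldots,h$ and a backup policy cover $\Gamma$, the output $\Psi_{h+1}$ will satisfy, with high probability, either (i) $|\Psi_{h+1}|\le |\MS|$ and $\Psi_{h+1}$ is a $(1-4\trunc)$-truncated max policy cover for layer $h+1$, or (ii) some policy in $\Psi_{h+1}$ reaches the terminal state in $\Mbar(\Gamma)$ with probability at least $\trunc^2$. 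The size bound $|\Psi_{h+1}|\le|\MS|$ will follow from the clustering test in \lineref{line:cluster-test} of \cref{alg:epcr}, because two cluster centers sharing the same latent state have nearly identical weight signatures (after a triangle inequality against the exact signature).

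The first technical ingredient is to show that the regression step correctly learns the weight function $w_{h+1}(\cdot;i,a)$ from \cref{eq:pcr-kinematics}. For each fixed $(i,a)\in[m]\times\MA$, the dataset $\MD_{i,a}$ consists of i.i.d. pairs $(x_{h+1},y)$ where the reset model samples $j\sim\Unif([m])$, $a_h\sim\Unif(\MA)$, resets to $x_h^{(j)}$, transitions to $x_{h+1}\sim\BP^M_{h+1}(\cdot|x_h^{(j)},a_h)$, and sets $y=\mathbbm{1}[j=i\land a_h=a]$. A direct calculation shows that $\E[y\mid x_{h+1}]$ equals $w_{h+1}(x_{h+1};i,a)$, and by Block MDP structure this depends on $x_{h+1}$ only through $\phi^\st(x_{h+1})$. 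This is precisely a well-specified one-context regression instance for $\Phi$, so invoking the guarantee of $\Reg$ with a union bound over $(i,a)$ yields uniformly accurate estimates $\what_{h+1}$.

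Next I would prove the analogue of \cref{lemma:target-error-online}: for each $s^\st\in\Srch_{h+1}(\emptyset)$, with high probability there exists a cluster center $\xbar_{h+1}^{(t^\st)}$ whose empirical signature $\what_{h+1}(\xbar_{h+1}^{(t^\st)};\cdot,\cdot)$ is close to the exact signature at $s^\st$ (this uses the lower bound on visitation of reachable states at layer $h+1$ via \cref{item:h-plus-one-cov-lb} of \cref{lemma:srch-gamma-covering}, combined with $n$ being large enough to hit every state). The associated reward function $\MR^{(t^\st)}$, defined in \lineref{line:rt-def} of \cref{alg:epcr}, will then reward precisely the set of observations with signature close to that of $s^\st$. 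The crucial algebraic step is to show that for any policy $\pi$, $\E^{\Mbar(\Gamma),\pi}[\MR^{(t^\st)}(x_{h+1})]$ is (approximately) a $\pi$-independent multiple of $d^{\Mbar(\Gamma),\pi}_{h+1}(s^\st)$; this reduces to checking that when two latent states have nearly identical weight signatures $w_{h+1}(\cdot;\phi^\st(x_h^{(i)}),a)$, their visitation probabilities under any policy differ only by a multiplicative constant that depends on the states but not on $\pi$ (kinematic inseparability). Combining this with the $\PSDP$ guarantee (\cref{lemma:psdp-trunc-online}, which already only needs one-context regression), we conclude that $\pihat^{(t^\st)}\in\Psi_{h+1}$ either near-optimally visits $s^\st$ or visits the terminal state in $\Mbar(\Gamma)$ with probability at least $\trunc^2$.

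Finally, I would put together the outer-loop analysis of $\PCR$ verbatim from the proof of \cref{thm:pco-app} in \cref{sec:pco-analysis}: by induction over layers and a union bound over the $R=|\MS|H$ rounds, either some round produces $(1-4\trunc)$-truncated max policy covers at every layer, or case (ii) is triggered in every round, in which case by \cref{lemma:term-prob} the set $\MV^{(r)}=\{(s,h):\max_{\pi\in\Gamma^{(r)}}d^{M,\pi}_h(s)\ge\trunc^2/(|\MS|H)\}$ strictly grows each round, contradicting $|\MV^{(R)}|\le|\MS|H$ after $R=|\MS|H$ rounds. The final conversion from a truncated max policy cover to a $(1,\epfinal)$-policy cover uses \cref{lemma:term-ub} with $\trunc=\epfinal/(4+H|\MS|)$. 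The main obstacle is the kinematics/clustering argument sketched above: unlike in $\PCO$, where two-context regression directly yields the function $f_{h+1}(x_h,x_{h+1};a)$ depending on both layers, here the weight function $w_{h+1}(\cdot;i,a)$ is indexed by a \emph{fixed} discriminator, so one must carefully verify that the signature vector $(w_{h+1}(x;i,a))_{i,a}$ still separates latent states and yields kinematically inseparable clusters with only the coverage provided by the (random) discriminators $x_h^{(1)},\ldots,x_h^{(m)}$.
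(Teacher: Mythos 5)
Your proposal is correct and follows essentially the same route as the paper's proof of \cref{thm:pcr-app}: it proceeds by proving the per-layer guarantee for $\EPCR$ (the paper's \cref{thm:extend-pc-trunc}), which in turn rests on (i) a one-context regression guarantee for $\what_{h+1}$ (\cref{lemma:w-est-error}), (ii) the reward-function / kinematic-inseparability argument (\cref{lemma:target-error-trunc}), and (iii) the $\PSDP$ analysis (\cref{lemma:psdp-trunc-online}); it then reuses the outer-loop win/win argument from \cref{thm:pco-app} verbatim. Your identification of the main technical difference from $\PCO$ — that the weight signature $(w_{h+1}(\cdot;i,a))_{i,a}$ is indexed by fixed discriminators, so one must verify that $m$ discriminators cover $\Srch_h(\Gamma)$ and that the expansion $d^{\Mbar(\Gamma),\pi}_{h+1}(s_{h+1}) = \big(\sum_{s_h,a_h} d^{\Mbar(\Gamma),\pi}_h(s_h,a_h)\, w_{h+1}(s_{h+1};s_h,a_h)\big) \cdot \sum_{j,a}\til\BP^M_{h+1}(s_{h+1}\mid \phi^\st(x_h^{(j)}),a)$ factors into a $\pi$-dependent term times a $\pi$-independent normalization — is exactly the content of the paper's \cref{eq:dpi-expansion}. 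The one small mismatch is your citation of \cref{item:h-plus-one-cov-lb} of \cref{lemma:srch-gamma-covering} for the layer-$(h+1)$ coverage: in the reset model the cluster centers are drawn by resetting to the discriminators (not by fresh rollouts with $\Psi_h\cup\Gamma$), so the paper instead derives the coverage lower bound $\betahat_{h+1}(s) \ge \frac{1}{m|\MA|}\max_{\pi\in\Pi} d^{\Mbar(\Gamma),\pi}_{h+1}(s)$ directly from the event that the discriminators hit all of $\Srch_h(\Gamma)$ (see \cref{eq:betahat-coverage}); this is a cosmetic difference in how the coverage constant is obtained, not a gap.
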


In particular, \cref{cor:reset-rl-to-regression} follows from \cref{thm:pcr-app} by substituting $\Nreg(\epsilon,\delta) := \Nregc/(\epsilon\delta)^{\Creg}$ into the above bounds. Henceforth, fix a concept class $\Phi$, a $\Nreg$-efficient one-context regression oracle $\Reg$, and a $\Phi$-decodable block MDP $M$ with horizon $H$, action set $\MA$, and unknown decoding function $\phi^\st \in \Phi$. We also define truncations of $M$ (see \cref{sec:truncated-mdps}), with the parameters $\trunc,\tsmall>0$ as defined in \cref{alg:pcr}.

\subsection{$\PCR$ \colt{Pseudocode and }Overview}\label{subsec:pcr-overview}

\colt{}

We start by giving an overview of the algorithm $\PCR$ (\cref{alg:pcr}). The structure is similar to that of $\PCO$ (\cref{alg:pco}), our algorithm from the episodic setting. In particular, the differences are entirely within the subroutine $\EPCR$ (\cref{alg:epcr}), the analogue of $\EPCO$ (\cref{alg:extend-pc-online}) from the episodic setting. Within $\EPCO$, the difference is in how the kinematics are estimated (and what precise kinematics function is estimated)---since we no longer have access to a two-context regression oracle, but we do have the ability to reset to any previously-seen state.

\paragraph{Overview of $\EPCR$.} As shown in \cref{alg:epcr}, $\EPCR$ takes as input a one-context regression oracle $\Reg$, a step $h \in [H]$, a set of policy covers $\Psi_{1:h}$, a backup policy cover $\Gamma$, and certain sample complexity and tolerance parameters. As with $\EPCO$, the goal is to produce a policy cover for step $h+1$, and to this end the algorithm estimates a certain kinematics function, defines internal reward functions by (implicitly) clustering together observations with similar kinematics, and uses $\PSDP$ (\cref{alg:psdpb}) to find a policy that $\pihat^{(t)}$ optimizes each reward function $\MR^{(t)}$.

As discussed in \cref{sec:resets}, the main difference compared to $\EPCO$ (and $\HOMER$) is in the estimation of kinematics. Rather than designing a dataset where the Bayes predictor must essentially distinguish between ``real'' and ``fake'' transitions (which inherently requires two contexts), $\EPCR$ samples $m$ discriminator observations $x_h^{(1)},\dots,x_h^{(m)}$ at step $h$ (by rolling in with the given policy cover $\Psi_h$ and backup policy cover $\Gamma$). For each of these observations $x_h^{(i)}$ and each action $a\in\MA$, $\EPCR$ uses reset access to draw many conditional samples from $\BP^M_{h+1}(\cdot\mid{}x_h^{(i)}, a)$, and then constructs a dataset $\MD_{i,a}$ where the Bayes predictor must essentially predict if an observation was conditionally sampled from $(x_h^{(i)},a)$ or from some other observation/action pair.

More formally, as shown in \cref{alg:epcr}, for each $i \in [m]$ and $a \in \MA$, $\EPCR$ constructs a dataset $\MD_{i,a}$ with samples from the following procedure. First, draw $j \sim \Unif([m])$ and $a_h \sim \Unif(\MA)$. Then reset to $x_h^{(j)}$, and sample $x_{h+1} \sim \BP^M_{h+1}(\cdot\mid{}x_h^{(j)},a_h)$. If $j = i$ and $a_h = a$, then add sample $(x_{h+1},1)$ to the dataset; otherwise add $(x_{h+1},0)$. It can be checked that the Bayes predictor $\EE[y\mid{}x_{h+1}]$ for this dataset is the kinematics function $w_{h+1}(\cdot;i,a)$ defined in \cref{eq:pcr-kinematics}.By the Block MDP assumption, $w_{h+1}(x_{h+1};i,a)$ only depends on $x_{h+1}$ through $\phi^\st(x_{h+1})$, so the guarantee of one-context regression applies, and invoking $\Reg$ on $\MD_{i,a}$ gives a good approximation $\wh w_{h+1}(\cdot;i,a)$ of $w_{h+1}(\cdot;i,a)$ with high probability.

This approximation is used in a similar fashion as $\wh f_{h+1}$ is used in $\EPCO$---the only difference henceforth is that there is no need to sample additional ``test observations'', since $x_h^{(1)},\dots,x_h^{(m)}$ serve this purposes.

We formally analyze $\EPCR$ in \cref{sec:epcr-analysis}; the main guarantee is \cref{thm:extend-pc-trunc}.

\paragraph{Overview of $\PCR$.} The algorithm $\PCR$ is identical to $\PCO$ aside from (1) various parameter choices, and (2) invoking $\EPCR$ rather than $\EPCO$. The algorithm proceeds in $R = |\MS|H$ rounds, and in each round $r$, it iteratively uses $\EPCR$ to construct sets $\Psi^{(r)}_{1:H}$, which are added to the backup policy cover for the next round. The output is the union of all sets (of bounded size) that were produced by $\EPCR$.

We formally analyze $\PCR$ (and thereby prove \cref{thm:pcr-app}) in \cref{sec:pcr-analysis}.


\subsection{Analysis of $\EPCR$ (\creftitle{alg:epcr})}\label{sec:epcr-analysis}

The following theorem is our main guarantee for $\EPCR$ (\cref{alg:epcr}). Recall that we have fixed a concept class $\Phi$, a $\Nreg$-efficient one-context regression oracle $\Reg$, and a $\Phi$-decodable block MDP $M$ with horizon $H$, action set $\MA$, and unknown decoding function $\phi^\st \in \Phi$. We have also defined truncations of $M$ (see \cref{sec:truncated-mdps}), with the parameters $\trunc,\tsmall>0$ as defined in \cref{alg:pcr}.

\begin{theorem}\label{thm:extend-pc-trunc}
Let $h \in \{1,\dots,H-1\}$. Let $\delta,\alpha > 0$ and $m,n,N \in \NN$. Let $\Gamma \subset \Pi$ be a finite set of policies. Suppose that $\Psi_{1:h}$ are $\alpha$-truncated policy covers (\cref{defn:trunc-pc}) for $M$ at steps $1,\dots,h$. Suppose that $m \geq \frac{2}{\min(\alpha\trunc,\tsmall)}\log(|\MS|/\delta)$, $n \geq m|\MA|\trunc^{-1}\log(|\MS|/\delta)$, $N \geq \Nreg(\epsilon,\delta)$, and
\begin{equation} \epsilon^{1/8} \leq \frac{\alpha \trunc^2 \tsmall}{6H(m|\MA|)^{3/2}}.\label{eq:param-assm}
\end{equation}
Consider the execution of $\EPCR$ with $\gamma := \epsilon^{1/8}$ and $\gamma' := 2\epsilon^{1/4}$. Then with probability at least $1-(2+m|\MA|+H|\MA|n)\delta-\sqrt{m|\MA|}n\epsilon^{1/4}$, the following two properties hold:
\begin{itemize}
\item $|\Psi_{h+1}| \leq |\MS|$.
\item Either $\Psi_{h+1}$ is a $(1-4\trunc)$-truncated max policy cover (\cref{defn:trunc-max-pc}) for $M$ at step $h+1$, or else \[\max_{\pi\in\Psi_{h+1}} d^{\Mbar(\Gamma),\pi}_{h+1}(\term) \geq \trunc^2.\]
\end{itemize}
\end{theorem}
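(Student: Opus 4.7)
The plan is to follow the blueprint of the proof of \cref{thm:extend-pc-trunc-online}, with modifications tailored to the reset access model and the use of a one-context regression oracle. The overall structure will be: (i) establish a sequence of high-probability events mirroring $\ME_1,\dots,\ME_5$ in the online proof; (ii) deduce the size bound $|\Psi_{h+1}|\leq|\MS|$ from the clustering test in \lineref{line:cluster-test}; (iii) prove a ``target error'' lemma analogous to \cref{lemma:target-error-online} that expresses $\E^{\Mbar(\Gamma),\pi}[\MR^{(t)}(x_{h+1})]$ as an affine function of $d^{\Mbar(\Gamma),\pi}_{h+1}(s^\st)$ plus small residuals; and (iv) perform a case analysis on whether $\pihat^{(t^\st(s))}$ incurs nontrivial truncation mass. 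The events are: $\ME_1$, that the test observations $x_h^{(i)}$ cover every $s\in\Srch_h(\Gamma)$ (by \cref{item:h-cov-lb} of \cref{lemma:srch-gamma-covering}); $\ME_2$, that the regression estimates $\wh w_{h+1}(\cdot;i,a)$ are all accurate in $L^2$; $\ME_3^t$, that the pointwise errors at each cluster center are small in the max norm over $(i,a)$ (by Markov and a union bound over $m|\MA|$ regressions); $\ME_4$, that every $s\in\Srch_{h+1}(\Gamma)$ appears as $\phi^\st(\xbar_{h+1}^{(t)})$ for some $t$ (by \cref{item:h-plus-one-cov-lb} of \cref{lemma:srch-gamma-covering}); and $\ME_5$, that every call to $\PSDPB$ succeeds (by \cref{lemma:psdp-trunc-online}, since $\Reg$ is already a one-context regression oracle and can be passed in directly, unlike in \cref{alg:extend-pc-online}).

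The regression lemma (analogue of \cref{lemma:regression-bound-online}) is where the reset access is exploited. The plan is to verify that each dataset $\MD_{i,a}$ constructed in \cref{alg:epcr} consists of i.i.d. samples $(x_{h+1},y)$ whose marginal $\mu_{h+1}\in\Delta(\MX)$ is the \emph{same} for all $(i,a)$---namely the law of $x_{h+1}$ drawn by resetting to $x_h^{(j)}$ for $j\sim\Unif([m])$ and playing $a_h\sim\Unif(\MA)$---while the conditional expectation $\EE[y\mid x_{h+1}]=w_{h+1}(x_{h+1};i,a)$ from \cref{eq:pcr-kinematics} depends on $x_{h+1}$ only through $\phi^\st(x_{h+1})$. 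This exhibits each $\MD_{i,a}$ as a realizable one-context regression instance for $\Phi$, and the guarantee of $\Reg$ combined with a union bound over $m|\MA|$ pairs yields $\ME_2$. For the size bound, two cluster centers with $\phi^\st(\xbar_{h+1}^{(t)})=\phi^\st(\xbar_{h+1}^{(t')})$ share identical $w_{h+1}$-values on all $(i,a)$, so under $\ME_3$ their $\wh w_{h+1}$-values differ by at most $\gamma'$, and the test in \lineref{line:cluster-test} blocks adding the second one.

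The main technical step is the target-error lemma: for $s^\st\in\Srch_{h+1}(\Gamma)$ and $t\in[n]$, I must show there exists $K=K(\Gamma,s^\st)\geq 1$ with
\[\max_{\pi\in\Pi}\left|\E^{\Mbar(\Gamma),\pi}[\MR^{(t)}(x_{h+1})]-K\cdot d^{\Mbar(\Gamma),\pi}_{h+1}(s^\st)\right|\leq O(\gamma)+\text{(regression error)}.\]
The key algebraic identity, writing $Z(x):=\sum_{j,a'}\til\BP^M_{h+1}(\phi^\st(x)\mid\phisj,a')$, is
\[\til\BP^M_{h+1}(\phi^\st(x)\mid\phisi,a)=w_{h+1}(x;i,a)\cdot Z(x)\qquad\text{for all }i,a.\]
Under $\ME_1$, every $(s_h,a_h)\in\Srch_h(\Gamma)\times\MA$ appears as $(\phisi,a)$ for some $i$, so this identity lets me convert the expression $d^{\Mbar(\Gamma),\pi}_{h+1}(\phi^\st(x))=\sum_{s_h,a_h} d^{\Mbar(\Gamma),\pi}_h(s_h,a_h)\til\BP^M_{h+1}(\phi^\st(x)\mid s_h,a_h)$ (up to the negligible mass outside $\Srch_h(\Gamma)$) into a form where $w$-closeness of $x$ and $\xbar_{h+1}^{(t)}$ in all coordinates $(i,a)$ implies that the ratio $\til\BP^M_{h+1}(\phi^\st(x)\mid s_h,a_h)/\til\BP^M_{h+1}(s^\st\mid s_h,a_h)$ is approximately a $\pi$-independent constant $Z(x)/Z(\xbar_{h+1}^{(t)})$. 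This yields the approximate kinematic-inseparability identity $d^{\Mbar(\Gamma),\pi}_{h+1}(\phi^\st(x))\approx[Z(x)/Z(\xbar_{h+1}^{(t)})]\cdot d^{\Mbar(\Gamma),\pi}_{h+1}(s^\st)$; integrating against $\MR^{(t)}$ and summing over $s\in\Srch_{h+1}(\Gamma)$ gives the claim with $K$ equal to the reward-weighted sum of these ratios.

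Given the target-error lemma, I conclude exactly as in \cref{thm:extend-pc-trunc-online}: for each $s\in\Srch_{h+1}(\emptyset)$, pick a representative $t^\st(s)\in\Tclus$ using $\ME_4$ and the clustering test; then either $\pihat^{(t^\st(s))}$ attains reward in $M$ that exceeds its reward in $\Mbar(\Gamma)$ by $\geq\trunc^2$, forcing $d^{\Mbar(\Gamma),\pihat^{(t^\st(s))}}_{h+1}(\term)\geq\trunc^2$ (the second case of the theorem), or else the $\PSDPB$ suboptimality bound ($\ME_5$) combined with the target-error estimate yields $\max_{\pi\in\Psi_{h+1}} d^{M,\pi}_{h+1}(s)\geq(1-4\trunc)\max_{\pi\in\Pi} d^{\Mbar(\emptyset),\pi}_{h+1}(s)$ via \cref{fact:gamma-monotonicity} (the first case). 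I expect the hardest part to be the target-error lemma, because $w_{h+1}$-values are typically of order $1/(m|\MA|)$ while $Z(x)$ can be as large as $m|\MA|$, so converting $w$-closeness at tolerance $\gamma$ into $\til\BP$-ratio-closeness incurs a multiplicative loss of order $m|\MA|$; this is the reason the parameter assumption \cref{eq:param-assm} contains $(m|\MA|)^{3/2}$, and the balancing of $\gamma$, the regression tolerance, and $\trunc^2$ will need to be checked carefully to ensure that the final error is dominated by $\trunc^2$.
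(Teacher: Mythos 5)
Your proposal follows the paper's proof essentially step for step: the five events $\ME_1,\dots,\ME_5$, the use of \cref{lemma:srch-gamma-covering} for coverage, the verification that each $\MD_{i,a}$ is a realizable one-context instance with Bayes predictor equal to $w_{h+1}(\cdot;i,a)$ from \cref{eq:pcr-kinematics} (which is exactly the paper's \cref{lemma:w-est-error}), the clustering argument for $|\Psi_{h+1}|\le|\MS|$, a target-error lemma matching \cref{lemma:target-error-trunc} with the same $K$, and the same two-case analysis depending on whether the learned policy's reward in $M$ exceeds that in $\Mbar(\Gamma)$ by $\trunc^2$.

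The one place where your phrasing is slightly more fragile than the paper's is the target-error lemma. You describe the kinematic inseparability step as showing that the \emph{ratio} $\til\BP^M_{h+1}(\phi^\st(x)\mid s_h,a_h)/\til\BP^M_{h+1}(s^\st\mid s_h,a_h)$ is approximately $Z(x)/Z(\xbar_{h+1}^{(t)})$. That ratio is not well-controlled when the denominator $\til\BP^M_{h+1}(s^\st\mid s_h,a_h)$ is small or zero, and that is precisely where a literal ratio argument would stall. The paper instead works additively: using the exact identity $d^{\Mbar(\Gamma),\pi}_{h+1}(s_{h+1}) = \bigl(\sum_{s_h,a_h} d^{\Mbar(\Gamma),\pi}_h(s_h,a_h)\, w_{h+1}(s_{h+1};s_h,a_h)\bigr) Z(s_{h+1})$, it replaces $w_{h+1}(s_{h+1};\cdot)$ with $w_{h+1}(s^\st;\cdot)$ and bounds the resulting \emph{additive} error via $z\,g(z)\le\gamma$, never forming the ratio. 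The destination is the same $K$ you describe, but the additive route is the one that actually closes. Also, your diagnosis that the $(m|\MA|)^{3/2}$ factor in \cref{eq:param-assm} comes from ``converting $w$-closeness into $\til\BP$-ratio-closeness'' is not quite right; it arises from the change of measure from the empirical marginal $\betahat_{h+1}$ to the truncated visitation measure (the coverage parameter $\ccov = 1/(m|\MA|)$ dividing the $\sqrt{m|\MA|\epsilon}$ regression error), together with the $m\gamma$ term from $z\,g(z)\le\gamma$. These are small issues of bookkeeping; the overall argument is sound and identical in structure to the paper's.
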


See \cref{sec:truncated-mdps} for the definition of the truncated MDP $\Mbar(\Gamma)$ and the truncated policy covers. Like \cref{thm:extend-pc-trunc-online} (the analogous guarantee for $\EPCO$), this result shows that either $\EPCR$ produces a policy cover, or one of the policies in the output reaches the terminal state $\term$ in $\Mbar(\Gamma)$, which means that it discovered a state not well-covered by policies in $\Gamma$.

Let us fix the inputs to $\EPCR$: in addition to the one-context regression oracle $\Reg$ (\cref{def:two-con-regression}), we fix a layer $h \in [H-1]$, sets of policies $\Psi_1,\dots,\Psi_h$ and $\Gamma$, sample counts $n,m,N \in \NN$, and tolerances $\gamma,\gamma' \in (0,1)$. The first step of $\EPCR$ is to use one-context regression and reset access to estimate the kinematics function $w_{h+1}$ defined informally in \cref{eq:pcr-kinematics} and formally below. \cref{lemma:w-est-error} shows that with high probability, the estimate $\wh w_{h+1}(\cdot;i,a)$ is an accurate estimate of $w_{h+1}(\cdot;\phi^\st(x_h^{(i)}),a)$ for all actions $a \in \MA$ and ``test'' observations $x_h^{(1)},\dots,x_h^{(m)}$. Next, \cref{lemma:target-error-trunc} shows that the reward functions $\MR^{(t)}$ designed by clustering with respect to $\wh w_{h+1}$ approximately induce exploration of all latent reachable states. We then use these two lemmas to prove \cref{thm:extend-pc-trunc}.

\begin{definition}\label{def:beta-w}
For fixed $x^{(1)}_h,\dots,x^{(m)}_h$, we define distributions $\betahat_h,\betahat_{h+1} \in \Delta(\MS)$ by 
\[\betahat_h(s) := \frac{1}{m}\sum_{j=1}^m \mathbbm{1}[\phisj=s]\]
and
\[\betahat_{h+1}(s) := \frac{1}{m}\sum_{j=1}^m \EE_{a\sim\Unif(\MA)} \til\BP^M_{h+1}(s\mid{}\phisj, a).\]
For any $a_h \in \MA$ and $s_h,s_{h+1}\in\MS$ we also define
\[w_{h+1}(s_{h+1};s_h,a_h) := \frac{\til \BP^M_{h+1}(s_{h+1}\mid{}s_h,a_h)}{\sum_{j=1}^m\sum_{a \in \MA} \til\BP^M_{h+1}(s_{h+1}\mid{}\phisj,a)}.\]
\end{definition}

\begin{lemma}\label{lemma:w-est-error}
Fix a realization of $\MC = \{x_h^{(1)},\dots,x_h^{(m)}\}$. If $N \geq \Nreg(\epsilon,\delta)$, then it holds with probability at least $1-\delta m |\MA|$ that
\begin{equation}\EE_{x_{h+1} \sim \til\BO_{h+1} \betahat_{h+1}} \max_{(i,a) \in [m]\times\MA} \left|\what_{h+1}(x_{h+1};i,a) - w_{h+1}(\phi^\st(x_{h+1});\phi^\st(x_h^{(i)}),a) \right| \leq \sqrt{m|\MA|\epsilon}.
\label{eq:w-est-error}
\end{equation}
\end{lemma}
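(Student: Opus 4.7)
The plan is to show that for each fixed pair $(i,a) \in [m] \times \MA$, the dataset $\MD_{i,a}$ is a realizable one-context regression instance for $\Phi$, and then pay only a $\sqrt{m|\MA|}$ factor to convert the resulting per-$(i,a)$ squared-error bounds into an expected maximum absolute-error bound. First I would write down the joint sampling distribution of a single sample: $j \sim \Unif([m])$, $a_h \sim \Unif(\MA)$, $x_{h+1} \sim \BP^M_{h+1}(\cdot\mid x_h^{(j)}, a_h)$, with label $y = \mathbbm{1}[j = i \land a_h = a]$. Using the block MDP factorization $\BP^M_{h+1}(x'\mid x,a) = \til\BP^M_{h+1}(\phi^\st(x')\mid \phi^\st(x), a)\, \til\BO_{h+1}(x'\mid \phi^\st(x'))$, the marginal density of $x_{h+1}$ is exactly $\til\BO_{h+1}\betahat_{h+1}$ (the $\til\BO_{h+1}$ factor pulls out, and summing $\til\BP^M_{h+1}(\phi^\st(x_{h+1})\mid \phi^\st(x_h^{(j)}), a_h)$ over uniform $(j,a_h)$ produces $\betahat_{h+1}$).

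Next I would compute the Bayes predictor $\EE[y\mid x_{h+1}]$: Bayes' rule cancels the common $\til\BO_{h+1}(x_{h+1}\mid \phi^\st(x_{h+1}))$ term from numerator and denominator, leaving exactly $w_{h+1}(\phi^\st(x_{h+1}); \phi^\st(x_h^{(i)}), a)$ as in \cref{def:beta-w}. Crucially this predictor depends on $x_{h+1}$ only through $\phi^\st(x_{h+1})$, so one-context regression realizability (\cref{def:one-con-regression}) holds with decoding function $\phi^\st \in \Phi$ and latent target $s \mapsto w_{h+1}(s; \phi^\st(x_h^{(i)}), a)$, conditioned on the fixed realization of $\MC$. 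Applying the guarantee of $\Reg$ with $N \geq \Nreg(\epsilon,\delta)$ and taking a union bound over the $m|\MA|$ choices of $(i,a)$ yields, with probability at least $1 - \delta m|\MA|$,
\[\EE_{x_{h+1}\sim \til\BO_{h+1}\betahat_{h+1}}\!\left(\what_{h+1}(x_{h+1};i,a) - w_{h+1}(\phi^\st(x_{h+1});\phi^\st(x_h^{(i)}),a)\right)^2 \leq \epsilon \quad \forall (i,a).\]

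Finally, I would convert to the expected-maximum bound by bounding the $\ell_\infty$ norm by the $\ell_2$ norm and applying Jensen: for any nonnegative random variables $\{X_{i,a}\}$,
\[\EE\!\left[\max_{i,a} X_{i,a}\right] \leq \EE\!\left[\sqrt{\textstyle\sum_{i,a} X_{i,a}^2}\right] \leq \sqrt{\textstyle\sum_{i,a} \EE[X_{i,a}^2]},\]
instantiated with $X_{i,a} = |\what_{h+1}(x_{h+1};i,a) - w_{h+1}(\phi^\st(x_{h+1});\phi^\st(x_h^{(i)}),a)|$. Summing the $m|\MA|$ per-$(i,a)$ squared-error bounds gives $m|\MA|\epsilon$ under the square root, producing the claimed $\sqrt{m|\MA|\epsilon}$. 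I do not expect any real obstacle here; the only subtle point is verifying $\phi^\st$-realizability through block decodability and keeping $\MC$ fixed throughout so that $\betahat_{h+1}$ and the regression targets are deterministic functions of the conditioning.
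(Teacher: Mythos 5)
Your proof is correct and matches the paper's argument essentially line by line: verify the marginal of $x_{h+1}$ is $\til\BO_{h+1}\betahat_{h+1}$, compute the Bayes predictor via Bayes' rule to recover $w_{h+1}$, invoke the one-context regression guarantee per $(i,a)$ with a union bound, and then pay $\sqrt{m|\MA|}$ via an $\ell_\infty\!\le\!\ell_2$/Jensen argument. The only cosmetic difference is the order of the two inequalities in the final conversion (the paper applies Jensen first and then bounds the max of squares by the sum; you bound the max by the $\ell_2$ norm first and then apply Jensen), which is immaterial.
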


\begin{proof}
Fix $i \in [m]$ and $a \in \MA$. The dataset $\MD_{i,a}$ consists of $N$ i.i.d. samples. Let $(x_{h+1},y)$ denote the first sample, so that $x_{h+1} \sim \BP^M_{h+1}(\cdot\mid{}x^{(j)}_h,a_h)$ and $y = \mathbbm{1}[j=i \land a_h=a]$ for latent random variables $j \sim \Unif([m])$ and $a_h \sim \Unif(\MA)$. The marginal distribution of $x_{h+1}$ is exactly $\til\BO_{h+1}\betahat_{h+1}$. Also,
\begin{align*}
\EE[y\mid{}x_{h+1}=x]
&= \Pr[j=i \land a_h=a\mid{}x_{h+1}=x] \\ 
&= \frac{\Pr[x_{h+1}=x\mid{}j=i\land a_h=a]\Pr[j=i \land a_h=a]}{\Pr[x_{h+1}=x]} \\ 
&= \frac{\frac{1}{m|\MA|} \til \BO^M_{h+1}(x\mid{}\phi^\st(x)) \til\BP^M_{h+1}(\phi^\st(x)\mid{}\phi^\st(x_h^{(i)}), a)}{\frac{1}{m|\MA|} \sum_{j=1}^m \sum_{a \in \MA} \til \BO^M_{h+1}(x_{h+1}\mid{}\phi^\st(x_{h+1})) \til\BP^M_{h+1}(\phi^\st(x_{h+1})\mid{}\phi^\st(x_h^{(j)}), a)} \\ 
&= \frac{\til\BP^M_{h+1}(\phi^\st(x)\mid{}\phi^\st(x_h^{(i)}), a)}{ \sum_{j=1}^m \sum_{a \in \MA} \til\BP^M_{h+1}(\phi^\st(x)\mid{}\phi^\st(x_h^{(j)}), a)} \\ 
&= w_{h+1}(\phi^\st(x);\phisi,a)
\end{align*}
by \cref{def:beta-w}. Hence, we can apply the guarantee of $\Reg$ (\cref{def:one-con-regression}) with distribution $\til\BO_{h+1}\betahat_{h+1}$ and ground truth predictor $s_{h+1} \mapsto w_{h+1}(s_{h+1};\phisi,a)$. Since $N \geq \Nreg(\epsilon,\delta)$, we get that with probability at least $1-\delta$, the output $\what(\cdot;i,a)$ of $\Reg(\MD_{i,a})$ satisfies
\[\EE_{x_{h+1}\sim\til\BO_{h+1}\betahat_{h+1}} \left(\what_{h+1}(x_{h+1};i,a) - w_{h+1}(\phi^\st(x_{h+1});\phisi,a)\right)^2 \leq \epsilon.\]
Condition on the event that this bound holds for all $i \in [m]$ and $a \in \MA$, which occurs with probability at least $1-\delta m |\MA|$. Then
\begin{align*}
&\EE_{x_{h+1} \sim \til\BO_{h+1} \betahat_{h+1}} \max_{(i,a) \in [m]\times\MA} \left|\what_{h+1}(x_{h+1};i,a) - w_{h+1}(\phi^\st(x_{h+1});\phi^\st(x_h^{(i)}),a) \right|\\
&\leq \sqrt{\EE_{x_{h+1} \sim \til\BO_{h+1} \betahat_{h+1}} \max_{(i,a) \in [m]\times\MA} \left(\what_{h+1}(x_{h+1};i,a) - w_{h+1}(\phi^\st(x_{h+1});\phi^\st(x_h^{(i)}),a) \right)^2} \\ 
&\leq \sqrt{\sum_{(i,a) \in [m]\times\MA}\EE_{x_{h+1} \sim \til\BO_{h+1} \betahat_{h+1}} \left(\what_{h+1}(x_{h+1};i,a) - w_{h+1}(\phi^\st(x_{h+1});\phi^\st(x_h^{(i)}),a) \right)^2} \\ 
&\leq \sqrt{m|\MA|\epsilon}
\end{align*}
as claimed.
\end{proof}



\begin{lemma}\label{lemma:target-error-trunc}
Suppose that the event of \cref{lemma:w-est-error} holds. Let $\Gamma \subset \Pi$ be a finite set of policies, and let $\ccov > 0$. Suppose that $\{\phi^\st(x_h^{(1)}),\dots,\phi^\st(x_h^{(m)})\} \supseteq \Srch_h(\Gamma)$. Also suppose that $\betahat_{h+1}(s) \geq \ccov \cdot \max_{\pi\in\Pi} d^{\Mbar(\Gamma),\pi}_{h+1}(s)$ for all $s \in \MS$. Then for any $t \in [n]$ and $s^\st \in \Srch_{h+1}(\Gamma)$, there is some $K = K(\Gamma,s^\st) \geq 1$ such that
\begin{align*}
&\max_{\pi\in\Pi} \left|\E^{\Mbar(\Gamma),\pi}[\MR^{(t)}(x_{h+1})] - K \cdot d^{\Mbar(\Gamma),\pi}_{h+1}(s^\st)\right| \\
&\leq m\gamma + \frac{\sqrt{m|\MA|\epsilon}}{\ccov \gamma} + \frac{\max_{(i,a) \in [m]\times\MA} \left|\what_{h+1}(\xbart;i,a) - w_{h+1}(s^\st; \phi^\st(x_h^{(i)}),a)\right|}{\gamma}
\end{align*}
where $\MR^{(t)}, \xbart$ are as defined in \cref{alg:epcr}, and we let $\MR^{(t)}(\term):=0$.
\end{lemma}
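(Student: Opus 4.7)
The plan is to follow the template of \cref{lemma:target-error-online}, adapted to the one-context kinematics $w_{h+1}$ (\cref{def:beta-w}) in place of the two-context kinematics $f_{h+1}$. Let $g(z) := \max(0, 1 - z/\gamma)$, so $\MR^{(t)}(x) = g(\max_{i,a}|\what_{h+1}(\xbart;i,a) - \what_{h+1}(x;i,a)|)$, and for $s,s'\in\MS$ define
\[\Delta(s,s') := \max_{(i,a) \in [m]\times\MA} |w_{h+1}(s;\phisi,a) - w_{h+1}(s';\phisi,a)|.\]
Introduce the intermediate quantity $W^\pi := \sum_{s \in \Srch_{h+1}(\Gamma)} d^{\Mbar(\Gamma),\pi}_{h+1}(s)\, g(\Delta(s,s^\st))$, and prove the lemma by bounding $|\E^{\Mbar(\Gamma),\pi}[\MR^{(t)}(x_{h+1})] - W^\pi|$ and $|W^\pi - K \cdot d^{\Mbar(\Gamma),\pi}_{h+1}(s^\st)|$ separately for an appropriate constant $K = K(\Gamma,s^\st) \geq 1$.

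For the first gap, I will use the $1/\gamma$-Lipschitz continuity of $g$ together with a double triangle inequality to replace $\what_{h+1}(\xbart;i,a)$ by $w_{h+1}(s^\st;\phisi,a)$ and $\what_{h+1}(x_{h+1};i,a)$ by $w_{h+1}(\phi^\st(x_{h+1});\phisi,a)$ inside the max. The first substitution contributes an error bounded by $\max_{(i,a)}|\what_{h+1}(\xbart;i,a) - w_{h+1}(s^\st;\phisi,a)|/\gamma$, which is exactly the third term of the claimed bound. The second substitution contributes an expected regression error under the law of $x_{h+1}$ induced by $(\Mbar(\Gamma),\pi)$; to control it, I will change measure to $\til\BO_{h+1}\betahat_{h+1}$. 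The coverage hypothesis $\betahat_{h+1}(s) \geq \ccov\cdot\max_\pi d^{\Mbar(\Gamma),\pi}_{h+1}(s)$ implies the density ratio on $\MX$ is bounded by $1/\ccov$ (the shared observation distribution $\til\BO_{h+1}$ cancels on each fiber), and then the hypothesis inherited from \cref{lemma:w-est-error} yields the $\sqrt{m|\MA|\epsilon}/(\ccov\gamma)$ term. Because $\MR^{(t)}(\term)=0$ and $d^{\Mbar(\Gamma),\pi}_{h+1}$ is supported on $\Srch_{h+1}(\Gamma) \cup \{\term\}$, the relevant expectations reduce cleanly to sums over $\Srch_{h+1}(\Gamma)$, matching the support of $W^\pi$.

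For the second gap---the main structural step---I will exploit the identity $w_{h+1}(s;s_h,a_h) = \til\BP^M_{h+1}(s\mid s_h,a_h)/(m|\MA|\betahat_{h+1}(s))$, which is immediate from \cref{def:beta-w}. Expanding $d^{\Mbar(\Gamma),\pi}_{h+1}(s) = \sum_{(s_h,a_h)} d^{\Mbar(\Gamma),\pi}_h(s_h,a_h) \til\BP^M_{h+1}(s\mid s_h,a_h)$ and substituting this identity for both $s$ and $s^\st$ yields
\[\left|\frac{d^{\Mbar(\Gamma),\pi}_{h+1}(s)}{m|\MA|\betahat_{h+1}(s)} - \frac{d^{\Mbar(\Gamma),\pi}_{h+1}(s^\st)}{m|\MA|\betahat_{h+1}(s^\st)}\right| \leq \Delta(s,s^\st)\]
for every $s \in \Srch_{h+1}(\Gamma)$, where the key observation is that $d^{\Mbar(\Gamma),\pi}_h$ is supported on $\Srch_h(\Gamma) \subseteq \{\phisi\}_{i\in[m]}$, so every reachable $s_h$ equals $\phisi$ for some $i$ and the pointwise bound $|w_{h+1}(s;s_h,a) - w_{h+1}(s^\st;s_h,a)| \leq \Delta(s,s^\st)$ applies. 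Multiplying by $m|\MA|\betahat_{h+1}(s)g(\Delta(s,s^\st))$, using the elementary fact $z \cdot g(z) \leq \gamma$, and summing over $s \in \Srch_{h+1}(\Gamma)$ (with $\sum_s \betahat_{h+1}(s) \leq 1$) gives a bound on $|W^\pi - K d^{\Mbar(\Gamma),\pi}_{h+1}(s^\st)|$ of the stated order $m\gamma$, with
\[K := \sum_{s \in \Srch_{h+1}(\Gamma)} \frac{\betahat_{h+1}(s)}{\betahat_{h+1}(s^\st)} g(\Delta(s,s^\st)) \geq 1\]
(the lower bound uses that $s^\st \in \Srch_{h+1}(\Gamma)$ and $g(\Delta(s^\st,s^\st)) = g(0) = 1$). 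The main obstacle will be the clean bookkeeping of the $\betahat_{h+1}$ normalizers: the factor $m|\MA|\betahat_{h+1}(s)$ must cancel against the summation over $s$ so that the final error is controlled by $m\gamma$, and $K$ must be verified to depend only on $(\Gamma, s^\st)$ and not on $\pi$, so that the same constant may be applied simultaneously to $\pihat^{(t)}$ and to the optimal policy in the downstream proof of \cref{thm:extend-pc-trunc}. Combining the two gap bounds by the triangle inequality and taking $\max_{\pi\in\Pi}$ yields the lemma.
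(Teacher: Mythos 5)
Your proposal is correct and follows essentially the same route as the paper's proof. You define the same intermediate quantity $W^\pi$ (your sum over $s\in\Srch_{h+1}(\Gamma)$ equals the paper's expectation $\E^{\Mbar(\Gamma),\pi}[g(\Delta(s_{h+1},s^\st))\mathbbm{1}[s_{h+1}\in\MS]]$, since $d^{\Mbar(\Gamma),\pi}_{h+1}$ is supported on $\Srch_{h+1}(\Gamma)\cup\{\term\}$), prove the first gap identically via Lipschitzness and the $1/\ccov$ change of measure, and close the second gap using the same structural fact that $d^{\Mbar(\Gamma),\pi}_h$ is supported on $\Srch_h(\Gamma)\subseteq\{\phi^\st(x_h^{(i)})\}_i$.

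Your only real departure is cosmetic: you first rewrite $w_{h+1}(s;s_h,a_h) = \til\BP^M_{h+1}(s\mid s_h,a_h)/(m|\MA|\betahat_{h+1}(s))$ and work with the scalar-valued comparison $|d^{\Mbar(\Gamma),\pi}_{h+1}(s)/(m|\MA|\betahat_{h+1}(s)) - d^{\Mbar(\Gamma),\pi}_{h+1}(s^\st)/(m|\MA|\betahat_{h+1}(s^\st))| \leq \Delta(s,s^\st)$, then multiply back by $m|\MA|\betahat_{h+1}(s)g(\Delta(s,s^\st))$ and sum. The paper instead keeps the unrolled product form $W^\pi = \sum_{s}g(\Delta(s,s^\st))(\sum_{s_h,a_h}d_h\cdot w_{h+1}(s;s_h,a_h))(\sum_{j,a}\til\BP^M_{h+1}(s\mid\phisj,a))$ and replaces $w_{h+1}(s;\cdot)$ by $w_{h+1}(s^\st;\cdot)$ inside. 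These are algebraically the same step; your framing makes the ratio $K = \sum_s \frac{\betahat_{h+1}(s)}{\betahat_{h+1}(s^\st)}g(\Delta(s,s^\st))$ and the $\pi$-independence of $K$ slightly more transparent, and the $K\geq 1$ argument (the $s=s^\st$ term contributes exactly $1$) is identical.

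One small note: as with the paper's own $\eqref{eq:wpi-kd-diff}$, the careful accounting in your second-gap bound actually yields $\sum_{s\in\Srch_{h+1}(\Gamma)} m|\MA|\betahat_{h+1}(s)\gamma \leq m|\MA|\gamma$ rather than $m\gamma$ as stated (the summand $\sum_{j,a}\til\BP^M_{h+1}(s\mid\phisj,a)$ sums over $s$ to at most $m|\MA|$, not $m$). This extra $|\MA|$ factor is inconsequential downstream since the parameter constraint $\eqref{eq:param-assm}$ is already phrased in terms of $(m|\MA|)^{3/2}$, but it is worth flagging so you do not misstate the claim as giving $m\gamma$ exactly.
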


\begin{proof}
Fix $\pi \in \Pi$. For $s,s' \in \MS$, define \[\Delta(s,s') := \max_{(i,a) \in [m]\times\MA} |w_{h+1}(s;\phisi,a)-w_{h+1}(s';\phisi,a)|\] and
\[W^\pi := \E^{\Mbar(\Gamma),\pi} \left[g(\Delta(s_{h+1}, s^\st))\mathbbm{1}[s_{h+1}\in\MS]\right].\]
We start by proving that $\EE^{\Mbar(\Gamma),\pi}[\MR^{(t)}(x_{h+1})]$ is close to $W^\pi$ for all $\pi \in \Pi$. Recall from \lineref{line:rt-def} that for any $x_{h+1} \in \MX$, we have \[\MR^{(t)}(x_{h+1}) = g\left(\max_{(i,a) \in [m]\times\MA} |\what_{h+1}(\xbart;i,a) - \what_{h+1}(x_{h+1};i,a)|\right)\]
where $g(z) = \max(0, 1-z/\gamma)$. Thus, for any $s_{h+1} \in \MS$, we have
\begin{align}
&\left|\EE_{x_{h+1}\sim\BO_{h+1}(\cdot|s_{h+1})}\left[ \MR^{(t)}(x_{h+1}) - g\left(\Delta(s_{h+1},s^\st)\right)\right]\right|\nonumber \\ 
&\leq \frac{1}{\gamma}\EE_{x_{h+1}\sim\BO_{h+1}(\cdot\mid{}s_{h+1})} \Bigg[\max_{(i,a) \in [m]\times\MA} \bigg| \left|\what_{h+1}(x_{h+1};i,a) - \what_{h+1}(\xbart;i,a)\right| \nonumber\\
&\hspace{14.3em}- \left| w_{h+1}(s_{h+1};\phisi,a) - w_{h+1}(s^\st;\phisi,a)\right|\bigg|\Bigg] \nonumber\\ 
&\leq \frac{1}{\gamma}\underbrace{\EE_{x_{h+1}\sim\BO_{h+1}(\cdot\mid{}s_{h+1})} \left[\max_{(i,a) \in [m]\times\MA} \left| \what_{h+1}(x_{h+1};i,a) - w_{h+1}(s_{h+1};\phisi,a)\right|\right]}_{E_1(s_{h+1})} \nonumber\\
&\qquad+ \frac{1}{\gamma} \underbrace{\max_{(i,a) \in [m]\times\MA} \left| \what_{h+1}(\xbart;i,a) - w_{h+1}(s^\st;\phisi,a)\right|}_{E_2} \label{eq:what-w-single}
\end{align}
where the first inequality uses that $g$ is $1/\gamma$-Lipschitz. It follows that
\begin{align}
\left| \E^{\Mbar(\Gamma),\pi}[\MR^{(t)}(x_{h+1})] - W^\pi\right|
&\leq \frac{1}{\gamma} \E^{\Mbar(\Gamma),\pi}[E_1(s_{h+1})] + \frac{E_2}{\gamma} \nonumber\\ 
&\leq \frac{1}{\ccov \gamma} \EE_{s_{h+1} \sim \wh\beta_{h+1}}[E_1(s_{h+1})] + \frac{E_2}{\gamma} \nonumber\\ 
&\leq \frac{\sqrt{m|\MA|\epsilon}}{\ccov \gamma} + \frac{E_2}{\gamma} \label{eq:rt-wpi-diff}
\end{align}
where the first inequality is by \cref{eq:what-w-single}, the second inequality uses the assumption on $\betahat_{h+1}$ and nonnegativity of $E_1$, and the third inequality uses \cref{eq:w-est-error}. 

Next, for any $s_{h+1} \in \Srch_{h+1}(\Gamma)$, note that $\BP^{\Mbar(\Gamma)}_{h+1}(s_{h+1}\mid{}s_h,a_h) = \BP^M_{h+1}(s_{h+1}\mid{}s_h,a_h)$ and $\BP^{\Mbar(\Gamma)}_{h+1}(s_{h+1}\mid{}\term,a_h) = 0$ for all $s_h\in\MS$, $a_h \in \MA$. Thus, the following equality holds for all $s_{h+1} \in \Srch_{h+1}(\Gamma)$:
\begin{align}
d^{\Mbar(\Gamma),\pi}_{h+1}(s_{h+1}) 
&= \sum_{(s_h,a_h) \in \MS\times\MA} d^{\Mbar(\Gamma),\pi}_h(s_h,a_h) \til\BP^M_{h+1}(s_{h+1}\mid{}s_h,a_h) \nonumber\\ 
&= \left(\sum_{(s_h,a_h)\in\MS\times\MA} d^{\Mbar(\Gamma),\pi}_h(s_h,a_h)w_{h+1}(s_{h+1};s_h,a_h)\right)\sum_{j=1}^m\sum_{a\in\MA}\til\BP^M_{h+1}(s_{h+1}\mid{}\phisj,a)
\label{eq:dpi-expansion}
\end{align}
where the final equality is by definition of $w_h(s_{h+1};s_h,a_h)$ (\cref{def:beta-w}). It follows from \cref{eq:dpi-expansion} and the fact that $d^{\Mbar(\Gamma),\pi}_{h+1}(s_{h+1}) = 0$ for all $s_{h+1} \in \MS\setminus\Srch_{h+1}(\Gamma)$ that
\begin{align*}
W^\pi &= \sum_{s_{h+1}\in\Srch_{h+1}(\Gamma)} g(\Delta(s_{h+1},s^\st))\left(\sum_{(s_h,a_h)\in\MS\times\MA} d^{\Mbar(\Gamma),\pi}_h(s_h,a_h)w_{h+1}(s_{h+1};s_h,a_h)\right)\\ 
&\hspace{10em}\cdot\left( \sum_{j=1}^m\sum_{a\in\MA}\til\BP^M_{h+1}(s_{h+1}\mid{}\phi^\st(x_h^{(j)}),a)\right).
\end{align*}
Motivated by this expression, define
\begin{align*}
\til W^\pi &:= \sum_{s_{h+1}\in\Srch_{h+1}(\Gamma)} g(\Delta(s_{h+1},s^\st))\left(\sum_{(s_h,a_h)\in\MS\times\MA} d^{\Mbar(\Gamma),\pi}_h(s_h,a_h)w_{h+1}(s^\st;s_h,a_h)\right)\\ 
&\hspace{10em}\cdot\left( \sum_{j=1}^m\sum_{a\in\MA}\til\BP^M_{h+1}(s_{h+1}\mid{}\phi^\st(x_h^{(j)}),a)\right).
\end{align*}
Observe that
\begin{align}
&\sum_{(s_h,a_h)\in\MS\times\MA} d^{\Mbar(\Gamma),\pi}_h(s_h,a_h)|w_{h+1}(s_{h+1};s_h,a) - w_{h+1}(s^\st;s_h,a)| \nonumber\\
&\leq \max_{s_h \in \Srch_h(\Gamma)} \max_{a \in \MA} |w_{h+1}(s_{h+1};s_h,a) - w_{h+1}(s^\st;s_h,a)|\nonumber\\ 
&\leq \max_{(i,a) \in [m]\times\MA}|w_{h+1}(s_{h+1};\phi^\st(x^{(i)}),a)-w_{h+1}(s^\st;\phi^\st(x^{(i)}),a)| \nonumber\\
&= \Delta(s_{h+1},s^\st),\label{eq:d-sh-st-bound}
\end{align}
where the first inequality uses the fact that $\sum_{(s_h,a_h)\in\MS\times\MA} d^{\Mbar(\Gamma),\pi}_h(s_h,a_h) \leq 1$ and $d^{\Mbar(\Gamma),\pi}_h(s_h,a_h) = 0$ if $s \in \MS \setminus \Srch_h(\Gamma)$, and the second inequality uses the assumption that $\Srch_h(\Gamma) \subseteq \{\phi^\st(x_h^{(1)}),\dots,\phi^\st(x_h^{(m)})\}$. Using \cref{eq:d-sh-st-bound}, we get
\begin{align}
& \left|W^\pi - \til W^\pi \right| \nonumber\\ 
&\leq \sum_{s_{h+1}\in\Srch_{h+1}(\Gamma)} g(\Delta(s_{h+1},s^\st))\Delta(s_{h+1},s^\st) \sum_{j=1}^m \sum_{a\in\MA} \til\BP^M_{h+1}(s_{h+1}\mid{}\phi^\st(x_h^{(j)}),a) \nonumber\\ 
&\leq \gamma \sum_{s_{h+1}\in\Srch_{h+1}(\Gamma)} \sum_{j=1}^m \sum_{a\in\MA}\til\BP^M_{h+1}(s_{h+1}\mid{}\phi^\st(x_h^{(j)}),a) \nonumber \\ 
&= \gamma m \label{eq:wpi-kd-diff}
\end{align}
where the first inequality is by \cref{eq:d-sh-st-bound} and the second inequality uses the fact that $z\cdot g(z) \leq \gamma$ for all $z \geq 0$. Now from the definition of $\til W^\pi$,
\begin{align}
\til W^\pi 
&= \left(\sum_{(s_h,a_h)\in\MS\times\MA} d^{\Mbar(\Gamma),\pi}_h(s_h,a_h)w_{h+1}(s^\st;s_h,a_h)\right) \sum_{s_{h+1} \in \Srch_{h+1}(\Gamma)} g(\Delta(s_{h+1},s^\st))\sum_{j=1}^m \sum_{a\in\MA} \til\BP^M_{h+1}(s_{h+1}\mid{}\phi^\st(x_h^{(j)}),a) \nonumber\\ 
&= \frac{d^{\Mbar(\Gamma),\pi}_{h+1}(s^\st)}{\sum_{j=1}^m \sum_{a\in\MA}\til\BP^M_{h+1}(s^\st\mid{}\phi^\st(x_h^{(j)}),a)} \cdot \sum_{s_{h+1}\in\Srch_{h+1}(\Gamma)}g(\Delta(s_{h+1},s^\st))\sum_{j=1}^m \sum_{a\in\MA} \til\BP^M_{h+1}(s_{h+1}\mid{}\phi^\st(x_h^{(j)}),a)  \nonumber \\ 
&= K(\Gamma,s^\st) \cdot d^{\Mbar(\Gamma),\pi}_{h+1}(s^\st) \label{eq:kd-expr}
\end{align}
where the second equality uses \cref{eq:dpi-expansion} together with the assumption that $s^\st \in \Srch_{h+1}(\Gamma)$, and in the final equality we have defined
\begin{align*}
K(\Gamma,s^\st)
&:= \frac{\sum_{s_{h+1}\in\Srch_{h+1}(\Gamma)}g(\Delta(s_{h+1},s^\st))\sum_{j=1}^m \sum_{a\in\MA} \til\BP^M_{h+1}(s_{h+1}\mid{}\phi^\st(x_h^{(j)}),a)}{\sum_{j=1}^m \sum_{a\in\MA}\til\BP^M_{h+1}(s^\st\mid{}\phi^\st(x_h^{(j)}),a)}.
\end{align*}
Note that $K(\Gamma,s^\st) \geq 1$, since $s^\st \in \Srch_{h+1}(\Gamma)$ and $g(\Delta(s^\st,s^\st)) = g(0) = 1$. Combining \cref{eq:rt-wpi-diff}, \cref{eq:wpi-kd-diff}, and \cref{eq:kd-expr}, we get that
\[\left|\E^{\Mbar(\Gamma),\pi}[\MR^{(t)}(x_{h+1})] - K(\Gamma,s^\st) \cdot d^{\Mbar(\Gamma),\pi}_{h+1}(s^\st)\right| \leq \frac{\sqrt{m|\MA|\epsilon}}{\ccov \gamma} + \frac{E_2}{\gamma} + \gamma m.\]
Substituting in the definition of $E_2$ yields the result.
\end{proof}

\begin{proof}[Proof of \cref{thm:extend-pc-trunc}]
For any fixed $s \in \Srch_h(\Gamma)$ and $i \in [m]$, we have by \cref{item:h-cov-lb} of \cref{lemma:srch-gamma-covering} that
\[\Pr[\phi^\st(x_h^{(i)}) = s] \geq \frac{\min(\alpha\trunc,\tsmall)}{2}.\]
Let $\ME_1$ be the event that for each $s \in \Srch_h(\Gamma)$ there is some $i \in [m]$ with $\phi^\st(x_h^{(i)}) = s$. Then
\[\Pr[\ME_1] \geq 1 - |\MS|\left(1 - \frac{\min(\alpha\trunc,\tsmall)}{2}\right)^m \geq 1-\delta\]
by the assumption that $m \geq \frac{2}{\min(\alpha\trunc,\tsmall)} \log(|\MS|/\delta)$. Henceforth condition on $x_h^{(1)},\dots,x_h^{(m)}$ and suppose that $\ME_1$ holds.

For any $s_{h+1} \in \Srch_{h+1}(\Gamma)$, we have by definition of $\betahat_{h+1}$ (\cref{def:beta-w}) that
\begin{align}
\betahat_{h+1}(s_{h+1}) 
&= \frac{1}{m}\sum_{j=1}^m \EE_{a\sim\Unif(\MA)} \til\BP^M_{h+1}(s_{h+1}|\phi^\st(x^{(j)}),a) \nonumber\\ 
&\geq \frac{1}{m} \max_{s_h \in \Srch_h(\Gamma)} \EE_{a\sim\Unif(\MA)} \til\BP^M_{h+1}(s_{h+1}|s_h,a) \nonumber\\ 
&\geq \frac{1}{m|\MA|} \max_{(s_h,a_h) \in \Srch_h(\Gamma)\times\MA} \til\BP^M_{h+1}(s_{h+1}|s_h,a_h) \nonumber\\ 
&\geq \frac{1}{m|\MA|} \max_{\pi\in\Pi} d^{\Mbar(\Gamma),\pi}_{h+1}(s_{h+1}) \label{eq:betahat-coverage}
\end{align}
where the first inequality uses $\ME_1$ and the final inequality uses the fact that \[d^{\Mbar(\Gamma),\pi}_{h+1}(s_{h+1}) = \sum_{(s_h,a_h) \in \Srch_h(\Gamma)\times\MA} d^{\Mbar(\Gamma),\pi}_h(s_h,a_h)\til\BP^M_{h+1}(s_{h+1}|s_h,a_h).\] 
Since $\max_{\pi\in\Pi} d^{\Mbar(\Gamma),\pi}_{h+1}(s_{h+1}) = 0$ for all $s_{h+1} \in \MS\setminus\Srch_{h+1}(\Gamma)$, in fact the bound
\begin{equation} 
\betahat_{h+1}(s_{h+1}) \geq \frac{1}{m|\MA|} \max_{\pi\in\Pi} d^{\Mbar(\Gamma),\pi}_{h+1}(s_{h+1}) \geq \frac{1}{m|\MA|} \max_{\pi\in\Pi} d^{\Mbar(\emptyset),\pi}_{h+1}(s_{h+1})
\label{eq:betahat-coverage-2}
\end{equation} holds for all $s_{h+1} \in \MS$, where the last inequality is by \cref{fact:gamma-monotonicity}.

Next, let $\ME_2$ be the event that 
\begin{equation}
\EE_{x_{h+1} \sim \til\BO_{h+1} \betahat_{h+1}} \max_{(i,a) \in [m]\times\MA} \left|\what_{h+1}(x_{h+1};i,a) - w_{h+1}(\phi^\st(x_{h+1});\phi^\st(x_h^{(i)}),a) \right| \leq \sqrt{m|\MA|\epsilon}.
\label{eq:w-west-applied}
\end{equation}
By \cref{lemma:w-est-error} and the assumption that $N \geq \Nreg(\epsilon,\delta)$, the event $\ME_2$ occurs with probability at least $1-m|\MA|\delta$ over the randomness of $(\MD_{i,a})_{i,a}$ and $\Reg$. Condition on $\what_{h+1}(\cdot;\cdot,\cdot)$ and suppose that $\ME_2$ holds.

For each $t \in [n]$, let $\ME_3^t$ be the event that 
\[\max_{(i,a)\in[m]\times\MA} |\what_{h+1}(\xbart;i,a) - w_{h+1}(\phi^\st(\xbart); \phi^\st(x_h^{(i)}),a)| \leq \epsilon^{1/4}.\] Since $\xbart \sim \til\BO_{h+1}\betahat_{h+1}$, we have by Markov's inequality and \cref{eq:w-west-applied} that $\Pr[\lnot \ME_3^t] \leq \sqrt{m|\MA|}\epsilon^{1/4}$. Define $\ME_3 := \bigcap_{t=1}^n \ME_3^t$. By the union bound, $\ME_3$ occurs with probability at least $1-\sqrt{m|\MA|}n\epsilon^{1/4}$ over the randomness of $\ol x_{h+1}^{(1)},\dots,\ol x_{h+1}^{(n)}$. 

Also, for each $s \in \Srch_{h+1}(\emptyset)$, let $t(s) \in [1,n] \cup \{\infty\}$ be the infimum over $t$ such that $\phi^\st(\xbart) = s$, and let $\ME_4^s$ be the event that $t(s) < \infty$. For each $t \in [1,n]$, since $\phi^\st(\xbart)$ has distribution $\betahat_{h+1}$, we have 
\[\Pr[\phi^\st(\xbart)=s] = \betahat(s)  \geq \frac{1}{m|\MA|} \max_{\pi\in\Pi} d^{\Mbar(\emptyset),\pi}_{h+1}(s) \geq \frac{\trunc}{m|\MA|}\]
by \cref{eq:betahat-coverage-2} and \cref{fact:trunc-reachability}.
Thus, $\Pr[\lnot \ME_4^s] \leq (1-\trunc/(m|\MA|))^n \leq \delta/|\MS|$ for any fixed $s \in \Srch_{h+1}(\emptyset)$, by the assumption that $n \geq m|\MA|\trunc^{-1} \log(|\MS|/\delta)$. Define $\ME_4 := \bigcap_{s \in \Srch_{h+1}(\emptyset)} \ME_4^s$. By the union bound, $\ME_4$ occurs with probability at least $1-\delta$ over the randomness of $\ol x_{h+1}^{(1)},\dots,\ol x_{h+1}^{(n)}$.

Condition on $\ol x_{h+1}^{(1)},\dots,\ol x_{h+1}^{(n)}$ and suppose that $\ME_3 \cap \ME_4$ holds. Note that the set $\Tclus$ is now determined. For each $t \in \Tclus$ let $\ME_5^t$ be the event that
\[\E^{M,\pihat^{(t)}}[\MR^{(t)}(x_{h+1})] \geq \max_{\pi\in\Pi} \E^{\Mbar(\Gamma),\pi}[\MR^{(t)}(x_{h+1})] - \frac{4H\sqrt{|\MA|\epsilon}}{\min(\alpha\trunc,\tsmall)}\]
where $\pihat^{(t)}$ is defined in \lineref{line:psdp-call}. By the theorem assumptions on $\Reg$ and $\Psi_{1:h}$ and the fact that $N \geq \Nreg(\epsilon,\delta)$, \cref{lemma:psdp-trunc-online} gives $\Pr[\ME_5^t] \geq 1-H|\MA|\delta$, so $\Pr[\ME_5] \geq 1-H|\MA|n\delta$ where $\ME_5 := \cap_{t \in \Tclus} \ME_5^t$. Condition on $\ME_5$. We have now restricted to an event of total probability at least $1 - 2\delta - (m|\MA|+H|\MA|n)\delta - \sqrt{m|\MA|}n\epsilon^{1/4}$; we argue that in this event, the properties claimed in the theorem statement hold.

\paragraph{Size of $\Psi_{h+1}$.} First, we argue that $|\Psi_{h+1}| \leq |\MS|$. Indeed, suppose that there are $t < t'$ with $t,t' \in \Tclus$ and $\phi^\st(\ol x_{h+1}^{(t)}) = \phi^\st(\ol x_{h+1}^{(t')})$. By \lineref{line:cluster-test}, we know that 
\[\max_{(i,a) \in [m]\times\MA} |\what_{h+1}(\ol x_{h+1}^{(t)};i,a) - \what_{h+1}(\ol x_{h+1}^{(t')};i,a)| > \gamma' = 2\epsilon^{1/4}.\] 
But this contradicts $\ME_3$ (in particular, the bounds implied by $\ME_3^t$ and $\ME_3^{t'}$ in combination with the triangle inequality and the assumption that $\phi^\st(\xbart) = \phi^\st(\ol x_{h+1}^{(t')})$). We conclude that indeed $|\Psi_{h+1}| \leq |\MS|$.

\paragraph{Coverage of $\Psi_{h+1}$.} It remains to prove the second property of the theorem statement. Fix $s \in \Srch_{h+1}(\emptyset)$. Let $t^\st(s)$ be the minimal $t \in \Tclus$ such that 
\begin{equation}\max_{(i,a) \in [m]\times\MA} |\what_{h+1}(\ol x_{h+1}^{(t(s))};i,a)-\what(\ol x_{h+1}^{(t)};i,a)| \leq \gamma'.\label{eq:t-tst}\end{equation}
Note that $t^\st(s)$ necessarily exists, because either $t(s) \in \Tclus$ or, if not, \lineref{line:cluster-test} implies that some other $t \in \Tclus$ satisfies the above bound. Next, by $\ME_1$, \cref{eq:betahat-coverage-2}, and \cref{eq:w-west-applied}, we can apply \cref{lemma:target-error-trunc} with coverage parameter $\ccov := 1/(m|\MA|)$, target state $s^\st = s$, and index $t := t^\st(s)$. We get that
\begin{align}
&\max_{\pi\in\Pi} \left|\E^{\Mbar(\Gamma),\pi}[\MR^{(t^\st(s))}(x_{h+1})] - K(\Gamma,s) \cdot d^{\Mbar(\Gamma),\pi}_{h+1}(s)\right|  \nonumber\\ 
&\leq m\gamma + \frac{(m|\MA|)^{3/2}\sqrt{\epsilon}}{\gamma} + \frac{\max_{(i,a)\in[m]\times\MA}\left|\what_{h+1}(\ol x^{(t^\st(s))}_{h+1};i,a) - w_{h+1}(s;\phi^\st(x_h^{(i)}),a)\right|}{\gamma} \nonumber \\ 
&\leq m\gamma + \frac{(m|\MA|)^{3/2}\sqrt{\epsilon}}{\gamma} + \frac{\epsilon^{1/4} + \max_{(i,a)\in[m]\times\MA}\left|\what_{h+1}(\ol x^{(t^\st(s))}_{h+1};i,a) - \what_{h+1}(\ol x^{(t(s))}_{h+1};i,a)\right|}{\gamma} \nonumber \\ 
&\leq m\gamma + \frac{(m|\MA|)^{3/2}\sqrt{\epsilon}}{\gamma} + \frac{\epsilon^{1/4} + \gamma'}{\gamma} \nonumber\\
&\leq \trunc^2,\label{eq:rkd-error}
\end{align}
where the second inequality is by $\ME_3$ and the fact that $s = \phi^\st(\ol x_{h+1}^{(t(s))})$, the third inequality is by \cref{eq:t-tst}, and the final inequality is by choice of $\gamma,\gamma'$ and \cref{eq:param-assm}. We now distinguish two cases.

\paragraph{Case I.} Suppose that 
\[\E^{M,\pihat^{(t^\st(s))}}[\MR^{(t^\st(s))}(x_{h+1})] \geq \E^{\Mbar(\Gamma),\pihat^{(t^\st(s))}}[\MR^{(t^\st(s))}(x_{h+1})] + \trunc^2.\]
Then
\begin{align*}
&d^{\Mbar(\Gamma),\pihat^{(t^\st(s))}}_{h+1}(\term) \\ 
&= \sum_{s \in \MS} \left(d^{M,\pihat^{(t^\st(s))}}_{h+1}(s) - d^{\Mbar(\Gamma),\pihat^{(t^\st(s))}}(s))\right) \\ 
&\geq \sum_{s \in \MS} \left(d^{M,\pihat^{(t^\st(s))}}_{h+1}(s) - d^{\Mbar(\Gamma),\pihat^{(t^\st(s))}}_{h+1}(s))\right) \EE_{x\sim\BO_{h+1}(\cdot|s)}[\MR^{(t^\st(s))}(x)] \\ 
&\geq \trunc^2 
\end{align*}
where the equality is by the fact that $d^{M,\pihat^{(t^\st(s))}}_{h+1}(\cdot)$ is a distribution supported on $\MS$; the first inequality uses \cref{fact:gamma-monotonicity} and the fact that $\MR^{(t^\st(s))}(x) \leq 1$ for all $x \in \MX$. Thus $\max_{\pi\in\Psi_{h+1}} d^{\Mbar(\Gamma),\pi}(\term) \geq \trunc^2$, so the second property of the theorem statement is satisfied. 

\paragraph{Case II.} Suppose that
\begin{equation} \E^{M,\wh\pi^{(t^\st(s))}\circ_h\Unif(\MA)}[\MR^{(t^\st(s))}(x_{h+1})] < \E^{\Mbar(\Gamma),\wh\pi^{(t^\st(s))}\circ_h\Unif(\MA)}[\MR^{(t^\st(s))}(x_{h+1})] + \trunc^2.\label{eq:m-mbar-closeness}\end{equation}
Now, 
\begin{align*}
K(\Gamma, s) \cdot d^{\Mbar(\Gamma), \pihat^{(t^\st(s))}}_{h+1}(s) 
&\geq \E^{\Mbar(\Gamma),\pihat^{(t^\st(s))}}[\MR^{(t^\st(s))}(x_{h+1})] - \trunc^2 \\ 
&\geq \E^{M,\pihat^{(t^\st(s))}}[\MR^{(t^\st(s))}(x_{h+1})] - 2\trunc^2 \\ 
&\geq \max_{\pi\in\Pi} \E^{\Mbar(\Gamma),\pi}[\MR^{(t^\st(s))}(x_{h+1})] - 3\trunc^2 \\ 
&\geq \max_{\pi\in\Pi} K(\Gamma,s) \cdot d^{\Mbar(\Gamma),\pi}_{h+1}(s) -  4\trunc^2 \\ 
&\geq K(\Gamma,s)(1-4\trunc) \max_{\pi\in\Pi} d^{\Mbar(\Gamma),\pi}_{h+1}(s)
\end{align*}
where the first inequality is by \cref{eq:rkd-error}, the second inequality is by \cref{eq:m-mbar-closeness}, the third inequality is by $\ME_5$ and \cref{eq:param-assm}, the fourth inequality is by \cref{eq:rkd-error}, and the fifth inequality uses the fact that $\max_{\pi\in\Pi} d^{\Mbar(\Gamma),\pi}_{h+1}(s) \geq \trunc$ (\cref{fact:trunc-reachability} together with \cref{fact:gamma-monotonicity} and the fact that $s \in \Srch_{h+1}(\emptyset)$) and $K(\Gamma,s) \geq 1$. Thus, since $\pihat^{(t^\st(s))} \in \Psi_{h+1}$, we have 
\begin{align}
\max_{\pi\in\Psi_{h+1}} d^{M,\pi}_{h+1}(s)
&\geq \max_{\pi\in\Psi_{h+1}} d^{\Mbar(\Gamma),\pihat^{(t^\st(s))}}_{h+1}(s) \nonumber\\ 
&\geq (1-4\trunc)\max_{\pi\in\Pi} d^{\Mbar(\Gamma),\pi}_{h+1}(s) \nonumber\\ 
&\geq (1-4\trunc)\max_{\pi\in\Pi} d^{\Mbar(\emptyset),\pi}_{h+1}(s)
\label{eq:coverage-final-reset}
\end{align}
by two applications of \cref{fact:gamma-monotonicity}. Now recall that $s \in \Srch_{h+1}(\emptyset)$ was arbitrary. Moreover, if $s \in \MS\setminus \Srch_{h+1}(\emptyset)$ then $\max_{\pi\in\Pi} d^{\Mbar(\emptyset),\pi}_{h+1}(s) = 0$, so \cref{eq:coverage-final-reset} still holds. We conclude that $\Psi_{h+1}$ is a $(1-4\trunc)$-truncated max policy cover for $M$ at step $h+1$ (\cref{defn:trunc-max-pc}) as needed.
\end{proof}

\subsection{Analysis of $\PCR$ (\creftitle{alg:pcr})}\label{sec:pcr-analysis}

We can now use \cref{thm:extend-pc-trunc} to complete the analysis of $\PCR$, proving \cref{thm:pcr-app} (and thus \cref{cor:reset-rl-to-regression}). After modularizing out the analysis of $\EPCR$/$\EPCO$, the analyses of $\PCR$/$\PCO$ are essentially identical, so we omit the details here for brevity.

\vspace{1em}

\begin{proof}[Proof of \cref{thm:pcr-app}]
Fix the remaining inputs $\epfinal,\delta>0$ to $\PCO(\Reg,\Nreg,|\MS|,\cdot)$. The oracle time complexity bound and bound on $\Nrl$ are clear from the parameter choices and pseudocode, so long as $C_{\ref{thm:pcr-app}}$ is a sufficiently large constant. Moreover, it is immediate from the algorithm description that $|\Psi| \leq HR|\MS| \leq H^2|\MS|^2$. In order to show that the algorithm is $(\Nrl,\Krl)$-efficient, it remains to argue that with probability at least $1-\delta$, \cref{eq:rfrl-pc} holds for all $h \in [H]$ and $s \in \MS$, with parameter $\epfinal$.

Recall that $\trunc = \epfinal/(4+H|\MS|)$. Fix some $1 \leq r \leq R$. For convenience, write $\alpha := \frac{1-4\trunc}{|\MS|}$. For each $h \in [H]$, let $\ME_{h,r}$ be the event that $|\Psi_h^{(r)}| \leq |\MS|$ and $\Psi_h^{(r)}$ is a $(1-4\trunc)$-truncated max policy cover for $M$ at step $h$; let $\MF_{h,r}$ be the event that $|\Psi_{h}^{(r)}| \leq |\MS|$ and $\max_{\pi\in\Psi_{h}^{(r)}} d^{\Mbar(\Gamma^{(r)}),\pi}_{h}(\term) \geq \trunc^2$. It's clear that $\Pr[\ME_{1,r}] = 1$ (since $|\Psi_1^{(r)}| = 1$ and $d^{M,\piunif}_1(s) = d^{M,\pi}_1(s)$ for all $s\in\MS$ and $\pi \in \Pi$). Also, note that in the event $\ME_{k,r}$, we have that $\Psi_k^{(r)}$ is an $\alpha$-truncated policy cover for $M$ at step $k$. Thus, by \cref{thm:extend-pc-trunc} and choice of parameters (so long as $C$ is a sufficiently large constant), we have for each $h \in \{2,\dots,H\}$ that
\begin{equation} \Pr\left[\lnot (\ME_{h,r} \cup \MF_{h,r}) \cap \bigcap_{1 \leq k < h} \ME_{k,r}\right] \leq \Pr\left[\lnot (\ME_{h,r} \cup \MF_{h,r}) \middle | \bigcap_{1 \leq k < h} \ME_{k,r}\right] \leq \frac{\delta}{HR}.\label{eq:epcr-guarantee-symbolic}\end{equation}
The remainder of the proof is identical to that of \cref{thm:pco-app}. 
\end{proof}


\newpage

\section{Proofs from \creftitle{sec:lowrank}}\label{app:lowrank}

In this section, we re-introduce Generalized Block MDPs, and verify (\cref{prop:genblock-is-lowrank}) that generalized block MDPs are a special case of low-rank MDPs. Then, in \cref{sec:halfspace-ocr}, we prove \cref{thm:halfspace-reg}, which asserts that one-context regression is computationally tractable for the concept class $\Phi_n$ defined by halfspaces. In \cref{sec:halfspace-hardness}, we prove \cref{thm:halfspace-rl-hard}, which asserts that reward-free RL for $\Phi_n$-decodable generalized block MDPs is computationally \emph{hard}. Together, \cref{thm:halfspace-reg,thm:halfspace-rl-hard} immediately imply \cref{thm:halfspace-main}. Finally, in \cref{sec:halfspace-statistical}, we prove \cref{prop:halfspace-rl-statistical}, which asserts that reward-free RL in the family of Generalized Block MDPs (and hence Low-Rank MDPs) $\MM_n$ is \emph{statistically} tractable---and hence the preceding hardness result is purely a \emph{computational} phenomenon.

\subsection{Preliminaries}\label{sec:low-rank-prelim}

\paragraph{One-context low-rank regression.} We will not actually need a formal definition of one-context low-rank regression in our results, but we introduce it formally for the sake of discussion:

\begin{definition}\label{def:low-rank-reg-app}
Fix $\Nreg: (0,1/2) \to \NN$. An algorithm $\Alg$ is an $\Nreg$-efficient \emph{one-context low-rank regression} algorithm for a feature class $\Philin \subseteq (\MX \times \MA \to \RR^d)$ if the following holds. Fix $\epsilon,\delta \in (0,1/2)$, $n \in \NN$, $\philin\in\Philin$, $\MD\in\Delta(\MX\times\MA)$, and $\theta \in \RR^d$. Let $(x^{(i)},a^{(i)},y^{(i)})_{i=1}^n$ be i.i.d. samples with $(x^{(i)},a^{(i)})\sim\MD$, $y^{(i)} \in \{0,1\}$, and $\EE[y^{(i)}\mid{}x^{(i)},a^{(i)}] = \langle \philin(x^{(i)},a^{(i)}),\theta\rangle$. If $n \geq \Nreg(\epsilon,\delta)$, then with probability at least $1-\delta$, the output of $\Alg((x^{(i)},a^{(i)},y^{(i)})_{i=1}^n,\epsilon,\delta)$ is a circuit $\MR: \MX\times\MA\to[0,1]$ satisfying \[\EE_{(x,a)\sim\MD} (\MR(x,a) - \langle \philin(x,a),\theta\rangle)^2 \leq \epsilon.\]
\end{definition}

\paragraph{Generalized Block MDPs.} Recall from \cref{sec:lowrank} that  for sets $\MS,\MX$ and a concept class $\Phi \subseteq (\MX\to\MS)$, a \emph{Generalized} $\Phi$-decodable Block MDP is an MDP $M = (H, \MX,\MA,(\BP_h)_h)$ with the property that there exists some $\phist_1,\dots,\phist_H \in \Phi$ so that $\BP_{h+1}(x_{h+1}\mid{}x_h,a_h)$ is a function of $x_{h+1}$, $\phist_h(x_h)$, and $a_h$. 

The following proposition makes formal the observation from \cref{sec:lowrank} that Generalized Block MDPs (and, as a special case, Block MDPs themselves) are Low-Rank MDPs with an appropriate feature class:

\begin{proposition}\label{prop:genblock-is-lowrank}
Set $d = |\MS||\MA|$ and identify $[d] \equiv \MS\times\MA$. Define $\Philin := \{\philin: \phi \in \Phi\}$, where $\philin: \MX\times\MA \to \RR^d$ is defined by
\[\philin(x,a) := e_{\phi(x), a}.\]
Then any Generalized $\Phi$-decodable Block MDP $M$ is a Low-Rank MDP \citep{agarwal2020flambe,mhammedi2023efficient} with features $(\phist_h)_h \subset \Philin$ and dual features $(\must_h)_h$ where $(\must_h)_{s,a} \in \Delta(\MX)$ for all $h\in[H]$, $s\in\MS$, and $a\in\MA$.
\end{proposition}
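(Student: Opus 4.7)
The plan is to explicitly construct the dual features $\mulin_{h+1}$ and verify that (i) the transition kernel factorizes as required, and (ii) each ``slice'' of $\mulin_{h+1}$ corresponding to an index $(s,a)\in\MS\times\MA$ defines a probability distribution over $\MX$. The construction is forced by the structure: since $\philin_h(x_h,a_h) = e_{\phist_h(x_h),a_h}$ is a standard basis vector in $\RR^{|\MS||\MA|}$, the inner product $\langle \philin_h(x_h,a_h), \mulin_{h+1}(x_{h+1})\rangle$ simply reads off the $(\phist_h(x_h),a_h)$-th coordinate of $\mulin_{h+1}(x_{h+1})$, so the dual features are uniquely determined up to the choice of factorization of the transition kernel.

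First I would invoke the defining property of a Generalized $\Phi$-decodable Block MDP: there exist $\phist_1,\dots,\phist_H \in \Phi$ and functions $g_{h+1}:\MX\times\MS\times\MA \to \RR_{\ge 0}$ such that
\[
\BP_{h+1}(x_{h+1}\mid x_h,a_h) = g_{h+1}(x_{h+1},\phist_h(x_h),a_h).
\]
Because $\BP_{h+1}(\cdot\mid x_h,a_h)$ is a probability distribution over $\MX$ and depends on $x_h$ only through $\phist_h(x_h)$, it follows that for every fixed $(s,a)\in\MS\times\MA$, the map $x_{h+1}\mapsto g_{h+1}(x_{h+1},s,a)$ is itself a probability distribution on $\MX$ (taking any $x_h$ with $\phist_h(x_h)=s$ to witness this).

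Next I would define the dual features $\mulin_{h+1}:\MX\to\RR^d$ by
\[
(\mulin_{h+1}(x_{h+1}))_{s,a} := g_{h+1}(x_{h+1},s,a), \qquad (s,a)\in\MS\times\MA,
\]
under the identification $[d]\equiv\MS\times\MA$. Then for any $(x_h,a_h)\in\MX\times\MA$, using $\phist_h\in\Phi\subseteq(\MX\to\MS)$ and $\philin_h(x_h,a_h) = e_{\phist_h(x_h),a_h}$, a direct calculation gives
\[
\langle \philin_h(x_h,a_h),\mulin_{h+1}(x_{h+1})\rangle = (\mulin_{h+1}(x_{h+1}))_{\phist_h(x_h),a_h} = g_{h+1}(x_{h+1},\phist_h(x_h),a_h) = \BP_{h+1}(x_{h+1}\mid x_h,a_h),
\]
verifying the low-rank factorization with feature $\phist_h\in\Philin$. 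The distributional property $(\must_h)_{s,a}\in\Delta(\MX)$ is then exactly the observation from the previous paragraph.

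I don't expect any real obstacle: the entire proof is book-keeping once one notes that $\philin(x,a)=e_{\phi(x),a}$ makes inner products act as coordinate projections, so the dual features can absorb the full transition kernel slice-by-slice. The only mild point to flag is that the proposition asserts more than bare low-rankness (namely the distributional property of each $(\must_h)_{s,a}$), which is worth stating explicitly as a consequence of the fact that the original Block MDP transition depends on $x_h$ only through $\phist_h(x_h)$.
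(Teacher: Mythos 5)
Your proof is correct and follows essentially the same route as the paper's: you define the dual features coordinate-wise from the transition kernel's dependence on the latent state and action, and verify the factorization by observing that $\philin_h(x_h,a_h)$ is a basis vector. The paper phrases the construction by choosing a representative $\bar x_h$ with $\phist_h(\bar x_h)=s_h$ rather than abstracting to a function $g_{h+1}$, but the content is identical.
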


\begin{proof}
Let $\phist_1,\dots,\phist_H$ be the decoding functions for $M$. For each $h \in [H-1]$ and $(s_h,a_h) \in \MS\times\MA$, we fix any $\xbar_h \in \MX$ with $\phist_h(\xbar_h)=s_h$ and define $\must_{h+1}(x_{h+1})_{s_h,a_h} = \BP_{h+1}(x_{h+1}\mid{}\xbar_h,a_h)$. Then for any $x_h,x_{h+1} \in \MX$ with $\phist_h(x_h) = s_h$, we have
\begin{align*}
\BP_{h+1}(x_{h+1}\mid{}x_h,a_h) 
&= \BP_{h+1}(x_{h+1}\mid{}\xbar_h,a_h) \\ 
&= \langle e_{s_h,a_h}, \must_{h+1}(x_{h+1})\rangle \\ 
&= \langle \philin_h(x_h,a_h), \must_{h+1}(x_{h+1})\rangle
\end{align*}
where $\philin_h\in\Philin$ is defined by $\philin_h(x,a) := e_{\phist_h(x),a}$. The first equality above used the Generalized Block MDP assumption.
\end{proof}

The following proposition asserts that for Generalized Block MDPs, when viewed as a special case of Low-Rank MDPs, the one-context \emph{low-rank} regression problem reduces to one-context regression. Since this fact is only important for purposes of motivation and discussion, we omit the formal statement (i.e. parameters of the reduction), and simply remark that the reduction proceeds by independently solving a one-context regression problem for each action. The proof of correctness would proceed similarly to e.g. that of \cref{prop:twoaug}.

\begin{proposition}[Informal]\label{prop:gen-block-ocr}
For any concept class $\Phi \subseteq (\MX\to\MS)$ and action space $\MA$, let $\Philin \subseteq (\MX\times\MA\to\RR^d)$ be the feature class defined in terms of $\Phi$ as specified in \cref{prop:genblock-is-lowrank}. Then there is a polynomial-time reduction from one-context low-rank regression for $\Philin$ (\cref{def:low-rank-reg-app}) to one-context regression for $\Phi$ (\cref{Def:low-rank-reg}).  
\end{proposition}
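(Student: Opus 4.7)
The plan is to split the low-rank regression dataset by action and solve an independent one-context regression problem for each action. Concretely, given an i.i.d.\ dataset $(x^{(i)},a^{(i)},y^{(i)})_{i=1}^n$ with $\EE[y^{(i)}\mid{}x^{(i)},a^{(i)}] = \langle \philin(x^{(i)},a^{(i)}),\theta\rangle = \theta_{\phi(x^{(i)}),a^{(i)}}$ for some $\phi\in\Phi$ and $\theta\in\RR^d$, for each action $a\in\MA$ I would form the subset $\MD_a := \{(x^{(i)},y^{(i)}) : a^{(i)}=a\}$. For such samples, $\EE[y^{(i)}\mid{}x^{(i)}] = f_a(\phi(x^{(i)}))$ where $f_a:\MS\to[0,1]$ is defined by $f_a(s):=\theta_{s,a}$. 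Hence $\MD_a$ consists of i.i.d.\ samples from the marginal distribution of $x$ conditional on $a^{(i)}=a$, and is realizable in the sense of \cref{def:one-con-regression} with the same underlying decoder $\phi\in\Phi$ and target $f_a$. The reduction outputs $\MR(x,a) := \MR_a(x)$, where $\MR_a:\MX\to[0,1]$ is the predictor returned by the one-context regression oracle on $\MD_a$ (with appropriately chosen tolerance and failure probability).

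For the error analysis, observe that
\[
\EE_{(x,a)\sim\MD}\left(\MR(x,a)-\langle \philin(x,a),\theta\rangle\right)^2
= \sum_{a\in\MA} \Pr[a^{(i)}=a]\cdot \EE_{x\mid{}a}\left(\MR_a(x)-f_a(\phi(x))\right)^2.
\]
Actions with $\Pr[a^{(i)}=a]$ at least some threshold $\tau\approx\epsilon/|\MA|$ will, by a Chernoff bound, receive at least $\Omega(n\tau)$ samples, which (for appropriately large $n$) exceeds $\Nreg(\epsilon/2,\delta/(2|\MA|))$, so the oracle guarantees per-action MSE at most $\epsilon/2$. Actions with $\Pr[a^{(i)}=a]<\tau$ contribute at most $\tau\cdot 1$ apiece to the sum (since $\MR_a,f_a\in[0,1]$), so choosing a default predictor for these actions adds at most $|\MA|\tau\le \epsilon/2$ to the total error.

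The main non-routine step is the sample-complexity bookkeeping: balancing the per-action tolerance $\epsilon/2$, the per-action failure probability $\delta/(2|\MA|)$, the cutoff $\tau=\epsilon/(2|\MA|)$, and the concentration of empirical action counts. Applying Chernoff to the indicator $\mathbbm{1}[a^{(i)}=a]$ with mean $\Pr[a^{(i)}=a]\ge\tau$ shows that the empirical count exceeds $n\tau/2$ with failure probability $\exp(-\Omega(n\tau))$; taking $n$ large enough that $n\tau/2\ge\Nreg(\epsilon/2,\delta/(2|\MA|))$ and that all $|\MA|$ Chernoff events hold with probability at least $1-\delta/2$ gives $n = \poly(|\MA|,1/\epsilon,\log(|\MA|/\delta))\cdot\Nreg(\epsilon/2,\delta/(2|\MA|))$. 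The reduction decides algorithmically which actions are ``small'' based on empirical counts rather than the unknown marginals, and on the high-probability event above, this empirical split coincides with (a refinement of) the population split.

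Finally, a union bound over the $|\MA|$ oracle calls and the Chernoff events yields overall success probability at least $1-\delta$ and total MSE at most $\epsilon$, establishing that the reduction is polynomial-time and satisfies the guarantee of \cref{def:low-rank-reg-app} whenever the one-context regression oracle satisfies \cref{def:one-con-regression}.
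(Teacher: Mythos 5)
Your proposal is correct and takes the same route the paper sketches: split the dataset by action, solve an independent one-context regression for each, and combine. The paper omits the details and instead points to its generic composition lemma (\cref{lemma:compose-predictors}, used in \cref{prop:twoaug}), which packages the Chernoff bound on per-part sample counts and the threshold argument for low-probability parts that you re-derive inline; the bookkeeping you carry out is essentially what that lemma encapsulates.
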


For the results in \cref{sec:halfspace-hardness}, it is convenient to introduce the following class of $\Phi_n$-decodable Generalized Block MDPs, where $\Phi_n$ is the class of linear threshold functions, as previously defined in \cref{sec:lowrank}.

\begin{definition}\label{def:halfspace-mdps-apx}
Fix $n \in \NN$. We define $\MM_n$ to be the family of generalized $\Phi_n$-decodable block MDPs with horizon $H := (\log n)^{\log \log n}$, observation space $\MX = \RR^n$, latent state space $\MS = \{0,1\}$, action space $\MA = \{0,1\}$, and feature class $\Phi_n = \{\phi^\theta: \theta \in \RR^n\}$ consisting of linear threshold functions, i.e. where 
\[\phi^\theta(x) := \mathbbm{1}[\langle x,\theta\rangle \geq 0].\]
\end{definition}

The hardness result in \cref{thm:halfspace-main} will be against (a subset of) $\MM_n$. Observe that $H \leq n$ and $|\MA| = 2$, so to rule out an RL algorithm for $\Phi_n$-decodable Generalized Block MDPs with time complexity $\poly(n,H,|\MA|)$, it suffices to rule out an RL algorithm for $\MM_n$ with time complexity $\poly(n)$.

\subsection{Tractability of One-Context Regression}\label{sec:halfspace-ocr}

We start by showing that one-context regression with concept class $\Phi_n$ can be solved with time complexity $\poly(n)$. Note that $n$ is the description length of an element of the observation space $\MX$ (ignoring issues of finite precision arithmetic). To prove \cref{thm:halfspace-reg}, recall that PAC learning of halfspaces with random classification noise is computationally tractable \citep{blum1998polynomial,diakonikolas2023strongly}. The only difference between this problem and one-context regression for $\Phi_n$ is that in the latter setting, the noise levels for the two classes may be different. To reduce the latter to the former, we apply a careful symmetrization step; a priori this seems to require knowing the noise levels, but this can be avoided by gridding over all possibilities.

\begin{theorem}\label{thm:halfspace-reg}
There is a constant $C_{\ref{thm:halfspace-reg}}>0$ so that the following holds. There is an $\Nreg$-efficient algorithm for one-context regression with concept class $\Phi_n$, where
\[\Nreg(\epsilon,\delta) = (n/\epsilon)^{C_{\ref{thm:halfspace-reg}}} \log^2(1/\delta).\]
Moreover, the time complexity of the algorithm with error tolerance $\epsilon$ and failure probability $\delta$ is $\poly(n,1/\epsilon,1/\delta)$.
\end{theorem}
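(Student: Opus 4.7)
The plan is to reduce one-context regression for $\Phi_n$ to the classical polynomial-time PAC learnability of halfspaces under random classification noise (RCN), due to \citet{blum1998polynomial} and its refinements \citep{diakonikolas2023strongly}. The key structural observation is that since $|\MS|=2$, the Bayes predictor takes only two values: there exist $p_0, p_1 \in [0,1]$ and $\theta \in \RR^n$ such that $\EE[y\mid x] = p_0(1-\phi^\theta(x)) + p_1 \phi^\theta(x)$. So if we can coarsely recover $\phi^\theta$ and accurately estimate $p_0, p_1$, we can form the predictor $\wh\MR(x) := \wh p_{\wh\phi(x)}$, whose squared error satisfies $\EE(\wh\MR(x) - f(\phi^\theta(x)))^2 \le (p_1-p_0)^2\Pr[\wh\phi(x)\ne\phi^\theta(x)] + O(\max_b|\wh p_b - p_b|^2)$, converting classification error into regression error.

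First I would handle the trivial regime: if the variance of the Bayes predictor $q(1-q)(p_1-p_0)^2$ (for $q = \Pr[\phi^\theta(x)=1]$) is at most $\epsilon$, the constant predictor $\wh\mu \approx \EE[y]$ already suffices. Otherwise $|p_1 - p_0| = \Omega(\sqrt\epsilon)$, and I reduce to RCN halfspace learning via a symmetrization step. Given a pair of candidate values $(\wh p_0, \wh p_1)$, consider the randomized label transformation $y \mapsto y'$ where $y' \sim \Ber(\alpha)$ when $y=0$ and $y' \sim \Ber(\beta)$ when $y=1$, for $(\alpha,\beta) \in [0,1]^2$ chosen so that $\Pr[y' = 1 \mid \phi^\theta(x) = b] = b(1-\eta) + (1-b)\eta$. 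A direct calculation shows such $(\alpha,\beta)$ exist (splitting on whether $\wh p_0+\wh p_1 \le 1$ or $\ge 1$, in which case either $\beta=1$ or $\alpha=0$ respectively), yielding RCN with gap $1/2 - \eta = \Omega(|p_1-p_0|) = \Omega(\sqrt\epsilon)$. The polynomial-time RCN halfspace learner then returns $\wh\phi$ with classification error $\le\epsilon$ in time $\poly(n,1/\epsilon,\log(1/\delta))$.

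Since $(p_0,p_1)$ are unknown, I would grid-search over candidates $(\wh p_0, \wh p_1) \in \{0,\epsilon',2\epsilon',\ldots,1\}^2$ for some $\epsilon' = \poly(\epsilon)$, producing a $\poly(1/\epsilon)$-sized collection of candidate halfspaces via the above pipeline. For each candidate $\wh\phi$, estimate $\tilde p_0, \tilde p_1$ as empirical averages of $y$ conditioned on each predicted class using a fresh sample split, assembling the predictor $\wh\MR(x) := \tilde p_{\wh\phi(x)}$. Also include the constant predictor $\wh\mu$. Finally, model-select by evaluating empirical squared error $\widehat{\EE}[(\MR(x) - y)^2]$ on a held-out validation set; the standard bias-variance decomposition shows this preserves the true squared-loss ranking, and uniform convergence over the finite candidate set adds only $O(\log(1/(\delta\epsilon))/\epsilon^2)$ samples.

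The main obstacle is making the symmetrization reduction robust and compositional: one must verify that (i) the symmetrizing parameters $(\alpha,\beta)$ lie in $[0,1]$ in each grid cell, (ii) the induced RCN gap $1/2 - \eta = \Omega(\sqrt\epsilon)$ ensures polynomial runtime of the RCN learner, and (iii) using the approximate $(\wh p_0, \wh p_1)$ in place of $(p_0,p_1)$ perturbs the noise rates by only $O(\epsilon')$, preserving RCN structure well enough that the learner's guarantee still applies. Each is a short but fiddly computation; together they ensure some grid cell yields $\wh\phi$ with classification error $\le\epsilon$, which combined with the sample-split estimates $\tilde p_b$ produces a predictor of squared error $O(\epsilon)$. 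Rescaling $\epsilon$ and $\epsilon'$ appropriately gives the stated $\poly(n,1/\epsilon)\log^2(1/\delta)$ sample complexity and $\poly(n,1/\epsilon,1/\delta)$ runtime.
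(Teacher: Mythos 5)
Your proposal follows essentially the same route as the paper: coarsely grid the two Bayes-mean values, apply a randomized label transformation (symmetrization) parameterized by the grid cell to obtain an approximate RCN instance, run a polynomial-time noisy-halfspace learner, re-estimate the per-class means, and model-select by held-out empirical squared loss. The structural observation driving the reduction (two-valued Bayes predictor, so classification error converts to regression error) is the same. A cosmetic difference: the paper outputs $g(h(x))$ directly using the grid value $g$, rather than re-estimating $\tilde p_b$ from a fresh split; both work.

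However, there is a genuine gap in the part you flagged as ``fiddly'': your stated grid granularity $\epsilon' = \poly(\epsilon)$ is insufficient as written. With the transformation parameters $(\alpha,\beta)$ derived from grid values $(\wh p_0, \wh p_1)$ that differ from the truth by $\Theta(\epsilon')$, the transformed labels are \emph{not} exact RCN --- the flip probabilities on the two classes differ by $\Theta(\epsilon')$. The learner of \citet{diakonikolas2023strongly} is invoked on exact RCN, so one either needs (a) a coupling argument showing the transformed sample is statistically indistinguishable from a true RCN sample, which requires the per-sample perturbation $\epsilon'$ to satisfy $\epsilon' m \lesssim \delta$ (the paper sets $\epdisc = c_{\mathsf{disc}}\delta/m$, far finer than $\poly(\epsilon)$ and the reason the runtime is $\poly(1/\delta)$ rather than $\poly(\log(1/\delta))$), or (b) an explicit robustness argument that the learner tolerates Massart-type heterogeneous noise with rate-gap $\Omega(\epsilon')$, which is not automatic from the RCN guarantee. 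Your argument (iii) as stated asserts the conclusion rather than proving it; making it precise forces you into exactly the TV bound the paper uses and thereby determines the grid size.

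A second, smaller omission: when $p_0 > p_1$ the symmetrized noise rate exceeds $1/2$, and the learner must be run on negated labels (the paper explicitly computes both $h_g^+$ and $h_g^-$). Since you re-estimate $\tilde p_b$ on each predicted class, producing $1-\phi$ instead of $\phi$ would be harmless for your final predictor, but the RCN learner's analysis assumes $\eta < 1/2$, so you still need to try both label orientations. Neither gap requires a new idea --- just the detail-level care you anticipate --- but the first one does change the parameters you stated.
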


\begin{proof}
We define a one-context regression algorithm $\Alg$ as follows. We are given, as input, samples $(x^{(i)},y^{(i)})_{i=1}^m$ with $x^{(i)} \in \RR^n$ and $y^{(i)} \in \{0,1\}$, and parameters $\epsilon,\delta\in(0,1/2)$. Set $\epdisc := \cdisc \delta/m$ for a universal constant $\cdisc>0$ that will be determined in the analysis. Let $\MG$ be a set of functions $g:\{0,1\} \to [0,1]$ such that $|\MG| \leq O(1/\epdisc^2)$ and for any $(a,b) \in [0,1]^2$ there is some $g \in \MG$ with $|a-g(0)|+|b-g(1)| \leq \epdisc$.

For each $g \in \MG$, we compute a predictor $\MR_g: \RR^n \to [0,1]$ as follows. First, we define $(y_g^{(i)})_{i=1}^{m/2}$ as follows:
\begin{itemize}
    \item If $g(0)+g(1) \geq 1$, then draw
    \[y_g^{(i)} \sim \begin{cases} \Ber\left(\frac{g(0)+g(1)-1}{g(0)+g(1)}\right) & \text{ if } y^{(i)} = 0 \\ 
    \Ber(1) & \text{ if } y^{(i)} = 1 
    \end{cases}.\]
    \item If $g(0) + g(1) < 1$, then draw 
    \[y_g^{(i)} \sim \begin{cases} 
    \Ber(0) & \text{ if } y^{(i)} = 0 \\ 
    \Ber\left(\frac{1-g(0)-g(1)}{2 - g(0)+g(1)}\right) & \text{ if } y^{(i)} = 1 
    \end{cases}.\]
\end{itemize}
Let $\halfspacealg$ denote the halfspace learning algorithm guaranteed by \cite[Theorem 1.8]{diakonikolas2023strongly}. Invoke $\halfspacealg$ on dataset $(x^{(i)}, y_g^{(i)})_{i=1}^{m/2}$, which returns a hypothesis $h_g^+: \RR^n \to \{0,1\}$. Define $\MR_g^+: \RR^n \to [0,1]$ by $\MR_g^-(x) = g(h_g^+(x))$. Similarly define $h_g^-$ by invoking $\halfspacealg$ on $(x^{(i)}, 1-y_g^{(i)})_{i=1}^{m/2}$, and define $\MR_g^-: \RR^n \to [0,1]$ by $\MR_g^-(x) = g(h_g^-(x))$.

Finally, output $\MR_{\wh g}^{\wh b}$ with $\wh g, \wh b$ defined by
\[(\wh g, \wh b) := \argmin_{g \in \MG, b \in \{-,+\}} \sum_{i=m/2+1}^m (\MR_g^b(x^{(i)}) - y^{(i)})^2.\]

\paragraph{Analysis.} Let $m \geq (n/\epsilon)^{C_{\ref{thm:halfspace-reg}}} \log^2(1/\delta)$. Suppose that $(x^{(i)},y^{(i)})_{i=1}^m$ are i.i.d. samples with $\EE[y^{(i)}|x^{(i)}] = f(\phi(x^{(i)}))$ for some $\phi \in \Phi$ and $f: \{0,1\} \to [0,1]$. Suppose that $f(0) + f(1) \geq 1$; the argument in the other case is similar. By construction, there is some $g \in \MG$ such that $g(0) + g(1) \geq 1$ and $|f(0)-g(0)| + |f(1)-g(1)| \leq O(\epdisc)$. We distinguish two cases. 
\begin{enumerate}
    \item First, if $|f(0) - f(1)| \leq \epsilon$, then $|g(0) - g(1)| \leq O(\epsilon)$ (since $\epdisc \leq \epsilon$), and thus \[|\MR_g^+(x) - f(\phi(x))| \leq \max_{b,b'\in\{0,1\}} |g(b) - f(b')| \leq O(\epsilon)\] for all $x$.

\item Second, suppose that $|f(0) - f(1)| \geq \epsilon$. Further suppose that $f(0) > f(1)$. Fix $i \in [m/2]$ and condition on $x^{(i)}$. If $\phi(x^{(i)}) = 0$, then 
\begin{align*}
\Pr[y_g^{(i)} = 0]
&= \frac{1}{g(0) + g(1)}\Pr[y^{(i)} = 1] \\ 
&= \frac{f(0)}{g(0) + g(1)}.
\end{align*}
If $\phi(x^{(i)}) = 1$, then
\begin{align*}
\Pr[y_g^{(i)} = 1]
&= \frac{g(0)+g(1)-1}{g(0)+g(1)}\Pr[y^{(i)}=1] + \Pr[y^{(i)}=0] \\ 
&= \frac{g(0)+g(1)-1}{g(0)+g(1)}f(1) + 1 - f(1) \\ 
&= \frac{g(0) + g(1) - f(1)}{g(0) + g(1)}.
\end{align*}
In particular, for each $b \in \{0,1\}$, if $\phi(x^{(i)}) = b$, then
\[\left|\Pr[y_g^{(i)}=b] - \frac{f(0)}{f(0) + f(1)}\right| \leq O(\epdisc).\]
Consider the alternative (idealized) dataset $(x^{(i)}, \til y^{(i)})_{i=1}^{m/2}$ where 
\[\Pr[\til y^{(i)} = \phi(x^{(i)}) \mid{} x^{(i)}] = \frac{f(0)}{f(0) + f(1)}.\]
Then the total variation distance between $\Law((x^{(i)},y_g^{(i)})_{i=1}^{m/2})$ and $\Law((x^{(i)},\til y^{(i)})_{i=1}^{m/2})$ is at most $O(\epdisc m) \leq \delta$, where the inequality holds by definition of $\epdisc$, so long as $\cdisc>0$ is chosen sufficiently small. Moreover, $(x^{(i)},\til y^{(i)})_{i=1}^{m/2}$ is an instance of the halfspace learning problem with noise level $\eta := \frac{f(1)}{f(0) + f(1)}$.  Let $\til h^+: \RR^n \to \{0,1\}$ denote the output of $\halfspacealg$ on $(x^{(i)}, \til y^{(i)})_{i=1}^{m/2}$. By the guarantee of \cite[Theorem 1.8]{diakonikolas2023strongly}, so long as $C_{\ref{thm:halfspace-reg}}$ is a sufficiently large constant, we have that with probability at least $1-\delta$,
\[\Pr_{x,\til y}[\til h(x) \neq \til y] \leq \eta + \frac{\epsilon^2}{4},\]
where $(x,\til y)$ is a fresh sample from the same distribution. Now
\begin{align*}
\Pr_{x,\til y}[\til h(x) \neq \til y]
&= (1-\eta) \Pr_x[\til h(x) \neq \phi(x)] + \eta (1 - \Pr_x[\til h(x) \neq \phi(x)]) \\ 
&= \eta + (1-2\eta) \Pr_x[\til h(x) \neq \phi(x)].
\end{align*}
Observe that $1 - 2\eta \geq \epsilon/2$ (since $f(0) - f(1) \geq \epsilon$). Therefore it holds with probability at least $1-\delta$ that
\[\Pr_x[\til h(x) \neq \phi(x)] \leq \frac{\Pr_{x,\til y}[\til h(x) \neq \til y] - \eta}{1-2\eta} \leq \epsilon.\]
It follows from the preceding total variation bound and the data processing inequality that in an event $\ME_g^+$ that occurs with probability at least $1-2\delta$, the estimator $h_g^+$ (which our algorithm actually computes) satisfies
\[\Pr_x[h_g^+(x) \neq \phi(x)] \leq \epsilon.\]
Moreover, in event $\ME_g^+$, we have
\[\EE_x (\MR_g^+(x) - f(\phi(x)))^2 = \EE_x (g(h_g^+(x)) - f(\phi(x)))^2 \leq O(\epdisc^2) + \epsilon \leq O(\epsilon).\]
Recall that this holds under the assumption that $f(0) > f(1)$. If $f(0) < f(1)$, then by a similar argument there is an event $\ME_g^-$ that occurs with probability at least $1-2\delta$, in which 
\[\EE_x (\MR_g^-(x) - f(\phi(x)))^2 \leq O(\epsilon).\]
\end{enumerate}
In either case, there is an event $\ME$ that holds with probability at least $1-2\delta$, in which 
\[\min_{g \in \MG, b \in \{-,+\}} \EE_x(\MR_g^b(x) - f(\phi(x)))^2 \leq O(\epsilon).\]
Next, notice that \[m \geq \epsilon^{-2} \log(4m^2/(\cdisc^2 \delta^3)) = \epsilon^{-2} \log(4|\MG|/\delta)\]
so long as $C_{\ref{thm:halfspace-reg}}$ is a sufficiently large constant. Therefore by Hoeffding's inequality and a union bound over $\MG \times \{-,+\}$, there is an event $\ME'$ that occurs with probability at least $1-\delta$ in which, for all $g \in \MG$ and $b \in \{-,+\}$,
\[\left| \frac{2}{m}\sum_{i=m/2+1}^m (\MR_g^b(x^{(i)} - y^{(i)}))^2 - \EE_{x,y}(\MR_g^b(x) - y)^2\right| \leq \epsilon.\]
Condition on the event $\ME \cap \ME'$. Then
\begin{align*}
\EE_x (\MR_{\wh g}^{\wh b}(x) - f(\phi(x)))^2 
&= \EE_{x,y} (\MR_{\wh g}^{\wh b}(x) - y)^2 - \EE_{x,y} (f(\phi(x)) - y)^2 \\ 
&\leq \epsilon + \frac{2}{m}\sum_{i=m/2+1}^m (\MR_{\wh g}^{\wh b}(x^{(i)} - y^{(i)}))^2 - \EE_{x,y} (f(\phi(x)) - y)^2 \\ 
&\leq \epsilon + \min_{g\in\MG,b\in\{-,+\}}\frac{2}{m}\sum_{i=m/2+1}^m (\MR_{g}^{b}(x^{(i)} - y^{(i)}))^2 - \EE_{x,y} (f(\phi(x)) - y)^2 \\ 
&\leq 2\epsilon + \min_{g\in\MG,b\in\{-,+\}} \EE_{x,y} (\MR_g^b(x) - y)^2 - \EE_{x,y} (f(\phi(x)) - y)^2 \\
&= 2\epsilon + \min_{g\in\MG,b\in\{-,+\}} \EE_{x,y} (\MR_g^b(x) - f(\phi(x)))^2 \\ 
&\leq O(\epsilon).
\end{align*}
Rescaling $\epsilon,\delta$ by the appropriate constants yields the desired bound.
\end{proof}

\begin{remark}\label{remark:ocr-to-pac}
In fact, the reduction in \cref{thm:halfspace-reg} does not use any special properties of halfspaces. Thus, it shows that for \emph{any} binary concept class $\Phi$, one-context regression is polynomial-time reducible to PAC learning $\Phi$ with random classification noise.
\end{remark}

\subsection{Hardness of Reward-Free RL with Resets}\label{sec:halfspace-hardness}

Next, we prove \cref{thm:halfspace-rl-hard}, which asserts that reward-free RL in $\MM_n$ with the reset access model is computationally intractable under the \emph{Continuous LWE} assumption formally defined below.

Together with \cref{thm:halfspace-reg}, this completes the proof of \cref{thm:halfspace-main}. Throughout this section, we write $\fS^{t-1}$ to denote the set of Euclidean unit vectors in $\RR^t$.

\begin{assumption}[Continuous LWE \citep{bruna2021continuous}]\label{ass:clwe}
Let $\delta \in (0,1)$. For $\beta = \beta(t) := 1/t$ and $\gamma = \gamma(t) := 2\sqrt{t}$, for any time-$2^{t^\delta}$ algorithm $\Alg^\MO$ with sampling oracle $\MO$ and outputs in $[0,1]$, it holds that
\[\left|\EE_{w\sim\Unif(\fS^{t-1})} \EE[\Alg^{C_{w,\beta,\gamma}}] - \EE[\Alg^{N(0,\frac{1}{2\pi}I_t)}]\right| \leq t^{-\omega(1)},\]
where $C_{w,\beta,\gamma}$ is the \emph{continuous LWE distribution} with secret $w \in \fS^{t-1}$ and parameters $\beta,\gamma>0$.
\end{assumption}

Recent work has shown that this assumption is well-founded---in particular, it can be based on hardness of discrete LWE \citep{gupte2022continuous} and therefore on worst-case hardness of the approximate shortest vector problem \citep{brakerski2013classical}.

\begin{theorem}[One-context regression is insufficient for Low-Rank MDPs with resets]
  \label{thm:halfspace-rl-hard}
Suppose that $\Alg^M$ is an algorithm that, given interactive reset access to any MDP $M \in \MM_n$, has time complexity $\poly(n)$ and produces a set of policies $\Psi$ satisfying the following guarantee with probability at least $1/2$:
\begin{equation}\forall s \in \MS: \max_{\pi\in\Psi} d^{M,\pi}_H(s) \geq \frac{1}{\poly(|\MA|, |\MS|, H)} \left(\max_{\pi\in\Pi} d^{M,\pi}_H(s) - \frac{1}{8}\right).\label{eq:genblock-cover}\end{equation}
Then the Continuous LWE hardness assumption (\cref{ass:clwe}) is false. 
\end{theorem}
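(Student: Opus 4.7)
The plan is to reduce cLWE to the existence of $\Alg$ via a family of approximate combination-lock MDPs built from the hard distributions of \cite{tiegel2023hardness}. First I would invoke that construction: under cLWE, there exist families $\{\nu_{w,0}\}_{w\in\fS^{t-1}}$, $\{\nu_{w,1}\}_{w\in\fS^{t-1}}$ of distributions on $\RR^n$ (with $t=\polylog(n)$) and a null distribution $\nunull$, together with a map $w\mapsto\theta(w)\in\RR^n$, such that (a) $\nu_{w,b}$ is supported mostly on the halfspace $\{x:\phi^{\theta(w)}(x)=b\}$ up to error, say, $1/(100H)$, and (b) for $w\sim\Unif(\fS^{t-1})$ and any $b\in\{0,1\}$, no $\poly(n)$-time algorithm can distinguish $\poly(n)$ samples of $\nu_{w,b}$ from samples of $\nunull$ with nonnegligible advantage.

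Next I would build the hard MDP family $\{M_{w,\bast}\}$ indexed by hidden $w=(w_1,\dots,w_H)\in(\fS^{t-1})^H$ and $\bast=(a^\st_1,\dots,a^\st_H)\in\MA^H$. At step $h$ the decoding function is $\phi^{\theta(w_h)}\in\Phi_n$; the initial observation is drawn from $\nu_{w_1,1}$; and the transition from $(x_h,a_h)$ outputs $x_{h+1}\sim\nu_{w_{h+1},1}$ if $\phi^{\theta(w_h)}(x_h)=1$ and $a_h=a^\st_h$, else $x_{h+1}\sim\nu_{w_{h+1},0}$. Since this depends on $x_h$ only through its decoded bit, $M_{w,\bast}\in\MM_n$. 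The policy $\pi^\bast$ that plays $a^\st_h$ at step $h$ reaches latent state $1$ at step $H$ with probability $\geq 1-H/(100H)\geq 7/8$, so by the assumed guarantee \eqref{eq:genblock-cover} the output $\Psi$ must contain a policy $\pi$ with $d^{M_{w,\bast},\pi}_H(1)\geq 1/\poly(H)$. Staying on-track requires $a_h=a^\st_h$ at every step (conditional on the current observation being on-track, which is near-deterministic in the label), so replaying each $\pi\in\Psi$ once on a fresh episode and recording the action sequences lets an extended algorithm $\Algbar$ output $\bast$ with probability $\geq 1/\poly(n)$.

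The core technical step is a hybrid argument bounding $\Algbar$'s advantage over random guessing. Define $\MH_i$ to be the process where transition samples at steps $1,\dots,i$ come from $\nunull$ (while $\bast$ stays random) and at steps $i+1,\dots,H$ from the real cLWE distributions. In $\MH_H$, the algorithm's view is independent of $\bast$, so $\Algbar$ recovers $\bast$ with probability at most $|\MA|^{-H}=2^{-(\log n)^{\log\log n}}=n^{-\omega(1)}$; in $\MH_0$, it succeeds with probability $\geq 1/\poly(n)$. To show $\MH_{i-1}\approx \MH_i$, I build a reduction that, given a sample oracle $\MO\in\{\nu_{w,0},\nu_{w,1},\nunull\}$ for unknown random $w$, self-generates $w_1,\dots,w_{i-1}$ (simulating steps $<i$ from the real cLWE distributions), uses $\nunull$ for steps $>i$, and serves each of $\Alg$'s queries at step $i$ with a fresh draw from $\MO$---relying on the fact that, by the block-MDP structure, every step-$i$ transition is determined by a single bit (computed by the reducer from the current observation via $\phi^{\theta(w_{i-1})}$ and $a_{i-1}$). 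Since $\Alg$ makes only $\poly(n)$ oracle calls at step $i$ and $H=(\log n)^{\log\log n}$, cLWE gives $|\Pr[\Algbar\text{ recovers }\bast\mid\MH_{i-1}]-\Pr[\Algbar\text{ recovers }\bast\mid\MH_i]|\leq n^{-\omega(1)}$, and telescoping the $H$ hybrids still yields negligible total advantage, contradicting $\MH_0$ success.

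The main obstacle will be the reset access: the reducer must be able to serve arbitrary repeated queries $\BP_{i+1}(\cdot\mid x_i,a_i)$ for any previously observed $x_i$, which is only possible because the block-MDP structure collapses these queries to i.i.d.\ draws from one of $\{\nu_{w_{i+1},0},\nu_{w_{i+1},1}\}$ whose label the reducer already knows. A secondary issue is that $\nu_{w,b}$ is not perfectly supported on $\phi^{\theta(w)}=b$; the $1/(100H)$ slack must be small enough that the probability of staying on-track under $\pi^\bast$ exceeds $7/8$ and small enough that an off-track trajectory nearly surely stays off-track, which fixes the required label-concentration parameters in Tiegel's construction. Everything else---the reduction to cLWE for a single hybrid step, the extraction of $\bast$ from $\Psi$, and verifying $\Phi_n$-decodability---is either routine or follows directly from the cited construction.
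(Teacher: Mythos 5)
Your construction (approximate combination-lock MDPs built from the Tiegel distributions), the expert-agreement/extraction idea, and the hybrid strategy are all essentially the paper's; the differences in the initial distribution ($\nu_{w_1,1}$ vs.\ $\nunull$) and the exact success constants are cosmetic. However there is an internal inconsistency in your hybrid that points to a real subtlety: you define $\MH_i$ to nullify transitions at the \emph{early} steps $1,\dots,i$, but the reduction you sketch (``self-generates $w_1,\dots,w_{i-1}$ \ldots uses $\nunull$ for steps $>i$'') nullifies the \emph{late} steps. The reduction as written is correct and matches the paper's hybrid $M^{:k}_{\bw,\bast}$, which keeps real transitions at steps $<k$ and null transitions at steps $\geq k$; but it does not simulate the $\MH_i$ you defined.

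This is not just a notational slip: the forward direction that your $\MH_i$ implicitly requires would actually fail. In that direction, the oracle's secret $w_i$ sits at the \emph{front} of the remaining real layers, so the reducer must evaluate $\phi^{\theta(w_i)}(x_i)$ on each oracle sample in order to decide which branch of the next real transition to follow. It cannot: $w_i$ is the cLWE secret, and item \ref{item:eval-halfspace} of \cref{thm:ltf-distributions} only gives efficient evaluation given $w$. The natural proxy --- label each returned sample with the bit $b$ of the oracle $\MO_b$ that was queried --- breaks in both cases. When the oracle is the real pair $(\nu_{w,0},\nu_{w,1})$, the proxy label is wrong with probability $\gamma(n)=(\log n)^{-\Theta(\log^2\log n)}$ per sample; over $\poly(n)$ reset queries the accumulated total-variation error $\poly(n)\cdot\gamma(n)$ is not even bounded, let alone $n^{-\omega(1)}$, since $\gamma(n)$ is only quasipolynomially small. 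When the oracle is the null pair, $x_i\sim\nunull$ and the true label $\phi^{\theta(w_i)}(x_i)$ for a fresh random $w_i$ is roughly unbiased, while the proxy label is a deterministic function of $(x_{i-1},a_{i-1})$; the two processes diverge at the first such transition. The backward hybrid (the one your reduction actually implements, and the one used in the paper via \cref{lemma:hybrid}) sidesteps all of this because the oracle layer is the \emph{last} non-null layer: once $x_{k+1}$ is drawn from the oracle, every subsequent transition is $\nunull$, so the reducer never needs $w_{k+1}$. You should replace your definition of $\MH_i$ with the paper's $M^{:k}_{\bw,\bast}$ (real at steps $<k$, null at steps $\geq k$), after which the rest of your argument --- the counting/extraction step showing that the all-null MDP reveals nothing about $\bast$, and the telescoping over $H$ hybrids --- goes through.
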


Since the generalized Block MDPs in $\MM_n$ have horizon $H \leq n$ and action space of size $O(1)$, this result rules out algorithms with time complexity $\poly(n,H,|\MA|)$, and therefore proves the claimed hardness result in \cref{thm:halfspace-main}. Also, the description length of each $x \in \MX = \RR^n$ is $O(n)$ (again, ignoring issues of bit complexity). Thus, \cref{thm:halfspace-rl-hard}, in conjunction with \cref{thm:halfspace-reg}, rules out any reduction from reward-free RL with resets (in Generalized Block MDPs) to one-context regression that is efficient, i.e. that has oracle time complexity $\poly(H,|\MS|,|\MA|,\epsilon^{-1},\delta^{-1},n)$.

\paragraph{Proof overview.} To prove \cref{thm:halfspace-rl-hard}, we need the following result, which we prove (in \sssref{subsec:halfspace-technical}) by unpacking various proofs from \cite{tiegel2023hardness}. Essentially, it states that there are two parametric families of distributions $\{\nu_{w,0}: w \in \fS^{t-1}\}$ and $\{\nu_{w,1}: w \in \fS^{t-1}\}$ that are each computationally indistinguishable from some null distribution under Continuous LWE (\cref{item:no-adv}), and yet for each $w$, the two corresponding distributions are approximately separated by a halfspace (\cref{item:class-error-bounds}).

\begin{theorem}\label{thm:ltf-distributions}
Suppose \cref{ass:clwe} holds. Let $\nlarge \in\NN$ and define $\nsmall := \frac{\log^2 \nlarge}{\log^5(\log \nlarge)}$. There is a distribution $\nunull$ and families $\{\theta(w): w \in \fS^{\nsmall-1}\}$, $\{\nu_{w,0}:w \in \fS^{\nsmall-1}\}$, $\{\nu_{w,1}: w \in \fS^{\nsmall-1}\}$ where $\theta(w) \in \RR^{\nlarge}$, $\nu_{w,0},\nu_{w,1} \in \Delta(\RR^{\nlarge})$ are distributions, and the following properties hold:
\begin{enumerate}
\item\label{item:no-adv} For every algorithm $\Alg^{\MO_0,\MO_1}$ with time complexity $\poly(\nlarge)$, access to sampling oracles $\MO_0,\MO_1$, and outputs in $[0,1]$,
\[\left|\EE_{w \sim \Unif(\fS^{\nsmall-1})} \EE[\Alg^{\nu_{w,0},\nu_{w,1}}] - \EE[\Alg^{\nunull,\nunull}]\right| \leq \nlarge^{-\omega(1)}.\]
\item\label{item:class-error-bounds} There is $\gamma(\nlarge) \leq (\log \nlarge)^{-\Omega(\log^2 \log \nlarge)}$ so that for every $w \in \fS^{\nsmall-1}$,
\[\Pr_{x \sim \nu_{w,1}}[\langle x, \theta(w) \rangle < 0] \leq \gamma(\nlarge)\]
and
\[\Pr_{x \sim \nu_{w,0}}[\langle x,\theta(w) \rangle \geq 0] \leq \gamma(\nlarge)\]
and
\[\Pr_{x\sim\nunull}[\langle x,\theta(w)\rangle \geq 0] \geq 1/2.\]

\item\label{item:eval-halfspace} There is a $\poly(\nlarge)$-time algorithm that takes as input $w \in \fS^{\nsmall-1}$ and $x \in \RR^{\nlarge}$, and outputs $\sgn(\langle x,\theta(w)\rangle)$.

\item\label{item:nu-samp} There is a $\poly(\nlarge)$-time algorithm that takes as input $w \in \fS^{\nsmall-1}$ and $a \in \{0,1\}$, and outputs $x \in \RR^{\nlarge}$ where $\Law(x)$ is $n^{-\omega(1)}$-close to $\nu_{w,a}$ in total variation distance.
\item\label{item:null-samp} There is a $\poly(\nlarge)$-time algorithm that samples from $\nunull$.
\end{enumerate}
\end{theorem}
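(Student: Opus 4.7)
The plan is to extract the required distributions from the cLWE-based construction of \cite{tiegel2023hardness} (which itself builds on the continuous LWE distributions of Bruna et al.), repackaging the guarantees into the exact form needed here. At a high level, given secret $w \in \fS^{\nsmall-1}$, a cLWE sample under parameters $\beta = 1/\nsmall$, $\gamma = 2\sqrt{\nsmall}$ can be viewed as a vector $x \in \RR^{\nsmall}$ whose inner product $\langle x, w\rangle$ concentrates near an integer with noise $\approx \beta$. Lifting from dimension $\nsmall$ to dimension $\nlarge$ via a public random isometric embedding (and appending independent Gaussian padding in the orthogonal direction) yields samples in $\RR^{\nlarge}$ with exactly the same structure along the secret direction. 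The choice $\nsmall = \log^2 \nlarge / \log^5(\log \nlarge)$ is the standard parameter regime that lets us both (i) invoke \cref{ass:clwe} against $\mathsf{time}$-$\poly(\nlarge)$ adversaries and (ii) achieve the desired misclassification rate $\gamma(\nlarge) \le (\log \nlarge)^{-\Omega(\log^2\log \nlarge)}$.

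First I would define $\nunull$ to be the standard Gaussian $N(0, \tfrac{1}{2\pi} I_{\nlarge})$. Next, I would define $\nu_{w,0}$ and $\nu_{w,1}$ by starting with a cLWE-lifted sample $x$, computing the fractional part $\{\langle x, w\rangle\}$, and conditioning (via rejection) on whether this lies in the neighborhood of $0$ or of $1/2$; with probability bounded away from $0$ the fractional part does land in one of these buckets, so this is efficiently implementable, giving \cref{item:nu-samp}. Efficient sampling from $\nunull$ (\cref{item:null-samp}) is immediate. The direction vector $\theta(w) \in \RR^{\nlarge}$ would be defined as the image under the public embedding of the normal to an appropriate halfspace in $\RR^{\nsmall}$ aligned with $w$ (e.g.\ one that separates the two buckets up to the concentration margin); evaluating its sign on an input $x$ reduces to an inner product and threshold, giving \cref{item:eval-halfspace}. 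The halfspace separation bounds of \cref{item:class-error-bounds} then follow from the fact that, by the cLWE concentration of $\langle x,w\rangle$ near the conditioned bucket center, misclassification occurs only when the cLWE noise exceeds the bucket half-width, an event whose probability is the desired $\gamma(\nlarge)$; the third inequality in \cref{item:class-error-bounds} holds because under $\nunull$ the vector $x$ is symmetric and $\theta(w)\ne 0$, so $\Pr[\langle x,\theta(w)\rangle \ge 0] = 1/2$.

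The core cryptographic step is \cref{item:no-adv}. I would prove it by a two-step hybrid: first replace both oracles $(\nu_{w,0}, \nu_{w,1})$ with their unconditioned cLWE-lifted counterparts (the rejection sampling above commutes with the distinguisher, up to a polynomial blow-up in oracle calls, which only multiplies the advantage); then invoke \cref{ass:clwe} directly to replace each cLWE oracle with a Gaussian oracle, which is exactly $\nunull$. Averaging over $w \sim \Unif(\fS^{\nsmall-1})$ and using that \cref{ass:clwe} rules out any $\poly(\nlarge) = 2^{\polylog \nlarge} \le 2^{\nsmall^\delta}$-time distinguisher on a single oracle, a standard hybrid across the two oracles yields the claimed $\nlarge^{-\omega(1)}$ bound.

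The main obstacle, and the part that requires the most care, is tracking parameters through the cLWE-to-halfspace reduction in order to match the claimed $\gamma(\nlarge)$ bound: the bucket width, the cLWE noise magnitude, and the embedding dimension trade off against one another, and we need both that the bucket is wide enough that $\nu_{w,a}$ has nonnegligible mass (so rejection sampling is polynomial) and that the noise is small enough that misclassification is superpolynomially rare. Concretely, the choice $\nsmall = \log^2 \nlarge / \log^5(\log \nlarge)$ matches the scaling used in \cite{tiegel2023hardness}, and the resulting $\gamma(\nlarge) \le (\log \nlarge)^{-\Omega(\log^2 \log \nlarge)}$ is what eventually propagates into the hybrid argument in the proof of \cref{thm:halfspace-rl-hard}. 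Beyond this quantitative bookkeeping, every ingredient is standard, so the proof is essentially a careful restatement of known constructions.
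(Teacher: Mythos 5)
There is a genuine gap in your lifting step. In $\RR^{\nsmall}$ the two conditional cLWE distributions (fractional part of $\gamma\langle x,w\rangle$ near $0$ versus near $1/2$) are \emph{not} separated by a halfspace, or even close to it: the inner product $\langle x,w\rangle$ under the Gaussian spreads over a constant number of standard deviations, while the period of the lattice structure is $1/\gamma = \Theta(1/\sqrt{\nsmall})$, so each class is a union of $\Theta(\sqrt{\nsmall})$ interleaved strips. Any single threshold $\sgn(\langle x, v\rangle - c)$ therefore misclassifies a constant fraction of each class, nowhere near the required $\gamma(\nlarge) = (\log\nlarge)^{-\Omega(\log^2\log\nlarge)}$. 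What \cite{tiegel2023hardness} (and the paper's \cref{lemma:ptf-distributions}) actually produce in $\RR^{\nsmall}$ is a degree-$\ell$ \emph{polynomial} threshold function $f_w(x) = \sgn(p(\langle w,x\rangle))$ with $\ell \approx 2\sqrt{\nsmall}$, whose sign pattern tracks the periodic bucket structure. A random isometric embedding $U:\RR^{\nsmall}\hookrightarrow\RR^{\nlarge}$ with Gaussian padding is a \emph{linear} map, so any linear classifier $\sgn(\langle y, \theta\rangle)$ on the lifted sample $y = Ux + z$ reduces (by $U^\top U = I$) back to a linear classifier $\sgn(\langle x, U^\top\theta\rangle)$ on $x$; it cannot express a degree-$\ell$ PTF. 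So with your embedding, \cref{item:class-error-bounds} fails.

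The missing ingredient is the Veronese / tensorization lift: replace your isometric embedding with $\tau:\RR^{\nsmall}\to\RR^{\nlarge}$ sending $x$ to (a zero-padded vector of) all monomials of degree $\leq \ell$ in $x$. Since $\ell \approx 2\sqrt{\nsmall}$ and $\nsmall = \log^2\nlarge/\polylog\log\nlarge$, one has $\nsmall^\ell \leq \nlarge^{o(1)}$, so the map fits inside $\RR^{\nlarge}$. Under $\tau$, the degree-$\ell$ PTF $f_w$ becomes an honest halfspace $\sgn(\langle \tau(x), \theta(w)\rangle)$, which is what \cref{item:class-error-bounds,item:eval-halfspace} need. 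This also forces a change to your $\nunull$: it cannot be $N(0,\tfrac{1}{2\pi}I_{\nlarge})$ but must be the pushforward $\tau_\#\bigl(N(0,\tfrac{1}{2\pi}I_{\nsmall})\bigr)$ (still efficiently sampleable, so \cref{item:null-samp} is fine, and the $1/2$-bias claim follows from radial symmetry of the $\nsmall$-dimensional Gaussian and the fact that $\langle\tau(x),\theta(w)\rangle$ depends on $x$ only through $\langle x,w\rangle$). Your hybrid argument for \cref{item:no-adv} then goes through after wrapping the distinguisher with the efficiently computable map $\tau$; this is exactly the route the paper takes.
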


We exploit these distribution families by designing a family of \emph{approximate} combination lock MDPs within $\MM_n$. Concretely, for each sequence of vectors $(w_1,\dots,w_H)$ and action sequence $(a^\st_1,\dots,a^\st_H)$ we design a $\Phi_n$-decodable MDP $M^{:H}_{\bw,\bast}$, defined formally below, for which playing an action sequence similar to $\bast$ is necessary in order to reach the latent state $\phi^{\theta(w_H)}(x_H) = 1$ with decent probability. This idea is formalized in \cref{lemma:best-policy,lemma:expert-agreement}. We then use the indistinguishability property of the distributions families (\cref{item:no-adv}) together with a careful hybrid argument to prove that any computationally efficient RL algorithm has similar behavior on $M^{:H}_{\bw,\bast}$ as on a ``null MDP'' that is independent of $\bw$ and $\bast$. In particular,  \cref{lemma:hybrid} shows that the transitions of $M^{:H}_{\bw,\bast}$ can be replaced with ``null'' transitions layer-by-layer, starting with the last layer, and no efficient algorithm can detect the difference under \cref{ass:clwe}. We then complete the proof of \cref{thm:halfspace-rl-hard} by observing that an RL algorithm which is independent of $\bast$ cannot always play a policy similar to $\bast$.

\begin{definition}[MDPs for hardness construction]
Set $H := (\log n)^{\frac{1}{3}\log \log n}$. For any $\bw = (w_1,\dots,w_H) \in (\fS^{t-1})^H$ and $\bast = (a^\st_1,\dots,a^\st_H) \in \MA^H$ and $k \in [H]$, we define an MDP $M_{\bw,\ba}^{:k}$ with observation space $\MX=\RR^n$, action space $\MA=\{0,1\}$, and horizon $H = (\log n)^{\log \log n}$ as follows. The initial distribution is $\nunull$. For each $x_h \in \MX$ and $a_h \in \MA$ and $h \in [H-1]$, the transition distribution is
\[\BP_{h+1}^{M_{\bw,\ba}^{:k}}(x_{h+1}|x_h,a_h) := \begin{cases} 
\nu_{w_{h+1}, 1}(x_{h+1}) & \text{ if } \langle x_h,\theta(w_h)\rangle \geq 0 \text{ and } a_h = a^\st_h \text{ and } h < k\\ 
\nu_{w_{h+1}, 0}(x_{h+1}) & \text{ if } (\langle x_h,\theta(w_h)\rangle<0 \text{ or } a_h \neq a^\st_h) \text{ and } h < k \\
\nunull & \text{ if } h \geq k
\end{cases}.
\]
\end{definition}

\begin{fact}
For any $\bw=w_{1:H} \in \WH$, $\bast =a^\st_{1:H}\in \MA^H$, and $k \in [H]$, it holds that $M_{\bw,\ba}^{:k} \in \MM_n$.
\end{fact}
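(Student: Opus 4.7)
The plan is to verify the two defining properties of $\MM_n$ for $M_{\bw,\bast}^{:k}$: that it has the correct observation space, latent state space, action space, and horizon; and that it is a generalized $\Phi_n$-decodable Block MDP. The first property is immediate from the construction: the observation space is $\RR^n$, the action space is $\{0,1\}$, the latent state space is $\{0,1\}$, and the horizon is polylogarithmic in $n$ as required (if there is a slight mismatch between the $H$ chosen here and the $H$ in \cref{def:halfspace-mdps-apx}, one can trivially pad the MDP with additional layers whose transitions ignore the observation and action, which preserves decodability).

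The main content is to exhibit decoders $\phi^\st_1,\dots,\phi^\st_H \in \Phi_n$ witnessing generalized $\Phi_n$-decodability. I would simply take $\phi^\st_h := \phi^{\theta(w_h)}$, which is a linear threshold function in $\RR^n$ since $\theta(w_h) \in \RR^n$ by \cref{thm:ltf-distributions}, and so $\phi^\st_h \in \Phi_n$ by definition of $\Phi_n$. It remains to check that for each $h$, the transition distribution $\BP^{M_{\bw,\bast}^{:k}}_{h+1}(x_{h+1}\mid x_h, a_h)$ depends on $(x_h,a_h)$ only through $(\phi^\st_h(x_h),a_h)$. By the case analysis in the definition, for $h < k$ the dependence on $x_h$ is entirely through the sign of $\langle x_h, \theta(w_h)\rangle = \phi^\st_h(x_h)$, combined with the action $a_h$, so the condition holds. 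For $h \geq k$, the transition is the fixed distribution $\nunull$ independent of $(x_h,a_h)$ altogether, so any choice of $\phi^\st_h$ trivially works.

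This is essentially a definitional check and I do not anticipate any obstacle; the only slightly subtle point is to ensure the indicator tests ``$\langle x_h,\theta(w_h)\rangle \geq 0$'' and ``$a_h = a^\st_h$'' are each measurable with respect to $(\phi^\st_h(x_h),a_h)$, which they are by construction of $\phi^\st_h$ and since $a^\st_h$ is a fixed constant (not a function of $x_h$).
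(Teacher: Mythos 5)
Your proof is correct and takes the only natural approach: the statement is presented in the paper as a \emph{Fact} with no proof, precisely because it is a direct unpacking of the definitions. Choosing $\phi^\st_h := \phi^{\theta(w_h)} \in \Phi_n$ as the decoder at layer $h$ is exactly what the construction intends (the transition branch conditions on the sign of $\langle x_h, \theta(w_h)\rangle$, i.e., on $\phi^{\theta(w_h)}(x_h)$), and for $h \geq k$ the transition $\nunull$ is constant so any decoder trivially suffices. You also correctly notice and patch the minor inconsistency in the horizon parameter between the definition of $\MM_n$ and the definition of $M^{:k}_{\bw,\bast}$ (one exponent is $\log\log n$, the other $\frac{1}{3}\log\log n$), though note that $(\log n)^{\log\log n}$ is super-polylogarithmic, not ``polylogarithmic'' as you describe it; this is a harmless slip of terminology that does not affect the argument.
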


The following lemma shows that the fixed action sequence $\bast$ is a policy that reaches the latent state $\phi^{\theta(w_H)}(x_H) = 1$ with constant probability.

\begin{lemma}\label{lemma:best-policy}
For any $\bw=w_{1:H} \in \WH$ and $\bast =a^\st_{1:H}\in \MA^H$, 
\[\max_{\pi\in\Pi} \Pr^{M_{\bw,\ba}^{:H},\pi}[\phi^{\theta(w_H)}(x_H) = 1] \geq \frac{1}{4}\]
so long as $n$ is sufficiently large.
\end{lemma}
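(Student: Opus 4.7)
The plan is to exhibit a single explicit policy achieving the claimed bound, namely the open-loop policy $\pi^{\bast}$ that ignores the observation and plays action $a^\st_h$ at step $h$. Under this policy, the key invariant I will establish by induction on $h$ is
\[
p_h \;:=\; \Pr^{M^{:H}_{\bw,\bast},\pi^{\bast}}\!\left[\langle x_h,\theta(w_h)\rangle \geq 0\right] \;\geq\; \tfrac{1}{2} - (h-1)\gamma(n),
\]
where $\gamma(n)$ is the bound from \cref{item:class-error-bounds}. Concluding the lemma will then be a matter of checking that $(H-1)\gamma(n) \leq 1/4$ for large $n$, since $\phi^{\theta(w_H)}(x_H)=\mathbbm{1}[\langle x_H,\theta(w_H)\rangle \geq 0]$.

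\textbf{Base case and inductive step.} For $h=1$, the initial distribution of $M^{:H}_{\bw,\bast}$ is $\nunull$, and the third bullet of \cref{item:class-error-bounds} gives $p_1 \geq 1/2$. For the inductive step, condition on the event $E_h := \{\langle x_h,\theta(w_h)\rangle \geq 0\}$. Under $\pi^{\bast}$ we play $a_h = a^\st_h$, so by the definition of $M^{:H}_{\bw,\bast}$ (and since $h < H$ implies $h < k=H$), the conditional law of $x_{h+1}$ is exactly $\nu_{w_{h+1},1}$. The first bullet of \cref{item:class-error-bounds} then gives
\[
\Pr\!\left[\langle x_{h+1},\theta(w_{h+1})\rangle < 0 \,\middle|\, E_h\right] \;\leq\; \gamma(n),
\]
so $p_{h+1} \geq p_h - \gamma(n)$, completing the induction.

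\textbf{Final bound.} Iterating, $p_H \geq 1/2 - (H-1)\gamma(n)$. With $H = (\log n)^{\frac{1}{3}\log\log n}$ and $\gamma(n) \leq (\log n)^{-\Omega(\log^2 \log n)}$, we have
\[
(H-1)\gamma(n) \;\leq\; (\log n)^{\frac{1}{3}\log\log n - \Omega(\log^2 \log n)} \;\xrightarrow[n\to\infty]{}\; 0,
\]
so for $n$ sufficiently large, $(H-1)\gamma(n) \leq 1/4$ and therefore $p_H \geq 1/4$. Since $\pi^{\bast} \in \Pi$, this yields the claim.

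\textbf{Main obstacle.} There is essentially no deep obstacle here: this lemma is the ``honest prover'' direction of the combination-lock construction, and its proof is a straightforward union bound / hybrid over the $H$ layers using the one-sided classification error guarantee from \cref{item:class-error-bounds}. The only thing to be careful about is the scaling of $H$ versus $\gamma(n)$ — this is precisely why $H$ is taken to grow only quasi-polylogarithmically in $n$, so that the accumulated error over $H$ steps is dominated by the much faster decay of $\gamma(n)$. The more substantive arguments (the indistinguishability of $M^{:H}_{\bw,\bast}$ from the null MDP via a hybrid over the layer index $k$, and the reduction to \cref{ass:clwe}) arise in the subsequent lemmas, not in this one.
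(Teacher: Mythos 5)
Your proposal is correct and follows essentially the same approach as the paper: fix the open-loop policy that plays $a^\st_h$ at every step, apply the third bullet of \cref{item:class-error-bounds} at $h=1$ and the first bullet inductively for $h<H$, and conclude because $H\gamma(n)\to 0$. The only cosmetic difference is that the paper tracks a multiplicative bound $\tfrac{1}{2}(1-\gamma(n))^{H-1}$ while you track the additive bound $\tfrac{1}{2}-(H-1)\gamma(n)$; both yield the same conclusion.
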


\begin{proof}
For notational convenience, write $M := M_{\bw,\bast}^{:H}$. Consider the policy $\pi$ defined by $\pi_h(x_h) := a^\st_h$ for all $h \in [H]$ and $x_h \in \MX$. The initial distribution of $M$ is $\nunull$, so $\Pr^{M,\pi}[\phi^{\theta(w_1)}(x_1) = 1] \geq 1/2$ by \cref{item:class-error-bounds} of \cref{thm:ltf-distributions}. For each $h < H$, conditioned on $x_h\in\MX$ with $\phi^{\theta(w_h)}(x_h) = 1$, the distribution of $x_{h+1}$ under policy $\pi$ is $\nu_{w_{h+1},1}$, so
\[\Pr^{M,\pi}[\phi^{\theta(w_{h+1})}(x_{h+1}) = 1 \mid{} \phi^{\theta(w_h)}(x_h) = 1] \geq 1 - \gamma(n).\]
Therefore by induction, 
\[\Pr^{M,\pi}[\phi^{\theta(w_H)}(x_H) = 1] \geq \frac{1}{2}(1-\gamma(n))^{H-1} \geq \frac{1}{4}\]
where the final inequality holds for all sufficiently large $n$, since $H = H(n) = o(\gamma(n))$.
\end{proof}

The following lemma shows that any policy that reaches $\phi^{\theta(w_H)}(x_H) = 1$ with decent probability must also often play the action sequence $\bast$.

\begin{lemma}\label{lemma:expert-agreement}
For any $\bw=w_{1:H} \in \WH$,  $\bast =a^\st_{1:H}\in \MA^H$, and policy $\pi\in\Pi$,
\[\Pr^{M^{:H}_{\bw,\bast},\pi}[\forall h \in [H-1]: \pi_h(x_h) = a^\st_h] \geq \Pr^{M_{\bw,\bast}^{:H},\pi}\left[\phi^{\theta(w_H)}(x_H) = 1\right] - H^2 \gamma(n).\]
\end{lemma}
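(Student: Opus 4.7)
Fix $\bw \in (\fS^{t-1})^H$, $\bast \in \MA^H$, and a policy $\pi$, and write $M := M^{:H}_{\bw,\bast}$. For each $h \in [H]$ let $G_h$ denote the event $\{\phi^{\theta(w_h)}(x_h) = 1\}$, and for each $h \in [H-1]$ let $E_h$ denote $\{a_h = a^\st_h\}$, so that $E = \bigcap_{h=1}^{H-1} E_h$ and the event whose probability we want to lower bound is $G_H \cap E$. The plan is to show
\begin{equation*}
\Pr^{M,\pi}[G_H \cap \bar E] \leq H^2 \gamma(n),
\end{equation*}
which immediately implies the claimed inequality.

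The key structural observation about $M$ is that at any step $h \in [H-1]$, whenever $\bar G_h$ or $\bar E_h$ holds, the conditional distribution of $x_{h+1}$ given the history is exactly $\nu_{w_{h+1},0}$. Combined with \cref{item:class-error-bounds} of \cref{thm:ltf-distributions}, this gives the two ``one-step leakage'' bounds
\begin{equation*}
\Pr^{M,\pi}[G_{h+1} \cap \bar G_h] \leq \gamma(n), \qquad \Pr^{M,\pi}[G_{h+1} \cap \bar E_h] \leq \gamma(n),
\end{equation*}
for every $h \in [H-1]$, simply by conditioning on $(x_h, a_h)$ and applying the definition of $\nu_{\cdot,0}$.

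The main step, and the only place requiring care, is lifting the first one-step bound from ``one step later'' to ``all the way to step $H$'': an event of the form $G_H \cap \bar G_j$ is not directly contained in $G_{j+1} \cap \bar G_j$ because $G_H$ does not imply $G_{j+1}$. I handle this by a downward induction on $j$ showing
\begin{equation*}
\Pr^{M,\pi}[G_H \cap \bar G_j] \leq (H-j)\gamma(n) \qquad \text{for all } j \in [H-1],
\end{equation*}
via the decomposition
\begin{equation*}
\Pr[G_H \cap \bar G_j] \leq \Pr[G_{j+1} \cap \bar G_j] + \Pr[G_H \cap \bar G_{j+1}],
\end{equation*}
using the one-step leakage bound on the first summand and the inductive hypothesis on the second. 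The base case $j = H-1$ is exactly the one-step leakage bound.

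Finally, the conclusion follows by the same decomposition applied to $\bar E_h$: for each $h \in [H-1]$,
\begin{equation*}
\Pr^{M,\pi}[G_H \cap \bar E_h] \leq \Pr^{M,\pi}[G_{h+1} \cap \bar E_h] + \Pr^{M,\pi}[G_H \cap \bar G_{h+1}] \leq (H-h)\gamma(n),
\end{equation*}
and a union bound over $h \in [H-1]$ gives $\Pr^{M,\pi}[G_H \cap \bar E] \leq \sum_{h=1}^{H-1}(H-h)\gamma(n) \leq H^2\gamma(n)$, as desired. The only subtle point in the argument is the downward induction described above; everything else reduces to reading off the transition rule of $M$ and invoking property~(\ref{item:class-error-bounds}) of \cref{thm:ltf-distributions}.
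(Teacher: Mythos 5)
Your proof is correct, and it follows essentially the same route as the paper: reduce to the one-step ``leakage'' bound that a transition drawn from $\nu_{\cdot,0}$ lands in the wrong halfspace with probability at most $\gamma(n)$ (via \cref{item:class-error-bounds} of \cref{thm:ltf-distributions}), chain these bounds across layers, and union-bound over $h$. The only difference is presentational: the paper bounds $\Pr[G_H \mid \bar E_h] \leq H\gamma(n)$ for each $h$ by an informal first-escape-time argument, whereas you formalize the chaining step as a downward induction on $\Pr[G_H \cap \bar G_j]$; this yields the slightly sharper per-$h$ bound $(H-h)\gamma(n)$ but of course gives the same final $H^2\gamma(n)$. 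Your version is arguably a bit more rigorous about the recursion, but both are the same argument.
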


\begin{proof}
For notational convenience, write $M = M^{:H}_{\bw,\bast}$. For each $h \in [H-1]$ let $\ME_h$ be the event that $\pi_h(x_h) \neq a^\st_h$. Then
\[\Pr\left[\left(\phi^{\theta(w_H)}(x_H) = 1\right) \land \left(\pi_h(x_h)\neq a^\st_h\right)\right] \leq \Pr\left[\phi^{\theta(w_H)}(x_H)=1 \middle| \pi_h(x_h)\neq a^\st_h\right] \leq H\gamma(n)\]
since conditioned on any $x_h \in \MX$ with $\pi_h(x_h) \neq a^\st_h$, $x_{h+1}$ is distributed according to $\nu_{w_{h+1},0}$, and thus $\Pr^{M,\pi}[\phi^{\theta(w_{h+1})}(x_{h+1})=1 \mid{}x_h] \leq \gamma(n)$ (\cref{item:class-error-bounds} of \cref{thm:ltf-distributions}), and at each subsequent step $k \geq h+1$, conditioned on any $x_k \in \MX$ with $\phi^{\theta(w_k)}(x_k) = 0$, $x_{k+1}$ is distributed according ot $\nu_{w_k,0}$ so once again $\Pr^{M,\pi}[\phi^{\theta(w_{k+1})}(x_{k+1})=1 \mid{}x_k] \leq \gamma(n)$. 

By the union bound,
\begin{align*}
\Pr\left[\left(\phi^{\theta(w_H)}(x_H) = 1\right) \land \left(\exists h\in[H-1]: \pi_h(x_h)\neq a^\st_h\right)\right] 
&\leq H^2\gamma(n).
\end{align*}
Therefore
\begin{align*}
&\Pr^{M,\pi}[\forall h \in [H-1]: \pi_h(x_h) = a^\st_h] \\ 
&\geq \Pr^{M,\pi}\left[\left(\phi^{\theta(w_H)}(x_H) = 1\right) \land \left(\forall h\in[H-1]: \pi_h(x_h)= a^\st_h\right)\right] \\
&\geq \Pr^{M,\pi}\left[\phi^{\theta(w_H)}(x_H) = 1\right] - H^2\gamma(n)
\end{align*}
as needed.
\end{proof}

We now implement a hybrid argument to show that any efficient RL algorithm has similar behavior on $M^{:1}_{\bw,\bast},\dots,M^{:H}_{\bw,\bast}$. In particular, we need that the \emph{value} of its output is similar on these different MDPs, where we define value as the maximum agreement probability (over all policies output by the algorithm) with the true action sequence $\bast$: 

\begin{definition}
For an MDP $M$, set of policies $\Psi$, and action sequence $\bast$, define 
\[\Val^M_{\bast}(\Psi) = \max_{\pi\in\Psi} \Pr^{M,\pi}[\forall h \in [H-1]: \pi_h(x_h) = a^\st_h].\]
\end{definition}

\begin{lemma}\label{lemma:hybrid}
Let $\Alg^M$ be a $\poly(n)$-time algorithm that uses interactive reset access to an MDP $M \in \MM_n$ and outputs a set of policies. Fix $\bast \in \MA^H$ and $k \in [H-1]$. Then
\[\left| \EE_{\bw\sim\Unif(\fS^{t-1})^{\otimes H}}\EE[\Val^{M_{\bw}}_{\bast}(\Alg^{M_{\bw}})] - \EE_{\bw\sim\Unif(\fS^{t-1})^{\otimes H}}\EE[\Val^{M'_{\bw}}_{\bast}(\Alg^{M'_{\bw}})]\right| \leq O(n^{-5}).\]
where $M_{\bw} = M^{:k}_{\bw,\bast}$ and $M'_{\bw} = M^{:k+1}_{\bw,\bast}$.
\end{lemma}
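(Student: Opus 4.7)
The plan is to reduce the claim to a single application of the computational indistinguishability guarantee \cref{item:no-adv} of \cref{thm:ltf-distributions}. The only difference between $M^{:k}_{\bw,\bast}$ and $M^{:k+1}_{\bw,\bast}$ lies in the transition out of step $k$: in the former it is identically $\nunull$, whereas in the latter it is $\nu_{w_{k+1},b}$ with $b := \mathbbm{1}[\langle x_k,\theta(w_k)\rangle \geq 0 \land a_k=a^\st_k]$. All other transitions (and the initial distribution $\nunull$) coincide; in particular, since both MDPs use $\nunull$ at every $h \geq k+1$, the coordinates $w_{k+2},\dots,w_H$ of $\bw$ affect neither the transitions nor the value $\Val^{M}_{\bast}$. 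This identifies $w_{k+1}$ as the natural hidden secret.

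Concretely, I will build a $\poly(n)$-time distinguisher $\Alg'$ with oracle access to two sampling oracles $\MO_0,\MO_1$ on $\RR^{\nlarge}$. The distinguisher first samples $w_1,\dots,w_k \sim \Unif(\fS^{\nsmall-1})$ for itself, then simulates $\Alg$'s reset-model interaction with a ``virtual'' MDP: initial observations are drawn from $\nunull$ via \cref{item:null-samp}, and a transition query $(x_h,a_h)$ is answered by drawing $x_{h+1}$ from (i) $\nu_{w_{h+1},b}$ via \cref{item:nu-samp} when $h<k$, where $b$ is computed via \cref{item:eval-halfspace}; (ii) $\MO_b$ (for the analogously defined $b$ using $w_k,a^\st_k$) when $h=k$; or (iii) $\nunull$ when $h>k$. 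Once $\Alg$ returns a set of policies $\Psi$, the distinguisher draws $m := \poly(n)$ fresh rollouts per $\pi \in \Psi$ from the virtual MDP using the same simulator, computes empirical agreement frequencies $\wh p_\pi \in [0,1]$ with $\bast$, and outputs $\max_{\pi\in\Psi} \wh p_\pi$. By construction, the virtual MDP faithfully realizes $M^{:k+1}_{\bw,\bast}$ when $(\MO_0,\MO_1) = (\nu_{w_{k+1},0},\nu_{w_{k+1},1})$ and $M^{:k}_{\bw,\bast}$ when $\MO_0=\MO_1=\nunull$, up to the per-sample $n^{-\omega(1)}$ total variation error introduced by \cref{item:nu-samp} and \cref{item:null-samp}.

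The conclusion then follows from a triangle inequality over three small errors. First, since $\Alg$'s execution and the Monte Carlo phase together involve only $\poly(n)$ total transitions, the joint distribution of the distinguisher's view is within $\poly(n)\cdot n^{-\omega(1)} = n^{-\omega(1)}$ total variation of an ``exact'' simulation in both oracle regimes. Second, by Hoeffding's inequality and a union bound over $\Psi$, the estimator $\max_\pi \wh p_\pi$ is within $O(n^{-5})$ of $\Val^{M}_{\bast}(\Psi)$ except with probability $O(n^{-5})$, so $\EE[\Alg']$ is within $O(n^{-5})$ of the corresponding expected value of $\Val$ on the virtual MDP. Third, since $\Alg'$ runs in polynomial time and has outputs in $[0,1]$ (its internal randomness includes the sampling of $w_1,\dots,w_k$), \cref{item:no-adv} gives an $\nlarge^{-\omega(1)}$ indistinguishability gap between the two oracle worlds. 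Integrating over $w_1,\dots,w_k$ and the vacuous $w_{k+2},\dots,w_H$ yields the desired $O(n^{-5})$ bound. The main subtlety will be the bookkeeping required to confirm that the accumulated total variation error across all $\poly(n)$ calls to \cref{item:nu-samp,item:null-samp} remains $n^{-\omega(1)}$, so that the super-polynomial indistinguishability margin is not washed out by polynomially many approximate simulations.
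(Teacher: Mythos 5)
Your proof is correct and follows essentially the same strategy as the paper's: construct a $\poly(n)$-time distinguisher that internally samples $w_1,\dots,w_k$, simulates $\Alg$ on a virtual MDP that defers step-$k$ transitions to the sampling oracles $\MO_0,\MO_1$, estimates $\Val$ by Monte Carlo, and then applies \cref{item:no-adv} of \cref{thm:ltf-distributions} with a triangle inequality over the TV-approximation, estimation, and indistinguishability errors. The only cosmetic differences are that the paper samples the full $\bw$ (with $w_{k+1},\dots,w_H$ unused) and bounds the Monte Carlo error directly in $L_1$ via $N=n^{10}$ trajectories rather than via Hoeffding plus a union bound; both give the same $O(n^{-5})$ slack.
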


\begin{proof}
We define an algorithm $\Algbar^{\MO_0,\MO_1}$ with access to sampling oracles $\MO_0,\MO_1$ as follows. Sample $\bw = (w_1,\dots,w_H)$ from $\Unif(\fS^{t-1})^{\otimes H}$. Then simulate $\Alg$ with RL-with-resets access to an MDP $M^{\MO_0,\MO_1}_{w_{1:k}}$ defined as follows. First, define an initial sampling subroutine that samples from $\nunull$ (this can be implemented in time $\poly(n)$ by \cref{item:null-samp}). Next, define a conditional sampling subroutine that, given $h$, $x_h$, and $a_h$, samples $x_{h+1}$ according to the following distribution:
\[\BP(x_{h+1}|x_h,a_h) = \begin{cases} 
\nu_{w_{h+1}, 1}(x_{h+1}) & \text{ if } \phi^{\theta(w_h)}(x_h) = 1 \text{ and } a_h = a^\st_h \text{ and } h < k\\ 
\nu_{w_{h+1}, 0}(x_{h+1}) & \text{ if } (\phi^{\theta(w_h)}(x_h) = 0 \text{ or } a \neq a^\st_h) \text{ and } h < k \\
\MO_0(x_{h+1}) & \text{ if } \phi^{\theta(w_h)}(x_h) = 1 \text{ and } a_h = a^\st_h \text{ and } h = k \\ 
\MO_1(x_{h+1}) &\text{ if } (\phi^{\theta(w_h)}(x_h) = 0 \text{ or } a_h \neq a^\st_h) \text{ and } h = k \\
\nunull & \text{ if } h > k
\end{cases}.
\]
This sampler can be implemented in time $\poly(n)$, up to total variation error $n^{-\omega(1)}$: in particular, for any $x_h \in \MX$ and $w_h \in \fS^{t-1}$, we can compute $\sgn(\langle x_h,\theta(w_h)\rangle)$ in time $\poly(n)$ (\cref{item:eval-halfspace}) and hence evaluate $\phi^{\theta(w_h)}(x_h) = \mathbbm{1}[\langle x_h, \theta(w_h)\rangle \geq 0]$; moreover, we can sample from $\nu_{w_{h+1},0}$ and $\nu_{w_{h+1},1}$ for any given $w_{h+1} \in \fS^{t-1}$ (\cref{item:nu-samp}) up to total variation error $n^{-\omega(1)}$, we can sample from $\nunull$ (\cref{item:null-samp}), and we are given access to $\MO_0,\MO_1$. 

Let $\Psi$ be the output of $\Alg$ after interaction with this MDP. Next, $\Algbar^{\MO_0,\MO_1}$ computes an estimate $\wh \Val$ of $\Val^{M^{\MO_0,\MO_1}_{w_{1:k}}}_{\bast}(\Psi)$ by enumerating over $\Psi$ and performing Monte Carlo estimation: note that we can draw trajectories from this MDP, and we know $\bast$. By drawing $N = n^{10}$ trajectories, we can guarantee that conditioned on all prior randomness,
\[\EE \left|\wh \Val - \Val^{M^{\MO_0,\MO_1}_{w_{1:k}}}_{\bast}(\Psi)\right| \leq O(1/\sqrt{N}).\]
Finally, $\Algbar^{\MO_0,\MO_1}$ output $\wh \Val$.

\paragraph{Analysis.} Suppose $(\MO_0,\MO_1) = (\nunull,\nunull)$. If the conditional sampler defined above had no error, then it would hold for any choice of $\bw=w_{1:H}$ that $M^{\MO_0,\MO_1}_{w_{1:k}}$ is exactly $M^{:k}_{\bw,\bast}$. Since the error is $n^{-\omega(1)}$ per sample and $\Alg$ has time complexity $\poly(n)$, this error affects $\EE[\Algbar^{\nunull,\nunull}]$ by at most $n^{-\omega(1)}$. Therefore
\begin{align*}
\left|\EE[\Algbar^{\nunull,\nunull}]
- \EE_{\bw\sim\Unif(\fS^{t-1})^{\otimes H}}\EE[\Val^{M^{:k}_{\bw,\bast}}_{\bast}(\Alg^{M^{:k}_{\bw,\bast}})]\right| \leq O(1/\sqrt{N}),
\end{align*}
where the error due to Monte Carlo estimation dominates the error in the conditional sampler. 

Next, for any $w^\st_{k+1} \in\fS^{t-1}$, suppose that $(\MO_0,\MO_1) = (\nu_{w^\st_{k+1},0},\nu_{w^\st_{k+1},1})$. If the conditional sampler had no error, then it would hold for any choice of $\bw$ that $M^{\MO_0,\MO_1}_{w_{1:k}}$ is the same as $M^{:k+1}_{\bw^\st,\bast}$ where $\bw^\st = (w_1,\dots,w_k,w^\st_{k+1},w_{k+2},\dots,w_H)$. Again the error is at most $n^{-\omega(1)}$ per sample, and hence affects $\EE[\Algbar^{\nu_{w^\st_{k+1},0},\nu_{w^\st_{k+1},1}}]$ by at most $n^{-\omega(1)}$. Therefore
\begin{align*}
&\left|\EE_{w^\st_{k+1}\sim\Unif(\fS^{t-1})}\EE[\Algbar^{\nu_{w^\st_{k+1},0},\nu_{w^\st_{k+1},1}}]
- \EE_{w^\st_{k+1},w_1,\dots,w_H\sim\Unif(\fS^{t-1})}\EE[\Val^{M^{:k+1}_{\bw^\st,\bast}}_{\bast}(\Alg^{M^{:k+1}_{\bw^\st,\bast}})]\right| \\
&\leq \EE_{w^\st_{k+1}\sim\Unif(\fS^{t-1})} \left|\EE[\Algbar^{\nu_{w^\st_{k+1},0},\nu_{w^\st_{k+1},1}}]
- \EE_{w_1,\dots,w_H\sim\Unif(\fS^{t-1})}\EE[\Val^{M^{:k+1}_{\bw^\st,\bast}}_{\bast}(\Alg^{M^{:k+1}_{\bw^\st,\bast}})]\right| \\
&\leq O(1/\sqrt{N}).
\end{align*}
But $\Algbar$ has time complexity $\poly(n)$, so by \cref{item:no-adv},
\[\left|\EE_{w^\st_{k+1}\sim\Unif(\fS^{t-1})}\EE[\Algbar^{\nu_{w^\st_{k+1},0},\nu_{w^\st_{k+1},1}}]
- \EE[\Algbar^{\nunull,\nunull}]\right| \leq n^{-\omega(1)}.\]
The result follows by the triangle inequality.
\end{proof}

We can now put together \cref{lemma:hybrid} (applied for all $1 \leq k \leq H-1$) with \cref{lemma:best-policy,lemma:expert-agreement} and the coverage guarantee \cref{eq:genblock-cover} assumed in the theorem statement.
\vspace{1em}
\begin{proof}[Proof of \cref{thm:halfspace-rl-hard}]
For any $\bw \in (\fS^{t-1})^H$ and $\ba \in \MA^H$, let $\Psi^{:k}_{\bw,\bast}$ denote the (random) output of $\Alg^{M^{:k}_{\bw,\bast}}$. By the theorem assumption and the fact that $|\MA|=|\MS|=2$, we know that for any $\bw,\bast$, with probability at least $1/2$ over the execution of $\Alg^{M^{:H}_{\bw,\bast}}$, there is some $\pihat\in \Psi^{:H}_{\bw,\bast}$ such that 
\[\Pr^{M^{:H}_{\bw,\bast},\pihat}[\phi^{\theta(w_H)}(x_H)=1] \geq \frac{1}{\poly(H)}\left(\max_{\pi\in\Pi} \Pr^{M^{:H}_{\bw,\ba},\pi}[\phi^{\theta(w_H)}(x_H)=1] - \frac{1}{8}\right) \geq \frac{1}{\poly(H)}\]
where the second inequality is by \cref{lemma:best-policy}. In this event, by \cref{lemma:expert-agreement}, the policy $\pihat$ satisfies
\[\Pr^{M^{:H}_{\bw,\bast},\pihat}[\forall h \in [H-1]: \pihat_h(x_h) = a^\st_h] \geq \frac{1}{\poly(H)} - H^2 \gamma(n) \geq \frac{1}{\poly(H)}\]
where the second inequality holds for sufficiently large $n$, and uses the fact that $H = (\log n)^{\log \log n}$ whereas $\gamma(n) = (\log n)^{-\Omega(\log^2 \log n)}$.
Therefore we have
\[\EE[\Val^{M^{:H}_{\bw,\bast}}_{\bast}(\Psi^{:H}_{\bw,\bast})] \geq \frac{1}{\poly(H)}\]
where the expectation is over the execution of $\Alg^{M^{:H}_{\bw,\bast}}$.
Since this bound holds for any fixed $\bw$, it also holds in expectation over $\bw \sim \Unif(\fS^{t-1})^{\otimes H}$. By \cref{lemma:hybrid}, we get that for any $\bast \in \MA^H$,
\begin{equation} \EE_{\bw \sim \Unif(\fS^{t-1})^{\otimes H}} \EE[\Val^{M^{:1}_{\bw,\bast}}_{\bast}(\Psi^{:1}_{\bw,\bast})] \geq \frac{1}{\poly(H)} - O(H \cdot n^{-5}) \geq \frac{1}{\poly(H)}
\label{eq:val-lb}
\end{equation}
since $H = (\log n)^{\log \log n} = n^{o(1)}$. But the initial distribution and transition distributions of $M^{:1}_{\bw,\bast}$ are independent of $\bast$ (and $\bw$), i.e. we can write $M^{:1}_{\bw,\bast} = M$ for a fixed MDP $M$. It follows that the random variables $\{\Psi^{:1}_{\bw,\ba}: \ba \in \MA^H\}$ are identically distributed. So for any $\bw \in (\fS^{t-1})^H$ and $\bast \in \MA^H$,
\begin{align*}
\sum_{\ba \in \MA^H} \EE[\Val^{M^{:1}_{\bw,\ba}}_{\ba}(\Psi^{:1}_{\bw,\ba})]
&= \sum_{\ba \in \MA^H} \EE[\Val^{M^{:1}_{\bw,\ba}}_{\ba}(\Psi^{:1}_{\bw,\bast})] \\ 
&= \sum_{\ba \in \MA^H} \EE[\Val^{M}_{\ba}(\Psi^{:1}_{\bw,\bast})] \\ 
&= \EE\left[\sum_{\ba \in \MA^H} \max_{\pi \in \Psi^{:1}_{\bw,\bast}}\Pr^{M, \pi}[\forall h \in [H-1]: \pi_h(x_h) = a_h]\right] \\ 
&\leq \EE\left[\sum_{\ba \in \MA^H} \sum_{\pi \in \Psi^{:1}_{\bw,\bast}}\Pr^{M, \pi}[\forall h \in [H-1]: \pi_h(x_h) = a_h]\right] \\ 
&\leq \EE[|\Psi^{:1}_{\bw,\bast}|] \\ 
&\leq \poly(n)
\end{align*}
since the size of the output of $\Alg$ is bounded by its runtime. It follows that
\[\EE_{\bw\sim\Unif(\fS^{t-1})^{\otimes H}}\sum_{\ba \in \MA^H} \EE[\Val^{M^{:1}_{\bw,\ba}}_{\ba}(\Psi^{:1}_{\bw,\ba})] \leq \poly(n)\]
and thus there is some $\ba \in \MA^H$ with
\[\EE_{\bw\sim\Unif(\fS^{t-1})^{\otimes H}}\EE[\Val^{M^{:1}_{\bw,\ba}}_{\ba}(\Psi^{:1}_{\bw,\ba})] \leq \frac{\poly(n)}{2^H} \leq n^{-\omega(1)}\]
where the final inequality uses that $H = \omega(\log n)$. But this contradicts \cref{eq:val-lb}.
\end{proof}

\subsubsection{Technical Results Regarding CLWE and PTFs}\label{subsec:halfspace-technical}

To prove \cref{thm:ltf-distributions}, we follow the strategy used by \cite{tiegel2023hardness}: we first prove an analogous result for polynomial threshold functions (\cref{lemma:ptf-distributions}, below), and then lift to linear threshold functions (albeit suffering a blowup in the dimension) via the Veronese mapping. To reiterate, these results essentially follow from inspecting various statements and proofs from \cite{tiegel2023hardness}.

\begin{lemma}\label{lemma:ptf-distributions}
Let $\nsmall,\ell \in\NN$ with $\ell \geq 2\sqrt{\nsmall}$, and $\delta \in (0,1)$. Define $\MDnull := N(0, \frac{1}{2\pi} I_{\nsmall})$. There is a family of degree-$\ell$ PTFs $\{f_w: \RR^{\nsmall} \to \{-1,1\}\}$ and two families of distributions $\{\MD_w^+\}$ and $\{\MD_w^-\}$ over $\RR^{\nsmall}$, all indexed by $w \in \RR^{\nsmall}$, with the following properties:
\begin{enumerate}
\item\label{item:indist} Under \cite[Assumption 3.4]{tiegel2023hardness}, for every algorithm $\Alg^{\MO_0,\MO_1}$ with time complexity $2^{\nsmall^\delta}$, access to sampling oracles $\MO_0,\MO_1$, and outputs in $[0,1]$,
\[\left|\EE_{w \sim \Unif(\fS^{\nsmall-1})} \EE[\Alg^{\MD_w^+,\MD_w^-}] - \EE[\Alg^{\MDnull,\MDnull}]\right| \leq 2^{-\nsmall^\delta}.\]
\item For every $w \in \fS^{\nsmall-1}$,
\begin{equation} \Pr_{x \sim \MD_w^+}[f_w(x) \neq 1] \leq \exp(-\Omega(\ell^2/\nsmall))\label{eq:misspec-plus}\end{equation}
and
\begin{equation} \Pr_{x \sim \MD_w^-}[f_w(x) \neq -1] \leq \exp(-\Omega(\ell^2/\nsmall))\label{eq:misspec-minus}\end{equation}
and
\begin{equation} \Pr_{x \sim \MDnull}[f_w(x) = 1] \geq 1/2.\label{eq:null-bias}
\end{equation}
\item\label{item:samp} There is a $\poly(\nsmall)$-time algorithm that takes as input $w \in \fS^{\nsmall-1}$ and $a \in \{-,+\}$, and outputs $(x,f_w(x))$ with $\TV(\Law(x),\MD_w^a) \leq 2^{-\Omega(t)}$.
\end{enumerate}
\end{lemma}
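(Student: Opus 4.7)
\textbf{Proof plan for Lemma \ref{lemma:ptf-distributions}.}
The construction follows the template used by \citet{tiegel2023hardness} to establish cryptographic hardness of agnostically learning polynomial threshold functions under CLWE, specialized to the parameter regime we need (Gaussian marginals with covariance $\tfrac{1}{2\pi}I_t$, and misspecification error $\exp(-\Omega(\ell^2/t))$). For a secret $w \in \mathbb{S}^{t-1}$, the CLWE distribution $C_{w,\beta,\gamma}$ produces samples $(y,z) \in \mathbb{R}^t \times [0,1)$ with $y \sim \MDnull$ and $z = \gamma \langle y,w\rangle + e \bmod 1$, where $e$ is small Gaussian noise. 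The plan is to build $\MD_w^+, \MD_w^-$ as the conditional laws of $y$ given that $z$ lands in a ``positive'' or ``negative'' band, and to show that the resulting label function is approximated by a degree-$\ell$ PTF in $\langle y,w\rangle$ alone.

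First, I would partition $[0,1)$ into $2k$ equal sub-intervals ($k$ chosen so $\gamma/k \asymp \ell/\sqrt{t}$), alternately marked $+$ and $-$, and define $B^\pm \subseteq [0,1)$ as the union of the $\pm$-intervals. Set $\MD_w^\pm$ to be the law of $y$ conditioned on $z \in B^\pm$ under $(y,z) \sim C_{w,\beta,\gamma}$. For the target PTF, let $g_w(y) := \mathrm{sign}\bigl(\cos(2\pi k \gamma \langle y,w\rangle)\bigr)$ be the noiseless ``band indicator'' as a function of $y$; using standard Chebyshev/Hermite approximation of $\mathrm{sign}\circ\cos(2\pi k \gamma \cdot)$ on the real line (where $\langle y,w\rangle \sim N(0,1/(2\pi))$), one can produce a univariate polynomial of degree $O(k \gamma) = O(\ell)$ that $L^2$-approximates $g_w$ to within $\exp(-\Omega(\ell^2/t))$; define $f_w$ as its sign. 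Because $g_w$ is balanced on $\MDnull$, $f_w$ is balanced up to exponentially small error, giving \eqref{eq:null-bias}. Checking \eqref{eq:misspec-plus}--\eqref{eq:misspec-minus} reduces to bounding (i) the chance that the PTF disagrees with $g_w$ on a draw from $\MD_w^\pm$, controlled by the polynomial approximation quality, and (ii) the chance that the band indicator $g_w(y)$ disagrees with the CLWE label $\mathbbm{1}[z \in B^+]$, which is bounded by $O(\beta k) \le \exp(-\Omega(\ell^2/t))$ after choosing $\beta,k$ consistently with the CLWE assumption.

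Second, for indistinguishability (Item~\ref{item:indist}), I would reduce to the CLWE hardness assumption used by \citet{tiegel2023hardness} (their Assumption~3.4): an oracle pair $(\MO_0,\MO_1)$ sampling $(\MD_w^+,\MD_w^-)$ can be simulated in polynomial time given a CLWE sampling oracle (draw $(y,z) \sim C_{w,\beta,\gamma}$ and return $y$ if $z$ lies in the desired band, otherwise resample), and under the uniform-$z$ distribution this simulator returns samples from $\MDnull$ regardless of the label. A hybrid argument then transfers the $2^{-t^\delta}$ distinguishing advantage from CLWE to the pair $(\MD_w^+,\MD_w^-)$ vs.\ $(\MDnull,\MDnull)$.

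Finally, Item~\ref{item:samp} is immediate from the rejection-sampling construction above together with efficient sampling from the CLWE distribution; the total variation error $2^{-\Omega(t)}$ comes from truncating the Gaussian noise and ensuring the acceptance probability is bounded away from zero by the band width. The main obstacle is bookkeeping, rather than any fundamentally new idea: the error bound $\exp(-\Omega(\ell^2/t))$ must survive all three sources of slack (polynomial $L^2$-to-$L^\infty$ conversion, CLWE noise smearing the band boundary, and reweighting by the conditional density of $\MD_w^\pm$ relative to $\MDnull$), and the choice of $k \asymp \ell/\sqrt{t}$ along with $\gamma \asymp \sqrt{t}$ and $\beta \asymp 1/t$ (as prescribed by Assumption~\ref{ass:clwe}) is what makes all three error contributions simultaneously small.
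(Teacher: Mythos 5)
Your proposal is conceptually in the same CLWE universe as the paper's proof, but it takes a genuinely different route: you attempt to \emph{reconstruct} the hard distributions, the PTF, and the sampler from scratch (via an alternating-band conditioning of the CLWE distribution and a fresh Chebyshev/Hermite polynomial-approximation argument), whereas the paper simply \emph{cites} the off-the-shelf machinery from \citet{tiegel2023hardness}. Concretely, the paper takes $\MD_w^+$ and $\MD_w^-$ to be the non-overlapping homogeneous CLWE distributions $\NH_{w,\beta,\gamma,0}$ and $\NH_{w,\beta,\gamma,1/2}$ from their Definition~3.3, imports the PTF $f_w$ and its misspecification bound directly from their Lemma~4.3, gets efficient sampling from their Lemma~6.2 and Lemma~A.1, and gets indistinguishability from their Theorem~6.1. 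The only original step in the paper's proof is the small symmetrization argument for \eqref{eq:null-bias}: since $f_w(x) = \sgn(p(\langle w,x\rangle))$ for a fixed univariate polynomial $p$ and $\MDnull$ is radially symmetric, the probability $\Pr_{x\sim\MDnull}[f_w(x)=1]$ is a constant $r$ independent of $w$, and if $r<1/2$ one globally swaps $\MD_w^+ \leftrightarrow \MD_w^-$ and negates $f_w$. This is much shorter and avoids re-deriving the approximation theory.

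There is also a concrete issue in your sketch that you should not gloss over: your parameter choices for $k$ are internally inconsistent. You set $k$ so that $\gamma/k \asymp \ell/\sqrt{t}$, giving $k \asymp \gamma\sqrt{t}/\ell \approx 2t/\ell$, yet you also claim the required polynomial degree is $O(k\gamma) = O(\ell)$. Plugging in gives $k\gamma \approx 4t^{3/2}/\ell$, which is $O(\ell)$ only when $\ell = \Omega(t^{3/4})$; but the lemma must hold for all $\ell \ge 2\sqrt{t}$, and for $\ell$ in the range $[2\sqrt{t}, t^{3/4})$ the degree your argument produces exceeds $\ell$. In fact the natural construction --- the one Tiegel uses --- does not use an alternating $2k$-band partition at all; it concentrates $\MD_w^+$ near integer values of $\gamma\langle y,w\rangle$ and $\MD_w^-$ near half-integers (i.e.\ the ``$k=1$'' version), and the PTF is the sign of a polynomial approximation to a fixed $O(\gamma)$-frequency trigonometric function on an interval of radius $O(\ell/\sqrt{t})$, giving degree $O(\gamma \cdot \ell/\sqrt{t}) = O(\ell)$ exactly as needed. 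With your alternating-band frequency boost, you lose that clean degree count. The rest of your plan (rejection sampling from a CLWE oracle, hybrid argument for indistinguishability) matches the structure of Tiegel's Lemma~6.2 and Theorem~6.1 and would work once the band construction is fixed, but since these results are already proved in that paper the cleaner move is just to cite them.
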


\begin{proof}
Set $\beta = 1/t$ and $\gamma = 2\sqrt{t}$. For each $w \in \fS^{t-1}$, let $\MD_w^+$ be the distribution $\NH_{w,\beta,\gamma,0}$, and let $\MD_w^-$ be the distribution $\NH_{w,\beta,\gamma,1/2}$, where both of these \emph{non-overlapping homogeneous CLWE} distributions are defined in \cite[Definition 3.3]{tiegel2023hardness}. By \cite[Lemma 4.3]{tiegel2023hardness}, for each $w \in \fS^{t-1}$ there is some degree-$\ell$ polynomial threshold function $f_w: \RR^t \to \{-1,1\}$ such that 
\[\frac{1}{2}\Pr_{x \sim \MD_w^+}[f_w(x) \neq 1] + \frac{1}{2}\Pr_{x \sim \MD_w^-}[f_w(x) \neq 1] \leq \exp(-\Omega(\ell^2/t)),\]
which proves \cref{eq:misspec-plus,eq:misspec-minus}. Moreover, inspecting the proof of \cite[Lemma 4.3]{tiegel2023hardness} we see that $f_w(x) = \sgn(p(\langle w,x\rangle))$ for a fixed degree-$\ell$ polynomial $p: \RR \to \RR$ (i.e. depending only on $t$, $\ell$, $\beta$, $\gamma$). Since the distribution $\MDnull$ is radially symmetric, the distribution of $\langle w,x\rangle$ under $x \sim \MDnull$ does not depend on $w$, so $\Pr_{x \sim \MDnull}[f_w(x) = 1] = r$ for some constant $r \in [0,1]$. If $r \geq 1/2$ then \cref{eq:null-bias} holds for all $w \in \fS^{t-1}$. If $r < 1/2$, then we can simply swap the definitions of $\MD_w^+$ and $\MD_w^-$ for all $w$, and replace $f_w$ with $-f_w$ for all $w$. This preserves \cref{eq:misspec-plus,eq:misspec-minus}, and after this swap \cref{eq:null-bias} is now satisfied for all $w \in \fS^{t-1}$.

Since the sign pattern of $p$ is explicit (in terms of $t$, $\ell$, $\beta$, and $\gamma$), for any $w \in \fS^{t-1}$ and $x \in \RR^t$ we can efficiently compute $\sgn(p(\langle w,x\rangle)) = f_w(x)$. To prove \cref{item:samp} it remains to show that given $w \in \fS^{t-1}$ we can approximately sample from $\NH_{w,\beta,\gamma,0}$ and $\NH_{w,\beta,\gamma,1/2}$. By \cite[Lemma 6.2]{tiegel2023hardness}, for any $c \in [0,1)$, given a sampling oracle for the CLWE distribution $C_{w,\beta,\gamma}$ there is a $\poly(t)$-time algorithm for sampling from a distribution that is $2^{-t}$-close to $\HH_{w,\beta,\gamma,c}$ in total variation distance, where $\HH_{w,\beta,\gamma,c}$ is the homogeneous CLWE distribution (\cite[Definition 3.2]{tiegel2023hardness}) and $C_{w,\beta,\gamma}$ is the CLWE distribution (\cite[Definition 3.1]{tiegel2023hardness}). But by definition, there is an explicit, $\poly(t)$-time algorithm for sampling from $C_{w,\beta,\gamma}$ (assuming the ability to sample from a standard Gaussian distribution). Moreover, \cite[Lemma A.1]{tiegel2023hardness} shows that $\TV(\NH_{w,\beta,\gamma,c},\HH_{w,\beta,\gamma,c}) \leq 4 \cdot \exp(-\frac{1}{100\beta^2}) \leq 2^{-\Omega(t)}$ by choice of $\beta$. Thus, given $w \in \fS^{t-1}$, we can sample from a distribution that is $2^{-\Omega(t)}$-close to $\NH_{w,\beta,\gamma,c}$ in total variation distance, in time $\poly(t)$. This proves \cref{item:samp}.

By \cite[Theorem 6.1]{tiegel2023hardness}, it holds under \cite[Assumption 3.4]{tiegel2023hardness} that there is no constant $c>0$ and $2^{t^\delta}$-time algorithm $\Alg^\MO$ with outputs in $[0,1]$ and
\[\left|\EE_{w \sim \Unif(\fS^{t-1})}[\Alg^{\frac{1}{2}(D_w^+, 1) + \frac{1}{2}(D_w^-, -1)}] - \EE[\Alg^{N(0, \frac{1}{2\pi}I_t)\times \Ber(1/2)}]\right| \geq \Omega(n^{-c}).\]
In fact, by a standard boosting argument, the advantage can be driven down to $2^{-t^\delta}$. Moreover, since samples from $D_w^+$ and $D_w^-$ can be individually extracted using a sampling oracle for $\frac{1}{2}(D_w^+,1)+\frac{1}{2}(D_w^-,-1)$, it follows that there is no $2^{t^\delta}$-time algorithm $\Alg^{\MO_0,\MO_1}$ with outputs in $[0,1]$ and
\[\left|\EE_{w \sim \Unif(\fS^{t-1})}[\Alg^{D_w^+,D_w^-}] - \EE[\Alg^{N(0, \frac{1}{2\pi}I_t),N(0, \frac{1}{2\pi}I_t)}]\right| \geq \Omega(2^{-t^\delta}).\]
This proves \cref{item:indist}.
\end{proof}

\begin{proof}[Proof of \cref{thm:ltf-distributions}]
Invoke \cref{lemma:ptf-distributions} with $t = \frac{\log^2 n}{\log^4(\log n)}$ and $\ell = \frac{\log n}{3\log \log n} \geq 2\sqrt{t}$, where the inequality holds for sufficiently large $n$. For each $w \in \fS^{t-1}$, let $\MD_w^+,\MD_w^- \in \Delta(\RR^t)$ be the distributions and let $f_w: \RR^t \to \{-1,1\}$ be the $2\sqrt{t}$-PTF given by the lemma. Let $\tau: \RR^t \to \RR^n$ be defined as follows. Identify the first $t^0 + t^1 + \dots + t^\ell$ coordinates of $[n]$ with sequences $\alpha = (\alpha_1,\dots,\alpha_k) \in [t]^k$ of length at most $\ell$, and for $x \in \RR^t$ and any such sequence $\alpha$, define
\[\tau(x)_\alpha := \prod_{i=1}^{|\alpha|} x_i.\]
Define the remaining $n - (t^0+\dots+t^\ell)$ coordinates of $\tau(x)$ to be $0$. Note that \[t^0+\dots+t^\ell \leq 2t^\ell \leq 2(\log^2 n)^{\log(n)/(3\log\log n)} \leq 2n^{2/3}\]
so this map is well-defined for sufficiently large $n$. Next, observe that for any polynomial $q: \RR^t \to \RR$ of degree at most $\ell$, there is some $\theta \in \RR^n$ such that $q(x) = \langle \tau(x), \theta\rangle$ for all $x \in \RR^t$. Accordingly, for each $w \in \fS^{t-1}$, define $\theta(w) \in \RR^n$ so that $f_w(x) = \sgn(\langle \tau(x), \theta(w)\rangle)$ for all $x \in \RR^t$. Let $\nu_{w,1}$ be the distribution of $\tau(x)$ under $x \sim \MD_w^+$, and let $\nu_{w,0}$ be the distribution of $\tau(x)$ under $x \sim \MD_w^-$. Let $\nunull$ be the distribution of $\tau(x)$ under $x \sim \MDnull$. We now check the theorem claims:
\begin{enumerate}
\item Let $\Alg^{\MO_0,\MO_1}$ be an $\poly(n)$-time algorithm with access to sampling oracles $\MO_0,\MO_1 \in \Delta(\RR^n)$. We define an algorithm $\Algbar^{\MObar_0,\MObar_1}$ with access to sampling oracles $\MObar_0,\MObar_1 \in \Delta(\RR^t)$ as follows. Simulate $\Alg$; when it queries $\MO_0$, draw $x \sim \MObar_0$ and pass $\tau(x)$; when it queries $\MO_1$, draw $x \sim \MObar_1$ and pass $\tau(x)$. Finally, output the final output of $\Alg$.

Since $\tau$ can be evaluated in time $\poly(n)$, the time complexity of $\Algbar$ is $\poly(n) = 2^{O(\log n)} \leq 2^{O(t^{2/3})}$. Moreover, the distribution of $\Algbar^{\MDnull,\MDnull}$ is exactly the distribution of $\Alg^{\nunull,\nunull}$, and for each $w \in \fS^{t-1}$, the distribution of $\Algbar^{\MD_w^+,\MD_w^-}$ is exactly the distribution of $\Alg^{\nu_{w,1},\nu_{w,0}}$. Thus, \cref{item:indist} of \cref{lemma:ptf-distributions}, applied to $\Algbar$, implies that
\[\left|\EE_{w \sim \Unif(\fS^{\nsmall-1})} \EE[\Alg^{\nu_{w,0},\nu_{w,1}}] - \EE[\Alg^{\nunull,\nunull}]\right| \leq 2^{-t^{2/3}}.\]
By definition of $t$, this bound decays super-polynomially in $n$.

\item Fix $w \in \fS^{t-1}$. We have
\begin{align*}
\Pr_{x \sim \nu_{w,1}}[\langle x,\theta(w)\rangle < 0]
&= \Pr_{x \sim \MD_w^+}[\langle \tau(x), \theta(w)\rangle < 0] \\ 
&= \Pr_{x \sim \MD_w^+}[f_w(x) = -1] \\ 
&\leq \exp(-\Omega(\ell^2/t)) \\ 
&\leq \exp(-\Omega(\log^2 \log n)) \\
&\leq (\log n)^{-\Omega(\log \log n)}
\end{align*}
by \cref{eq:misspec-plus} and the definitions of $\ell$ and $t$. A symmetric argument with $\MD_w^-$ and \cref{eq:misspec-minus} shows that
\[\Pr_{x \sim \nu_{w,0}}[\langle x,\theta(w) \rangle \geq 0] \leq (\log n)^{-\Omega(\log \log n)}.\]
Finally,
\begin{align*}
\Pr_{x \sim \nunull}[\langle x,\theta(w)\rangle \geq 0]
&= \Pr_{x \sim \MDnull}[\langle \tau(x), \theta(w)\rangle \geq 0] \\ 
&= \Pr_{x \sim \MDnull}[f_w(x) = 1] \\ 
&\geq 1/2
\end{align*}
by \cref{eq:null-bias}.

\item Given $w \in \fS^{t-1}$, we can sample $x \in \RR^t$ with $\TV(\Law(x),\MD_w^+) \leq 2^{-\Omega(t)}$ in time $\poly(t)$ (by \cref{item:samp}), and we can then compute $\tau(x)$ in time $\poly(n)$. By the data processing inequality, $\TV(\Law(\tau(x)), \nu_{w,1}) \leq 2^{-\Omega(t)} \leq n^{-\omega(1)}$. Moreover, we can compute $f_w(x) = \sgn(\langle \tau(x),\theta(w)\rangle)$ in time $\poly(t)$ by \cref{item:samp}. The same argument works for $\nu_{w,0}$.

\item Since $\MDnull = N(0,\frac{1}{2\pi}I_t)$, we can efficiently sample $x \sim \MDnull$ and then compute $\tau(x)$, which has distribution $\nunull$ by definition.
\end{enumerate}
This completes the proof.
\end{proof}

\subsection{Statistical Tractability of Generalized Block MDPs}\label{sec:halfspace-statistical}

In this section we prove \cref{prop:halfspace-rl-statistical}, which asserts that reward-free RL in the family of Generalized Block MDPs (and hence Low-Rank MDPs) $\MM_n$ is \emph{statistically} tractable---and hence the hardness result from the preceding section is purely a \emph{computational} phenomenon. 

\begin{proposition}\label{prop:halfspace-rl-statistical}
There is a (computationally \emph{inefficient}) algorithm $\Alg^M$ that, given episodic access to any MDP $M \in \MM_n$, has sample complexity $\poly(n)$ and produces a set of policies $\Psi$ satisfying the following guarantee with probability at least $1 - o(1)$:
\[\forall s \in \MS: \max_{\pi\in\Psi} d^{M,\pi}_H(s) \geq \frac{1}{64} \cdot \left(\max_{\pi\in\Pi} d^{M,\pi}_H(s) - \frac{1}{8}\right).\]
\end{proposition}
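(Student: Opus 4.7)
The proposition is purely information-theoretic, so we may ignore computational efficiency entirely.

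\textbf{Step 1 (embedding).} By \Cref{prop:genblock-is-lowrank}, any $M \in \MM_n$ is a Low-Rank MDP of constant rank $d = |\MS|\cdot|\MA| = 4$, with per-layer features $\philin^{\theta_h}(x,a) = e_{\phi^{\theta_h}(x),a}$ indexed by unknown halfspace parameters $\theta_h \in \RR^n$. The unknown feature sequence lies in $\Philin_n^H$ where $\Philin_n := \{\philin^\theta : \theta \in \RR^n\}$, and each $\mulin_{h+1}$ is an arbitrary $\RR^4$-valued density.

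\textbf{Step 2 (invoke low-rank MDP statistical guarantees).} For Low-Rank MDPs with a finite feature class $\Philin$ of cardinality $N$, there are well-known (and in general computationally inefficient) reward-free RL algorithms---for instance, an \OLIVE-style bilinear-class elimination procedure, or an information-theoretic version of \texttt{FLAMBE}/\VOX---that produce an $(\alpha,\epsilon)$-policy cover at every layer with sample complexity $\poly(d, H, |\MA|, 1/\alpha, 1/\epsilon, \log N)$. I will invoke such a procedure as a black box. To accommodate the infinite class $\Philin_n$, I replace $\log N$ by an empirical covering number: since halfspaces in $\RR^n$ have VC dimension $n+1$, Sauer's lemma bounds the number of distinct labelings of any $m$-point sample by $O(m^n)$, so standard symmetrization replaces the $\log N$ factor by $O(n \log m)$. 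Solving the resulting inequality in $m$ yields a total sample complexity of $\poly(n, H, |\MA|, 1/\alpha, 1/\epsilon)$. With $|\MA| = |\MS| = 2$, $\alpha = 1/64$, $\epsilon = 1/8$, and $H = (\log n)^{\log\log n} = n^{o(1)}$, the bound becomes $\poly(n)$, and the per-layer $(\alpha,\epsilon)$-cover guarantee in particular implies the claimed layer-$H$ condition (the additive $1/8$ slack easily absorbs $\epsilon$ and the multiplicative $1/64$ absorbs $\alpha$).

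\textbf{Step 3 (main obstacle and fallback).} The main delicacy is that standard Low-Rank MDP guarantees are stated for finite feature classes, so extending to $\Philin_n$ requires threading covering/VC arguments through the chosen algorithm's analysis---routine, but not formally packaged for our setting. A secondary point is that generalized Block MDPs need not admit $(1,\epsilon)$-policy covers (transitions may depend arbitrarily on $x_{h+1}$), but the proposition only asks for the weaker multiplicative guarantee, which is exactly what a Low-Rank MDP $(\alpha,\epsilon)$-cover provides. If a black-box reference is unsatisfying, an alternative is to directly imitate $\PCO$ (\Cref{sec:online}) at the latent level: iteratively build layer-wise covers, using at each step ERM over $\Phi_n$ (statistically $O(n\log m/\epsilon^2)$-complex by the VC bound) to estimate the current decoder $\phi^{\theta_h}$, and then treat the rolled-in trajectories as samples from a $4$-state MDP on which we can dynamic-program toward each of the two layer-$H$ latent states. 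Since $|\MS| = 2$ and the only ``decoding'' to learn per layer is a halfspace, the compounding-error issues that required truncation in \Cref{sec:app_online} are mild enough to fit inside the generous multiplicative/additive slack $(1/64, 1/8)$ granted by the proposition.
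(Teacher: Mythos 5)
Your high-level plan coincides with the paper's: embed $\MM_n$ into Low-Rank MDPs via \Cref{prop:genblock-is-lowrank}, invoke a statistically efficient low-rank reward-free RL algorithm (the paper uses VOX \citep{mhammedi2023efficient}), and replace the $\log|\Phi|$ dependence with a VC-type argument to handle the infinite class $\Phi_n$. You also correctly observe that the $(\alpha,\epsilon)$-policy-cover guarantee available for Low-Rank MDPs fits within the $\tfrac{1}{64}$ multiplicative and $\tfrac{1}{8}$ additive slack, which is the reason the proposition is stated with that weaker guarantee.

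The gap is precisely where you flag it in Step~3, but it is less ``routine'' than you suggest, and the complexity measure you propose (VC dimension of $\Phi_n$ plus one Sauer application) is not the one that actually controls the analysis. Tracing through VOX, the uniform-convergence steps are not over $\Phi_n$ itself: they are over the composed discriminator class
\[
\MF_n := \left\{(x,a,x') \mapsto f\bigl(\phi^1(x),\phi^2(x),\phi^3(x'),\phi^4(x'),a\bigr) \;\middle|\; f: \{0,1\}^5\to[0,1],\ \phi^1,\dots,\phi^4\in\Phi_n\right\}
\]
arising inside the \texttt{RepLearn} and PSDP subroutines, where four decoders and an arbitrary real-valued $f$ appear jointly. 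The paper's technical work (Theorem~\ref{thm:mbfr-pdim}) is to show, line by line, which lemmas of \citet{mhammedi2023efficient} invoke $\log|\Phi|$ and that each such invocation can be replaced by the pseudo-dimension of $\MF_n$, and then (Lemma~\ref{lemma:halfspace-pdim}) to bound that pseudo-dimension by $O(n\log n)$ via the Milnor--Warren bound on sign patterns together with a discretization of the $(m+1)^{32}$ effective choices of $f$. A single Sauer bound for halfspaces does not directly bound the growth function of $\MF_n$; it happens to give the right order, but one has to prove that for the right class. Your fallback sketch (re-deriving $\PCO$ at the latent level) is a different route the paper does not take and is itself only an outline, so it does not substitute for the missing step.
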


Since $\Phi_n$ has infinite cardinality, this does not follow from a black-box application of prior results on learning Low-Rank MDPs. However, as we verify in \cref{thm:mbfr-pdim}, the main result of \cite{mhammedi2023efficient} can be extended to this setting so long as an appropriate function class based on $\Phi_n$ has bounded \emph{pseudo-dimension} (\cref{def:pdim}). We then verify the requisite pseudo-dimension bound (\cref{lemma:halfspace-pdim}). From these, the proof of \cref{prop:halfspace-rl-statistical} is straightforward.

\begin{definition}[Pseudo-dimension \citep{haussler2018decision}]\label{def:pdim}
For any set $\MX$ and function class $\MF \subseteq \{\MX \to \RR\}$, the pseudo-dimension of $\MF$ is defined as the VC dimension of $\MF^+$, where $\MF^+ \subseteq \{\MX\times\RR \to \{0,1\}\}$ is defined as $\MF^+ := \{(x,\xi) \mapsto \mathbbm{1}[f(x) > \xi]: f \in \MF\}$.
\end{definition}

We prove the following straightforward extension of a result by \cite{mhammedi2023efficient}:

\begin{theorem}[Extension of {\cite[Theorem 3.2]{mhammedi2023efficient}}]\label{thm:mbfr-pdim}
Let $\MS,\MX, \MA$ be sets, let $H \in \NN$, and let $\Phi$ be a set of functions $\phi: \MX\to\MS$. Let $\fd$ denote the pseudo-dimension (\cref{def:pdim}) of the function class $\MF \subseteq (\MX\times\MA\times\MX \to \RR)$ defined by
\begin{equation} \MF := \left\{(x,a,x') \mapsto f(\phi^1(x),\phi^2(x),\phi^3(x'),\phi^4(x'),a) \mid{} f: \MS^4\times\MA \to [0,1], \phi^1,\phi^2,\phi^3,\phi^4 \in \Phi\right\}.\label{eq:disc-class-mbfr}\end{equation}

There is an algorithm $\Alg^M$ that takes input $\epsilon,\delta \in (0,1/2)$ and episodic access to an MDP $M$ with observation space $\MX$, action space $\MA$, and horizon $H$, and has the following property. If $M$ is a generalized $\Phi$-decodable block MDP, then $\Alg^M(\epsilon,\delta)$, with probability at least $1-\delta$, produces sets $\Psi_{1:H}$ such that
\begin{align} &\EE_{\pi\sim\Unif(\Psi_h)} d^{M,\pi}_h(x) \geq \frac{1}{8|\MA|^2|\MS|} \max_{\pi\in\Pi} d^{M,\pi}_h(x) \nonumber\\
&\forall h \in \{2,\dots,H\}, x \in \MX: \max_{\pi\in\Pi} d^{M,\pi}_h(x) \geq \epsilon \sum_{(s,a)\in\MS\times\MA} \BP^M_{h}(x|s,a).\label{eq:vox-guarantee}
\end{align}
Moreover, the algorithm has sample complexity $\poly(H,|\MS|,|\MA|,1/\epsilon,\fd,\log(1/\delta))$.
\end{theorem}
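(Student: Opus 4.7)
The plan is to instantiate the $\VOX$ algorithm of \cite{mhammedi2023efficient} on the Low-Rank MDP representation of $M$ guaranteed by \cref{prop:genblock-is-lowrank} (with features $\philin(x,a) = e_{\phi(x),a} \in \RR^{|\MS||\MA|}$ and dimension $d = |\MS||\MA|$), and to re-derive the sample complexity bound by replacing each appearance of $\log|\Phi|$ in the original analysis with a pseudo-dimension bound for the empirical processes that arise. Since the $\VOX$ algorithm outputs policy covers satisfying exactly the multiplicative coverage guarantee in \cref{eq:vox-guarantee} (with the $1/(8|\MA|^2|\MS|)$ factor arising from the standard Low-Rank MDP analysis where the rank $d$ specializes to $|\MS||\MA|$), we do not need to modify the algorithm itself; the only contribution of the proof is tracking complexity.

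First, I would recall that the statistical core of $\VOX$ consists of a \texttt{RepLearn}-style representation learning step and a $\PSDP$-style policy optimization step. The $\PSDP$ step only requires standard supervised learning over reward functions bounded in $[0,1]$, and the dependence of its sample complexity on the function class scales with pseudo-dimension by Haussler's covering number bound \citep{haussler2018decision}. The \texttt{RepLearn} step solves an empirical max-min program over squared losses involving predictors of the form $f(\phi^1(x), \phi^2(x), \phi^3(x'), \phi^4(x'), a)$ (four copies of $\phi$ arise because one must compose an inner $\phi$ with a Bellman backup and then decode both endpoints of a transition), which is precisely the class $\MF$ defined in \cref{eq:disc-class-mbfr}. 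A standard uniform convergence argument over $\MF$ using its pseudo-dimension $\fd$ (via symmetrization, Dudley's entropy integral, and Haussler's bound) then yields generalization error that scales as $\poly(\fd, \log(1/\delta))/\sqrt{n}$ in place of the original $\poly(\log|\Phi|,\log(1/\delta))/\sqrt{n}$ rate. Because the VOX sample complexity depends on $\log|\Phi|$ only through these empirical process terms (all other dependencies are on $d = |\MS||\MA|$, $H$, $|\MA|$, and $1/\epsilon$), substituting in $\fd$ everywhere yields the stated $\poly(H, |\MS|, |\MA|, 1/\epsilon, \fd, \log(1/\delta))$ bound.

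Second, I would verify that the Generalized Block MDP structure is compatible with the $\VOX$ analysis. The key property used in $\VOX$ is that the next-state distribution factorizes linearly in the feature map; this holds for generalized Block MDPs by \cref{prop:genblock-is-lowrank}, where the dual features $\must_{h+1}$ are genuine probability measures on $\MX$. Consequently, the Bellman backups that $\VOX$ regresses onto are themselves measurable functions of $\phi^\st_h(x)$ and $a$, which places them in a class of the form \cref{eq:disc-class-mbfr} (with some of the $\phi^i$ slots unused). The coverage guarantee \cref{eq:vox-guarantee} is exactly the conclusion of \cite[Theorem 3.2]{mhammedi2023efficient} applied with rank $d = |\MS||\MA|$, since the realizability-in-$x$ condition there collapses to the observed $\sum_{(s,a)} \BP^M_h(x\mid{}s,a)$ reference measure for generalized Block MDPs.

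The main obstacle will be matching the function class in the VOX min-max objective (which in the original work is phrased in terms of a linear feature class $\Philin$ over $\MX\times\MA$) with the class $\MF$ of the theorem statement and checking that the corresponding pseudo-dimension suffices to control all empirical processes that appear; concretely, one must confirm that the more general ``max over $f$, max over four $\phi^i$'' parameterization in \cref{eq:disc-class-mbfr} upper-bounds the effective complexity at every stage (Bellman backup regression, coupling regression, and the min-max certification of the learned representation), and that no additional complexity term (e.g., metric entropy of an auxiliary discriminator class) slips in. Once this bookkeeping is done, the rest of the proof is a direct port of the $\VOX$ analysis with $\log|\Phi| \rightsquigarrow \fd$.
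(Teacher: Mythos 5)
Your approach matches the paper's: instantiate $\VOX$ from \cite{mhammedi2023efficient} on the Low-Rank representation of $M$ given by \cref{prop:genblock-is-lowrank} (rank $d = |\MS||\MA|$), verify the norm conditions, and swap $\log|\Phi|$ for the pseudo-dimension $\fd$ in the uniform-convergence arguments. Two points are worth flagging. First, a small imprecision: \cref{eq:vox-guarantee} is a \emph{randomized} (mean-over-$\Psi_h$) policy-cover guarantee, whereas the statement of \cite[Theorem 3.2]{mhammedi2023efficient} only asserts a $\max_{\pi\in\Psi_h}$ guarantee; the paper has to inspect the \emph{proof} of that theorem to extract the stronger statement, so it is not ``exactly the conclusion'' as you write. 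Second, the bookkeeping you defer as the ``main obstacle'' is in fact the content of the paper's proof: the $\log|\Phi|$ dependence is traced to exactly two empirical-process bounds---the $\PSDP$ regression analysis (\cite[Lemma D.2]{mhammedi2023efficient}, where the relevant class $\MG_t$ of functions $(x,a)\mapsto f(\phi(x),a)$ embeds into $\MF$) and the \texttt{RepLearn} analysis (\cite[Lemma F.1]{mhammedi2023efficient}, which routes through \cite[Lemmas 14, 17, 34]{modi2021model}, where the discriminator class $\MH$ also embeds into $\MF$)---and each covering-number bound is then replaced by a pseudo-dimension one via \cite[Corollaries 4.3 and 43]{modi2021model}. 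Your outline is correct and on the same route, but the step you set aside as bookkeeping is where the work actually happens, not something that ports automatically.
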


\begin{proof}
First suppose that $|\Phi| < \infty$ and we replace the pseudo-dimension term $\fd$ in the sample complexity with $\log |\Phi|$. Then we claim that this theorem is essentially immediate from \cite[Theorem 3.2]{mhammedi2023efficient}: by \cref{prop:genblock-is-lowrank}, any generalized $\Phi$-decodable block MDP is low-rank with rank $d := |\MS||\MA|$ and function class $\Philin$ of size exactly $|\Phi|$. It can be checked that the norm bounds required by \cite{mhammedi2023efficient} are satisfied: $\norm{\philin(x,a)}_2 \leq 1$ for all $x\in\MX$, $a \in \MA$, and $\philin\in\Philin$, and 
\[\norm{\int_\MX g(x) d\mu^\st_{h+1}}_2 \leq \sqrt{d}\]
for any $h \in [H-1]$ and $g: \MX \to [0,1]$, since $(\mu^\st_{h+1})_{s,a}$ is a distribution over $\MX$ for each $s\in\MS$ and $a \in \MA$. Thus, \cite[Theorem 3.2]{mhammedi2023efficient} shows that with probability at least $1-\delta$, the output $\Psi_{1:H}$ is a $(1/(8|\MA|^2||\MS|),\epsilon)$-policy cover. However, inspecting the proof, in fact the stronger guarantee is shown that the output is a $(1/(8|\MA|^2|\MS|), \epsilon)$-\emph{randomized} policy cover \--- see \cite[Definition 2.2]{mhammedi2023efficient}. This gives \cref{eq:vox-guarantee}, using the fact that $\norm{\mu^\st_{h+1}(x)}_2 \leq \sum_{s,a} \BP_{h+1}(x\mid{}s,a)$ for any $x \in \MX$.

It remains to argue that the result can be generalized to infinite classes $\Phi$ using pseudo-dimension. The dependence on $\log|\Phi|$ in the sample complexity bound in \cite[Theorem 3.2]{mhammedi2023efficient} arises due to the need to prove uniform concentration over $\Phi$. In particular, it arises in two places:
\begin{enumerate}
\item Analysis of $\PSDP$: \cite[Lemma D.2]{mhammedi2023efficient} incurs dependence on $\log|\Phi|$ while proving that the regression estimate $\hat g^{(t)}$ computed on line 7 of \cite[Algorithm 3]{mhammedi2023efficient} is close to the $Q$-function $Q^{\hat \pi^{t+1}}_t$. However, we can remove this dependence as follows. With notation as in line 7 of \cite[Algorithm 3]{mhammedi2023efficient}, it suffices to prove that so long as $|\MD^{(t)}| \geq \poly(\fd, 1/\epsilon,\log(1/\delta))$, we have with probability at least $1-\delta$ that
\[\sup_{g \in \MG_t} \left|\frac{1}{|\MD^{(t)}|} \sum_{(x,a,R) \in \MD^{(t)}} (g(x,a) - R)^2 - \EE[(g(x,a) - R)^2]\right| \leq \epsilon.\]
In particular, we need this to hold for $\MG_t := \{(x,a) \mapsto f(\phi(x),a) \mid{} f: \MS\times\MA \to [0,1], \phi\in\Phi\}$. By \cite[Corollary 4.3]{modi2021model} and the triangle inequality, it suffices to bound the pseudo-dimension of the function classes
\[\MF_1 := \{(x,a) \mapsto f(\phi(x),a)^2 \mid{} f: \MS\times\MA \to [0,1], \phi\in\Phi\},\]
\[\MF_2 := \{(x,a,R) \mapsto f(\phi(x),a) \cdot R \mid{} f: \MS\times\MA \to [0,1], \phi\in\Phi\},\]
and
\[\MF_3 := \{R \mapsto R^2\}.\]
Note that $\MF_3$ has constant size and hence constant pseudo-dimension; each function in $\MF_2$ can be expressed as a product of some function in $\MG_t$ with the function $R \mapsto R$; and each function in $\MF_1$ can be expressed as a product of two functions in $\MG_t$. Thus, by \cite[Lemma 50]{modi2021model}, it suffices to bound the pseudo-dimension of $\MG_t$. Since $\MG_t$ can be embedded in $\MF$, the pseudo-dimension of $\MG_t$ is bounded by $\fd$.

\item Analysis of \texttt{RepLearn}: \cite[Lemma F.1]{mhammedi2023efficient} incurs dependence on $\log |\Phi|$ through application of \cite[Lemma 14]{modi2021model}, which in turn uses \cite[Lemma 17]{modi2021model}, which in turn uses \cite[Lemma 34]{modi2021model}. In the proof of \cite[Lemma 34]{modi2021model}, a factor of $\log |\Phi|$ is incurred to bound the log covering number of a particular function class $\MH$, which consists of certain functions $h: \MX\times\MA\times\MX\to\RR$. For our application, the function classes $\Phi,\Phi'$ in the lemma statement are both $\Philin$, and the reward function class $\MR$ is $\{(x,a) \mapsto f(\phi(x),a): f:\MS\times\MA \to [0,1], \phi\in\Phi\}$. From this, we can check that each function $h \in \MH$ depends on its input $(x,a,x')$ only through $a$ and $\phi^1(x),\phi^2(x),\phi^3(x'),\phi^4(x')$ for four functions $\phi^1,\phi^2,\phi^3,\phi^4 \in \Phi$. Thus, $\MH$ can be embedded in $\MF$, and so the pseudo-dimension of $\MH$ can be bounded by $\fd$. This means that in the proof of \cite[Lemma 34]{modi2021model}, instead of bounding the log covering number, we can apply \cite[Corollary 43]{modi2021model} using this bound on the pseudo-dimension of $\MH$. The rest of the proof of the lemma is unchanged.
\end{enumerate}
This completes the proof.
\end{proof}

\begin{lemma}\label{lemma:halfspace-pdim}
Fix $n \in \NN$ and let $\Phi_n$ denote the class of linear threshold functions (\cref{def:halfspace-mdps-apx}). The function class $\MF_n \subseteq (\RR^n\times\{0,1\}\times\RR^n \to \RR)$ defined by
\[\MF_n := \left\{(x,a,x') \mapsto f(\phi^1(x),\phi^2(x),\phi^3(x'),\phi^4(x'),a) \mid{} f: \{0,1\}^5 \to [0,1], \phi^1,\phi^2,\phi^3,\phi^4 \in \Phi_n\right\}\]
has pseudo-dimension at most $O(n\log n)$.
\end{lemma}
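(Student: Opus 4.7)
The plan is to directly bound the VC dimension of the associated threshold class $\MF_n^+ = \{(x,a,x',\xi)\mapsto \mathbbm{1}[g(x,a,x')>\xi] : g\in\MF_n\}$ via a Sauer--Shelah-style counting argument that exploits the structure of $\MF_n$: each $g\in\MF_n$ factors through a fixed finite map (the four halfspace evaluations together with $a$), followed by an arbitrary $f:\{0,1\}^5\to[0,1]$.

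First, I fix $m$ points $(x_i,a_i,x'_i,\xi_i)_{i=1}^m\subseteq \RR^n\times\{0,1\}\times\RR^n\times\RR$ and bound $|\{(\mathbbm{1}[g(x_i,a_i,x'_i)>\xi_i])_{i=1}^m : g\in\MF_n\}|$. Any $g\in\MF_n$ is specified by four halfspaces $\phi^1,\phi^2,\phi^3,\phi^4\in\Phi_n$ and a function $f:\{0,1\}^5\to[0,1]$. Since the VC dimension of homogeneous halfspaces in $\RR^n$ is at most $n+1$, Sauer--Shelah bounds the number of distinct labelings of $(x_i)_i$ by a single halfspace by $O(m^{n+1})$; the same applies to $(x'_i)_i$. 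Hence the number of possible tuples $\bigl((\phi^1(x_i),\phi^2(x_i),\phi^3(x'_i),\phi^4(x'_i))\bigr)_{i=1}^m$ realized as $(\phi^1,\ldots,\phi^4)$ vary over $\Phi_n^4$ is at most $O(m^{4(n+1)})$.

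Next, for each fixed choice of halfspace labels, each index $i$ is assigned the ``type'' $t_i := (\phi^1(x_i),\phi^2(x_i),\phi^3(x'_i),\phi^4(x'_i),a_i)\in\{0,1\}^5$, and $g(x_i,a_i,x'_i) = f(t_i)$ depends only on $t_i$. Grouping the indices by type, the label $\mathbbm{1}[f(t_i)>\xi_i]$ is monotone in $f(t_i)\in[0,1]$, so as $f(t)$ varies over $[0,1]$ for a fixed type $t$, it produces at most $(|\{i : t_i=t\}|+1)$ distinct labelings of that group. Multiplying across the at most $32$ types and applying the trivial bound $|\{i:t_i=t\}|+1\le m+1$, the number of labelings consistent with a fixed halfspace assignment is at most $(m+1)^{32}$.

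Combining the two bounds gives $|\MF_n^+\!\restriction_{(x_i,a_i,x'_i,\xi_i)}| \le O(m^{4(n+1)})\cdot (m+1)^{32} = m^{O(n)}$. For a shattered set of size $m$ we need $2^m \le m^{O(n)}$, which forces $m = O(n\log n)$; hence $\mathrm{VC}(\MF_n^+)=O(n\log n)$, and by \cref{def:pdim} the pseudo-dimension of $\MF_n$ is $O(n\log n)$. The main (and only) subtlety will be the two-stage counting: factoring the dependence on $g$ into a ``halfspace choice'' part (handled by Sauer--Shelah applied separately to $(x_i)_i$ and $(x'_i)_i$) and a ``finite table $f$'' part (handled by the monotone threshold observation), but both steps are standard once the type decomposition is written down.
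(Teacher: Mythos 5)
Your proof is correct and takes essentially the same approach as the paper: both count the achievable label patterns of $\MF_n^+$ on $m$ points by separately bounding the contribution of the four halfspaces (the paper uses the Milnor--Warren bound to get $m^{O(n)}$ distinct type vectors; you use the VC dimension of homogeneous halfspaces plus Sauer--Shelah to get $O(m^{4(n+1)})$, which is the same bound) and the contribution of the table $f$ (the paper discretizes $f$ into $(m+1)^{32}$ equivalence classes determined by the $\xi_i$'s first and then counts halfspace labelings; you count halfspace labelings first and then, for each fixed type vector, observe that the thresholded label is monotone in $f(t)$, giving at most $(m+1)^{32}$ labelings). The two arguments commute and yield the identical $m^{O(n)}$ bound and conclusion $m = O(n\log n)$.
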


\begin{proof}
By \cref{def:pdim}, we need to bound the VC dimension of the function class $\MF^+_n \subseteq (\RR^n\times\{0,1\}\times\RR^n\times\RR \to \RR)$
\[\MF^+_n := \left\{(x,a,x',\xi) \mapsto \mathbbm{1}[f(\phi^{1:2}(x),\phi^{3:4}(x'),a)>\xi] \mid{} f: \{0,1\}^5 \to [0,1], \phi^1,\phi^2,\phi^3,\phi^4 \in \Phi_n\right\}.\]
Fix any set $\MD = (x_i,a_i,x'_i,\xi_i)_{i=1}^m$. We would like to upper bound the number of attainable vectors
\[(\mathbbm{1}[f(\phi^{1:2}(x_1),\phi^{3:4}(x'_1),a_1)>\xi_1],\dots,\mathbbm{1}[f(\phi^{1:2}(x_m),\phi^{3:4}(x'_m),a_m)>\xi_m]) \in \{0,1\}^m\]
as $f$ and $\phi^1,\dots,\phi^4$ vary. First, note that the numbers $\xi_1,\dots,\xi_m$ partition $\RR$ into $m+1$ intervals, and two functions $f,f'$ such that $f(b)$ and $f'(b)$ lie in the same interval for all $b \in \{0,1\}^5$ induce the same vector (for any fixed $\phi^1,\dots,\phi^4$). Thus, we can restrict focus to $(m+1)^{32} = \poly(m)$ choices of $f$.

Fix one such $f$. By the Milnor-Warren bound \cite{milnor1964betti}, the set $\{(\phi(x_1),\dots,\phi(x_m)): \phi \in \Phi_n\}$ has size at most $m^{O(n)}$. Thus, as $\phi^1,\dots,\phi^4 \in \Phi_n$ vary, the number of attainable vectors is bounded by $m^{O(n)}$. Summing over the $\poly(m)$ choices of $f$, we find that the total number of attainable vectors as all parameters vary is still $m^{O(n)}$, so $\MD$ cannot be shattered by $\MF_n^+$ unless $m \leq O(n\log n)$. Thus, the pseudo-dimension of $\MF_n$ is at most $O(n\log n)$.
\end{proof}

\begin{proof}[Proof of \cref{prop:halfspace-rl-statistical}]
We apply \cref{thm:mbfr-pdim} with $\MS := \MA := \{0,1\}$, $\MX := \RR^n$, $H := (\log n)^{\log \log n}$, and $\Phi := \Phi_n$. By \cref{lemma:halfspace-pdim}, the pseudo-dimension of the resulting set $\MF$ defined in \cref{eq:disc-class-mbfr} is at most $O(n\log n)$. Invoke the algorithm $\Alg^M$ guaranteed by \cref{thm:mbfr-pdim} with parameters $\epsilon := \frac{1}{512}$ and $\delta := 1/2$. For any $M \in \MM_n$, since $M$ is a generalized $\Phi_n$-decodable block MDP, we get that with probability at least $1/2$, the sets $\Psi_{1:H}$ produced by $\Alg^M$ satisfy \cref{eq:vox-guarantee}. Let $\Xgood$ be the set of $x \in \MX$ satisfying
\[\max_{\pi\in\Pi} d^{M,\pi}_H(x) \geq \epsilon \sum_{(s,a)\in\MS\times\MA} \BP^M_H(x\mid{}s,a).\]
Suppose that $M$ has decoding functions $\phist_1,\dots,\phist_H$. For any $s \in \MS$, we have
\begin{align*}
\max_{\pi\in\Psi} d^{M,\pi}_H(s)
&\geq \EE_{\pi\sim\Unif(\Psi_H)} \left[d^{M,\pi}_H(s) \right]\\ 
&\geq \sum_{x\in\Xgood: \phist_H(x)=s} \EE_{\pi\sim\Unif(\Psi_H)} \left[ d^{M,\pi}_H(x) \right]\\ 
&\geq \frac{1}{64} \sum_{x\in\Xgood: \phist_H(x)=s} \max_{\pi\in\Pi} d^{M,\pi}_H(x) \\ 
&\geq \frac{1}{64} \max_{\pi\in\Pi} \sum_{x\in\Xgood: \phist_H(x)=s} d^{M,\pi}_H(x) \\ 
&\geq \frac{1}{64} \max_{\pi\in\Pi} \left(d^{M,\pi}_H(s) - \sum_{x\in\MX\setminus\Xgood: \phist_H(x)=s} d^{M,\pi}_H(x)\right) \\ 
&\geq \frac{1}{64} \max_{\pi\in\Pi} \left(d^{M,\pi}_H(s) - \epsilon \sum_{x\in\MX} \sum_{(s,a)\in\MS\times\MA} \BP^M_H(x|s,a)\right) \\ 
&= \frac{1}{64} \max_{\pi\in\Pi} \left(d^{M,\pi}_H(s) - \epsilon |\MS||\MA|\right) \\ 
&= \frac{1}{64} \max_{\pi\in\Pi} \left(d^{M,\pi}_H(s) - \frac{1}{8}\right)
\end{align*}
by choice of $\epsilon$. Finally, the sample complexity of the algorithm is $\poly(n)$ by \cref{thm:mbfr-pdim}, the fact that the pseudo-dimension is at most $O(n\log n)$, the choice of parameters $\epsilon,\delta = \Omega(1)$, and the fact that $H,|\MS|,|\MA| \leq n$.
\end{proof}

\newpage

\section{Supporting Technical Results}

This section contains supporting technical results. \cref{sec:psdp}
gives a self-contained presentation of the $\PSDP$ algorithm, while
\cref{sec:misc} gives miscellaneous regression reductions used
throughout our main results.

\subsection{Policy Search by Dynamic Programming (PSDP)}
\label{sec:psdp}
\begin{algorithm}[tp]
	\caption{$\PSDPB(k, \Reg, R, \Psi_{1:k}, \Gamma, N)$: Policy Search by Dynamic Programming \\(variant of \citet{bagnell2003policy}; see also \cite{mhammedi2023representation,golowich2024exploring})}
	\label{alg:psdpb}
	\begin{algorithmic}[1]\onehalfspacing
          \State \textbf{input:} Step $k \in [H]$; regression oracle $\Reg$; reward function $R: \MX \to [0,1]$; policy covers $\Psi_1, \ldots, \Psi_{k}$; backup policy cover $\Gamma$; number of samples $N\in \mathbb{N}$.
		\For{$h=k, \dots, 1$} 
        \For{$a \in \MA$}
    		\State $\MD_{h,a} \gets\emptyset$.
    		\For{$N$ times}
            \State Sample policy $\pi \sim \frac{1}{2}\left(\Unif(\Psi_h) + \Unif(\Gamma)\right)$.
    		\State Sample trajectory $(x_1, a_1, \dots, x_k, a_k, x_{k+1})\sim
    		\pi \circ_h a \circ_{h+1} \pihat^{h+1:k}$.
            \State Sample $r_{k+1} \sim \Ber(R(x_{k+1}))$.
    		\State Update dataset: $\MD_{h,a} \gets \MD_{h,a} \cup \{ (x_h, r_{k+1})\}$.
    		\EndFor
    		\State Solve regression:
		\[\wh Q_h(\cdot,a) \gets \Reg(\MD_{h,a}).\] \label{eq:psdpb-regression}
        \EndFor
		\State Define $\pihat_h : \MX \ra \MA$ by
		\[
		\pihat_h(x)  := 
			\argmax_{a\in \MA} \wh Q_h(x,a),
          \] 
          \quad\, and write $\pihat^{h:k} = (\pihat_h, \ldots, \pihat_k)$. \label{line:psdpb-pihat-def}
		\EndFor
		\State \textbf{return:} Policy $\pihat^{1:k} \in \Pi$. 
	\end{algorithmic}
\end{algorithm}

The following lemma provides an analysis of \emph{Policy Search by Dynamic Programming (PSDP)} \citep{bagnell2003policy}---specifically, an implementation where the $Q$-functions are fit using one-context regression (\cref{alg:psdpb}). This shows that (approximate) policy optimization with a given reward function is efficiently reducible to one-context regression, and is a key element in both \cref{cor:online-rl-to-regression} and \cref{cor:reset-rl-to-regression}, as a subroutine in $\PCO$ (\cref{alg:pco}) and $\PCR$ (\cref{alg:pcr}) respectively. While the below statement is technically novel, since it abstracts generalization arguments into the regression oracle, at a technical level the analysis is entirely standard, see e.g. \cite{mhammedi2023representation,golowich2024exploring}.

\paragraph{Value functions.} For a Block MDP $M$, policy $\pi$, and collection $\bfr = (\bfr_h)_{h=1}^H$ of reward functions $\bfr_h: \MX \times \MA \to \RR$, we define value functions $Q^{M,\pi,\bfr}_h(x,a) := \EE^{M,\pi}[\sum_{k=h}^H \bfr_k(x_k,a_k) \mid{} x_h=x, a_h=a]$ and $V^{M,\pi,\bfr}_h(x) := \EE^{M,\pi}[\sum_{k=h}^H \bfr_k(x_k,a_k) \mid{} x_h=x]$.

\begin{lemma}[PSDP analysis]\label{lemma:psdp-trunc-online}
Fix $\alpha,\epsilon,\delta \in (0,1)$, $k \in [H]$, and $N \in \NN$. Let $\Phi_{1:k}$ be $\alpha$-truncated policy covers for $M$ at steps $1,\dots,k$ (\cref{defn:trunc-pc}), let $\Reg$ be an oracle that solves $\Nreg$-efficient one-context regression over $\Phi$, and let $R: \Xbar\to[0,1]$ be a function with $R(\term)=0$. Let $\Gamma \subset \Pi$ be a finite set of policies. If $N \geq \Nreg(\epsilon,\delta)$, then $\pihat \gets \PSDPB(k,\Reg,R,\Psi_{1:k},\Gamma,N)$ satisfies
\[\E^{M,\pihat}[R(x_{k+1})] \geq \max_{\pi\in\Pi} \E^{\Mbar(\Gamma),\pi}[R(x_{k+1})] - \frac{4H\sqrt{|\MA|\epsilon}}{\min(\alpha\trunc,\tsmall)}\]
with probability at least $1 - H|\MA|\delta$.
\end{lemma}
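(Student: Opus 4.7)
The plan is a backwards-induction argument combining the truncated-MDP Performance Difference Lemma (\cref{lemma:perf-diff-trunc}) with the oracle's regression guarantee and the change-of-measure bound from \cref{item:change-of-measure} of \cref{lemma:srch-gamma-covering}. Let $\pi^\st \in \argmax_{\pi \in \Pi} \E^{\Mbar(\Gamma),\pi}[R(x_{k+1})]$, and let $\bfr$ be the reward sequence from \cref{lemma:perf-diff-trunc} (so $\bfr_{k+1}(x,a) := R(x)$ and $\bfr_h \equiv 0$ for $h \neq k+1$); for brevity set $Q_h := Q^{M,\pihat,\bfr}_h$ and $V_h := V^{M,\pihat,\bfr}_h$. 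Applying \cref{lemma:perf-diff-trunc} with $\pi := \pihat$ yields
\[\E^{\Mbar(\Gamma),\pi^\st}[R(x_{k+1})] - \E^{M,\pihat}[R(x_{k+1})] \leq \sum_{h=1}^k \E^{\Mbar(\Gamma),\pi^\st}\!\left[Q_h(x_h, \pi^\st_h(x_h)) - V_h(x_h)\right],\]
so it suffices to bound each summand. Since $\pihat_h$ is greedy with respect to $\wh Q_h$ (\lineref{line:psdpb-pihat-def}), the insertion trick $\pm \wh Q_h(x_h, \pi^\st_h(x_h)) \pm \wh Q_h(x_h, \pihat_h(x_h))$ together with $\wh Q_h(x_h,\pihat_h(x_h)) \geq \wh Q_h(x_h,\pi^\st_h(x_h))$ gives the pointwise inequality $Q_h(x_h, \pi^\st_h(x_h)) - V_h(x_h) \leq 2 \max_{a \in \MA}|\wh Q_h(x_h, a) - Q_h(x_h, a)|$.

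Next I would establish the regression guarantees. Because $M$ is a Block MDP, the transitions depend on $x$ only through $\phi^\st(x)$, so $Q_h(x,a)$ is itself a function of $\phi^\st(x)$ and $a$ alone, regardless of how the (already-fixed) rollout policies $\pihat^{h+1:k}$ depend on the observation. Hence, conditioned on $\pihat^{h+1:k}$, each dataset $\MD_{h,a}$ is a realizable instance of one-context regression (\cref{def:one-con-regression}) with target $Q_h(\cdot,a)$ and covariate distribution equal to the visitation of $\frac{1}{2}(\Unif(\Psi_h)+\Unif(\Gamma))$ at step $h$ (this marginal is independent of the action $a$ played at step $h$). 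A union bound over $(h,a) \in [k]\times\MA$ then gives, with probability at least $1 - H|\MA|\delta$, that every $\wh Q_h(\cdot,a)$ achieves squared loss $\leq \epsilon$ on this rollin distribution.

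In this high-probability event, \cref{item:change-of-measure} of \cref{lemma:srch-gamma-covering} applied to $f := (\wh Q_h - Q_h)^2(\cdot,a)$ (extending $\wh Q_h$ and $Q_h$ by $0$ at $\term$, so that $f(\term)=0$ and that Block MDP realizability extends to $\Xbar$) transfers the bound to the target measure:
\[\E^{\Mbar(\Gamma),\pi^\st}\!\left[(\wh Q_h - Q_h)^2(x_h, a)\right] \leq \frac{2\epsilon}{\min(\alpha\trunc,\tsmall)}.\]
Bounding the maximum by the $\ell_2$-norm and applying Jensen's inequality then gives
\[\E^{\Mbar(\Gamma),\pi^\st}\!\left[\max_{a \in \MA}|\wh Q_h - Q_h|(x_h, a)\right] \leq \sqrt{\frac{2|\MA|\epsilon}{\min(\alpha\trunc,\tsmall)}} \leq \frac{\sqrt{2|\MA|\epsilon}}{\min(\alpha\trunc,\tsmall)},\]
where the last inequality uses $\min(\alpha\trunc,\tsmall) \leq 1$. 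Summing over $h$ (with $k \leq H$) and multiplying by the factor of $2$ from the greediness step yields total error at most $2\sqrt{2}\,H\sqrt{|\MA|\epsilon}/\min(\alpha\trunc,\tsmall) \leq 4H\sqrt{|\MA|\epsilon}/\min(\alpha\trunc,\tsmall)$, which is exactly the claimed bound. The only conceptual step worth care is the realizability of $Q_h$ for a \emph{learned}, observation-dependent rollout $\pihat^{h+1:k}$; this is immediate from the Block MDP structure, as argued above, and everything else is routine.
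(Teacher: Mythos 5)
Your proof is correct and follows essentially the same argument as the paper's: the truncated Performance Difference Lemma, greedy reduction to $2\max_a\abs{\wh Q_h - Q_h}$, Block-MDP realizability of $Q^{M,\pihat,\bfr}_h(\cdot,a)$ so the one-context regression guarantee applies with a union bound over $(h,a)$, and change of measure via \cref{item:change-of-measure} of \cref{lemma:srch-gamma-covering}. The only differences are cosmetic: you apply the change of measure to the per-action squared errors \emph{before} summing over actions and applying Jensen (the paper does it after), and you pick $\pi^\st$ as the $\Mbar(\Gamma)$-maximizer, which aligns more directly with the stated conclusion; both variants land within the claimed $4H\sqrt{\abs{\MA}\epsilon}/\min(\alpha\trunc,\tsmall)$ bound.
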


\begin{proof}
  Define reward function $\bfr = (\bfr_1,\dots,\bfr_H)$ by 
  \[\bfr_h(x,a) = \begin{cases} R(x) & \text{ if } h = k+1 \\ 0 & \text{ otherwise } \end{cases}.\] 
  Let $\pi^\st \in \argmax_{\pi\in\Pi} \E^{M,\pi}[R(x_{k+1})]$. Applying \cref{lemma:perf-diff-trunc} with reward function $R$ and policies $\pihat$ and $\pi^\st$ gives
  \begin{align*}
  &\E^{\Mbar(\Gamma),\pi^\st}[R(x_{k+1})] - \E^{M,\pihat}[R(x_{k+1})] \\
  &\leq \sum_{h=1}^k \E^{\Mbar(\Gamma),\pi^\st} \left[Q^{M,\pihat,\bfr}_h(x_h, a_h) - V^{M,\pihat,\bfr}_h(x_h)\right].
  \end{align*}
Fix $h \in [k]$. By definition (\lineref{line:psdpb-pihat-def}) we have $\pihat_h(x) \in \argmax_{a\in\MA} \wh Q_h(x,a)$. Let us define $\Delta_h:\Xbar\to\RR_{\geq 0}$ by \[\Delta_h(x) := \begin{cases} \max_{a \in \MA} | Q^{M,\pihat,\bfr}_h(x,a) - \wh Q_h(x,a)| & \text{ if } x \in \MX \\ 0 & \text{ if } x = \term\end{cases}.\] Then for any $x \in \MX$, we have
  \begin{align*}
    &Q^{M,\pihat,\bfr}_h(x, \pi_h^\st(x)) - V^{M,\pihat,\bfr}_h(x) \\
    &= Q^{M,\pihat,\bfr}_h(x, \pi_h^\st(x)) - Q^{M,\pihat,\bfr}_h(x, \pihat_h(x))  \\
    &\leq \wh Q_h(x,\pi^\st_h(x)) - \wh Q_h(x,\wh\pi_h(x))+ 2\Delta_h(x) \\ 
    &\leq 2\Delta_h(x)
  \end{align*}
  where the first inequality uses the definition of $\Delta_h(x)$ and the second inequality uses the fact that $\pihat_h(x) \in \argmax_{a \in \MA} \wh Q_h(x,a)$. Note that the above inequality also holds for $x=\term$, since $Q^{M,\pi,\bfr}_h(\term,a) = V^{M,\pi,\bfr}_h(\term) = 0$ for any $\pi\in\Pi$, $a \in \MA$. It follows that
  \begin{equation} 
  \E^{\Mbar(\Gamma),\pi^\st}[R(x_{k+1})] - \E^{M,\pihat}[R(x_{k+1})]
  \leq \sum_{h=1}^{k-1} \E^{\Mbar(\Gamma),\pi^\st}[2\Delta_h(x_h)],
  \label{eq:psdp-perf-diff}
  \end{equation}
  and it only remains to upper bound each term $\E^{\Mbar(\Gamma),\pi^\st}[\Delta_h(x_h)]$. Once more, fix $h \in [k]$ and $a \in \MA$. The dataset $\MD_{h,a}$ consists of $N$ independent and identically distributed samples $(x,r)$ with
  \[\E[r|x] = \E^{M,a \circ_h \pihat^{h+1:k}}[R(x_{k+1}) | x_h = x] = \E^{M,a \circ_h \wh \pi^{h+1:k}}[R(x_{k+1})|s_h = \phi^\st(x)] = Q^{M,\pihat,\bfr}_h(x,a).\]
  In particular, by the penultimate equality, $\E[r|x]$ only depends on $\phi^\st(x)$, so by the guarantee on $\Reg$ (\cref{def:one-con-regression}) and the assumption that $N \geq \Nreg(\epsilon,\delta)$, it holds with probability at least $1-\delta$ that
  \[\EE_{\pi \sim \frac{1}{2}(\Unif(\Psi_h)+\Unif(\Gamma))} \E^{M,\pi}(\wh Q_h(x_h,a) - Q^{M,\pihat,\bfr}_h(x_h,a))^2 \leq \epsilon.\]
  Let $\ME_{h,a}$ be the event that this inequality holds. Under $\bigcap_{a \in \MA} \ME_{h,a}$, we have
  \begin{align*}
\EE_{\pi \sim \frac{1}{2}(\Unif(\Psi_h)+\Unif(\Gamma))} \E^{M,\pi}[\Delta_h(x_h)^2] 
&\leq \sum_{a\in\MA} \EE_{\pi \sim \frac{1}{2}(\Unif(\Psi_h)+\Unif(\Gamma))} \E^{M,\pi}(\wh Q_h(x_h,a) - Q^{M,\pihat,\bfr}_h(x_h,a))^2 \\ 
&\leq |\MA|\epsilon,
\end{align*}
  which yields, via Jensen's inequality, that
  \[\EE_{\pi \sim \frac{1}{2}(\Unif(\Psi_h)+\Unif(\Gamma))} \E^{M,\pi}[\Delta_h(x_h)] \leq \sqrt{|\MA|\epsilon}.\] It follows that under event $\bigcap_{a\in\MA} \ME_{h,a}$,
  \begin{align}
    \E^{\Mbar(\Gamma),\pi^\st}[\Delta_h(x_h)] 
    &\leq \frac{2}{\min(\alpha\trunc,\tsmall)} \EE_{\pi\sim\frac{1}{2}(\Unif(\Psi_h)+\Unif(\Gamma))} \E^{M,\pi}[\Delta_h(x_h)] \nonumber \\ 
    &\leq \frac{2\sqrt{|\MA|\epsilon}}{\min(\alpha\trunc,\tsmall)} \nonumber.
  \end{align}
  where the first inequality uses \cref{item:h-plus-one-cov-lb} of \cref{lemma:srch-gamma-covering} together with the assumption that $\Psi_h$ is an $\alpha$-truncated cover (\cref{defn:trunc-pc}) and the fact that $\Delta_h(\term)=0$ and $\Delta_h(x) \geq 0$ for all $x \in \MX$.
Substituting into \cref{eq:psdp-perf-diff}, we conclude that, under the event $\bigcap_{h=1}^{k} \bigcap_{a\in\MA} \ME_{h,a}$ (which occurs with probability at least $1-H|\MA|\delta$), 
  \[
    \E^{\Mbar,\pi^\st}[R(x_{k+1})] - \E^{M,\pihat}[R(x_{k+1})] \leq \frac{4H\sqrt{|\MA|\epsilon}}{\min(\alpha\trunc,\tsmall)}
  \]
  as claimed.
\end{proof}

\subsection{Miscellaneous Reductions}
\label{sec:misc}

In \sssref{sec:onered}, we prove that one-context regression is necessary for reward-free episodic RL, modifying a proof of \cite{golowich2024exploration}; this reduction is one piece in the proof of \cref{cor:regression-to-online-rl}. In \sssref{sec:noiselessonered}, we introduce noiseless one-context regression and prove that it is necessary for reward-free RL in the reset access model, which complements our result from \cref{sec:resets}. In \sssref{sec:onetwo}, we show that one-context regression is a special case of two-context regression, which is needed for our episodic RL algorithm $\PCO$ (see \cref{sec:online}). In \sssref{sec:oneaug} and \sssref{sec:twoaug}, we show that one-context regression and two-context regression for concept class $\Phiaug$ reduce to one-context regression and two-context regression for $\Phi$; the latter is one piece in the proof of \cref{cor:regression-to-online-rl}.

In these reductions, fix a concept class $\Phi \subseteq (\MX\to\MS)$ and recall the definition of sets $\Xaug,\Saug$ and augmented concept class $\Phiaug \subseteq (\Xaug\to\Saug)$ from \cref{def:phiaug}.

\subsubsection{The $\OneRed$ Reduction}\label{sec:onered}

In recent work, \cite{golowich2024exploration} showed that one-context regression is necessary for reward-directed episodic RL. Here we adapt their argument to reward-free RL; the modification is straightforward under regularity (using the ``extra'' states $\{0,1\}$).

Concretely, the following theorem shows that $\OneRed$ (\cref{alg:onered}) reduces one-context regression for concept class $\Phi$ to reward-free episodic RL for concept class $\Phiaug$. We use this reduction as a component of our reduction from \emph{two}-context regression to reward-free RL (see \cref{sec:app_minimality}).

\begin{algorithm}[t]
	\caption{$\OneRed(\MO, (x^{(i)},y^{(i)})_{i=1}^n,\epsilon,\delta)$: Reduction from one-context regression to reward-free episodic RL}
	\label{alg:onered}
	\begin{algorithmic}[1]\onehalfspacing
		          \State \textbf{input:} Oracle $\MO$ for reward-free episodic RL; samples $(x^{(i)},y^{(i)})_{i=1}^n$; tolerances $\epsilon,\delta$.
		\State Set $\epa := \sqrt{\epsilon/4}$ and $i = 1$. Initialize $\MO$ with tolerance $\epsilon/4$, failure probability $\delta/2$, horizon $H := 2$, and action set $\MA := \{0,\epa,\dots,1-\epa\}$. Simulate $\MO$ as follows:
        \Repeat
            \State When $\MO$ queries for a new episode, pass $x^{(i)}$.
            \State When $\MO$ plays an action $a_1 \in \MA$,  pass observation $0$ with probability $(a-y^{(i)})^2$. Otherwise, pass observation $1$. In either case, set $i \gets i+1$.
        \Until{$\MO$ returns policy cover $\Psi$}
        \State $m \gets 16\epsilon^{-2}\log(4|\Psi|/\delta)$.
        \For{$\pi \in \Psi$}
            \State Compute $\wh E(\pi) := \frac{1}{m}\sum_{i=n-m+1}^n (\pi(x^{(i)}) - y^{(i)})^2$.\label{line:one-red-emp}
        \EndFor
        \State \textbf{return:} $\pihat := \argmin_{\pi \in \Psi}  \wh E(\pi)$.
	\end{algorithmic}
\end{algorithm}

\begin{proposition}[Modification of {\cite[Proposition B.2]{golowich2024exploration}}]\label{prop:onered}
Suppose that $\MO$ is a $(\Nrl,\Krl)$-efficient reward-free episodic RL oracle for $\Phiaug$. Then $\OneRed(\MO,\cdot)$ is a $\Nreg$-efficient one-context regression algorithm for $\Phi$ with
\[\Nreg(\epsilon,\delta) = \Nrl\left(\frac{\epsilon}{4}, \frac{\delta}{2}, 2, \sqrt{\frac{4}{\epsilon}}\right) + \frac{16\log(4\Krl\left(\frac{\epsilon}{4}, \frac{\delta}{2}, 2, \sqrt{\frac{4}{\epsilon}}\right)/\delta)}{\epsilon^2}.\]
\end{proposition}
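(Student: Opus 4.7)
The plan is to interpret $\OneRed$ as simulating interaction with a $\Phiaug$-decodable horizon-$2$ Block MDP $M$ in which the visitation probability of the (fully observed) latent state $1$ at step $2$ is an affine function of the one-context regression loss. Then, the guarantee of the RL oracle will produce a set $\Psi$ that contains at least one policy with small regression loss, and a short empirical-selection step will identify such a policy.

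First, I would formally specify $M = M_{f, \phist, \MD}$: the initial latent distribution is the law of $\phi^\st(x^{(i)})$, the observation distribution at step $1$ matches $\til\BO_1$, and the transition distribution, given latent state $s \in \MS$ and action $a \in \MA$, puts mass $\EE_{y \sim \Ber(f(s))}[1-(a-y)^2] = 1 - (a-f(s))^2 - f(s)(1-f(s))$ on latent state $1$ and the remaining mass on $0$. Regularity (\cref{def:regular}) ensures that states $\{0,1\} \subseteq \Saug$ are fully observed, so $M$ is $\Phiaug$-decodable, and the sampling procedure in $\OneRed$ exactly simulates an episode of online interaction with $M$ (the argument is essentially the same as \cref{lemma:simulate-trajectory} but simpler). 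Next, for any policy $\pi: \Xaug \to \MA$, a direct computation (using $\EE[y \mid x] = f(\phist(x))$ to eliminate the cross-term) gives
\[
d^{M,\pi}_2(1) = \EE_{(x,y)}[1 - (\pi(x)-y)^2] = 1 - \EE_{x}\left[(\pi(x) - f(\phist(x)))^2\right] - Z,
\]
where $Z := \EE[f(\phist(x))(1-f(\phist(x)))]$ does not depend on $\pi$.

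Second, I define the reference policy $\pi^\st(x) := \epa \lfloor f(\phist(x))/\epa \rfloor \in \MA$ on $\MX$, extended arbitrarily on $\{0,1\}$. Then $\EE_x[(\pi^\st(x)-f(\phist(x)))^2] \leq \epa^2 = \epsilon/4$. Applying the reward-free RL guarantee of $\MO$ (with tolerance $\epsilon/4$, failure probability $\delta/2$, $H=2$, and $|\MA| = 1/\epa = \sqrt{4/\epsilon}$), with probability $\geq 1-\delta/2$ the returned $\Psi$ satisfies $\max_{\pi \in \Psi} d^{M,\pi}_2(1) \geq d^{M,\pi^\st}_2(1) - \epsilon/4$. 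Combining with the identity above, there exists some $\pi^\circ \in \Psi$ with $\EE_x[(\pi^\circ(x) - f(\phist(x)))^2] \leq \epsilon/2$. Note $|\Psi| \leq K := \Krl(\epsilon/4, \delta/2, 2, \sqrt{4/\epsilon})$.

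Third, I would argue the selection step on \lineref{line:one-red-emp} isolates such a $\pi^\circ$. Writing $E(\pi) := \EE_x[(\pi(x)-f(\phist(x)))^2]$ and $\wh E(\pi)$ as the empirical counterpart on the last $m$ samples (which are fresh and i.i.d., since $\MO$ consumed only the first $n - m$ samples), observe that $\EE[\wh E(\pi)] = E(\pi) + Z$ with the same $\pi$-independent constant $Z$ as above. Since $(\pi(x^{(i)})-y^{(i)})^2 \in [0,1]$, Hoeffding's inequality plus a union bound over $\Psi$ give, with probability $\geq 1-\delta/2$, $|\wh E(\pi) - E(\pi) - Z| \leq \sqrt{\log(4|\Psi|/\delta)/(2m)} \leq \epsilon/4$ for all $\pi \in \Psi$ provided $m \geq 16\epsilon^{-2}\log(4K/\delta)$. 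A standard two-step comparison ($E(\pihat) + Z \leq \wh E(\pihat) + \epsilon/4 \leq \wh E(\pi^\circ) + \epsilon/4 \leq E(\pi^\circ) + Z + \epsilon/2$) then yields $E(\pihat) \leq \epsilon$, which is exactly the desired one-context regression guarantee. The total sample budget is $\Nrl(\epsilon/4,\delta/2,2,\sqrt{4/\epsilon}) + m$, matching the stated $\Nreg$.

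The only mildly delicate part is checking $\Phiaug$-decodability of the simulated MDP with the correct latent-state structure at step $2$, which is why regularity is assumed: without the fully observed states $\{0,1\}$, one cannot cleanly encode the $\Ber(\cdot)$ outcome as an observation whose latent state the policy $\pi$ need not decode. Beyond this, the argument is almost entirely algebraic (the cancellation producing the affine identity $d^{M,\pi}_2(1) = \text{const} - E(\pi)$) and a routine Hoeffding concentration.
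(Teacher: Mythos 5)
Your proposal is correct and follows essentially the same approach as the paper's proof: simulate a horizon-$2$ $\Phiaug$-decodable Block MDP using the data, exploit the affine relationship $d^{M,\pi}_2(1) = 1 - \EE_x[(\pi(x) - f(\phist(x)))^2] - Z$ (equivalently, $d^{M,\pi}_2(0)$ equals the raw squared loss plus a $\pi$-independent constant, as the paper writes it), invoke the reward-free RL guarantee with the discretized comparator policy, and conclude via Hoeffding plus a union bound on the empirical selection step over the fresh tail samples. The only cosmetic differences are that you specify the MDP's transitions at the latent-state level and parameterize the constant shift as $Z = \EE_x[f(\phist(x))(1-f(\phist(x)))]$, whereas the paper specifies transitions at the observation level and tracks $\EE_{x,y}[(\pi(x)-y)^2]$ directly; these are algebraically equivalent, and the final bookkeeping is the same.
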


\begin{proof}
Let $\epsilon,\delta>0$, $\MD \in \Delta(\MX)$, and $f: \MS \to \{0,1\}$. Let $n \geq \Nreg(\epsilon,\delta)$. Let $(x^{(i)},y^{(i)})_{i=1}^n$ be i.i.d. samples with $x^{(i)} \sim \MD$, $y^{(i)} \in \{0,1\}$, and $\EE[y^{(i)}\mid{}x^{(i)}] = f(\phi(x^{(i)}))$ for some $\phi \in \Phi$. We analyze the execution of $\OneRed(\MO,(x^{(i)},y^{(i)})_{i=1}^n, \epsilon,\delta)$. We know that $n-m \geq \Nrl(\epsilon/4,\delta/2,2,\sqrt{4/\epsilon})$, since $n \geq \Nreg(\epsilon,\delta)$. Fix any episode $i$ of interaction with the oracle $\MO$. Observe that conditioned on the initial observation $x^{(i)}$ and action $a_1$,
\begin{align*}
\EE[(a_1-y^{(i)})^2\mid{} x^{(i)}, a_1]
&= (a_1)^2 - 2 a_1 \EE[y^{(i)}\mid{} x^{(i)}] + \EE[(y^{(i)})^2 \mid{} x^{(i)}] \\ 
&= (a_1)^2 + (1 - 2a_1) f(\phi(x^{(i)}))
\end{align*}
where the final equality uses that $y^{(i)} \in \{0,1\}$. Thus, $\OneRed$ simulates $\MO$ on a $\Phiaug$-decodable block MDP $M$ with horizon $2$, observation space $\Xaug$, latent state space $\Saug$, initial observation distribution $\MD$, and transition distribution defined by
\[\BP_2(0\mid{} x_1,a_1) := a_1^2 + (1-2a_1)f(\phi(x_1)),\]
\[\BP_2(1\mid{} x_1,a_1) := 1 - \BP_2(0\mid{} x_1,a_1).\]
By \cref{def:strong-rf-rl}, the output of $\MO$ is a set of policies $\Psi$ of size at most $\Krl(\epsilon/4,\delta/2,2,\sqrt{4/\epsilon})$, such that with probability at least $1-\delta/2$, there is some $\pi^\st \in \Psi$ such that
\[d^{M,\pi^\st}_2(1) \geq \max_{\pi\in\Pi} d^{M,\pi}_2(1) - \frac{\epsilon}{4}.\]
Condition on this event, and observe that for any $\pi \in \Pi$,
\begin{align*}
d^{M,\pi}_2(0) 
&= \E^{M,\pi}[a_1^2 + (1-2a_1)f(\phi(x_1))] \\ 
&= \EE_{x \sim \MD}[(\pi(x) - f(\phi(x)))^2 + f(\phi(x)) - f(\phi(x))^2] \\ 
&= \EE_{x,y}[(\pi(x) - y)^2] + Z
\end{align*}
where $Z := \EE_{x,y}[f(\phi(x)) - f(\phi(x))^2 - (f(\phi(x))-y)^2]$, and the expectations are over a fresh sample $(x,y)$ from the same distribution as $(x^{(i)},y^{(i)})$. But now by Hoeffding's inequality, the union bound, and choice of $m := 16\epsilon^{-2}\log(4|\Psi|/\delta)$, we have with probability at least $1-\delta/2$ that for all $\pi\in\Psi$,
\[\left|\wh E(\pi) - \EE_{x,y}[(\pi(x)-y)^2]\right| \leq \epsilon/4,\]
where $\wh E(\pi)$ is the empirical loss for $\pi$ computed in \cref{line:one-red-emp} of $\OneRed$. In this event, we get that 
\begin{align*}
\EE_{x,y}[(\pihat(x)-y)^2]
&\leq \frac{\epsilon}{2} + \EE_{x,y}[(\pi^\st(x) - y)^2] \\ 
&= \frac{\epsilon}{2} + 1-d^{M,\pi^\st}_2(1) - Z \\ 
&\leq \frac{\epsilon}{2} + 1 - \max_{\pi\in\Pi} d^{M,\pi}_2(1) - Z \\ 
&\leq \frac{3\epsilon}{4} + \min_{\pi\in\Pi} \EE_{x,y}[(\pi(x)-y)^2].
\end{align*}
It follows that
\[\EE_x[(\pihat(x) - f(\phi(x)))^2] \leq \frac{3\epsilon}{4} + \min_{\pi\in\Pi} \EE_x[(\pi(x)-f(\phi(x)))^2] \leq \epsilon\]
since the policy $\pi(x) = \sqrt{4/\epsilon} \lfloor f(\phi(x) \cdot \sqrt{4/\epsilon}\rfloor$ has squared error at most $\epsilon/4$.
\end{proof}

\subsubsection{The $\NoiselessOneRed$ Reduction}\label{sec:noiselessonered}

In this section, we adapt the reduction from \sssref{sec:onered} to the reset access model. However, this requires weakening the regression problem to be \emph{noiseless}:

\begin{definition}[Noiseless one-context regression]\label{def:noiseless-one-con-regression}
Let $\Nreg: (0,1/2)^2 \to \NN$ be a function. An algorithm $\Alg$ is an $\Nreg$-efficient noiseless one-context regression algorithm for $\Phi$ if the following holds. Fix $\epsilon,\delta \in (0,1/2)$, $n \in \NN$, and $\phi \in \Phi$. Let $\MD \in \Delta(\MX)$ be a distribution, and let $f: \MS \to \{0,1\}$. Let $(x^{(i)},y^{(i)})_{i=1}^n$ be i.i.d. samples with $x^{(i)} \sim \MD$, $y^{(i)} \in \{0,1\}$, and $\E[y^{(i)}\mid{}x^{(i)}] = f(\phi(x^{(i)}))$. If $n \geq \Nreg(\epsilon,\delta)$, then with probability at least $1-\delta$, the output of $\Alg((x^{(i)},y^{(i)})_{i=1}^n, \epsilon,\delta)$ is a circuit $\MR: \MX \to [0,1]$ satisfying 
\[\EE_{x \sim \MD} (\MR(x) - f(\phi(x)))^2 \leq \epsilon.\]
\end{definition}

With this definition, the following theorem shows that $\NoiselessOneRed$ (\cref{alg:noiselessonered}) reduces noiseless one-context regression for concept class $\Phi$ to reward-free RL for concept class $\Phiaug$ in the reset model. By combining with \cref{prop:oneaug}, this implies that there is a reduction to reward-free RL for concept class $\Phi$ itself, so long as $\Phi$ is regular. We leave it as an open problem whether the reduction can be strengthened to work with noisy one-context regression.

\begin{algorithm}[t]
	\caption{$\NoiselessOneRed(\MO, (x^{(i)},y^{(i)})_{i=1}^n,\epsilon,\delta)$: Reduction from noiseless one-context regression to RL with resets}
	\label{alg:noiselessonered}
	\begin{algorithmic}[1]\onehalfspacing
		          \State \textbf{input:} Oracle $\MO$ for reward-free RL with resets; samples $(x^{(i)},y^{(i)})_{i=1}^n$; tolerances $\epsilon,\delta$.
		\State Set $\epa := \sqrt{\epsilon/4}$ and $i = 1$. Initialize $\MO$ with tolerance $\epsilon/4$, failure probability $\delta/2$, horizon $H := 2$, and action set $\MA := \{0,\epa,\dots,1-\epa\}$. Simulate $\MO$ as follows:
        \Repeat
            \State When $\MO$ queries the first sampling oracle, pass $x^{(i)}$ and set $i \gets i+1$.
            \State When $\MO$ queries the second sampling oracle with inputs $x_1\in\Xaug$ and $a_1\in\MA$, identify any $j < i$ with $x_1 = x^{(j)}$. With probability $(a-y^{(j)})^2$, pass observation $0$. Otherwise, pass observation $1$.
        \Until{$\MO$ returns policy cover $\Psi$}
        \State $m \gets 16\epsilon^{-2}\log(4|\Psi|/\delta)$.
        \For{$\pi \in \Psi$}
            \State Compute $\wh E(\pi) := \frac{1}{m}\sum_{i=n-m+1}^n (\pi(x^{(i)}) - y^{(i)})^2$.\label{line:noiseless-one-red-emp}
        \EndFor
        \State \textbf{return:} $\pihat := \argmin_{\pi \in \Psi}  \wh E(\pi)$.
	\end{algorithmic}
\end{algorithm}

\begin{proposition}\label{prop:noiseless-onered}
Suppose that $\MO$ is a $(\Nrl,\Krl)$-efficient reward-free reset RL algorithm for $\Phiaug$. Then $\NoiselessOneRed(\MO,\cdot)$ is a $\Nreg$-efficient noiseless one-context regression algorithm for $\Phi$ with
\[\Nreg(\epsilon,\delta) = \Nrl\left(\frac{\epsilon}{4}, \frac{\delta}{2}, 2, \sqrt{\frac{4}{\epsilon}}\right) + \frac{16\log(4\Krl\left(\frac{\epsilon}{4}, \frac{\delta}{2}, 2, \sqrt{\frac{4}{\epsilon}}\right)/\delta)}{\epsilon^2}.\]
\end{proposition}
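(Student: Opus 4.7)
The plan is to follow the proof of \cref{prop:onered} with the modifications needed for the reset access model and the noiseless setting. First I would verify that the interaction between $\NoiselessOneRed$ and $\MO$ is a faithful simulation of RL with resets on a horizon-$2$, $\Phiaug$-decodable Block MDP $M$ with observation space $\Xaug$, latent state space $\Saug$, action set $\MA = \{0,\epa,\dots,1-\epa\}$, initial distribution $\MD$, and transition kernel
\[\BP_2(0\mid{}x_1,a_1) = (a_1 - f(\phi(x_1)))^2, \qquad \BP_2(1\mid{}x_1,a_1) = 1 - (a_1 - f(\phi(x_1)))^2.\]
The crucial point (and the reason the reduction requires the \emph{noiseless} setting) is that when the oracle resets to a previously-seen observation $x_1 = x^{(j)}$, the reduction must sample a conditional transition from a fixed distribution. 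Because $f: \MS \to \{0,1\}$, the value $y^{(j)} = f(\phi(x^{(j)}))$ is deterministic given $\phi(x^{(j)})$, so the simulated transition from $x_1$ depends on $x_1$ only through $\phi(x_1)$ and is consistent across repeated resets. Regularity of $\Phiaug$ ensures that $\{0,1\}$ are fully observed latent states, so $M$ is indeed $\Phiaug$-decodable.

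Next, as in the proof of \cref{prop:onered}, I would compute for any $\pi \in \Pi$,
\[d^{M,\pi}_2(0) = \EE_{x\sim\MD}[(\pi(x)-f(\phi(x)))^2],\]
so that maximizing the probability of reaching latent state $1$ at step $2$ is exactly equivalent to minimizing the squared regression error (note that, unlike the noisy case, there is no additive constant $Z$ here). Applying the $(\Nrl,\Krl)$-efficiency of $\MO$ with the parameters chosen in the pseudocode—tolerance $\epsilon/4$, failure probability $\delta/2$, horizon $2$, and $|\MA| = \sqrt{4/\epsilon}$—yields, with probability at least $1-\delta/2$, a set $\Psi$ of size at most $\Krl(\epsilon/4,\delta/2,2,\sqrt{4/\epsilon})$ containing some $\pi^\st$ with $d^{M,\pi^\st}_2(1) \geq \max_{\pi\in\Pi} d^{M,\pi}_2(1) - \epsilon/4$, equivalently $\EE_x[(\pi^\st(x)-f(\phi(x)))^2] \leq \min_{\pi\in\Pi}\EE_x[(\pi(x)-f(\phi(x)))^2] + \epsilon/4 \leq \epsilon/2$, using the discretized policy $\pi(x) = \epa\lfloor f(\phi(x))/\epa \rfloor$ as a witness to the minimum.

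Finally, I would apply Hoeffding's inequality and a union bound over $\Psi$ to the $m = 16\epsilon^{-2}\log(4|\Psi|/\delta)$ held-out samples used in \lineref{line:noiseless-one-red-emp}, concluding that with probability at least $1-\delta/2$, the empirical loss $\wh E(\pi)$ is within $\epsilon/4$ of $\EE_x[(\pi(x)-f(\phi(x)))^2]$ uniformly over $\pi \in \Psi$. Combining both good events via the union bound, the selected $\pihat = \argmin_{\pi\in\Psi}\wh E(\pi)$ satisfies $\EE_x[(\pihat(x)-f(\phi(x)))^2] \leq \epsilon$. The sample complexity decomposes as the $\Nrl(\epsilon/4,\delta/2,2,\sqrt{4/\epsilon})$ samples consumed during simulation plus the $m$ fresh samples for validation, giving the stated bound on $\Nreg$. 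The analysis is essentially routine; the only ``obstacle,'' which is conceptual rather than technical, is recognizing that reset access is harmlessly simulatable precisely because noiselessness makes $y^{(j)}$ a function of $x^{(j)}$, and this is what makes an extension to noisy one-context regression nontrivial.
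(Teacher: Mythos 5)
Your proof is correct and follows essentially the same approach as the paper's: you verify that the reset-access interaction faithfully simulates a $\Phiaug$-decodable horizon-$2$ Block MDP (with the key observation that noiselessness makes $y^{(j)}$ a deterministic function of $\phi(x^{(j)})$, so repeated resets to $x^{(j)}$ are consistent), and then run the same policy-cover-plus-held-out-validation argument as in \cref{prop:onered}. The only difference is that the paper defers that second half by writing ``identical to \cref{prop:onered}'' while you spell it out, and you correctly note the harmless simplification that the additive constant $Z$ from the noisy case vanishes in the noiseless setting.
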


\begin{proof}
Let $\epsilon,\delta>0$, $\MD \in \Delta(\MX)$, and $f: \MS \to \{0,1\}$. Let $n \geq \Nreg(\epsilon,\delta)$. Let $(x^{(i)},y^{(i)})_{i=1}^n$ be i.i.d. samples with $x^{(i)} \sim \MD$, $y^{(i)} \in \{0,1\}$, and $\EE[y^{(i)}\mid{}x^{(i)}] = f(\phi(x^{(i)}))$ for some $\phi \in \Phi$. We analyze the execution of $\NoiselessOneRed(\MO,(x^{(i)},y^{(i)})_{i=1}^n, \epsilon,\delta)$. We know that $n-m \geq \Nrl(\epsilon/4,\delta/2,2,\sqrt{4/\epsilon})$, since $n \geq \Nreg(\epsilon,\delta)$. Now the first sampling oracle provides independent samples from $\MD$. For the second sampling oracle, since $\MO$ can only query $x_1 \in \Xaug$ which it has previously seen, it must be that there exists $j < i$ with $x_1 = x^{(j)}$. Moreover, since the range of $f$ is in $\{0,1\}$, we have deterministically that $y^{(j)} = f(\phi(x^{(j)}))$. Conditioned on the queries $x_1$ and $a_1$, the output of the second sampling oracle is therefore independent of all prior queries, and the probability of observing $0$ is
\begin{align*}
\EE[(a_1-y^{(j)})^2\mid{} x^{(j)}, a_1]
&= (a_1)^2 - 2 a_1 \EE[y^{(j)}\mid{} x^{(j)}] + \EE[(y^{(j)})^2 \mid{} x^{(j)}] \\ 
&= (a_1)^2 + (1 - 2a_1) f(\phi(x_1))
\end{align*}
where the final equality uses that $y^{(i)} \in \{0,1\}$. Thus, $\NoiselessOneRed$ simulates $\MO$ on a $\Phiaug$-decodable block MDP $M$ with horizon $2$, observation space $\Xaug$, latent state space $\Saug$, initial observation distribution $\MD$, and transition distribution defined by
\[\BP_2(0\mid{} x_1,a_1) := a_1^2 + (1-2a_1)f(\phi(x_1)),\]
\[\BP_2(1\mid{} x_1,a_1) := 1 - \BP_2(0\mid{} x_1,a_1).\]
The remainder of the proof is identical to that of \cref{prop:onered}.
\end{proof}

\subsubsection{The $\OneTwo$ Reduction}\label{sec:onetwo}

\begin{algorithm}[t]
	\caption{$\OneTwo(\MO, (x^{(i)},y^{(i)})_{i=1}^n,\epsilon,\delta)$: One-context regression to two-context regression reduction}
	\label{alg:onetwo}
	\begin{algorithmic}[1]\onehalfspacing
		          \State \textbf{input:} Two-context regression oracle $\MO$; samples $(x^{(i)},y^{(i)})_{i=1}^n$; tolerances $\epsilon,\delta$.
		\State Pick arbitrary $\xbar \in \MX$.
        \State Compute
        \[\til \MR \gets \MO((x_1^{(i)},\xbar, y^{(i)})_{i=1}^{n},\epsilon, \delta).\]\label{line:tilmr}
        \State \textbf{return:} $\MR$ defined by $\MR(x) := \til\MR(x,\xbar)$.
	\end{algorithmic}
\end{algorithm}

The following proposition shows that $\OneTwo$ (\cref{alg:onetwo}) is an efficient reduction from one-context regression to two-context regression.

\begin{proposition}\label{prop:onetwo}
Fix sets $\MX,\MS$ and $\Phi \subseteq (\MX\to\MS)$. Suppose that $\MO$ is an $\Nreg$-efficient two-context regression oracle for $\Phi$. Then $\OneTwo(\MO,\cdot)$ is an $\Nreg$-efficient one-context regression oracle for $\Phi$.
\end{proposition}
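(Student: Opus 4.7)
The plan is to check that the dataset constructed in \cref{line:tilmr} of $\OneTwo$ satisfies the hypotheses of \cref{def:two-con-regression} with respect to an appropriate distribution on $\MX\times\MX$ and an appropriate target function on $\MS\times\MS$, so that the guarantee of the two-context oracle $\MO$ can be applied, and then observe that the resulting squared-error bound reduces (tautologically) to the one-context guarantee that \cref{prop:onetwo} requires.

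Concretely, fix $\epsilon,\delta \in (0,1/2)$, $n \geq \Nreg(\epsilon,\delta)$, $\phi \in \Phi$, $\MD \in \Delta(\MX)$, and $f: \MS \to [0,1]$, and suppose $(x^{(i)},y^{(i)})_{i=1}^n$ are i.i.d. with $x^{(i)} \sim \MD$ and $\EE[y^{(i)}\mid{}x^{(i)}]=f(\phi(x^{(i)}))$. Let $\MD' \in \Delta(\MX\times\MX)$ denote the distribution of $(X_1,X_2) := (X_1,\xbar)$ where $X_1 \sim \MD$, and define $f': \MS\times\MS \to [0,1]$ by $f'(s_1,s_2) := f(s_1)$. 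First, I will verify that $\MD'$ is $\phi$-realizable in the sense of \cref{def:realizable-distribution}: because $X_2=\xbar$ is a constant, it is trivially independent of $X_1$ (conditionally on anything), so both conditional independences in \cref{def:realizable-distribution} hold.

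Second, for each $i$, the samples $(x^{(i)},\xbar,y^{(i)})$ passed to $\MO$ are i.i.d. with $(x^{(i)},\xbar)\sim\MD'$, $y^{(i)}\in\{0,1\}$, and
\[
\EE[y^{(i)}\mid{}x^{(i)},\xbar]=\EE[y^{(i)}\mid{}x^{(i)}]=f(\phi(x^{(i)}))=f'(\phi(x^{(i)}),\phi(\xbar)),
\]
which is exactly the realizability hypothesis of \cref{def:two-con-regression} for the target $f'$ and decoder $\phi$. Since $n \geq \Nreg(\epsilon,\delta)$, the guarantee of $\MO$ yields, with probability at least $1-\delta$, a circuit $\til\MR:\MX\times\MX\to[0,1]$ with
\[
\EE_{(x_1,x_2)\sim\MD'}(\til\MR(x_1,x_2)-f'(\phi(x_1),\phi(x_2)))^2 \leq \epsilon.
\]

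Finally, because $X_2 = \xbar$ is deterministic under $\MD'$, this bound specializes to $\EE_{x\sim\MD}(\til\MR(x,\xbar)-f(\phi(x)))^2 \leq \epsilon$, which is precisely the one-context regression guarantee of \cref{def:one-con-regression} for the circuit $\MR(x) := \til\MR(x,\xbar)$ returned by $\OneTwo$. No step poses a genuine obstacle — the only content is verifying realizability of the constant second coordinate, which follows immediately from the fact that a constant random variable is independent of everything.
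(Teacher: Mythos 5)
Your proof is correct and follows exactly the same route as the paper's: construct the auxiliary distribution $\MD'$ on $\MX\times\MX$ by pairing each sample with the fixed point $\xbar$, define the two-argument target $f'(s_1,s_2)=f(s_1)$, verify $\phi$-realizability trivially from the constancy of the second coordinate, and then specialize the two-context squared-error bound. There is no gap and no substantive difference from the paper's argument.
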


\begin{proof}
Let $(x^{(i)},y^{(i)})_{i=1}^n$ be i.i.d. samples with $x^{(i)} \sim \MD$, $y^{(i)} \in \{0,1\}$, and $\EE[y^{(i)}\mid{}x^{(i)}] = f(\phi(x^{(i)}))$ for some $\phi \in\Phi$ and $f:\MS\to[0,1]$. Then for any fixed $\xbar \in \MX$, the samples $(x_1^{(i)},\xbar,y^{(i)})_{i=1}^n$ are i.i.d., the distribution of $(x_1^{(i)},\xbar)$ is $\phi$-realizable (since $\xbar$ is independent of $x_1^{(i)}$), and $\EE[y^{(i)}\mid{}x_1^{(i)},\xbar] = g(\phi(x^{(i)}),\phi(\xbar))$ where $g(s_1,s_2) := f(s_1)$. By \cref{def:two-con-regression}, so long as $N \geq \Nreg(\epsilon,\delta)$, it holds with probability at least $1-\delta$ that the predictor $\til \MR$ computed in \lineref{line:tilmr} satisfies
\[\EE_{x_1 \sim \MD} (\til\MR(x_1,\xbar) - g(\phi(x_1), \phi(\xbar)))^2 \leq \epsilon.\]
In this event, by definition of $g$, the output $\MR(\cdot) := \til\MR(\cdot,\xbar)$ of $\OneTwo$ satisfies
\[\EE_{x_1 \sim \MD} (\MR(x_1) - f(\phi(x_1)))^2 \leq \epsilon\]
as required for one-context regression (\cref{def:one-con-regression}).
\end{proof}

\subsubsection{The $\OneAug$ Reduction}\label{sec:oneaug}

\begin{algorithm}[t]
	\caption{$\OneAug(\MO, (x^{(i)},y^{(i)})_{i=1}^n,\epsilon,\delta)$: One-context regression for $\Phiaug$}
	\label{alg:oneaug}
	\begin{algorithmic}[1]\onehalfspacing
		\State\textbf{input:} One-context regression oracle $\MO$ for $\Phi$; samples $(x^{(i)},y^{(i)})_{i=1}^n$; tolerances $\epsilon,\delta$.
		\State Let $S := \{i \in [n]: x^{(i)} \in \MX\}$.
        \State Compute
        \[\MR_\MX \gets \MO((x^{(i)}, y^{(i)})_{i \in S},\epsilon/6, \delta/6).\]\label{line:mrmx}
        \State For $b \in \{0,1\}$, let $S_b := \{i \in [n]: x^{(i)} = b\}$ and compute
        \[\MR_b := \frac{1}{|S_b|} \sum_{i \in S_b} y^{(i)}.\]\label{line:mrb}
        \State \textbf{return:} $\MR:\Xaug\to[0,1]$ defined by \[ \MR(x) := 
        \begin{cases} 
        \MR_\MX(x) & \text{ if } x \in \MX \\ 
        \MR_0 & \text{ if } x = 0 \\ 
        \MR_1 & \text{ if } x = 1
        \end{cases}.\]
	\end{algorithmic}
\end{algorithm}

The following proposition shows that $\OneAug$ (\cref{alg:oneaug}) is an efficient reduction from one-context reduction for concept class $\Phiaug$ to the same problem for concept class $\Phi$. The basic idea is that the states $\{0,1\}$ are fully observed, so they can be regressed separately via mean estimation.

\begin{proposition}\label{prop:oneaug}
There is a constant $C_{\ref{prop:oneaug}}>0$ so that the following holds. Fix sets $\MX,\MS$ and $\Phi \subseteq (\MX \to \MS)$. Suppose that $\MO$ is an $\Nreg$-efficient one-context regression oracle for $\Phi$. Then $\OneAug(\MO,\cdot)$ is an $\Nreg'$-efficient one-context regression oracle for $\Phiaug$ with
\[\Nreg'(\epsilon,\delta) = C_{\ref{prop:oneaug}}\left(\epsilon^{-1}\Nreg(\epsilon/6,\delta/6) + \epsilon^{-2}\log(12/\delta)\right).\]
\end{proposition}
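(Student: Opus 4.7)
The plan is to leverage the structural decomposition $\Xaug = \MX \sqcup \{0,1\}$: since $\aug(\phi)$ acts as the identity on $\{0,1\}$ for every $\phi \in \Phi$, the one-context regression problem for $\Phiaug$ splits cleanly along this partition into (i) a one-context regression problem for $\Phi$ on the conditional distribution $\MD|_{x \in \MX}$ and (ii) two independent Bernoulli mean-estimation problems corresponding to $x = 0$ and $x = 1$. The algorithm $\OneAug$ handles these three pieces separately, so the main task is to account for sample budgets and to show the per-piece errors combine properly.

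First, I would analyze the subsets $S, S_0, S_1$ produced by $\OneAug$. Let $p_\MX := \Pr_{x \sim \MD}[x \in \MX]$ and $p_b := \Pr_{x \sim \MD}[x = b]$ for $b \in \{0,1\}$. Because $\MO$ only helps when $|S|$ is large enough, and likewise the empirical means $\MR_0, \MR_1$ are only accurate when $|S_b|$ is large, I will apply a Chernoff / multiplicative-form concentration bound (with failure probability absorbed into $\delta/6$) to establish that, whenever $p_\MX \geq \epsilon/6$, we have $|S| \geq p_\MX n / 2 \geq \Nreg(\epsilon/6, \delta/6)$, and likewise whenever $p_b \geq \epsilon/6$, we have $|S_b| \geq p_b n / 2 \geq C' \epsilon^{-1}\log(12/\delta)$ for a suitable constant $C'$. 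This is exactly what the stated sample complexity bound $\Nreg'(\epsilon, \delta) = C_{\ref{prop:oneaug}}(\epsilon^{-1}\Nreg(\epsilon/6,\delta/6) + \epsilon^{-2}\log(12/\delta))$ affords us, provided $C_{\ref{prop:oneaug}}$ is a sufficiently large absolute constant.

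Second, on these good events, I would verify the realizability structure and invoke the appropriate guarantees. Conditioning on $x^{(i)} \in \MX$, the samples $(x^{(i)}, y^{(i)})_{i \in S}$ are i.i.d.\ with $x^{(i)} \sim \MD|_\MX$ and $\EE[y^{(i)}\mid x^{(i)}] = f(\phi(x^{(i)}))$, where $f$ restricts to a function $\MS \to [0,1]$; so \cref{def:one-con-regression} yields $\EE_{x \sim \MD|_\MX}(\MR_\MX(x) - f(\phi(x)))^2 \leq \epsilon/6$ with probability $1 - \delta/6$. Conditioning on $x^{(i)} = b$, the $y^{(i)}$ are i.i.d.\ $\Ber(f(b))$, so Hoeffding's inequality gives $(\MR_b - f(b))^2 \leq \epsilon/6$ with probability at least $1 - \delta/6$ once $|S_b|$ is as guaranteed above.

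Third, I would combine the pieces via the law of total expectation:
\[
\EE_{x \sim \MD}(\MR(x) - f(\aug(\phi)(x)))^2 = p_\MX \EE_{x \sim \MD|_\MX}(\MR_\MX(x) - f(\phi(x)))^2 + \sum_{b \in \{0,1\}} p_b (\MR_b - f(b))^2.
\]
For any piece with $p \geq \epsilon/6$, the good event supplies per-piece error $\leq \epsilon/6$, contributing $\leq p \cdot \epsilon/6$; for any piece with $p < \epsilon/6$, the contribution is trivially at most $p \cdot 1 < \epsilon/6$. Summing over the three pieces gives at most $3 \cdot \epsilon/6 = \epsilon/2 \leq \epsilon$ total error, and a final union bound over the (at most six) failure events keeps the overall failure probability below $\delta$. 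The main obstacle is bookkeeping around the ``small $p$'' regime---the $\epsilon^{-1}$ factor in $\Nreg'$ exists precisely so that the good event above holds even when $p_\MX$ is as small as $\Theta(\epsilon)$---but this is handled cleanly by the two-case analysis just described.
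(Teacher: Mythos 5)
Your proposal is correct and follows essentially the same approach as the paper: decompose $\Xaug$ into $\MX$, $\{0\}$, $\{1\}$, bound each subsample size via Chernoff, invoke the oracle on the $\MX$ piece and Hoeffding on the Bernoulli pieces, and handle the low-mass regime by charging $\mu(\MZ_i) < \epsilon/6$ against the trivial $[0,1]$ bound. The only difference is that the paper factors this last combination step into a reusable helper, \cref{lemma:compose-predictors}, which it also deploys in the proof of \cref{prop:twoaug}, whereas you inline the same argument.
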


\begin{proof}
Fix $\epsilon,\delta \in (0,1)$, $\MD \in \Delta(\Xaug)$, $f: \Saug \to [0,1]$, and $\phiaug\in\Phiaug$. By definition there is $\phi\in\Phi$ with $\phiaug = \aug(\phi)$. We invoke \cref{lemma:compose-predictors} with the following parameters. Set $\MZ := \Xaug \times \{0,1\}$, and define $\MZ_0 := \{0\} \times \{0,1\}$, $\MZ_1 = \{1\}\times\{0,1\}$, and $\MZ_2 := \MX \times \{0,1\}$. Let $\mu \in \Delta(\MZ)$ be the distribution of $(x,y)$ where $x \sim \MD$ and $y \in \{0,1\}$ with $\EE[y\mid{}x] = f(\phiaug(x)]$. Let $\mu_0,\mu_1,\mu_2$ be the conditional distributions associated with $\MZ_0,\MZ_1,\MZ_2$. Finally, define $h_0,h_1,h_2$ by \[h_0((x^{(i)},y^{(i)})_{i=1}^m)(x,y) := \left(\frac{1}{m}\sum_{i=1}^m y^{(i)} -f(0)\right)^2,\]
\[h_1((x^{(i)},y^{(i)})_{i=1}^m)(x,y) := \left(\frac{1}{m}\sum_{i=1}^m y^{(i)} -f(1)\right)^2,\]
\[h_2((x^{(i)},y^{(i)})_{i=1}^m)(x,y) := \left(\MO((x^{(i)},y^{(i)})_{i=1}^m,\epsilon/6,\delta/6)(x) - f(\phiaug(x))\right)^2.\]
By Hoeffding's inequality, if $(x^{(i)},y^{(i)})_{i=1}^m$ are i.i.d. samples from $\mu_0$ and $m \geq 6\epsilon^{-1}\log(12/\delta)$, then since $\EE[y^{(i)}] = f(\phiaug(0)) = f(0)$, it holds with probability at least $1-\delta/6$ that \[\EE_{(x,y)\sim\mu_0}[h_0((x^{(i)},y^{(i)})_{i=1}^m)(x,y)] \leq \epsilon/6.\]
The same argument holds for $h_1$. Finally, if $(x^{(i)},y^{(i)})_{i=1}^m$ are i.i.d. samples from $\mu_2$ and $m \geq \Nreg(\epsilon/6,\delta/6)$, then by the assumption on $\MO$ and the fact that $\EE[y^{(i)}\mid{}x^{(i)}] = f(\phiaug(x^{(i)})) = f(\phi(x^{(i)}))$ since $x^{(i)} \in\MX$, it holds that with probability at least $1-\delta/6$,
\[\EE_{(x,y)\sim\mu_2}[h_2((x^{(i)},y^{(i)})_{i=1}^m)(x,y)] \leq \epsilon/6.\]
We conclude from \cref{lemma:compose-predictors} that if $n \geq \Nreg'(\epsilon,\delta) = C_{\ref{prop:oneaug}}\left(\epsilon^{-1}\Nreg(\epsilon/6,\delta/6) + \epsilon^{-2}\log(12/\delta)\right)$ and $C_{\ref{prop:oneaug}}$ is a sufficiently large constant, then with probability at least $1-\delta$ over i.i.d. samples $(x^{(i)},y^{(i)})_{i=1}^n$ from $\mu$, the function $H$ defined in \cref{lemma:compose-predictors} satisfies $\EE_{(x,y)\sim\mu}[H(x,y)] \leq \epsilon$. But we can write
\[H(x,y) = (\MR(x) - f(\phiaug(x)))^2\]
where
\[\MR(x) = \begin{cases} 
\frac{1}{\#\{i: x^{(i)}=0\}}\sum_{i:x^{(i)}=0} y^{(i)} & \text{ if } x = 0 \\ 
\frac{1}{\#\{i: x^{(i)}=1\}}\sum_{i:x^{(i)}=1} y^{(i)} & \text{ if } x = 1 \\
\MO((x^{(i)},y^{(i)})_{i: x^{(i)}\in\MX},\epsilon/6,\delta/6)(x) & \text{ if } x \in \MX
\end{cases}.\]
This is precisely the predictor computed by $\OneAug(\MO,(x^{(i)},y^{(i)})_{i=1}^n,\epsilon,\delta)$, so we have shown that $\OneAug(\MO,\cdot)$ is an $\Nreg'$-efficient one-context regression algorithm for $\Phiaug$. 
\end{proof}

\noindent The preceding proof used the following convenient technical lemma about composing statistical predictors on different subsets of a space:

\begin{lemma}\label{lemma:compose-predictors}
Let $\MZ$ be a set, and let $\MZ_1 \sqcup \dots \sqcup \MZ_k$ be a partition of $\MZ$. Let $\mu \in \Delta(\MZ)$ be a distribution. For each $i \in [k]$ let $\mu_i$ be the distribution of $z \sim \mu$ conditioned on $z \in \MZ_i$, and let $h_i: (\MZ_i)^\st \to (\MZ_i\to[0,1])$ be a function with the following property: given at least $m$ i.i.d. samples $(z^{(j)}_i)_j$ from $\mu_i$, it holds with probability at least $1-\delta$ that $\EE_{z\sim\mu_i}[h_i((z_i^{(j)})_j)(z)] \leq \epsilon$.

Let $(z^{(j)})_{j=1}^n$ be $n$ i.i.d. samples from $\mu$, and for each $i \in [k]$ let $S_i = \{j: z^{(j)} \in \MZ_i\}$. Define $H: \MZ \to [0,1]$ by
\[H(z) := h_i((z^{(j)})_{j \in S_i})(z) \text{ for } z \in \MZ_i.\]
If $n \geq 2m/\epsilon + 8\log(1/\delta)/\epsilon$, then with probability at least $1-2k\delta$, it holds that
\[\EE_{z \sim \mu}[H(z)] \leq 2k\epsilon.\]
\end{lemma}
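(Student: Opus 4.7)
The natural strategy is to split the partition into ``heavy'' classes $H \ldef \{i \in [k] : p_i > \epsilon\}$ and ``light'' classes $L \ldef [k] \setminus H$, where $p_i \ldef \Pr_{z \sim \mu}[z \in \MZ_i]$. Writing
\[
\EE_{z \sim \mu}[H(z)] \;=\; \sum_{i \in L} p_i \,\EE_{z \sim \mu_i}[H(z)] \;+\; \sum_{i \in H} p_i \,\EE_{z \sim \mu_i}[H(z)],
\]
the contribution from the light classes is automatically at most $\sum_{i\in L} p_i \cdot 1 \leq k\epsilon$, since $H$ takes values in $[0,1]$ and $p_i \le \epsilon$. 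The main work is to control the heavy classes by invoking the hypothesis on $h_i$.

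For each $i \in H$, the key observation is that $|S_i| \sim \mathrm{Binomial}(n, p_i)$, and crucially, conditional on the index set $S_i$ itself, the samples $(z^{(j)})_{j \in S_i}$ are i.i.d.\ from $\mu_i$ (this is the standard Poisson-style decomposition of an i.i.d.\ draw from a mixture). I will first apply a multiplicative Chernoff bound to ensure $|S_i| \ge n p_i / 2$ with probability at least $1 - \exp(-n p_i / 8) \ge 1 - \delta$, using $n \ge 8 \log(1/\delta)/\epsilon$ together with $p_i > \epsilon$. On that event, $|S_i| \ge n p_i / 2 \ge n\epsilon / 2 \ge m$, using $n \ge 2m/\epsilon$, so I can then condition on $S_i$ and invoke the hypothesis on $h_i$ to get $\EE_{z \sim \mu_i}[h_i((z^{(j)})_{j \in S_i})(z)] \le \epsilon$ with probability at least $1-\delta$. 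Union bounding the two failure events across the at most $k$ heavy classes yields total failure probability at most $2k\delta$, as required.

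Putting the pieces together: on the success event, the heavy contribution is $\sum_{i \in H} p_i \epsilon \le \epsilon$, the light contribution is at most $k\epsilon$, and hence $\EE_{z \sim \mu}[H(z)] \le (k+1)\epsilon \le 2k\epsilon$. The final bound then follows from the triangle inequality $k\epsilon + \sum_i p_i\epsilon \le 2k\epsilon$.

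\textbf{Main obstacle.} The only subtle point is the conditional independence structure used to apply the guarantee for $h_i$: one must justify that after conditioning on the partition-induced index set $S_i$, the samples $(z^{(j)})_{j \in S_i}$ really are distributed as $|S_i|$ i.i.d.\ draws from $\mu_i$, so that the ``given at least $m$ i.i.d.\ samples from $\mu_i$'' hypothesis on $h_i$ can be invoked. This is routine but requires a careful statement. Everything else (Chernoff, union bound, splitting the expectation) is standard.
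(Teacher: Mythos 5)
Your proof is correct and takes essentially the same approach as the paper: split classes into heavy ($\mu(\MZ_i)\geq\epsilon$) and light, use multiplicative Chernoff to guarantee $|S_i|\geq m$ on heavy classes, condition on the index sets $S_i$ to invoke the per-class hypothesis (the conditional i.i.d.\ structure you flag is exactly what the paper uses), and bound the light contribution trivially by $\sum_{i\in L}p_i\leq k\epsilon$, giving $(k+1)\epsilon\leq 2k\epsilon$ with failure probability $\leq 2k\delta$. The only stylistic quibble is the final sentence: the decomposition is just additivity of the expectation over the partition (not the triangle inequality), and it restates what the previous sentence already concluded.
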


\begin{proof}
Let $\Igood = \{i \in [k]: \mu(\MZ_i) \geq \epsilon\}$. Let $\ME$ be the event that $|S_i| \geq m$ for all $i \in \Igood$. For each such $i$, we have by a Chernoff bound and choice of $n$ that,
\[\Pr[|S_i| < m] \leq \Pr\left[|S_i| < \frac{1}{2} \mu(\MZ_i) n\right] \leq e^{-\mu(\MZ_i) n/8} \leq \delta.\]
Therefore $\Pr[\ME] \geq 1-\delta k$. Condition on $S_1,\dots,S_k$ and suppose that $\ME$ holds. For each $i \in \Igood$, the tuple $(z^{(j)})_{j \in S_i}$ consists of at least $m$ i.i.d. samples from $\mu_i$. Therefore by the lemma assumption, with probability at least $1-\delta k$, we have for all $i \in \Igood$ that
\[\EE_{z \sim \mu_i}[h_i((z^{(j)})_{j \in S_i})(z)] \leq \epsilon.\]
Condition additionally on this event. Then
\begin{align*}
\EE_{z \sim \mu}[H(z)]
&= \sum_{i = 1}^k \mu(\MZ_i) \EE_{z \sim \mu_i}[h_i((z^{(j)})_{j \in S_i}(z)] \\ 
&\leq \sum_{i\in\Igood} \mu(\MZ_i) \epsilon + \sum_{i \in [k]\setminus\Igood} \mu(\MZ_i) \\ 
&\leq (k+1)\epsilon
\end{align*}
which suffices for the claimed bound, and holds in an event with probability at least $1-2\delta k$.
\end{proof}

\subsubsection{The $\TwoAug$ Reduction}\label{sec:twoaug}

\begin{algorithm}[t]
	\caption{$\TwoAug(\MO, (x_1^{(i)},x_2^{(i)},y^{(i)})_{i=1}^n,\epsilon,\delta)$: Two-context regression for $\Phiaug$}
	\label{alg:twoaug}
	\begin{algorithmic}[1]\onehalfspacing
          \State \textbf{input:} Two-context regression oracle $\MO$ for $\Phi$; samples $(x_1^{(i)},x_2^{(i)},y^{(i)})_{i=1}^n$; tolerances $\epsilon,\delta$.
        \State Fix $\xbar \in \MX$.
		\State For all $i \in [n]$, define $\til x_1^{(i)} = x_1^{(i)}$ if $x_1^{(i)} \in \MX$, and otherwise $\xbar$. Similarly define $\til x_2^{(i)}$.
        \State For all pairs $B,B' \in \{\{0\},\{1\},\MX\}$, define
        \[\MR_{B,B'} \gets 
\MO((\til x_1^{(i)},\til x_2^{(i)},y^{(i)})_{i:x_1^{(i)}\in B, x_2^{(i)} \in B'},\epsilon/18,\delta/18).\]
        \State \textbf{return:} the predictor $\MR: \Xaug\times\Xaug \to [0,1]$ defined as follows. Given $x_1,x_2$, define $\til x_1 = x_1$ if $x_1 \in \MX$ and $\til x_1 = \xbar$ otherwise; similarly define $\til x_2$. Then output $\MR_{B,B'}(\til x_1,\til x_2)$ for the unique $B,B'$ with $x_1 \in B$ and $x_2 \in B'$.
	\end{algorithmic}
\end{algorithm}

The following proposition shows that $\TwoAug$ (\cref{alg:twoaug}) is an efficient reduction from two-context reduction for concept class $\Phiaug$ to the same problem for concept class $\Phi$. Similar to $\OneAug$, the reduction decomposes the regression problem into several parts, though now some of the parts are effectively one-context regression problems.

\begin{proposition}\label{prop:twoaug}
There is a constant $C_{\ref{prop:twoaug}}>0$ so that the following holds. Fix sets $\MX,\MS$ and $\Phi \subseteq (\MX \to \MS)$. Suppose that $\MO$ is an $\Nreg$-efficient two-context regression oracle for $\Phi$. Then $\TwoAug(\MO,\cdot)$ is an $\Nreg'$-efficient two-context regression oracle for $\Phiaug$ with
\[\Nreg'(\epsilon,\delta) = C_{\ref{prop:twoaug}}\left(\epsilon^{-1}\Nreg(\epsilon/18,\delta/18) + \epsilon^{-2}\log(36/\delta)\right).\]
\end{proposition}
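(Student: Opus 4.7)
The plan is to partition the augmented covariate space $\Xaug \times \Xaug$ into nine disjoint regions indexed by whether each of $x_1, x_2$ lies in $\{0\}$, $\{1\}$, or $\MX$, run an independent two-context regression for $\Phi$ on each region using the oracle $\MO$, and then combine the regional predictors via \cref{lemma:compose-predictors}. This parallels the structure of \cref{prop:oneaug}, but now the ``extra'' states $\{0,1\}$ may appear in either coordinate and we have $3 \times 3 = 9$ regions.

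The first step is to verify that each sub-regression is a valid realizable two-context regression instance for $\Phi$. Fix $\phiaug = \aug(\phi) \in \Phiaug$, a $\phiaug$-realizable $\MD \in \Delta(\Xaug\times\Xaug)$, and target $f:\Saug\times\Saug \to [0,1]$ with $\EE[y\mid x_1,x_2]=f(\phiaug(x_1),\phiaug(x_2))$. For $B\times B' = \MX\times\MX$, the conditional distribution is $\phi$-realizable (since $\phiaug$ agrees with $\phi$ on $\MX$), and the regression target is directly $(s_1,s_2) \mapsto f(s_1,s_2)$. For a ``mixed'' region such as $B \times B' = \{0\}\times\MX$, the samples passed to $\MO$ are $(\xbar, x_2^{(i)}, y^{(i)})$; the induced distribution on $(\til x_1, \til x_2)$ is trivially $\phi$-realizable because the first coordinate is the constant $\xbar\in\MX$, and the target function $g(s_1,s_2):=f(0,s_2)$ satisfies $\EE[y^{(i)}\mid \til x_1^{(i)}=\xbar,\til x_2^{(i)}] = f(0,\phi(x_2^{(i)})) = g(\phi(\xbar),\phi(x_2^{(i)}))$. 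The other mixed cases are symmetric, and each constant region $\{b\}\times\{b'\}$ uses the degenerate target $g(s_1,s_2):=f(b,b')$; \cref{def:two-con-regression} imposes no non-atomicity requirement on the covariate distribution, so $\MO$ can be applied to the (delta-)distribution concentrated at $(\xbar,\xbar)$ without issue.

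Next, I would invoke \cref{lemma:compose-predictors} with $k=9$ regions $\MZ_{B,B'} := B\times B'\times\{0,1\}$, tolerance $\epsilon':=\epsilon/18$, and failure probability $\delta':=\delta/18$. For each region, conditional on having at least $m:=\Nreg(\epsilon',\delta')$ samples landing inside it, the guarantee of $\MO$ produces a predictor $\MR_{B,B'}$ with MSE at most $\epsilon'$ against $(x_1,x_2)\mapsto f(\phiaug(x_1),\phiaug(x_2))$ under the conditional distribution. The stitched predictor $\MR$ returned by $\TwoAug$ coincides exactly with the function $H$ produced by \cref{lemma:compose-predictors}, so with probability at least $1-2k\delta' = 1-\delta$ we obtain $\EE_{(x_1,x_2)\sim\MD}(\MR(x_1,x_2) - f(\phiaug(x_1),\phiaug(x_2)))^2 \leq 2k\epsilon' = \epsilon$, which is the required bound.

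Finally, the sample-complexity bound $\Nreg'(\epsilon,\delta) = C_{\ref{prop:twoaug}}(\epsilon^{-1}\Nreg(\epsilon/18,\delta/18) + \epsilon^{-2}\log(36/\delta))$ follows by substituting $\epsilon'=\epsilon/18$ and $\delta'=\delta/18$ into the threshold $n \geq 2m/\epsilon' + 8\log(1/\delta')/\epsilon'$ required by \cref{lemma:compose-predictors} and absorbing constants into $C_{\ref{prop:twoaug}}$. I do not anticipate any real obstacles: this is essentially a bookkeeping argument, with the only substantive point being the verification of $\phi$-realizability on the mixed and constant regions, which is automatic once the ``extra'' coordinates are replaced by the fixed in-domain element $\xbar$.
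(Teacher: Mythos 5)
Your proposal is correct and follows essentially the same route as the paper: partition $\Xaug\times\Xaug$ into the nine regions $B\times B'$ with $B,B'\in\{\{0\},\{1\},\MX\}$, check $\phi$-realizability of each restricted sub-problem by replacing any non-$\MX$ coordinate with the fixed anchor $\xbar$ (so constant coordinates make realizability and the induced target $g$ trivially well-defined), and stitch via \cref{lemma:compose-predictors} with per-region tolerances $\epsilon'=\epsilon/18$, $\delta'=\delta/18$ so that $2k\epsilon'=\epsilon$ and $2k\delta'=\delta$ for $k=9$. The only detail left implicit in both your sketch and the paper's proof is that conditioning $\MD$ on the event $\{\phiaug(X_1)\in\MS,\phiaug(X_2)\in\MS\}$ (a latent-measurable event) preserves the conditional-independence structure of \cref{def:realizable-distribution}, which is easy but worth saying in a polished write-up.
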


\begin{proof}
Fix $\epsilon,\delta \in (0,1)$, $\MD \in \Delta(\Xaug\times\Xaug)$, $f: \Saug\times\Saug \to [0,1]$, and $\phiaug\in\Phiaug$. Suppose that $\MD$ is $\phiaug$-realizable (\cref{def:realizable-distribution}). By definition of $\Phiaug$, there is $\phi\in\Phi$ with $\phiaug = \aug(\phi)$. We invoke \cref{lemma:compose-predictors} with the following parameters. Set $\MZ := \Xaug \times\Xaug \times \{0,1\}$. Let $\mu \in \Delta(\MZ)$ be the distribution of $(x_1,x_2,y)$ where $(x_1,x_2) \sim \MD$ and $y \in \{0,1\}$ with $\EE[y\mid{}x_1,x_2] = f(\phiaug(x_1),\phiaug(x_2)]$. For each $B,B' \in \{\{0\},\{1\},\MX\}$ define $\MZ_{B,B'} := \MB\times\MB'\times \{0,1\}$, and let $\mu_{B,B'}$ be the associated conditional distribution. Fix $\xbar \in \MX$, and define $h_{B,B'}$ by
\begin{align*}
&h_{B,B'}((x_1^{(i)},x_2^{(i)},y^{(i)})_{i=1}^m)(x_1,x_2,y) \\ 
&:= \begin{cases}
\left(\MO((x_1^{(i)},x_2^{(i)},y^{(i)})_{i=1}^m,\epsilon/18,\delta/18)(x_1,x_2) - f(\phiaug(x_1),\phiaug(x_2))\right)^2 & \text{ if } B,B' = \MX \\ 
\left(\MO((x_1^{(i)},\xbar,y^{(i)})_{i=1}^m,\epsilon/18,\delta/18)(x_1,\xbar) - f(\phiaug(x_1),\phiaug(x_2))\right)^2 & \text{ if } B = \MX, B' \neq \MX \\
\left(\MO((\xbar,x_2^{(i)},y^{(i)})_{i=1}^m,\epsilon/18,\delta/18)(\xbar,x_2) - f(\phiaug(x_1),\phiaug(x_2))\right)^2 & \text{ if } B \neq \MX, B' = \MX \\
\left(\MO((\xbar,\xbar,y^{(i)})_{i=1}^m,\epsilon/18,\delta/18)(\xbar,\xbar) - f(\phiaug(x_1),\phiaug(x_2))\right)^2 & \text{ if } B,B' \neq \MX
\end{cases}.
\end{align*}
Fix $B,B'$. Let $(x_1^{(i)},x_2^{(i)},y^{(i)})_{i=1}^m$ be i.i.d. samples from $\mu_{B,B'}$ and suppose $m \geq \Nreg(\epsilon/18,\delta/18)$. If $B,B'=\MX$ then $\EE[y^{(i)}\mid{}x_1^{(i)},x_2^{(i)}] = f(\phiaug(x_1^{(i)}),\phiaug(x_2^{(i)})) = f(\phi(x_1^{(i)}),\phi(x_2^{(i)}))$. Moreover, the marginal distribution of $(x_1^{(i)},x_2^{(i)})$ is $\phi$-realizable since it can be expressed as the conditional distribution of $\MD$ under the event that $\phiaug(x_1^{(i)}),\phiaug(x_2^{(i)}) \in \MS$. Thus, by the assumption on $\MO$, it holds with probability at least $1-\delta/18$ that
\[\EE_{(x_1,x_2,y)\sim\mu_{B,B'}}\left[\left(\MO((x_1^{(i)},x_2^{(i)},y^{(i)})_{i=1}^m,\epsilon/18,\delta/18)(x_1,x_2) - f(\phiaug(x_1),\phiaug(x_2))\right)^2 \right] \leq \epsilon/18.\]
If $B=\MX,B'=\{0\}$ then $\EE[y^{(i)}\mid{}x_1^{(i)},\xbar] = \EE[y^{(i)}\mid{}x_1^{(i)}] = f(\phi(x_1^{(i)}),0)$ since $x_2^{(i)}=\phiaug(x_2^{(i)})=0$ is fixed under $\mu_{B,B'}$. Moreover, the marginal distribution of $(x_1^{(i)},\xbar)$ is $\phi$-realizable since $\xbar$ is fixed and hence independent of $x_1^{(i)})$. Thus, by the assumption on $\MO$, it holds with probability at least $1-\delta/18$ that 
\begin{align*}
&\EE_{(x_1,x_2,y)\sim\mu_{B,B'}}\left[\left(\MO((x_1^{(i)},x_2^{(i)},y^{(i)})_{i=1}^m,\epsilon/18,\delta/18)(x_1,x_2) - f(\phiaug(x_1),\phiaug(x_2))\right)^2 \right] \\
&= \EE_{(x,y)\sim\mu_{B,B'}}\left[\left(\MO((x_1^{(i)},x_2^{(i)},y^{(i)})_{i=1}^m,\epsilon/18,\delta/18)(x_1,x_2) - f(\phi(x_1),0)\right)^2 \right] \\
&\leq \epsilon/18.
\end{align*}
The remaining cases follow by analogous arguments. Thus, we can apply \cref{lemma:compose-predictors}. If $n \geq \Nreg'(\epsilon,\delta) = C_{\ref{prop:twoaug}}(\epsilon^{-1}\Nreg(\epsilon/18,\delta/18)+\epsilon^{-1}\log(36/\delta))$, where $C_{\ref{prop:twoaug}}$ is a sufficiently large constant, then with probability at least $1-\delta$ over i.i.d. samples $(x_1^{(i)},x_2^{(i)},y^{(i)})_{i=1}^n$ from $\mu$, the function $H$ defined in \cref{lemma:compose-predictors} satisfies $\EE_{(x_1,x_2,y)\sim\mu}[H(x_1,x_2,y)] \leq \epsilon$. But we can write
\[H(x_1,x_2,y) = (\MR(x) - f(\phiaug(x_1),\phiaug(x_2)))^2\]
where
\[\MR(x_1,x_2) = \begin{cases} 
\MO((x_1^{(i)},x_2^{(i)},y^{(i)})_{i:x_1^{(i)},x_2^{(i)}\in\MX},\epsilon/18,\delta/18)(x_1,x_2) & \text{ if } x_1,x_2 \in \MX \\ 
\MO((x_1^{(i)},\xbar,y^{(i)})_{i:x_1^{(i)}\in\MX,x_2^{(i)}=x_2},\epsilon/18,\delta/18)(x_1,\xbar) & \text{ if } x_1 \in \MX, x_2\not \in\MX \\
\MO((\xbar,x_2^{(i)},y^{(i)})_{i:x_1^{(i)}=x_1,x_2^{(i)}\in\MX},\epsilon/18,\delta/18)(\xbar,x_2) & \text{ if } x_1 \not\in \MX, x_2 \in\MX \\
\MO((\xbar,\xbar,y^{(i)})_{i:x_1^{(i)}=x_1,x_2^{(i)}=x_2},\epsilon/18,\delta/18)(\xbar,\xbar) & \text{ if } x_1, x_2\not \in\MX
\end{cases}.\]
This is exactly the predictor computed by $\TwoAug(\MO,(x_1^{(i)},x_2^{(i)},y^{(i)})_{i=1}^n,\epsilon,\delta)$, so we have shown that $\TwoAug(\MO,\cdot)$ is an $\Nreg'$-efficient two-context regression algorithm for $\Phiaug$.
\end{proof}

\colt{
\newpage
\section{Discussion and Future Work}\label{sec:discussion}

}

\end{document}
